\documentclass[]{article}

\usepackage[utf8]{inputenc}
\usepackage[T1]{fontenc}
\usepackage[margin=1in]{geometry}
\usepackage{times}
\usepackage{microtype}
\usepackage{xcolor}
\usepackage{color}
\usepackage{url}
\usepackage{booktabs}
\usepackage{nicefrac}
\usepackage{enumitem}
\setlist[enumerate]{leftmargin=1.5em}
\setlist[itemize]{leftmargin=1.5em}

\usepackage{amsmath,amssymb,amsfonts,amsthm,mathtools,mathrsfs}
\usepackage{bm}
\usepackage{dsfont}
\usepackage{bbold}
\usepackage{physics}
\usepackage{xfrac}
\usepackage{tensor}
\numberwithin{equation}{section}
\allowdisplaybreaks

\usepackage{graphicx}
\usepackage{float}
\usepackage[style=base]{caption}
\usepackage{subcaption}
\usepackage{wrapfig}
\usepackage{adjustbox}
\usepackage{algorithm}
\usepackage{algorithmic}
\usepackage{tikz}
\usepackage{tikz-cd}
\usepackage{pgfplots}
\usepackage[most]{tcolorbox}
\usepackage{thmtools}
\usepackage{mdframed}
\tcbuselibrary{theorems}
\usetikzlibrary{decorations.pathmorphing}
\usetikzlibrary{decorations.pathreplacing, calligraphy}
\usetikzlibrary{arrows.meta, shapes.geometric, calc}
\usepgfplotslibrary{fillbetween, groupplots}
\pgfplotsset{compat=1.18}
\graphicspath{{img/}}

\usepackage[authoryear,round]{natbib}
\usepackage{hyperref}
\hypersetup{
    colorlinks = true,
    citecolor = blue,
    linkcolor = blue,
    urlcolor  = blue,
    breaklinks = true,
    pdftitle={Scaling Laws from Sequential Feature Recovery: A Solvable Hierarchical Model},
    pdfauthor={Hugo Tabanelli, Yatin Dandi, Florent Krzakala}
}
\usepackage[capitalize,nameinlink]{cleveref}

\crefname{theorem}{Theorem}{Theorems}
\crefname{proposition}{Proposition}{Propositions}
\crefname{lemma}{Lemma}{Lemmas}
\crefname{corollary}{Corollary}{Corollaries}
\crefname{claim}{Claim}{Claims}
\crefname{conjecture}{Conjecture}{Conjectures}
\crefname{result}{Result}{Results}
\crefname{hypothesis}{Hypothesis}{Hypotheses}
\crefname{properties}{Properties}{Properties}
\crefname{assumption}{Assumption}{Assumptions}
\crefname{definition}{Definition}{Definitions}
\crefname{remark}{Remark}{Remarks}
\crefname{algorithm}{Algorithm}{Algorithms}

\usepackage{authblk}

\declaretheorem[thmbox=M,name=Theorem,numberwithin=section]{theorem}

\declaretheorem[thmbox=S,name=Lemma,numberwithin=section]{lemma}
\declaretheorem[thmbox=M,name=Corollary,numberwithin=section]{corollary}
\declaretheorem[thmbox=M,name=Claim,numberwithin=section]{claim}

\theoremstyle{definition}
\newtheorem{definition}{Definition}[section]

\newtheorem{remark}{Remark}[section]
\newtheorem{assumption}{Assumption}[section]

\makeatletter
\renewenvironment{proof}[1][\proofname]{\par
  \pushQED{\qed}%
  \normalfont \topsep6\p@\@plus6\p@\relax
  \trivlist
  \item[\hskip\labelsep\textbf{#1.}]\ignorespaces
}{%
  \popQED\endtrivlist\@endpefalse
}
\makeatother

\newcommand{\R}{\mathbb{R}}
\newcommand{\RR}{\mathbb{R}}
\newcommand{\EE}{\mathbb{E}}

\newcommand{\NN}{\mathbb{N}}
\newcommand{\ZZ}{\mathbb{Z}}
\def\op{{\rm op}}
\newcommand{\He}{\mathord{\mathrm{He}}}
\newcommand{\he}{\mathord{\mathrm{he}}}
\newcommand{\Sym}{\mathop{\mathrm{Sym}}}

\newcommand{\para}[1]{\left(#1\right)}
\newcommand{\cF}{\mathcal{F}}

\newcommand{\INPUT}{\REQUIRE}

\renewcommand{\vec}[1]{\bm{#1}}
\newcommand{\bx}{\vec{x}}
\newcommand{\bh}{\vec{h}}

\newcommand{\htt}[1]{}
\newcommand{\bl}[1]{}
\newcommand{\fk}[1]{}
\newcommand{\yd}[1]{}
\newcommand{\aw}[1]{}

\title{Scaling Laws from Sequential Feature Recovery:\\ A Solvable Hierarchical Model}

\author[1]{Arie Wortsman Zurich}
\author[2]{Hugo Tabanelli}
\author[2,3]{Yatin Dandi}
\author[2]{Florent Krzakala}
\author[1]{Bruno Loureiro}

\affil[1]{Département d’Informatique, Ecole Normale Supérieure, PSL \& CNRS}
\affil[2]{Information Learning and Physics Laboratory, \'Ecole Polytechnique F\'ed\'erale de Lausanne (EPFL)}
\affil[3]{Statistical Physics of Computation Laboratory, \'Ecole Polytechnique F\'ed\'erale de Lausanne (EPFL)}

\date{}

\begin{document}

\maketitle

\begin{abstract}
We propose a simple mechanism by which scaling laws emerge from feature learning in multi-layer networks. We study a high-dimensional hierarchical target that is a globally high-degree function, but that can be represented by a combination of latent compositional features whose weights decrease as a power law. We show that a layer-wise spectral algorithm adapted to this compositional structure achieves improved scaling relative to shallow, non-adaptive methods, and recovers the latent directions sequentially: strong features become detectable at small sample sizes, while weaker features require more data. We prove sharp feature-wise recovery thresholds and show that aggregating these transitions yields an explicit power-law decay of the prediction error. Technically, the analysis relies on random matrix methods and a resolvent-based perturbation argument, which gives matching upper and lower bounds for individual eigenvector recovery beyond what standard gap-based perturbation bounds provide. Numerical experiments confirm the predicted sequential recovery, finite-size smoothing of the thresholds, and separation from non-hierarchical kernel baselines. Together, these results show how smooth scaling laws can emerge from a cascade of sharp feature-learning transitions.
\end{abstract}

\section{Introduction}
\label{section:introduction}

Despite the empirical success of neural networks, we still lack a predictive theory answering a deceptively simple question: given a structured learning problem, which features are learned first, and how does their sequential discovery translate into statistical efficiency? This question lies at the intersection of three active lines of research. First, neural scaling laws suggest that the performance of large models follows power laws in data, compute, or model size \citep{kaplan2020scaling,brown2020language,hoffmann2022empirical,bahri2024explaining}. Yet most mathematical theories rely on linearized, kernel, or random-feature models, where the relevant representation is fixed in advance and learning is controlled by the spectrum of this representation \citep{caponnetto2007optimal,bordelon2020spectrum,spigler2020asymptotic,cui2021generalization,cui2023error,defilippis2024dimension}. Second, many works have emphasized that feature learning is not necessarily smooth: training can exhibit plateaus, abrupt drops in risk, and the sequential emergence of features or concepts \citep{saxe2014exact,wei2022emergent,schaeffer2023emergent,ren2025emergence,defilippis2026optimal}. Third, recent theory has begun to isolate the computational advantage of depth in compositional tasks, where deeper architectures can discover intermediate representations inaccessible to shallow methods \citep{cagnetta2024how,garnierbrun2025transformerslearnstructureddata,dandi2025computational,wang2023learning,nichani2024provable,fu2025learning,tabanelli2026deep}.\looseness=-1

This paper asks whether scaling laws can arise not from a fixed spectral bias, but from the progressive uncovering of the relevant features in the data, as can happen in deep neural networks. We investigate a mathematically tractable high-dimensional task that requires recovering hidden features across multiple layers. These latent features are combined through weights with a power-law profile. Statistically detecting an individual feature requires a sample size proportional to the inverse squared feature strength: strong features are learned first, weaker features later, and the prediction error is governed by the tail of the hidden spectrum that has not yet been recovered. Solving the task efficiently requires untangling the compositional structure. This combination of hierarchy and anisotropy leads to different scaling laws for predictors that are adapted, or not adapted, to the task geometry.
\begin{figure}[t]
    \centering
    \begin{minipage}{0.45\textwidth}
        \centering
        \resizebox{\linewidth}{!}{\begin{tikzpicture}[
    % Perspective matching the flow of 1_GzU5uXggvArqjYzxIYZ_IA.jpg
    x={(1cm,0cm)}, 
    y={(0cm,1cm)}, 
    z={(0.4cm,0.3cm)},
    scale=1.2,
    plane/.style={fill=white, opacity=0.8, draw=black, line width=0.5pt},
    fc_line/.style={black!40, thin, dashed},
]

    % % --- Background Box (Feature Learning Area) ---
    % \fill[gray!10] (-1.2,-3) rectangle (5.5,2.5);

    % --- LAYER 1: Input (Size: 1.0) ---
    \begin{scope}[shift={(0,0)}]
        \draw[plane] (0,-1,-1) -- (0,1,-1) -- (0,1,1) -- (0,-1,1) -- cycle;
        
        \foreach \i in {-0.875,-0.625,...,0.625}{
            \foreach \j in {-0.875,-0.625,...,0.625}{
                \pgfmathsetmacro{\grayvalue}{rnd*60+20}
                \fill[black!\grayvalue] (0,\i,\j) -- (0,\i+0.25,\j) -- (0,\i+0.25,\j+0.25) -- (0,\i,\j+0.25) -- cycle;
            }
        }
        \node[below=1.8cm, font=\small\sffamily, align=center] at (-0.25,-0.25) {$x \in \mathbb{R}^d$};
        
        \coordinate (L1-1) at (0,1,1);
        \coordinate (L1-2) at (0,1,-1);
        \coordinate (L1-3) at (0,-1,1);
        \coordinate (L1-4) at (0,-1,-1);
    \end{scope}

    % --- LAYER 2: h1 (Higher dimension d1 - Largest: 1.75) ---
    \begin{scope}[shift={(2.2,0)}]
        \draw[plane] (0,-1.75,-1.75) -- (0,1.75,-1.75) -- (0,1.75,1.75) -- (0,-1.75,1.75) -- cycle;
        
        % Tighter Grid for Layer 2
        \foreach \i in {-1.625,-1.375,...,1.375}{
            \foreach \j in {-1.625,-1.375,...,1.375}{
                \pgfmathsetmacro{\grayvalue}{rnd*60+20}
                \fill[black!\grayvalue] (0,\i,\j) -- (0,\i+0.25,\j) -- (0,\i+0.25,\j+0.25) -- (0,\i,\j+0.25) -- cycle;
            }
        }
        \node[below=2.2cm, font=\small\sffamily] at (-0.5,-0.5) {$A^{(1)} \in \mathbb{R}^{D}$};
        
        \coordinate (L2-1) at (0,1.75,1.75);
        \coordinate (L2-2) at (0,1.75,-1.75);
        \coordinate (L2-3) at (0,-1.75,1.75);
        \coordinate (L2-4) at (0,-1.75,-1.75);

        % Subset "patch" matched to the size of Layer 3
        \coordinate (Patch-1) at (0, 1, 1);
        \coordinate (Patch-2) at (0, 1, -1);
        \coordinate (Patch-3) at (0, -1, 1);
        \coordinate (Patch-4) at (0, -1, -1);
        
        \draw[black, thick, fill=blue!20, opacity=0.4] (Patch-2) -- (Patch-1) -- (Patch-3) -- (Patch-4) -- cycle;
    \end{scope}

    % --- LAYER 3: h2 (Anisotropic - Size: 1.0) ---
    \begin{scope}[shift={(4.4,0)}]
        \draw[plane] (0,-1,-1) -- (0,1,-1) -- (0,1,1) -- (0,-1,1) -- cycle;
        
        \foreach \i [count=\ix] in {-0.875,-0.625,...,0.625}{
            \foreach \j [count=\jx] in {-0.875,-0.625,...,0.625}{
                \pgfmathsetmacro{\weight}{(\ix+\jx-2)/12 * 100} 
                \fill[blue!\weight!red] (0,\i,\j) -- (0,\i+0.25,\j) -- (0,\i+0.25,\j+0.25) -- (0,\i,\j+0.25) -- cycle;
            }
        }
        \node[below=2.1cm, font=\small\sffamily, align=center] at (0,-0.25) {$A^{(2)}\in\mathbb{R}^{d_{1}}$\\ (Anisotropic)};
        
        \coordinate (L3-1) at (0,1,1);
        \coordinate (L3-2) at (0,1,-1);
        \coordinate (L3-3) at (0,-1,1);
        \coordinate (L3-4) at (0,-1,-1);
    \end{scope}

    % --- OUTPUT: y ---
    \node[circle, fill=orange!80!black, draw=black, thick, inner sep=3pt, label={[font=\sffamily]right:$y$}] (output) at (6.5,0,0) {};

    % --- BORDER-TO-BORDER CONNECTIONS ---
    
    \draw[fc_line] (L1-1) -- (L2-1);
    \draw[fc_line] (L1-2) -- (L2-2);
    \draw[fc_line] (L1-3) -- (L2-3);
    \draw[fc_line] (L1-4) -- (L2-4);

    \draw[fc_line] (Patch-1) -- (L3-1);
    \draw[fc_line] (Patch-2) -- (L3-2);
    \draw[fc_line] (Patch-3) -- (L3-3);
    \draw[fc_line] (Patch-4) -- (L3-4);

    \draw[fc_line] (L3-1) -- (output);
    \draw[fc_line] (L3-2) -- (output);
    \draw[fc_line] (L3-3) -- (output);
    \draw[fc_line] (L3-4) -- (output);

    % \node[above=2.25cm, font=\small\sffamily] at (1,0) {$A^{(1)} \in \mathbb{R}^{D}$};

\end{tikzpicture}}
    \end{minipage}
    \hfill
    \begin{minipage}{0.45\textwidth}
        \centering
        \resizebox{\linewidth}{!}{\begin{tikzpicture}
\begin{axis}[
    width=10cm,
    height=6cm,
    axis lines=left,
    xmin=0, xmax=10,
    ymin=0, ymax=1.2,
    xtick=\empty,
    ytick=\empty,
    axis line style={draw=black, opacity=1, thick},
    enlarge x limits={rel=0.05, upper},
    clip=false
]

% Parameters for both shapes (Marchenko-Pastur style)
\pgfmathsetmacro{\bLow}{0.08}
\pgfmathsetmacro{\bHigh}{6.1}
\pgfmathsetmacro{\gLow}{0.05}
\pgfmathsetmacro{\gHigh}{4.1}

% --- Curve 1: Gray Background (Shadow) ---
% \addplot [
%     domain=\gLow:\gHigh, 
%     samples=400, 
%     smooth, 
%     thick, 
%     gray!25,
%     forget plot
% ] { (x > \gLow && x < \gHigh) ? ( (1/x) * sqrt((\gHigh-x)*(x-\gLow)) * 0.28 ) : 0 };

% --- Curve 2: Blue Main Density ---
\addplot [
    name path=bluecurve,
    domain=\bLow:\bHigh, 
    samples=500, 
    very thick, 
    cyan!70!blue
] { (x < \bHigh && x > \bLow) ? ( (1/x) * sqrt((\bHigh-x)*(x-\bLow)) * 0.18 ) : 0 };

\path[name path=axis] (axis cs:0,0) -- (axis cs:\bHigh,0);

\addplot [
    fill=cyan!20, 
    opacity=0.6
] fill between [of=bluecurve and axis];

% --- Orange Arrows (Outliers) ---
\pgfplotsinvokeforeach{6.3, 6.45, 6.7, 7, 7.5, 8.5, 9.75}{
    \draw [orange, -stealth, thick] (axis cs:#1, 0) -- (axis cs:#1, 0.42);
    \fill [orange] (axis cs:#1, 0) circle (1.8pt);
}

% --- Vertical Dashed Line and Label ---
\draw [dashed, thick, gray!80] (axis cs:7.25, 0) -- (axis cs:7.25, 1.0) 
    node [anchor=south, black] {$m_{n}$};

% --- UPDATED: Bracket for Learned Features (Last 3 Spikes Only) ---
\draw [
    thick, 
    decoration={
        brace, 
        raise=0.2cm,
        amplitude=5pt
    }, 
    decorate
] (axis cs:7.4, 0.45) -- (axis cs:9.85, 0.45) 
node [pos=0.5, anchor=south, yshift=0.5cm,align=left] {Learned\\features};

\end{axis}
\end{tikzpicture}}
    \end{minipage}
    \caption{(\textbf{Left}) Illustration of the compositional function defined in \cref{eq:def:target} and studied throughout this paper, where the target is given by an anisotropic combination of high-degree features $\He_{q}(x)$ of the input data $x\in\mathbb{R}^{d}$. (\textbf{Right}) The key conceptual idea in our proof is to show that the relevant features of the target can be efficiently learned by a spectral method (\cref{def:estimator_1}) adapted to the compositional structure of the target. Its spectrum is composed of a bulk (blue), representing the noise, and spikes (orange), representing the signal, which is only resolved up to a scale $m_{n}$ depending on the sample size $n$. MSE (\cref{eq:prediction_MSE}) is then dominated by directions $i>m_{n}$ which are not learned.}
    \label{fig:illustration}
\end{figure}

The key technical idea in our analysis is to frame feature learning as a sequence of spectral transitions corresponding to the progressive resolution of the hidden features; see \cref{fig:illustration} (right) for an illustration. We combine recent progress on spectral methods for compositional targets \citep{wang2023learning,nichani2024provable,fu2025learning,tabanelli2026deep} with the scaling-law perspective of power-law feature strengths \citep{defilippis2025scaling,ren2025emergence,defilippis2026optimal}. A large part of the analysis is therefore random-matrix-theoretic: we must control the spectrum of empirical Hermite moment matrices and the alignment of their outlier eigenvectors. The main technical challenge is that standard eigenvector perturbation results typically used to study spectral transitions are not sharp enough for this setting. Bounds such as Davis--Kahan \citep{davis1970rotation} control the worst-case subspace error in terms of spectral gaps, but the relevant gaps between power-law spikes shrink with the feature index. To obtain matching upper and lower bounds for individual feature recovery, we use a resolvent-based perturbation expansion of the empirical eigenvectors in the spirit of \citep{eldridge2018unperturbed,greenbaum2020first}. This allows us to isolate the noise projected outside the signal subspace and to prove that it becomes negligible exactly at the scale $n \asymp d^q/a_i^2$, where $a_{i}$ is the $i$-th feature weight. The same analysis also gives the converse: below this scale, the noise prevents alignment with the corresponding teacher direction.

We complement the theory with numerical experiments validating the sequential recovery of latent directions, the predicted finite-size smoothing of the sharp asymptotic thresholds, and the resulting decay of the mean-squared error. We also compare the hierarchical spectral method with non-hierarchical kernel baselines. These comparisons illustrate the role of compositional structure: while the target is a high-degree function of the input, exploiting its hierarchy allows the learner to recover the relevant latent representation at a lower sample scale.

Our \textbf{main contributions} are as follows:
\begin{itemize}[noitemsep,leftmargin=1em,wide=0pt]
\item We introduce a high-dimensional task combining hierarchical and compositional structure, providing a tractable setting for studying scaling laws in a setting where both depth and feature learning are required for efficient learning.
\item We prove sharp sample-complexity thresholds for the recovery of individual latent directions by a spectral algorithm in the high-dimensional limit. Our result is based on an eigenvector perturbation analysis of the resolvent that goes beyond the standard Davis--Kahan bounds for this problem, a technique we believe can be of independent interest.
\item We show that the emergence of scaling laws in this setting can be understood from the aggregated spectral transitions, with the error controlled by the unlearned spectral tail.
\item We provide experiments confirming the predicted recovery transitions, finite-size effects, and separation from shallow kernel methods.
\end{itemize}

Overall, our results show that power-law learning curves can arise from a simple and interpretable mechanism: hierarchical learners recover latent features one at a time, and a power-law spectrum of feature strengths converts these sharp spectral transitions into smooth scaling laws.

\section{Setting}
\label{section:setting}
We now introduce the class of target functions that will serve as a minimal model to study the computational advantage of depth. At a high level, the key idea is to create a target that is the composition of ``easy'' functions, but is globally a ``hard'' function of the input data $x\in\mathbb{R}^{d}$. More precisely\footnote{
{\bf Notation.} For $M\in\mathbb N$, write $[M]=\{1,\ldots,M\}$. For $k\ge0$, $(\mathbb R^m)^{\otimes k}$ is the space of order-$k$ tensors and $(\mathbb R^m)^{\odot k}$ its symmetric subspace. For $S,T\in(\mathbb R^m)^{\otimes k}$, $\langle S,T\rangle=\sum_{i_1,\ldots,i_k=1}^m S_{i_1,\ldots,i_k}T_{i_1,\ldots,i_k}$ denotes the full contraction. If $S\in(\mathbb R^m)^{\otimes k}$ and $T\in(\mathbb R^m)^{\otimes \ell}$, then $S\otimes_rT$ denotes contraction over $r\le \min(k,\ell)$ indices, $S\widetilde\otimes_rT$ its symmetrization, and $S\odot T=\Sym(S\otimes T)$. 
For $x\in\mathbb R^m$, the normalized tensor Hermite polynomial $\He_k(x)\in(\mathbb R^m)^{\odot k}$ is defined by 
$$\sqrt{k!}\,\He_k(x)=(-1)^k e^{\|x\|^2/2}\nabla_x^{\otimes k}(e^{-\|x\|^2/2})\, , $$ 
where $\nabla_x^{\otimes k}$ is the $k$-fold symmetric tensor of derivatives. For $m=1$, we write $\he_k$ for the corresponding normalized scalar Hermite polynomial. For a multi-index $\beta\in\mathbb Z_{\ge0}^m$, let $|\beta|=\sum_i\beta_i$ and $\He_\beta(x)=\prod_i \he_{\beta_i}(x_i)$. We write $B(m,k)=\binom{m+k-1}{k}$ and denote by $\cF[\He_k(x)]\in\mathbb R^{B(m,k)}$ the flattened degree-$k$ Hermite feature vector, indexed by $\{\beta:|\beta|=k\}$. With this normalization, for symmetric $A\in\mathbb R^{m\times m}$, $\langle A,\He_2(x)\rangle=(x^\top Ax-\Tr(A))/\sqrt{2}$, and the flattening preserves the Frobenius product. 
For vectors and matrices, $\|\cdot\|$, $\|\cdot\|_{\mathrm F}$, and $\|\cdot\|_{\mathrm{op}}$ denote Euclidean, Frobenius, and operator norms. All asymptotic notation is for $d\to\infty$: $O_d,o_d,\Omega_d,\omega_d$ denote bounded, vanishing, bounded-away-from-zero, and diverging ratios, respectively, and $\Theta_d$ means both $O_d$ and $\Omega_d$. We write $a_d\lesssim b_d$ for $a_d=O_d(b_d)$, up to constants independent of $d,n$, and $a_d\asymp b_d$ when both inequalities hold.

}, we consider a supervised learning problem where we observe a dataset $\{(x_\mu, y_\mu)\}_{\mu=1}^n$ with Gaussian inputs $x_\mu \sim \mathcal{N}(0, I_d)$, and labels generated by a compositional target of the form
\begin{align}
\forall \mu \in [n], \qquad y_\mu = f_{\star}(x_\mu).
\end{align}
As discussed above, the key feature of our model is that $f_{\star}$ is not treated as a generic high-dimensional function, but rather as the composition of successive nonlinear transformations across layers. Concretely, we assume that the target admits the following compositional structure:
\begin{align}
\label{eq:def:target}
x_\mu \in \mathbb{R}^d 
\;\longrightarrow\; h^{(1)}_\mu \in \mathbb{R}^{d^\varepsilon}
\;\longrightarrow\; h^{(2)}_\mu \in \mathbb{R}
\;\longrightarrow\; y_\mu = g(h^{(2)}_\mu).
\end{align}
\begin{itemize}[noitemsep,leftmargin=1em]
    \item \textbf{First Layer}: First, the inputs $x_\mu$ are lifted to a larger ambient space: we consider the flattened degree-$q$ Hermite tensors $F_\mu = \mathcal{F}(\He_q(x_\mu))\in \RR^{D}$ with $D:=B(d,q) = \binom{d+ q-1}{q}$. Let $\varepsilon >0$, and define $d_1 = \lfloor d^{\varepsilon} \rfloor \in \NN$ the number of hidden directions. For all $i \in [d_1]$, let $A^{(1)}_{i} \in \RR^{D}$ denote the $d_1$ Gaussian weight vectors of the teacher, with independent entries and variance $\frac{1}{d^{q}}$. The first layer of the hierarchical model is defined by projections on the $d_1$ directions in the ambient space of dimension $D$. Namely, the first layer of the teacher is the vector $h^{(1)}_{\mu} \in \RR^{d_1}$, with entries: 
    \begin{equation}
       \forall (\mu, i) \in [n]\times [d_1] ,\quad h^{(1)}_{\mu,i} = \langle A^{(1)}_{i}, F_\mu \rangle \, , 
    \end{equation} 
    We identify the collection of symmetric tensors $\{A_i^{(1)}\}_{i\in[d_1]}$ with the row-stacked matrix $A^{(1)}\in\RR^{d_1\times D}$ whose $i$-th row is the flattened tensor $\mathcal F(A_i^{(1)})^\top$.
    \item \textbf{Second layer}: Let $\gamma \geq 0$ be the power-law exponent. For each $i \in [d_1]$, we define the weight $\lambda_i = Z_{\gamma} z_i i^{-\gamma}$, where $Z_{\gamma}$ is a normalization factor, and $z_i \sim \mathrm{Rad}(\sfrac{1}{2})$. We also define the diagonal second-layer matrix $A^{(2)}=\mathrm{diag}(\lambda_1,\ldots,\lambda_{d_1})\in\RR^{d_1\times d_1}$. The second layer is
    \begin{equation}
        h_\mu^{(2)}
        =
        \langle A^{(2)},\He_2(h_\mu^{(1)})\rangle
        =
        \frac{1}{\sqrt{2}}\sum_{i=1}^{d_1}\lambda_i\left((h_{\mu,i}^{(1)})^2-1\right).
        \label{eq:def_second_layer}
    \end{equation}
    \item \textbf{Output:} The observed output $y_\mu = g(h^{(2)}_\mu)$ is defined with respect to the second latent feature $h^{(2)}_\mu$ through the non-linearity $g:\RR \rightarrow \RR$. Additionally, we assume that $g$ is centered and has information exponent ${\rm IE}(g)=1$, i.e., $\EE[g(z)]=0$ and $\EE[g'(z)]\neq0$ where $z\sim\mathcal{N}(0,1)$. Thus, the output reads, 
    \begin{align}
        \forall \mu \in [n],\qquad  y_\mu = g(h_{\mu}^{(2)})\,.
    \end{align}
    Note that the value of $Z_{\gamma}$ in $\lambda_i = Z_{\gamma}z_i i^{-\gamma}$ is precisely chosen such that $\mathrm{Var}(y_\mu)=\Theta_d(1)$. Its precise value is $Z_{\gamma} = (\sum_{i=1}^{d_1} i^{-2\gamma})^{-\sfrac{1}{2}}$ and is derived in \Cref{lemma:normalization_C_gamma}. 
\end{itemize}
This compositional model is closest technically to the hierarchical spectral model of \citep{tabanelli2026deep}, and belongs to the broader line of works on hierarchical polynomial targets \citep{wang2023learning,nichani2024provable,fu2025learning}. The important difference is that the second layer in \cref{eq:def_second_layer} has an anisotropic power-law spectrum, which is the key ingredient behind the scaling behavior studied below. The representation, illustrated in \cref{fig:illustration} (left), can be interpreted as follows: the first layer extracts intermediate features $h^{(1)}\in\mathbb R^{d_1}$ from the degree-$q$ Hermite feature space $\mathbb R^D$, the second layer combines these features into a scalar representation $h^{(2)}$, and the output is obtained through a one-dimensional readout $g$.

This compositional structure has important statistical consequences. Although the signal is built from degree-$q$ intermediate features, its leading informative component is a degree-$2q$ function of the input, so shallow orthogonally invariant kernels require $n=\omega_d(d^{2q})$ samples \citep{mei2022generalization}. A depth-exploiting procedure can instead recover the degree-$q$ latent representation at the scale $D=\Theta_d(d^q)$ and then solve a low-dimensional second-layer problem \citep{tabanelli2026deep}. The power-law anisotropy introduced here further makes recovery sequential: strong directions emerge first, weak directions later, yielding the scaling mechanism analyzed below.

\subsection{The spectral algorithm}\label{subsec:spectral_algo}
We now show that exploiting the compositional structure in \cref{eq:def:target} allows efficient recovery of the hidden features. The key idea is to construct a hierarchical spectral algorithm. More precisely, our algorithm consists in estimating the first latent feature $h^{(1)}_\mu$, followed by an estimation of the second latent feature $h^{(2)}_\mu$, and finally learning the function $g$ from the second latent features $h^{(2)}\in\mathbb{R}^{d}$, which boils down to a scalar non-parametric problem. The end-to-end algorithm is presented in Alg.~\ref{alg:agnostic-recovery}.\looseness=-1

\subsubsection{Estimation of the First Layer}

\begin{figure}[t]
    \centering
    \includegraphics[width=\linewidth]{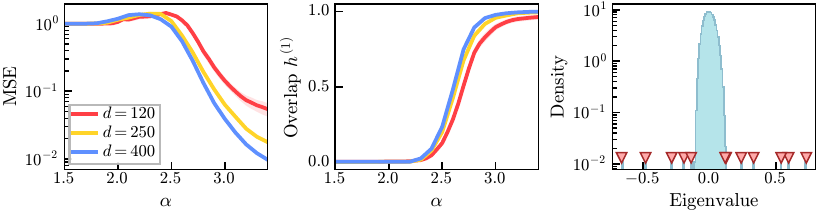}
    \caption{
    \textbf{Feature recovery in the power-law setting.}
    Parameters: $q=2$, $\varepsilon=0.5$, $\gamma=0.4$, $g^\star=\mathrm{id}$.
    \textbf{Left:} Test MSE versus $\alpha=\log(n)/\log(d)$.
    \textbf{Center:} First-layer feature overlap $q_h^{(1)}$ versus $\alpha$
    \textbf{Right:} Spectrum of the first-layer moment matrix for $d=140$ and $\alpha=3.5$, red markers indicate the top $d_1=12$ absolute eigenvalues.
    }
    \label{fig:several-d}
\end{figure}

\begin{definition}
    Let $\{ (x_\mu, y_\mu) \} _{\mu=1}^{n}$ denote a batch of data, and define the following spectral estimator:
    \begin{align}
    \widehat{C} = \frac{1}{n} \sum_{\mu=1}^{n} y_\mu \He_2(F_{ \mu})\in\mathbb{R}^{D\times D}
    \end{align}
    Then, the top $d_{1}$ eigenvectors $\widehat A^{(1)}_{1}, \dots, \widehat A^{(1)}_{d_1}$ of $\widehat{C}$ define a spectral estimator of the hidden directions $A^{(1)}_{1}, \dots, A^{(1)}_{d_1}$. 
\label{def:estimator_1}
\end{definition}
To see why \cref{def:estimator_1} defines an estimator for $A^{(1)}$, consider the decomposition:
\begin{equation}
    \widehat{C} = \EE[\widehat{C}] + (\widehat{C} - \EE[\widehat{C}]) = \mathrm{signal} + \mathrm{noise}
\end{equation}
\textbf{Recovery of $A^{(1)}_i$ by the signal $\EE[\widehat{C}]$:}
The alignment of $\EE[\widehat{C}]$ with the hidden directions $A^{(1)}$ can be understood from a Gaussian equivalence heuristic \citep{tabanelli2026deep}. The degree-$q$ Hermite features $F_{q,\mu}\in\RR^{D}$ behave, in the regime $d\to\infty$ with $n\ll d^{q+1}$, as a Gaussian vector $\tilde{x}_\mu\in\RR^{D}$ of dimension $D\asymp d^q$. Under this equivalence, the label $y_\mu$ depends on $\tilde{x}_\mu$ only through its projection onto the $A^{(1)}$-subspace, i.e.\ through $\tilde{x}^{\parallel}_\mu := A^{(1)}\tilde{x}_\mu$, while the orthogonal component $\tilde{x}^{\perp}_\mu$ is independent of $y_\mu$. Splitting $\widehat{C}$ along this decomposition gives
\begin{equation}
    \widehat{C} \;\simeq\; \underbrace{\nu_1\, A^{(1)\top} A^{(2)} A^{(1)}}_{\text{signal } \EE[\widehat{C}]} \;+\; \underbrace{\tfrac{1}{n}\,\tilde{X}_\perp Y \tilde{X}_\perp^{\top} + o_d(1)}_{\text{noise } \widehat{C}-\EE[\widehat{C}]},
\end{equation}
where $\nu_1=\mathbb{E}[g'(z)]$ is the first Hermite coefficient of $g_\star$ and $\tilde{X}_\perp\in\RR^{B(d,q)\times n}$ collects the perpendicular components. The signal is a rank-$d_1$ matrix supported on the $d_1$-dimensional row space of $A^{(1)}$, so it places $d_1$ spikes inside the $d^q\times d^q$ matrix $\widehat{C}$, consequently the top eigenvectors of $\widehat{C}$ recover $A^{(1)}$ as shown in Fig~\ref{fig:several-d}. The noise, on the other hand, is a sample covariance of vectors that are independent of the labels and live in the $\sim d^q$-dimensional orthogonal complement, so it behaves as an isotropic bulk of operator-norm scale $\sqrt{d^q/n}$, motivating the following heuristic:
\begin{equation}
    \widehat{C} \simeq \EE[\widehat{C}] + \sqrt{\dfrac{d^{q}}{n}}\, W, \quad \mathrm{with}\quad W\sim \mathrm{GOE-like}\,.
    \label{eq:signal_vs_noise_heuristic}
\end{equation}

\textbf{Effective signal-to-noise ratio for the $i$-th feature:} The $i$-th population spike has magnitude $|\lambda_i|=Z_\gamma i^{-\gamma}$, while the empirical noise has size $\sqrt{d^q/n}$; hence the $i$-th direction emerges when $Z_\gamma i^{-\gamma}\gtrsim \sqrt{d^q/n}$, equivalently
\begin{equation}
    n_i \gtrsim \frac{d^q i^{2\gamma}}{Z_\gamma^2}.
    \label{eq:pred_sample_complexity}
\end{equation}

To formalize the recovery of individual hidden directions by the spectral estimator, we use the following notion of weak recovery.

\begin{definition}[Weak Recovery]
We say that a sequence of estimators $(u_N)_{N \in \NN}$, with $u_N \in \RR^{N}$, \emph{weakly recovers} a sequence of unit vectors $(v_N)_{N \in \NN}$, with $v_N \in \RR^{N}$, if there exists a constant $c > 0$, independent of $N$, such that
\[
\liminf_{N \to \infty}\, \mathbb{P}\!\left( \left|\langle u_N, v_N \rangle\right| \geq c \right) = 1,
\]
i.e.\ $|\langle u_N, v_N \rangle| \geq c$ with high probability as $N \to \infty$.
\end{definition}

Given the recovered eigenvectors $\widehat A^{(1)}_1,\ldots,\widehat A^{(1)}_{d_1}$, we define the learned first-layer representation by
\begin{equation}
    \widehat h^{(1)}_{\mu,i}
    =
    \langle \widehat A^{(1)}_i,F_{\mu}\rangle,
    \qquad
    (\mu,i)\in[n]\times[d_1].
    \label{eq:def_hhat_first_layer}
\end{equation}
Thus the first spectral step produces a low-dimensional representation $\widehat h^{(1)}_\mu\in\RR^{d_1}$ of the input.

\subsection{Second-layer and readout estimation}\label{subsec:est_2nd_layer}

Once the first-layer representation has been estimated, the rest of the pipeline operates in the low-dimensional latent space $\RR^{d_1}$.
\begin{equation}
    \widehat A^{(2)}
    =
    \frac1n\sum_{\mu=1}^n y_\mu \He_2\para{\widehat h^{(1)}_\mu}
    \in \RR^{d_1\times d_1}.
    \label{eq:def_A2_hat}
\end{equation}
In contrast with the first layer, this step is performed in dimension $d_1\ll D$ and is not responsible for the high-dimensional spectral transition analyzed below. In the diagonal teacher considered here, the population version of $\widehat A^{(2)}$ is aligned with the second-layer matrix $A^{(2)}$ defined in \Cref{eq:def_second_layer}. As in the hierarchical spectral pipeline of \citep{tabanelli2026deep}, the statistical bottleneck is the recovery of the first-layer directions in the degree-$q$ Hermite feature space. Once these directions have been recovered, estimating the second layer only involves the learned representation in dimension $d_1$, and therefore does not generate the feature-wise high-dimensional transitions studied in \Cref{sec:main_th}. We then define the learned second latent feature by
\begin{equation}
    \widehat h^{(2)}_\mu
    =
    \left\langle \widehat A^{(2)},\He_2\para{\widehat h^{(1)}_\mu}\right\rangle.
    \label{eq:def_hhat_second_layer}
\end{equation}
It remains to fit the one-dimensional readout $g$ from the learned scalar features $\widehat h^{(2)}_\mu$. In the experiments, we use ridge regression on a fixed feature map $\phi:\RR\to\RR^p$,
\begin{equation}
    \widehat a
    \in
    \mathrm{argmin}_{a\in\RR^p}
    \frac1n\sum_{\mu=1}^n
    \para{y_\mu-\langle a,\phi(\widehat h^{(2)}_\mu)\rangle}^2
    +
    \rho\|a\|^2.
    \label{eq:ridge_readout}
\end{equation}
The resulting predictor is denoted by $\widehat f$. The full estimation error will be measured by the test or generalization mean-squared error.

% After computing the estimator in \Cref{def:estimator_1}, we proceed with kernel ridge regression with the new features given by the vector $\hat{h}^{(1)}_{\mu} = \langle\hat{A}, \He_2(F_\mu)\rangle$, where $\hat{A}$ is the estimator of $A^{(1)}$ obtained by the spectral algorithm. 

% To make the discussion above precise, we need to quantify recovery. Below, we introduce two metrics we will adopt in the following. 

% \begin{definition}[MSE]
% Given a sample complexity $n$, denote by $I_{n} \subseteq [d_1]$ the indices of all the directions that are retrieved at this sample complexity. We define the MSE as: 
% $$
% \mathrm{MSE}(n) = \| \sum_{i=1}^{d_1} \lambda_{i} A_{i}  - \sum_{i \in I} \lambda_i A_i\|_{2}^2,
% $$
% where $\lambda_{i}$ are defined in \Cref{eq:def_second_layer}. 
% \label{def_MSE}
% \bl{Change this to generalisation error / MSE on the labels.}
% \end{definition}
\begin{definition}[Mean-squared error]
Let $\widehat f$ be the predictor returned by the algorithm trained on the dataset $\{x_\mu, y_\mu\}_{\mu \in [n]}$. We define its mean-squared error, or equivalently its squared-loss generalization error, as
$$
    \mathrm{MSE}(n)
    =
    \mathbb E_{x\sim \mathcal N(0,I_d)}
    \left[
        \left(\widehat f(x)-f_\star(x)\right)^2
    \right],
$$
where the expectation is over an independent test point $x$. Equivalently, if $y=f_\star(x)$ and $\widehat y=\widehat f(x)$, then $\mathrm{MSE}(n)=\mathbb E[(\widehat y-y)^2]$.
\label{def:MSE}
\end{definition}
% The weighting in the MSE reflects the fact that each direction has a different weight in the teachers output. 

By standard results on kernel ridge regression \citep{caponnetto2007optimal}, the error introduced by the ridge regression step in Equation \ref{eq:ridge_readout} is $\simeq \frac{1}{n}$ which we show to be subdominant compared to the error in the estimation of $\widehat h^{(2)}$.
Hence, the error is governed by which first-layer directions have been recovered. Directions with larger $|\lambda_i|$ contribute more strongly to the label, so failing to recover them has a larger effect on the MSE. This leads to the unrecovered-tail estimate below.

The preceding signal-to-noise heuristic gives the scaling picture that the main results make precise. Write $a_i = |\lambda_i|$ for the strength of the $i$-th latent feature, so that $a_i \propto i^{-\gamma}$. The $i$-th direction becomes recoverable when its spike separates from the empirical noise, which as we have discussed occurs at the sample scale $n_i \asymp d^q/a_i^2$,
and remains hidden below this threshold. If $m(n)$ denotes the number of directions recovered at sample size $n$ (see \cref{fig:illustration} for an illustration), the prediction error is then controlled by the unrecovered tail, $\mathrm{MSE}(n) \simeq \sum_{i > m(n)} a_i^2$. In particular, for summable spectra $2\gamma > 1$,
\begin{equation}
\mathrm{MSE}(n) \asymp (\sfrac{n}{d^q})^{-1 + 1/(2\gamma)}.
\label{eq:prediction_MSE}
\end{equation}
Thus, the model predicts smooth power-law generalization as the aggregate effect of many sharp spectral recovery transitions. The next section proves this prediction.

\begin{remark}
Our layer-wise spectral estimator is closely aligned with the learning strategy of \citep{tabanelli2026deep}. In particular, their connection to gradient descent in App.C suggests that the spectral estimator studied here is the one naturally emerging from gradient-based training in this hierarchical setting.
\end{remark}

\subsection{Further Related work}\label{subsec:related_works}

\paragraph{Hierarchical and compositional models.}
Depth is often argued to be effective because it allows to exploit hierarchical or compositional structure in the data. This intuition has motivated depth-separation results and compositional target models from both approximation-theoretic and statistical viewpoints \citep{pmlr-v49-telgarsky16,mhaskar2017and,poggio2017and,daniely2017depth,mossel2016deep}. More recent works study random hierarchy models and high-dimensional hierarchical targets, showing that deep networks or layer-wise procedures can exploit intermediate representations inaccessible to shallow methods \citep{garnierbrun2025transformerslearnstructureddata,cagnetta2024how,dandi2025computational}. Closest to us are the analyses of hierarchical polynomial targets and nonlinear feature learning in three-layer networks \citep{wang2023learning,nichani2024provable,fu2025learning}, as well as the hierarchical spectral method of \citep{tabanelli2026deep}. We depart from these works by adding an anisotropic power-law spectrum over the latent features and by proving matching upper and lower thresholds for individual feature recovery, which allows us to derive an aggregate scaling law from the cascade of transitions.

\paragraph{Multi-index and spectral methods.}
A  related line studies multi-index models, where the target depends on a low-dimensional projection of the input. These have been used to characterize statistical-computational gaps, weak-recovery thresholds, and the limitations of kernel methods \citep{aubin2018committee,barbier2019optimal,BenArous2021,abbe2022merged,bietti2022learning,troiani2024fundamental,damian2024computational}. Spectral methods are particularly relevant in this context, since they provide sharp recovery guarantees for low-dimensional structure in Gaussian models \citep{lu2020phase,mondelli2018fundamental,maillard2022construction,kovavcevic2025spectral,defilippis2025optimal}. Our estimator builds on this spectral viewpoint, but differs from standard multi-index learning in that the latent structure is compositional and the strengths of the recovered directions are anisotropic and power-law distributed.

\paragraph{Scaling laws and power-law spectra.}
A large body of work has studied scaling laws in settings where the representation is fixed, for instance in kernel or random-feature models, where generalization is controlled by the spectrum of the associated feature map \citep{caponnetto2007optimal,bordelon2020spectrum,spigler2020asymptotic,cui2021generalization,maloney2022solvable,cui2023error,bahri2024explaining,paquette2024four,defilippis2024dimension,atanasov2024scaling,bordelon2024dynamical,wortsman2025kernel}. A distinct line of work investigates how increasing the number of trainable parameters affects optimization, initialization, and expressivity \citep{yang2021tuning,bordelon2023depthwise,chizat2024feature,chaintron2026resnets}. More recently, several works on quadratic and shallow neural-network models have shown how scaling laws can arise from feature learning itself \citep{ren2025emergence,benarous2025learning,defilippis2025optimal,defilippis2025scaling,boncoraglio2025single}. Closest to our work are \citep{defilippis2025optimal,defilippis2025scaling}, which obtain related rates and learned-representation spectra, including the sequential emergence of learned directions. The present work shows that analogous rates arise in a genuinely multi-layer, hierarchical setting, suggesting that the mechanism linking power-law spectra, feature recovery, and scaling laws is robust beyond shallow quadratic models.

\paragraph{Gaussian equivalence and polynomial features.} A related technical literature studies Gaussian equivalence and universality phenomena for polynomial feature maps, random feature matrices, and high-dimensional kernel matrices \citep{hu2024asymptotics,COLTXU2025,wen2025does,lu2025equivalence}. Although our proof does not proceed by replacing the Hermite feature vectors with an equivalent Gaussian model, this line of work provides a useful comparison point for understanding when polynomial features behave as if they were Gaussian and when non-Gaussian corrections become relevant. Our analysis instead keeps the Hermite structure explicit and uses Wiener-chaos tools, such as product formulae, integration by parts, hypercontractivity, and contraction estimates \citep{nualart2005central,nourdin2009stein,nourdin2012normal}. These tools allow us to control the empirical Hermite moment matrices and the perturbative eigenvector expansion directly, without invoking a full Gaussian-equivalence reduction.
\section{Main Theorems}\label{sec:main_th}

\begin{figure}[t]
    \centering
    \includegraphics[width=0.92\linewidth]{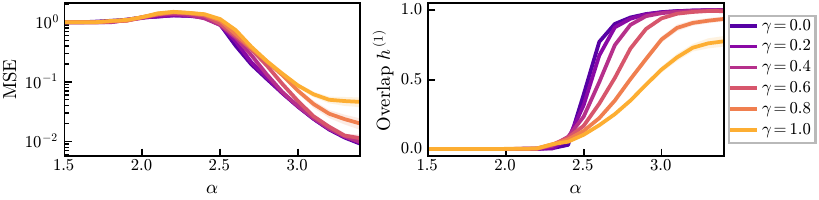}
\caption{
\textbf{Effect of the power-law exponent.}
Parameters: $d=400$, $q=2$, $\varepsilon=0.5$, $g^\star=\mathrm{id}$.
\textbf{Left:} Test MSE versus $\alpha=\log(n)/\log(d)$.
\textbf{Right:} First-layer feature overlap $q_h^{(1)}$ versus $\alpha$.
}
\label{fig:several-gamma}
\end{figure}

We now rigorously prove the predictions from the heuristics described in the
last section under one of the following two regimes.

\begin{assumption}[Readout regimes]
\label{ass:readout-regimes}
We work under one of the following two regimes:
\begin{enumerate}[label=\textnormal{(\roman*)}, itemsep=0pt, topsep=0pt]
    \item Identity readout: $g(t)=t$ and $\gamma>0$, or
    \item Delocalized nonlinear readout: $0<\gamma<1/2$ and $g$ is centered with information exponent one, i.e. $\mathbb E[g'(Z)]\neq0$ for $Z\sim\mathcal N(0,1)$.
\end{enumerate}
\end{assumption}

\subsection{Recovery of the First Layer}
\label{subsection:recovery_first_layer}

Our first set of results concerns the recovery of the different directions $A^{(1)}_{1}, \dots, A^{(1)}_{d_1}$. This will be proven by deriving matching upper and lower bounds to the sample complexity.

\begin{theorem}[Weak Recovery]
    Consider the setting described in \Cref{section:setting}, and assume that one of the two regimes in \Cref{ass:readout-regimes} holds. Assume moreover that $d_1=d^\varepsilon$ under the technical small-$\varepsilon$ growth conditions used in the proof; in particular, $\varepsilon<q/2$, and in the nonlinear regime $0<\gamma<1/2$ we also require $\varepsilon<q/(1-2\gamma)$. 
    \begin{enumerate}
        \setlength{\itemsep}{0pt}
        \setlength{\parsep}{0pt}
        \setlength{\topsep}{0pt}
        \setlength{\partopsep}{0pt}
        \item \textbf{Sufficient Sample Complexity:} The $k$-th eigenvector of $\hat{C}$, denoted by $u_{k}$ satisfies 
    \begin{align}\label{eq:ov_rate}
    \left | \left \langle \dfrac{u_{k}}{\|u_{k}\|_2}, \dfrac{A^{(1)}_{k}}{\|A^{(1)}_{k} \|_2 } \right \rangle  \right | = 1 - O_d\left(
    \frac{d^q\,k^{2\gamma}}{nZ_\gamma^2}
\right). 
    \end{align}
    In particular, if $n = \omega_{d} \left (d^{q} k^{2\gamma}Z_{\gamma}^{-2} \right)$, with high probability the direction $A^{(1)}_{k}$ is recovered by  \Cref{alg:agnostic-recovery}. This gives a $1/n$ rate for the decay of the overlap error. 
    \item \textbf{Necessary Sample Complexity:} Let $\delta>0$ be independent of $d$.  If $n = \Theta \left (d^{q} k^{2\gamma}Z_\gamma^{-2} d^{-\delta}\right)$, then, with probability at least $1 - o_{d}(1)$,  $u_{k}$ is not recovered by \Cref{alg:agnostic-recovery}. 
    \end{enumerate}
    \label{thm:sample_complexity}
\end{theorem}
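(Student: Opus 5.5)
The plan is to write $\widehat C = S + E$, where $S=\EE[\widehat C]$ is the signal and $E=\widehat C-\EE[\widehat C]$ the noise, and to study each eigenvector $u_k$ through a resolvent expansion rather than through Davis--Kahan.

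\textbf{Step 1: the signal.} Using the Hermite product formula and Gaussian integration by parts on $y_\mu=g(h^{(2)}_\mu)$, I will show that
\[
S=\nu_1 A^{(1)\top}A^{(2)}A^{(1)}+R,
\]
with $\|R\|_{\op}$ negligible compared to $Z_\gamma d_1^{-\gamma}$.
\begin{itemize}
\item In the identity regime this is essentially exact, since $\widehat C$ is then linear in $\He_2(h^{(1)})$.
\item In the nonlinear regime, $h^{(2)}$ is a sum of $d_1$ chaos terms with weights of size $Z_\gamma i^{-\gamma}$. The condition $\gamma<1/2$ makes it delocalized, so $h^{(2)}$ is close to Gaussian, by fourth-moment or Stein bounds in the spirit of \citep{nourdin2009stein}. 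The higher Hermite coefficients of $g$ then contribute corrections of order $\max_i\lambda_i^2\asymp Z_\gamma^2$.
\end{itemize}
Since $A^{(1)}$ has nearly orthonormal rows ($d_1\ll D$, using $\varepsilon<q/2$), the eigenvectors of $S$ are $\approx A_i^{(1)}/\|A_i^{(1)}\|$ with eigenvalues $\theta_i\approx\nu_1\lambda_i$. The random signs $z_i$ mean that "top $d_1$" has to be read by absolute value, with the ordering of $|\lambda_k|$ as given.

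\textbf{Step 2: the noise.} I will prove $\|E\|_{\op}\lesssim\sqrt{D/n}$ (up to $d^{o(1)}$) with high probability. The tools are a matrix concentration or moment method for sums of independent matrices $y_\mu\He_2(F_\mu)$, hypercontractivity to control the heavy tails of the Hermite features, and the regime $n\ll d^{q+1}$ where needed.

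I also need the finer fact that $E$ acts like a delocalized, GOE-like matrix relative to the signal subspace $U=\mathrm{span}(A^{(1)}_i)$:
\[
\|P_U E P_U\|_{\op}\lesssim \sqrt{d_1/n}\,d^{o(1)},\qquad \|P_U E\|_{\op}\lesssim\sqrt{D/n}.
\]
Here $P_U$ is the projection onto $U$ and $P_U^\perp$ the projection onto its orthogonal complement.

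\textbf{Step 3 (main obstacle): individual eigenvectors via the resolvent.} Let $\hat\theta_k$ be the $k$-th eigenvalue of $\widehat C$. Write $u_k=P_Uu_k+P_U^\perp u_k$. The eigen-equation gives
\[
P_U^\perp u_k=(\hat\theta_k-P_U^\perp E P_U^\perp)^{-1}P_U^\perp E P_U u_k.
\]
Whenever $|\hat\theta_k|\ge 2\|E\|_{\op}$, this yields
\[
\|P_U^\perp u_k\|^2\lesssim \|E\|^2_{\op}/\theta_k^2\asymp D k^{2\gamma}/(nZ_\gamma^2).
\]
Within $U$, I reduce to the $d_1\times d_1$ effective matrix
\[
M(\hat\theta_k)=\Lambda+P_UEP_U+P_UEP_U^\perp(\hat\theta_k-P_U^\perp EP_U^\perp)^{-1}P_U^\perp EP_U,
\]
where $\Lambda=\mathrm{diag}(\theta_i)$. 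The concern is that the power-law gaps $|\theta_k-\theta_{k+1}|\sim Z_\gamma k^{-\gamma-1}$ can be smaller than the naive perturbation. The approach I would try first is a deterministic-equivalent argument: the resolvent term concentrates around a scalar multiple of the identity, $m(\hat\theta_k)\,\frac{D}{n}I$, because the noise is isotropic. A scalar shift of the identity does not mix directions inside $U$. What remains is off-diagonal fluctuation of size $\sqrt{d_1/n}\,d^{o(1)}$, and comparing this to the gaps is where the small-$\varepsilon$ conditions enter. That concentration is the hardest estimate, and I expect it to need second-moment bounds on entries of $P_UEP_U^\perp E P_U$ computed with Wiener-chaos contraction estimates. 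Combining the two pieces gives $|\langle u_k,\hat A_k\rangle|^2\ge 1-O(Dk^{2\gamma}/(nZ_\gamma^2))$, which is the sufficient part of the theorem since $D\asymp d^q$.

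\textbf{Step 4: the necessary part.} Take $n=d^{q}k^{2\gamma}Z_\gamma^{-2}d^{-\delta}$, so that $\theta_k\ll\sqrt{D/n}\,d^{-\delta/2}$. I will use the BBP-type converse.
\begin{itemize}
\item First, a lower bound $\|E\|_{\op}\gtrsim\sqrt{D/n}$ by a test-vector or trace-moment argument.
\item Second, if $u_k$ had overlap at least $c$ with $A_k$, the same resolvent identity would force $\hat\theta_k$ to satisfy the master equation $\hat\theta_k\approx\theta_k+\frac{D}{n}m(\hat\theta_k)$ with the corresponding eigenvector weight. For sub-critical $\theta_k$ this equation only allows overlap of order $\theta_k^2 n/D\le d^{-\delta}=o_d(1)$.
\end{itemize}
Equivalently, the contribution $\langle A_k,\cdot\rangle$ to the top-$k$ eigenvectors of $\widehat C$ is dominated by bulk-edge eigenvectors, which are delocalized with respect to the fixed direction $A_k$. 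This gives non-recovery with probability $1-o_d(1)$.
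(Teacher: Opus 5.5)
Your sufficiency argument takes a genuinely different route from the paper, and on the out-of-subspace side it is a sounder one. The paper integrates a first-order Neumann expansion of the resolvent over a contour. In its notation this gives $\hat u_k=u_k+\sum_{j\neq k}\frac{u_j^\top\Delta u_k}{\lambda_k-\lambda_j}u_j+\lambda_k^{-1}P_{\mathrm{Ker}}\Delta u_k+\dots$. That expansion tacitly needs $\|\Delta\|_{\mathrm{op}}$ below the spike gaps $\sim Z_\gamma k^{-\gamma-1}$, i.e.\ $n\gg d^qk^{2\gamma+2}Z_\gamma^{-2}$. Your Schur-complement bound $\|P_U^\perp u_k\|\le 2\|E\|_{\mathrm{op}}/|\hat\theta_k|$ needs only $|\hat\theta_k|\ge 2\|E\|_{\mathrm{op}}$. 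You also correctly flag that the second-order term $P_UEP_U^\perp(\hat\theta_k-P_U^\perp EP_U^\perp)^{-1}P_U^\perp EP_U$, of size $D/(n|\hat\theta_k|)$, can exceed those gaps and must be shown to be nearly diagonal; the paper simply drops it into the remainder.

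\textbf{Gap in the sufficiency part: deterministic mixing inside $U$.} In Step 1 you conclude, from $\|R\|_{\mathrm{op}}$ being small and the rows of $A^{(1)}$ being nearly orthonormal, that the eigenvectors of $S$ are $\approx A^{(1)}_i/\|A^{(1)}_i\|$. In Step 3 you then take $\Lambda$ diagonal. With gaps $\sim Z_\gamma k^{-\gamma-1}$, this is exactly the operator-norm (Davis--Kahan type) inference you are trying to avoid. Worse, your own estimate $\|R\|_{\mathrm{op}}\asymp Z_\gamma^2=d_1^{2\gamma-1}$ violates your criterion $\|R\|_{\mathrm{op}}\ll Z_\gamma d_1^{-\gamma}\asymp d_1^{-1/2}$ once $\gamma\ge 1/4$. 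The $O(\lambda_i^2)$ corrections are harmless only because they are, to leading order, diagonal in the teacher basis.

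What you actually need is entrywise control of $\langle A^{(1)}_j,\EE[\widehat C]A^{(1)}_k\rangle\propto\EE[g(h^{(2)})h_jh_k]$ for $j\neq k$. These entries come from $\langle A^{(1)}_j,A^{(1)}_k\rangle\sim d^{-q/2}$ and from higher-chaos contractions. For same-sign pairs they are typically of order $Z_\gamma\min(j,k)^{-\gamma}d^{-q/2}$, which exceeds your random $n^{-1/2}$ entries above threshold. So it is not true that only $\sqrt{d_1/n}$ off-diagonal fluctuations remain.

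The paper bounds these entries by iterated Gaussian integration by parts and cumulant/contraction estimates. It then feeds them into the gap sum $\sum_{j\neq k}\min(j,k)^{-2\gamma}/(k^{-\gamma}-j^{-\gamma})^2\lesssim k^2+d_1$. The result is an $O((k^2+d_1)/d^q)$ contribution, and this is where $\varepsilon<q/2$ really enters. Because this term does not decay with $n$, your Steps 1--3 can deliver at best $1-O\!\left(\frac{d^qk^{2\gamma}}{nZ_\gamma^2}+\frac{k^2+d_1}{d^q}\right)$, not a pure $1/n$ rate. The paper's own argument has the same floor.

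\textbf{Main gap: the necessary part (Step 4).} Below threshold fewer than $k$ spikes separate, so $\hat\theta_k$ sits at the spectral edge of $P_U^\perp EP_U^\perp$. There the Schur-complement resolvent is no longer controlled by $(|\hat\theta_k|-\|E\|_{\mathrm{op}})^{-1}$.

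\begin{itemize}
\item Your master equation $\hat\theta_k\approx\theta_k+\frac{D}{n}m(\hat\theta_k)$, and your claim that bulk-edge eigenvectors are delocalized with respect to $A^{(1)}_k$, both amount to an isotropic local law, at least at mesoscopic scales near the edge.
\item That law would be needed for $E=\frac1n\sum_\mu y_\mu\He_2(F_\mu)$, a matrix of Hermite features whose weights $y_\mu$ are themselves correlated with $A^{(1)}_k$.
\item Nothing in Steps 1--3 supplies this, and neither does the lower bound $\|E\|_{\mathrm{op}}\gtrsim\sqrt{D/n}$.
\item The figure ``overlap $\lesssim\theta_k^2n/D$'' is the first-order heuristic $1/(1+\|E\|_{\mathrm{op}}^2/\theta_k^2)$, not an output of the master equation.
\end{itemize}

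The paper avoids random-matrix input here. It isolates $w=\lambda_k^{-1}P_{\mathrm{Ker}}\Delta u_k$ and writes it as an empirical mean of i.i.d.\ vectors. It shows $\EE\|w\|^2\gtrsim D/(n\lambda_k^2)\asymp d^{\delta}$ and bounds the fourth-to-second moment ratio by hypercontractivity. Paley--Zygmund then gives $\|w\|\to\infty$ with probability $1-o_d(1)$, and the paper inserts this into the eigenvector expansion. That insertion uses the first-order formula where $\|\Delta\|_{\mathrm{op}}\gg|\lambda_k|$, so the paper's converse is not airtight either.

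As written, your Step 4 remains a heuristic. To make it a proof you need one of two things. Either establish the edge-level isotropic law for this specific matrix. Or combine a moment-based lower bound of the paper's kind with a genuinely nonperturbative way of transferring it to $u_k$.
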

The complete proof of \Cref{thm:sample_complexity} is discussed in \Cref{appendix:proof_thm_1}, but we highlight below the main idea.
\begin{proof}[Sketch of the Proof:] 
A natural first attempt to prove this result would invoke the classical Davis--Kahan bound \citep{davis1970rotation}. Unfortunately, this does not provide the tight sample complexity due to the lack of a sufficiently large spectral gap in our power-law setting. The main technical challenge is therefore to have a finer control the eigenvectors of $\hat{C}$. In order to avoid using Davis--Kahan, we use the Neumann expansion of the resolvent (see, e.g \citep{eldridge2018unperturbed, greenbaum2020first}) to prove: 
\begin{equation}
    \hat{u}_{k} = u_{k }+\sum_{j=1, j\not = k}^{d_1} \dfrac{u_{j}^{\top} \Delta u_{j}}{\lambda_{k} - \lambda_{j}} u_{j} + \dfrac{1}{\lambda_{k}}P_{\mathrm{Ker}} \Delta u_{k} + o(\| \Delta\|_\op^2),
    \label{eq:main_eigenvector_perturbation_id}
\end{equation}
where $P_{\mathrm{Ker}}$ denotes the projection into $\mathrm{Ker}(\EE[\hat{C}])$, and $u_{i} = A^{(1)}_i$. 

\Cref{eq:main_eigenvector_perturbation_id} is valid as long as we are in the regime where $\|\hat{C} - \EE\left[ \hat{C}\right ]\|_\op $ is small. By using this identity, one then has to prove that all terms apart from $u_{k}$ have negligible norm. The rest of the proof concerns showing that both terms in \Cref{eq:main_eigenvector_perturbation_id} are either small (for the sufficiency result) or large (for the necessary part). 
\end{proof}

% \begin{remark}
%     The reader may note that the proof of \Cref{thm:sample_complexity} is stated for $\gamma >0$. However, the bound for the isotropic case $\gamma = 0$ is easier and follows by the same reasoning, without the need to use \Cref{eq:main_eigenvector_perturbation_id}. 
% \end{remark}
\begin{remark}
\Cref{thm:sample_complexity} is understood under \Cref{ass:readout-regimes}: the identity readout is covered for all $\gamma>0$, while nonlinear readouts of $\mathrm{IE}=1$ are covered only in the delocalized regime $0<\gamma<1/2$. The isotropic case $\gamma=0$ is simpler and follows by the same reasoning, without the need for \Cref{eq:main_eigenvector_perturbation_id}.
\end{remark}

Note that \Cref{thm:sample_complexity} tells us that the sample complexity derived heuristically in \Cref{eq:signal_vs_noise_heuristic} for recovering the $k$-th direction is indeed correct. At the same time, it also tells us that recovery cannot be achieved with any less data with \Cref{alg:agnostic-recovery}. 

\begin{corollary} Let $n  \in \NN$, with $ n = \omega_{d}(d^{q})$. Let $m_{n}$ denote the number of recovered directions given $n$. Then: 
\begin{align}
m_{n} = \left ( \frac{Z^2_\gamma}{D} n\right )^{\frac{1}{2\gamma}}
\end{align}
\label{cor:number_of_directions}
\end{corollary}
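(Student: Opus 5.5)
The plan is to read \Cref{cor:number_of_directions} as an order-of-magnitude statement: $m_n \asymp (Z_\gamma^2 n/D)^{1/(2\gamma)}$, with the threshold defined up to $d^{o(1)}$ factors. I will obtain it by inverting the feature-wise thresholds of \Cref{thm:sample_complexity} for each index $k\in[d_1]$. Set $D=B(d,q)=\binom{d+q-1}{q}$. For fixed $q$ we have $D\asymp d^q/q!$, so $d^q$ and $D$ are interchangeable up to constants in every threshold. Define the candidate cutoff $k^\star(n) := (Z_\gamma^2 n/D)^{1/(2\gamma)}$. This is exactly the solution in $k$ of $n = D k^{2\gamma} Z_\gamma^{-2}$. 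Since $k\mapsto D k^{2\gamma}Z_\gamma^{-2}$ is strictly increasing for $\gamma>0$, the condition $k \ll k^\star(n)$ is equivalent to $n = \omega_d(d^q k^{2\gamma}Z_\gamma^{-2})$. Likewise, $k \geq k^\star(n)\, d^{\delta/(2\gamma)}$ is equivalent to $n \lesssim d^q k^{2\gamma}Z_\gamma^{-2} d^{-\delta}$. The hypothesis $n=\omega_d(d^q)$, together with $Z_\gamma \gtrsim d_1^{-1/2}$ (or $Z_\gamma\asymp 1$ when $2\gamma>1$), will be used to check that $k^\star(n)\to\infty$ in the interesting range. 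The corollary is then understood with $m_n$ capped at $d_1$.

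The key steps, in order, are as follows.

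First, the lower bound on $m_n$. For every $k \le k^\star(n)/\omega_d(1)$, part 1 of \Cref{thm:sample_complexity} gives overlap $1-o_d(1)$ with high probability. To upgrade this to \emph{simultaneous} recovery of all such $k$, I will use the quantitative rate \eqref{eq:ov_rate}. I will check that the error term $O_d(\cdot)$ in the appendix proof holds on a single high-probability event, namely the event controlling $\|\Delta\|_\op$ and the projected noise. A union bound over at most $d_1=d^\varepsilon$ indices is then harmless, provided the failure probabilities are polynomially small, which the hypercontractivity-based concentration should give.

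Second, the upper bound on $m_n$. For $k \ge k^\star(n) d^{\delta/(2\gamma)}$, I will apply part 2 of \Cref{thm:sample_complexity}. Its statement is for $n$ exactly of order $d^q k^{2\gamma}Z_\gamma^{-2} d^{-\delta}$, so I need monotonicity to cover smaller $n$ relative to $k$. The approach is to note that, in the perturbation expansion \eqref{eq:main_eigenvector_perturbation_id}, the kernel term $\lambda_k^{-1}P_{\mathrm{Ker}}\Delta u_k$ has norm of order $\sqrt{d^q/n}/a_k$. This quantity only grows as $k$ increases, so nonrecovery persists for all larger indices.

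Combining the two steps yields $m_n = k^\star(n)\,d^{o(1)}$, which is the claimed formula at the resolution of the theorem.

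The main obstacle is the uniformity in $k$, in both directions. In the lower bound the issue is the union bound. In the upper bound the issue is extending the necessary condition from a single scale to all large indices. If the proof of part 2 does not directly yield uniform bounds, I will instead argue by counting. The top $d_1$ eigenvectors are orthonormal, and for large $k$ the noise spreads each $u_k$ over the $\sim D$-dimensional bulk. It then suffices to show that $\sum_{k>k^\star d^{\delta/(2\gamma)}} \langle u_k, A^{(1)}_k\rangle^2 = o(1)$ on average. I would bound this sum by the trace of the projected resolvent at spectral parameter below $\lambda_k$, and finish with Markov's inequality.
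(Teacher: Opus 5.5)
Your proposal follows the paper's route. In \Cref{section:derivation_rates} the corollary is obtained by inverting the per-direction threshold $n\asymp d^q i^{2\gamma}/Z_\gamma^2$ of \Cref{thm:sample_complexity} and appealing to sequential recovery to read off $i^\star=(Z_\gamma^2 n/d^q)^{1/(2\gamma)}$. Your $d^{o(1)}$ reading (using $D\asymp d^q$) and your uniformity-in-$k$ checks are extra care that the paper's argument never addresses, since it simply asserts that directions are learned sequentially.

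On the upper-bound side you likely do not need the resolvent-counting fallback. Carried out as in \Cref{appendix:proof_thm_2} (hypercontractivity for the diagonal term, the $\lambda_k^{-2}\EE\|Z_\mu\|^2$ bound for the cross term, and $\EE\|\bar Z_\mu\|^2\gtrsim D/\lambda_k^2$), the moment computation gives $\EE\|S_n\|^4/(\EE\|S_n\|^2)^2=1+O(1/n+1/D)$, with constants uniform in $k$. Chebyshev, used in place of Paley--Zygmund with $\theta\to0$, then gives a per-index failure probability $O(1/n+1/D)$. A union bound over the $d_1=d^{\varepsilon}$ indices absorbs this, since $\varepsilon<q/2$ and $n,D\gtrsim d^q$.
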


\subsection{Recovery of the Second Layer and Rates for the Generalization Error}
\label{subsection:generalization}
% In this section, we focus on the case where the non-linearity $g$ is Lipschitz. After computing $\widehat C^{(1)}$, \Cref{alg:agnostic-recovery} recovers a subset of the first-layer directions. Let $m_n$ denote the number of recovered directions at sample size $n$. From these $m_{n}$ recovered directions, the algorithm forms the learned first-layer features $\widehat h^{(1)}_{\mu,i}=\langle \widehat A_i^{(1)},F_\mu\rangle$ and then the corresponding centered quadratic features $(\widehat h^{(1)}_{\mu,i})^2-1$. If $g$ is a polynomial of fixed degree, fitting the second layer on these learned features is a low-dimensional linear regression problem. Since $m_n\le d_1\ll D=\Theta_d(d^q)$, this step is not the high-dimensional bottleneck, the dominant difficulty is the spectral recovery of the first-layer directions characterized in \Cref{thm:sample_complexity}. The following theorem provides the rates for the MSE in \Cref{def:MSE} obtained from \Cref{alg:agnostic-recovery}. 
Under \Cref{ass:readout-regimes} the second-layer/readout fit is low-dimensional once $m_n$ first-layer directions have been recovered. Thus the high-dimensional bottleneck is the spectral recovery in \Cref{thm:sample_complexity}, and the MSE is governed by the unrecovered spectral tail.

\begin{theorem}[Rates for the Generalization Error]
Consider the setting described in \Cref{section:setting}, and assume that one of the two regimes in \Cref{ass:readout-regimes} holds. Then, for $n = \omega_{d}(\frac{d^{q}}{Z_\gamma})$: 
\begin{equation}
    \mathrm{MSE}(n) = \begin{cases}
       \Theta_{d}(1) - \left (\dfrac{n}{d_1 d^{q}} \right )^{\frac{1}{2\gamma} -1}& \text{ if } 0 < \gamma < \frac{1}{2},\, d^{q}\ll n \ll d^{q}d_1, \\ 
      n^{-1 + \frac{1}{2\gamma}}, & \text{if } \gamma > \frac{1}{2} \text{ under regime \textnormal{(i)}}, \, d^{q}\ll n. 
    \end{cases}
\end{equation}
\label{thm:rates}
\end{theorem}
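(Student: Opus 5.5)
The plan is to reduce the MSE to the unrecovered spectral tail, using \Cref{thm:sample_complexity} and \Cref{cor:number_of_directions}, and then sum the tail explicitly.

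\textbf{Step 1: split the learned representation.} Fix $n$ with $d^q\ll n$. By \Cref{thm:sample_complexity}, every index $k\le m_n$ with $n=\omega_d(d^qk^{2\gamma}Z_\gamma^{-2})$ satisfies $\langle u_k,A^{(1)}_k\rangle^2=1-O_d(d^qk^{2\gamma}/(nZ_\gamma^2))$. Every index $k$ with $n\le d^{q}k^{2\gamma}Z_\gamma^{-2}d^{-\delta}$ is not recovered. I would split $[d_1]$ into three groups: recovered indices $k\le m_n d^{-\delta'}$, unrecovered indices $k\ge m_n d^{\delta'}$, and a transition window. Since $\delta'$ is arbitrary, the window contributes a lower-order amount to the power-law sums.

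\textbf{Step 2: identity readout, regime (i).} Here $f_\star(x)=\frac1{\sqrt2}\sum_i\lambda_i(h_i^{(1)2}-1)$, and the predictor is $\widehat f=\langle\widehat A^{(2)},\He_2(\widehat h^{(1)})\rangle$ plus a $1/n$ ridge error. I would expand the MSE with the Hermite product formula into three pieces.
\begin{enumerate}
\item The unrecovered tail gives $\sum_{i>m_n}\lambda_i^2\,\|A^{(1)}_i\|^4(1+o_d(1))=\Theta(Z_\gamma^2\sum_{i>m_n}i^{-2\gamma})$. Here I use that the $A^{(1)}_i$ are nearly orthonormal, since $d_1\ll D$, so the cross terms are $O(d_1/D)$.
\item Each recovered direction contributes the misalignment error $\lambda_k^2(1-\langle u_k,A_k\rangle^2)\lesssim Z_\gamma^2k^{-2\gamma}\cdot d^qk^{2\gamma}/(nZ_\gamma^2)=d^q/n$. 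Summed over $k\le m_n$ this gives $m_nd^q/n$.
\item The second-layer estimation error in dimension $d_1$ is of order $d_1^2/n$, which is subdominant under $\varepsilon<q/2$.
\end{enumerate}
For $2\gamma>1$ the tail equals $Z_\gamma^2m_n^{1-2\gamma}$. Substituting \Cref{cor:number_of_directions}, with $Z_\gamma=\Theta(1)$ and $D\asymp d^q$, gives $(n/d^q)^{-1+1/(2\gamma)}$. The aligned-error term $m_n d^q/n=(n/d^q)^{1/(2\gamma)-1}$ has the same order, so the rate is $\Theta((n/d^q)^{-1+1/(2\gamma)})$, i.e.\ the stated $n^{-1+1/(2\gamma)}$ at fixed $d$ scaling.

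\textbf{Step 3: $0<\gamma<1/2$.} Now $Z_\gamma^2\asymp d_1^{2\gamma-1}$. Recovered mass is then $Z_\gamma^2\sum_{i\le m_n}i^{-2\gamma}\asymp (m_n/d_1)^{1-2\gamma}$. With $m_n=(Z_\gamma^2n/D)^{1/(2\gamma)}$ this becomes $(n/(d_1d^q))^{(1-2\gamma)/(2\gamma)}$. The MSE is therefore $\Theta(1)$ minus the captured variance, which is the first case of the theorem. The range $n\ll d^qd_1$ corresponds exactly to $m_n\ll d_1$.

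For a nonlinear $g$ with information exponent one, I would use the following argument. The captured part $\widehat h^{(2)}$ and the remainder $h^{(2)}-\widehat h^{(2)}$ are asymptotically jointly Gaussian and nearly independent, by a CLT over many delocalized terms (this is where $\gamma<1/2$ enters). The best readout is then $\mathbb E[g(h^{(2)})\mid\widehat h^{(2)}]$, and its MSE equals $\mathrm{Var}(g)-\|T_\rho g\|^2$ with $\rho=\mathrm{Var}(\widehat h^{(2)})$. Its leading variation in $\rho$ is linear, with coefficient $\nu_1^2$, which gives the same rate up to constants.

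\textbf{Main obstacle.} The hardest step is controlling cross terms in the MSE: errors in $u_k$ leaking into the other recovered directions, and unrecovered spikes contaminating $\widehat A^{(2)}$. I would handle this with the resolvent expansion \eqref{eq:main_eigenvector_perturbation_id}, which shows that the leaked components along $u_j$ are of order $u_j^\top\Delta u_k/(\lambda_k-\lambda_j)$. I would then bound their aggregate Frobenius contribution by $\|\Delta\|_\op^2$ times the harmonic sums of gaps, and verify that this stays below $m_nd^q/n$.
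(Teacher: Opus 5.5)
Your proof has the same skeleton as the paper's. You reduce the MSE to the unrecovered tail $Z_\gamma^2\sum_{i>m_n}i^{-2\gamma}$, read $m_n$ off \Cref{cor:number_of_directions}, and evaluate the two power-law sums. Your $(n/d^q)$ normalization is also what the paper's proof actually derives. The paper treats each direction as either perfectly learned or absent. It controls the readout by $\mathrm{MSE}\le\EE[(g(h^{(2)})-g(\widehat h^{(2)}))^2]+O(1/n)$ via Caponnetto--De Vito, and for nonlinear $g$ it only asserts that the non-linear terms are sub-leading when $\gamma<1/2$.

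You go further in two places, and both are improvements.
First, you charge recovered directions their misalignment cost. You observe that the summed cost $m_nd^q/n$ is of the same order as the tail when $\gamma>1/2$. This shows that the $1-O(\cdot)$ overlap rate of \Cref{thm:sample_complexity} is just strong enough not to change the exponent.
Second, for nonlinear $g$ you compare against the Bayes-optimal readout $\EE[g(h^{(2)})\mid\widehat h^{(2)}]$ with an independent Gaussian remainder. This gives $\mathrm{Var}(g)-\nu_1^2\rho+O(\rho^2)$, where $\rho$ is the captured variance. That is the right comparator when $\rho\to0$; the paper's comparator $g(\widehat h^{(2)})$ is not close to the best predictor in that regime. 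For a linear leading term you need $\|Tg\|^2=\sum_k\nu_k^2\rho^k$, i.e.\ correlation $\sqrt{\rho}$ rather than $\rho$. Only the remainder needs to be Gaussian; the captured part is not Gaussian when $m_n=O(1)$.

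Two steps need tightening.
First, the transition window is not lower order at the $\Theta$ level. For $\gamma>1/2$ the tail over $[m_nd^{-\delta'},m_nd^{\delta'}]$ is of order $m_n^{1-2\gamma}d^{\delta'(2\gamma-1)}$. The upper bound needs no window, since summing $\lambda_k^2\min\{1,(k/m_n)^{2\gamma}\}$ already gives the rate. The lower bound, however, needs non-recovery in the subspace sense: the projection of $A^{(1)}_k$ onto the whole top-$d_1$ eigenspace must be small. \Cref{thm:sample_complexity} does not state this. Even granting it, the $d^{-\delta}$ slack in the non-recovery half gives the lower bound only up to $d^{O(\delta)}$ factors. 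The paper hides the same issue by plugging in a sharp $i^\star$.

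Second, in Step 3 your own misalignment term $m_nd^q/n=Z_\gamma^2m_n^{1-2\gamma}$ is of the same order as the captured mass $Z_\gamma^2m_n^{1-2\gamma}/(1-2\gamma)$. So ``$\Theta(1)$ minus captured variance'' does not follow from simply adding the two. There are two ways to fix this:
\begin{enumerate}
\item Restrict the captured set to $k\le m_nd^{-\delta'}$. There the misalignment is only a $d^{-2\gamma\delta'}$ fraction of the captured mass.
\item Use the fact that, to leading order and with moment weights, a direction of overlap $c_k$ contributes error $\lambda_k^2(1-c_k^4)\le\lambda_k^2$. Misalignment then reduces the capture rather than adding error on top of it.
\end{enumerate}
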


\begin{remark}
    The case in \Cref{thm:rates} where $\gamma > \frac{1}{2}$ is only proved in the case where $g$ is a linear function. Numerical evidence suggest this remains true for $\gamma > \frac{1}{2}$ in the non-linear case, but rigorously showing it is considerably more challenging. We conjecture that this rates still hold if $\gamma > \frac{1}{2}$ and $g$ is non-linear, and leave the proof for future work. 
\end{remark}

The proof of \Cref{thm:rates} builds on \Cref{thm:sample_complexity} and \Cref{cor:number_of_directions} and is presented in \Cref{section:derivation_rates}. The above rates match optimal rates from \cite{defilippis2026optimal} with the input dimension replaced by the effective dimension $D=\Theta(d^q)$. 

\begin{remark} 
In order to derive rates, it is necessary to show that the spectral estimator learn directions in a sequential, sharp way. That is, the algorithm either learns or not a particular direction. For this reason, proving only  the sufficiency part of \Cref{thm:sample_complexity} is not enough, and we also need a refutation result like the second part of \Cref{thm:sample_complexity}. 
\end{remark}

\section{Numerical Experiments}
\label{section:experiments}

\begin{figure}[t]
    \centering
    \includegraphics[width=\linewidth]{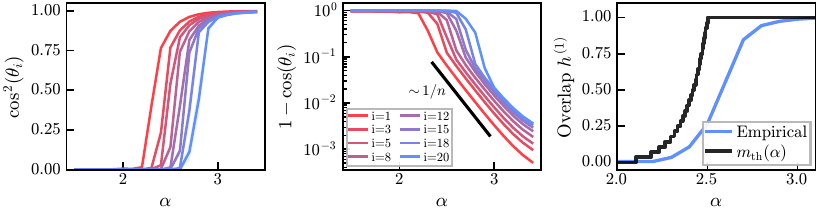}
    \caption{
    \textbf{Direction-wise recovery.}
    Parameters: $q=2$, $\varepsilon=0.5$, $\gamma=0.4$, $g^\star=\mathrm{id}$.
    \textbf{Left:} Direction-wise alignments $\cos^2(\theta_i)$ versus $\alpha$ with $d=400$.
    \textbf{Center:} Direction-wise errors $1-\cos(\theta_i)$ versus $\alpha$, with a $1/n$ guide and $d=400$.
    \textbf{Right:} Empirical overlap $q_h^{(1)}$ compared with the theoretical curve $m_{\mathrm{th}}(\alpha)$ with $d=800$.
    }
    \label{fig:direction-wise-theory}
\end{figure}

In this section, we confront the feature-wise recovery predictions of \Cref{thm:sample_complexity} with numerical experiments. Additional numerical details are given in Appendix~\ref{app:numerics2}.

\textbf{Overall recovery transition.} Fig.~\ref{fig:several-d} shows the overall transition of the estimator as the sample exponent $\alpha=\log(n)/\log(d)$ increases. For small $\alpha$, the sample size is below the spectral recovery scale: the MSE is large, the learned-feature overlap is small, and the spectrum is dominated by the bulk. Around the transition predicted by the signal-plus-noise picture in \cref{eq:signal_vs_noise_heuristic}, outlier eigenvalues separate from the bulk in the right panel; at the same scale, the MSE drops and the overlap increases. For larger $d$, the post-transition overlap gets closer to one, while the remaining gap at smaller $d$ reflects finite-size smoothing and incomplete recovery of the weakest directions.

\textbf{Effect of the power-law exponent.} Fig.~\ref{fig:several-gamma} shows how the recovery window depends on the anisotropy exponent $\gamma$. By \Cref{eq:pred_sample_complexity}, the recovery threshold of direction $i$ is controlled by its weight $|\lambda_i|=Z_\gamma i^{-\gamma}$. Thus the first threshold, corresponding to the strongest direction, decreases with $\gamma$, while the last thresholds increase with $\gamma$. Larger $\gamma$ therefore makes the teacher spectrum more anisotropic: the strongest directions are recovered earlier, while the weak tail is pushed to larger sample sizes. This spreads the recovery process over a wider interval of $\alpha$, producing the gradual MSE decay observed in the figure.

\textbf{Direction-wise comparison with theory.} Fig.~\ref{fig:direction-wise-theory} tests the sharper, direction-wise prediction of \Cref{thm:sample_complexity}. Since global MSE curves mix many directions and finite-size effects, we instead track direction-wise overlaps with the recovered eigenspace. If $u_i=A_i^{(1)}/\|A_i^{(1)}\|$ and $\widehat U$ is an orthonormal basis of the eigenspace recovered from $\widehat C$, we define $\cos^2(\theta_i)=\|\widehat U^\top u_i\|^2$. The left panel shows that directions are recovered in the order dictated by their weights: larger $|\lambda_i|$ directions turn on first, while weaker directions appear later, around the thresholds of \Cref{eq:pred_sample_complexity}. The center panel probes the perturbative regime after recovery: the angular error $1-\cos(\theta_i)$ follows the $1/n$ decay predicted by \Cref{eq:ov_rate}. Finally, the right panel compares the empirical aggregate overlap with the theoretical count $m_{\rm th}(\alpha)$, obtained by counting the directions whose predicted thresholds satisfy $n_i\le d^\alpha$. The staircase is smoothed at finite $d$, but its ordering and scale agree with the theory.

\begin{figure}[t]
    \centering
    \includegraphics[width=0.95\linewidth]{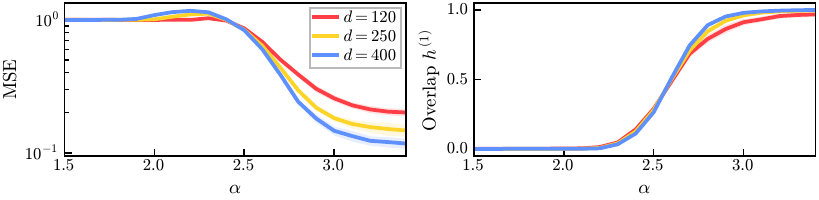}
    \caption{
    \textbf{Non-identity readout.}
    Parameters: $q=2$, $\varepsilon=0.5$, $\gamma=0.4$, and $g^\star=\tanh$.
    \textbf{Left:} Test MSE versus $\alpha=\log(n)/\log(d)$.
    \textbf{Right:} First-layer feature overlap $q_h^{(1)}$ versus $\alpha$.
    }
    \label{fig:non-identity-readout}
\end{figure}

\textbf{Nonlinear readout.} Fig.~\ref{fig:non-identity-readout} repeats the recovery experiment with $g^\star=\tanh$. The qualitative picture is unchanged: prediction error decreases at the same scale at which the first-layer overlap grows. Finite-size effects are more visible at lower dimensions, but the dominant bottleneck remains the recovery of the latent first-layer representation, rather than the final low-dimensional readout.

\section{Discussion and Future Directions}
We have introduced a model in which scaling laws arise from sequential feature recovery. The central message is that hierarchy and anisotropy work together: depth exposes the relevant intermediate representation, while the power-law structure of the signal spreads the recovery of its components across sample sizes. As a result, a smooth power-law learning curve can emerge from the aggregate effect of many sharp feature-learning transitions. This gives a mechanism by which power laws are generated by representation learning, rather than inherited from a fixed kernel spectrum.

The main limitations of our analysis are also what make this mechanism transparent: the hierarchy is specified in advance, inputs are Gaussian, and learning is performed by a layer-wise procedure. These assumptions enable sharp recovery and non-recovery guarantees, while pointing to natural next questions: extending the mechanism to more generic data, richer nonlinearities, and higher information exponents. More broadly, our results suggest that scaling laws in deep learning may reflect not only spectral bias at a fixed representation, but the progressive organization of representations across depth.

\section*{Acknowledgements}
We would like to thank Pierre Mergny and Lenka Zdeborova for insightful discussions. BL and AW were supported by the French government, managed by the National Research Agency (ANR), under the France 2030 program with the project references ``ANR-23-IACL-0008'' (PR[AI]RIE-PSAI) and ``ANR-25-CE23-5660'' (MAPLE), as well as the Choose France - CNRS AI Rising Talents program. FK acknowledge funding from the Swiss National Science Foundation grants OperaGOST (grant number $200021$ $200390$) and DSGIANGO (grant number $225837$). This work was supported by the Simons Collaboration on the Physics of Learning and Neural Computation via the Simons Foundation grant ($\#1257412$).

\bibliographystyle{plainnat}
\bibliography{References}

\newpage
\appendix
\section{Further numerics}\label{app:numerics2}

All experiments follow the hierarchical spectral procedure in \Cref{alg:agnostic-recovery} and are detailed more precisely in \ref{subsec:spec_estimator}. This appendix collects the training procedure, defines the metrics used in the figures, and specifies the implementation details needed to reproduce the plots.

\begin{figure}[h]
\centering
\begin{minipage}{0.7\linewidth}

\hrule
\captionof{algorithm}{Hierarchical spectral learning (Training procedure)}

\label{alg:agnostic-recovery}
\vspace{-1em}
\hrule

\begin{algorithmic}[1]

\INPUT Data $\{(\bx_\mu,y_\mu)\}_{\mu=1}^n$, max degree $K_{\max}$

\STATE \textbf{First layer recovery:}
\STATE Compute flattened degree$-q$ features and moment matrix 
\[
\begin{aligned}
    \phi_\mu &= \mathcal F[\He_q(x_\mu)] \;\in \R^{D}\\
     \widehat C &=
  \frac{1}{n}\sum_{\mu=1}^n
  y_\mu \, \He_2( \phi_\mu) \; \in \R^{D \times D}
\end{aligned}
\]

\STATE Compute top eigenvectors $\{\widehat A^{(1)}\}_{i \in [d_1]} \in \R^{D}$
\FOR{$\mu=1$ to $n$ and $i=1$ to $d_1$}
    \STATE $\widehat h^{(1)}_{\mu,i} \leftarrow
    \langle \widehat A^{(1)}_i, \He_{q}(x_\mu)\rangle$
\ENDFOR
\STATE \textbf{Second layer recovery:}
\STATE Compute the 2nd order moment matrix 
\[
\begin{aligned}
     \widehat A^{(2)} &=
  \frac{1}{n}\sum_{\mu=1}^n
  y_\mu \, \He_2( \widehat h^{(1)}_\mu)\; \in \R^{d_1\times d_1}
\end{aligned}
\]
\FOR{$\mu=1$ to $n$}
    \STATE 
    $\widehat h^{(2)}_\mu \leftarrow
    \langle \widehat A^{(2)}, \He_{2}(\widehat h^{(1)}_\mu)\rangle\;\in\R$
\ENDFOR

\STATE Perform kernel regression on $\{(\widehat h^{(2)}_\mu,y_\mu)\}_{\mu=1}^n$ with a new batch of data. 
\RETURN $\widehat A^{(1)}, \widehat A^{(2)}$

\end{algorithmic}
\hrule

\end{minipage}
\end{figure}

\subsection{Metrics and evaluation protocol}
\label{app:metrics}

% In the figures, $\mathrm{MSE}$ denotes the test estimate of the generalization error in \Cref{def:MSE}, computed on an independent test set.

% Given an independent test set, let $H^{(1)},\widehat H^{(1)}\in\mathbb R^{n_{\rm test}\times d_1}$ be the true and learned first-layer feature matrices, and let $Q,\widehat Q$ be orthonormal bases for their column spaces. We define $q_h^{(1)}=d_1^{-1}\|Q^\top \widehat Q\|_F^2$. This quantity lies in $[0,1]$ and is invariant to signs, permutations, and rotations of the learned latent coordinates.

% For direction-wise plots, we write $u_i=A_i^{(1)}/\|A_i^{(1)}\|$ and let $\widehat U$ be an orthonormal basis of the recovered eigenspace. We define $\cos^2(\theta_i)=\|\widehat U^\top u_i\|^2$. The center panel of \Cref{fig:direction-wise-theory} reports $1-\cos(\theta_i)$.

We use three complementary diagnostics in the numerical experiments. The MSE measures end-to-end prediction performance, the first-layer feature overlap measures recovery of the latent representation as a subspace, and the direction-wise cosines isolate the individual spectral transitions predicted by \Cref{thm:sample_complexity}. These quantities answer different questions: the MSE mixes all recovered and unrecovered directions through their weights, the aggregate overlap summarizes representation recovery, and the direction-wise overlaps reveal which latent directions have crossed their spectral threshold.

\paragraph{Generalization error.}
The MSE reported in the figures is the empirical test estimate of the generalization error in \Cref{def:MSE}. Given an independent test set $\{(x_\mu^{\rm test},y_\mu^{\rm test})\}_{\mu=1}^{n_{\rm test}}$, we compute
\begin{equation}
    \widehat{\mathrm{MSE}}
    =
    \frac{1}{n_{\rm test}}
    \sum_{\mu=1}^{n_{\rm test}}
    \left(\widehat f(x_\mu^{\rm test})-y_\mu^{\rm test}\right)^2 .
\end{equation}
This is the final performance metric of the algorithm. The overlap quantities below are diagnostic measures tied to the teacher-student setting. They are used to identify whether changes in MSE are caused by first-layer feature recovery or else. 

\paragraph{First-layer feature overlap.}
Let $H^{(1)},\widehat H^{(1)}\in\mathbb R^{n_{\rm test}\times d_1}$ denote the true and learned first-layer feature matrices evaluated on the same independent test set. Let $Q$ and $\widehat Q$ be orthonormal bases for the column spaces of $H^{(1)}$ and $\widehat H^{(1)}$, respectively. We define
\begin{equation}
    q_h^{(1)}
    =
    \frac{1}{d_1}\|Q^\top \widehat Q\|_F^2 .
    \label{eq:first_layer_feature_overlap}
\end{equation}
Equivalently, $q_h^{(1)}$ is the average squared canonical correlation between the true and learned latent feature spaces. It belongs to $[0,1]$, more precisely, it is equal to one when the learned features span the same subspace as the true features, and close to zero when the two subspaces are nearly orthogonal.

This subspace definition is intentional. The learned coordinates are only meaningful up to signs, permutations, and possible finite-size rotations inside the recovered eigenspace. These transformations can be absorbed by the second-layer fit and should not be counted as representation error.

\paragraph{Direction-wise alignment.}
The aggregate overlap $q_h^{(1)}$ does not show which individual directions have been recovered. To test the feature-wise prediction of \Cref{thm:sample_complexity}, we also measure single direction alignments. Let $u_i=A_i^{(1)}/\|A_i^{(1)}\|$ be the normalized $i$-th teacher direction, and let $\widehat U\in\mathbb R^{D\times d_1}$ be an orthonormal basis of the top eigenspace of $\widehat C$ (returned by the first spectral step). We define:
\begin{equation}
    \cos^2(\theta_i)
    =
    \|\widehat U^\top u_i\|^2
    =
    u_i^\top \widehat U\widehat U^\top u_i .
    \label{eq:direction_wise_cosine}
\end{equation}
This is a projector overlap. It is invariant to the sign of $u_i$ and to the choice of basis inside the recovered eigenspace. The value $\cos^2(\theta_i)\simeq 0$ means that direction $i$ is absent from the recovered subspace, while $\cos^2(\theta_i)\simeq 1$ means that it has been recovered. This is the quantity shown in the left panel of \Cref{fig:direction-wise-theory}, where the different curves turn on sequentially according to the spike sizes $|\lambda_i|=Z_\gamma i^{-\gamma}$.

After a direction has been recovered, we are interested not only in whether it is present, but also in how fast its alignment improves with $n$. For this post-transition regime we write $\cos(\theta_i)=\|\widehat U^\top u_i\|$ and plot the angular error $1-\cos(\theta_i)$. This is the quantity shown in the center panel of \Cref{fig:direction-wise-theory}. We use $1-\cos(\theta_i)$ rather than $1-\cos^2(\theta_i)$ because \Cref{thm:sample_complexity} is stated directly in terms of the absolute eigenvector overlap and predicts
\begin{equation}
    1-\cos(\theta_i)
    =
    O_d\left(\frac{d^q i^{2\gamma}}{nZ_\gamma^2}\right)
\end{equation}
after recovery. 

Finally, when the test features are well conditioned, the aggregate feature overlap can be viewed as a smoothed average of the direction-wise overlaps. Heuristically,
$q_h^{(1)}\approx d_1^{-1}\sum_{i=1}^{d_1}\cos^2(\theta_i)=:m_{\rm th}(\alpha)$, so the global overlap curves in \Cref{fig:several-d,fig:several-gamma} summarize the cascade of individual transitions displayed in \Cref{fig:direction-wise-theory}.

\subsection{Implementation and reproducibility}
\label{app:implementation}

All experiments follow the hierarchical spectral pipeline described in Algorithm~\ref{alg:agnostic-recovery}. 
For each value of the sample exponent $\alpha$, we use $n=\lfloor d^\alpha\rfloor$ training samples and evaluate the MSE and overlaps on an independent test set. 
For the identity readout $g^\star=\mathrm{id}$, we use the scalar estimator $\widehat h^{(2)}$ directly. 
For the nonlinear experiment with $g^\star=\tanh$, we fit a polynomial ridge regressor on $\widehat h^{(2)}$. 
This readout is specified by three hyperparameters: the maximal polynomial degree $r$, the ridge parameter $\rho$, and the kernel regularization $\lambda_{\mathrm{poly}}$.
For the final curves reported in \Cref{fig:non-identity-readout}, we use $(r,\rho,\lambda_{\mathrm{poly}})=(3,10^{-5},10^{-4})$, selected on the grid
\[
r\in\{3,5,7,9\}, \qquad
\rho\in\{10^{-7},10^{-6},10^{-5}\}, \qquad
\lambda_{\mathrm{poly}}\in\{10^{-5},10^{-4},10^{-3}\}.
\]
The full set of parameters used in the numerical figures is summarized below.

\begin{itemize}
    \setlength{\itemsep}{2pt}
    \setlength{\parsep}{0pt}
    \setlength{\topsep}{2pt}
    \setlength{\partopsep}{0pt}

    \item \textbf{\Cref{fig:several-d}, left-center.}
    MSE and $q_h^{(1)}$; $d\in\{120,250,400\}$, $\alpha$ on a grid in $[1.5,3.4]$,
    $q=2$, $\varepsilon=0.5$, $\gamma=0.4$, $g^\star=\mathrm{id}$.
    We use a linear readout on $\widehat h^{(2)}$ and average over $10$ seeds.

    \item \textbf{\Cref{fig:several-d}, right.}
    Spectrum of $\widehat C$; $d=140$, $\alpha=3.5$, $q=2$, $\varepsilon=0.5$,
    $\gamma=0.4$, $g^\star=\mathrm{id}$.
    We plot the full spectrum with no readout fit, for one seed.

    \item \textbf{\Cref{fig:several-gamma}.}
    MSE and $q_h^{(1)}$; $d=400$, $\alpha$ on a grid in $[1.5,3.4]$,
    $q=2$, $\varepsilon=0.5$,
    $\gamma\in\{0,0.2,0.4,0.6,0.8,1.0\}$, $g^\star=\mathrm{id}$.
    We use a linear readout on $\widehat h^{(2)}$ and average over $10$ seeds.

    \item \textbf{\Cref{fig:direction-wise-theory}, left-center.}
    Direction-wise quantities $\cos^2(\theta_i)$ and $1-\cos(\theta_i)$;
    $d=400$, $\alpha$ on a grid in $[1.5,3.4]$, $q=2$, $\varepsilon=0.5$,
    $\gamma=0.4$, $g^\star=\mathrm{id}$.
    We track directions
    $i\in\{1,3,5,8,12,15,18,20\}$ and average over $10$ seeds.

    \item \textbf{\Cref{fig:direction-wise-theory}, right.}
    Aggregate overlap $q_h^{(1)}$ and theoretical count $m_{\rm th}(\alpha)$;
    $d=800$, $\alpha$ on a grid in $[2.0,3.2]$, $q=2$, $\varepsilon=0.5$,
    $\gamma=0.4$, $g^\star=\mathrm{id}$.
    We use a linear readout on $\widehat h^{(2)}$ and average over $10$ seeds.

    \item \textbf{\Cref{fig:non-identity-readout}.}
    MSE and $q_h^{(1)}$; $d\in\{120,250,400\}$, $\alpha$ on a grid in $[1.5,3.4]$,
    $q=2$, $\varepsilon=0.5$, $\gamma=0.4$, $g^\star=\tanh$.
    We use polynomial ridge regression on $\widehat h^{(2)}$, with final choice
    $(r,\rho,\lambda_{\mathrm{poly}})=(3,10^{-5},10^{-4})$, and average over
    $10$ seeds.
\end{itemize}

Concerning resources, all experiments were run on single-GPU workers of an internal compute cluster, using up to $32$ GB of host memory per job. 
Depending on the values of $d$, $\alpha$, and the number of seeds, runtimes ranged from a few minutes for the smallest jobs to several hours for the largest ones; for the heaviest sweeps, jobs were typically submitted with a wall-clock budget of up to $12$--$24$ hours.

\section{Preliminary Results}
\label{appendix:preliminaries}
\subsection{Spectral Estimators}\label{subsec:spec_estimator}

We define the successive spectral method in Algorithm~\ref{alg:agnostic-recovery}.

\subsection{Other Preliminary Results}

\begin{lemma}[Computation of $Z_\gamma$]
From the criterion $\mathrm{Var}[(h^{(2)})^2] = \Theta(1)$, it comes:
\begin{equation}
    Z_\gamma \propto \begin{cases}
        1/\sqrt{d_1} & \text{ if } \gamma = 0\\ 
        d_1^{\gamma-\frac{1}{2}}, & \text{ if } \gamma < \frac{1}{2}, \\ 
        1, & \text{ if } \gamma > \frac{1}{2}. 
    \end{cases}
    \label{eq:C_gamma}
\end{equation}
\label{lemma:normalization_C_gamma}
\end{lemma}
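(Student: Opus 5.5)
The plan is to compute the variance of the label directly from the definition of the second layer and then read off the scaling of $Z_\gamma$ from the partial sums of $i^{-2\gamma}$. The statement is phrased through $\mathrm{Var}[(h^{(2)})^2]=\Theta(1)$; I will use that $\mathrm{Var}(h^{(2)})=\Theta(1)$ (hence $\mathrm{Var}(y)=\Theta(1)$ for $g$ of information exponent one) suffices. By hypercontractivity, the fourth moment of $h^{(2)}$, which lies in a finite Wiener chaos, is comparable to the square of its second moment. So the two normalizations agree up to constants.

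\emph{Step 1: reduce to an almost-diagonal covariance.} Condition on the teacher weights $A^{(1)}$ and $z$. The features $h^{(1)}_{\mu,i}=\langle A^{(1)}_i,F_\mu\rangle$ are centered, and since flattening preserves the Frobenius product,
\[
\EE[h^{(1)}_i h^{(1)}_j]=\langle A^{(1)}_i,A^{(1)}_j\rangle .
\]
For Gaussian weights with variance $d^{-q}$ in dimension $D\asymp d^q/q!$, this equals $\Theta(1)$ on the diagonal (concentrating around a constant $c_q$) and $O(d^{-q/2})$ off the diagonal, with high probability. Rescaling by $c_q$ if needed, I will treat the Gram matrix $G=A^{(1)}A^{(1)\top}$ as $I_{d_1}+E$ with $\|E\|_{\op}=O(\sqrt{d_1/d^q})=o_d(1)$, using $\varepsilon<q$.

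\emph{Step 2: compute the variance.} Since $h^{(2)}=\langle A^{(2)},\He_2(h^{(1)})\rangle$ and $h^{(1)}$ is a vector in the $q$-th chaos, the product formula gives
\[
\mathrm{Var}(h^{(2)})=\tfrac12\sum_{i,j}\lambda_i\lambda_j\left(\EE[(h_i^2-G_{ii})(h_j^2-G_{jj})]\right).
\]
This equals $\sum_{i,j}\lambda_i\lambda_j G_{ij}^2$ plus higher contraction terms. Those extra terms arise because $h^{(1)}$ is not exactly Gaussian, and they carry at least one extra factor $d^{-q/2}$ per contraction, controlled by contraction estimates for the $q$-th chaos.

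The diagonal part is $\sum_i\lambda_i^2(1+o(1))=Z_\gamma^2\sum_{i\le d_1} i^{-2\gamma}(1+o(1))$. The off-diagonal part is bounded by $\|\lambda\|_1^2\max_{i\neq j}G_{ij}^2$. The Rademacher signs $z_i$ make this cross sum mean zero, so I expect it to be of order $\sqrt{\sum_{i\ne j}\lambda_i^2\lambda_j^2}\,d^{-q}$, which is $o(\sum\lambda_i^2)$. This is the delicate step: I need a high-probability bound uniform over the $d_1^2$ pairs, which is where the growth condition on $\varepsilon$ enters.

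\emph{Step 3: evaluate the partial sums.} Setting the variance to $\Theta(1)$ gives $Z_\gamma^2\asymp(\sum_{i\le d_1}i^{-2\gamma})^{-1}$. By integral comparison:
\begin{itemize}
\item for $\gamma=0$ the sum equals $d_1$, so $Z_\gamma\asymp d_1^{-1/2}$;
\item for $0<\gamma<1/2$ the sum is $\asymp d_1^{1-2\gamma}/(1-2\gamma)$, so $Z_\gamma\asymp d_1^{\gamma-1/2}$;
\item for $\gamma>1/2$ the sum converges to $\zeta(2\gamma)$, so $Z_\gamma\asymp1$.
\end{itemize}
The case $\gamma=1/2$ would give a logarithm and is excluded from the statement.

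The main obstacle is Step 2. The diagonal computation is routine, but the rest requires showing that non-Gaussian chaos contractions and off-diagonal Gram entries do not change the order of $\sum_i\lambda_i^2$, uniformly in $d_1=d^\varepsilon$.
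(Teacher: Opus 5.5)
Your overall route is the paper's. Its proof writes $\mathrm{Var}(h^{(2)})=Z_\gamma^2\sum_i i^{-2\gamma}\,\EE[((h^{(1)}_i)^2-1)^2]=\Theta\big(Z_\gamma^2\sum_{i\le d_1}i^{-2\gamma}\big)$. That is, it keeps only the diagonal terms, silently discarding every $i\neq j$ covariance and all non-Gaussian corrections. It then does the same partial-sum asymptotics as your Step 3. Like you, it actually normalizes $\mathrm{Var}(h^{(2)})$ despite the wording of the statement. Its last line inverts the square root; your $Z_\gamma^2\asymp(\sum_{i\le d_1}i^{-2\gamma})^{-1}$ is the correct conclusion. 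So Steps 1 and 3 are fine. The problem is in the extra justification you attempt in Step 2: the size you assign to the non-Gaussian terms is wrong, and as a result the Rademacher signs are invoked for the wrong term.

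Write $\mathrm{Cov}(h_i^2,h_j^2)=2G_{ij}^2+\kappa_4(h_i,h_i,h_j,h_j)$. The cumulant part does not carry an extra $d^{-q/2}$. Its $r=q-1$ contraction is proportional to $\langle A_i\otimes_{q-1}A_i,\,A_j\otimes_{q-1}A_j\rangle$. For independent $i\neq j$ this concentrates around $\langle \EE[A_i\otimes_{q-1}A_i],\EE[A_j\otimes_{q-1}A_j]\rangle\asymp\langle d^{-1}I_d,d^{-1}I_d\rangle=d^{-1}$. So it is $\asymp d^{-1}$ with the same positive sign for every pair, far larger than $G_{ij}^2=O(d^{-q})$.

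For $q=2$, view $A_i$ as a symmetric $d\times d$ matrix with $h_i=(x^\top A_ix-\mathrm{Tr}A_i)/\sqrt2$. Then exactly $\mathrm{Cov}(h_i^2,h_j^2)=8\,\mathrm{Tr}(A_i^2A_j^2)+4\,\mathrm{Tr}(A_iA_jA_iA_j)+2\,\mathrm{Tr}(A_iA_j)^2$, with $\mathrm{Tr}(A_i^2A_j^2)\approx\|A_i\|_F^2\|A_j\|_F^2/d$.

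Hence the off-diagonal part of the variance is, to leading order, a constant times $d^{-1}\big((\sum_i\lambda_i)^2-\sum_i\lambda_i^2\big)$. With fixed signs (e.g. $\gamma=0$, all $z_i=1$) this is of order $d_1d^{-1}\sum_i\lambda_i^2=d^{\varepsilon-1}\sum_i\lambda_i^2$. That is not negligible once $\varepsilon\ge1$, a range the paper's condition $\varepsilon<q/2$ allows for $q\ge3$.

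This is where the signs $z_i$ are genuinely needed. Conditionally on $A^{(1)}$, $\sum_{i\neq j}\lambda_i\lambda_j\,\mathrm{Cov}(h_i^2,h_j^2)$ is a centered order-two Rademacher chaos. Its standard deviation is at most $\sqrt2\,\max_{i\neq j}|\mathrm{Cov}(h_i^2,h_j^2)|\sum_i\lambda_i^2\lesssim d^{-1}\sum_i\lambda_i^2$, up to logarithms from the uniform bound over pairs.

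Conversely, the $G_{ij}^2$ part you flag as delicate needs no cancellation. Since $\|\lambda\|_1^2\le d_1\|\lambda\|_2^2$ and a union bound gives $\max_{i\neq j}G_{ij}^2=O(d^{-q}\log d)$, its relative size is $O(d^{\varepsilon-q}\log d)$.

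Two smaller points. On the diagonal, $\kappa_4(h_i)$ is likewise $O(d^{-1})$ rather than $O(d^{-q/2})$, which is harmless. And hypercontractivity alone only gives $\mathrm{Var}((h^{(2)})^2)\lesssim(\EE[(h^{(2)})^2])^2$; the reverse comparison needs an anti-concentration input.
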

\begin{proof}
\begin{align}
    \mathrm{Var}\left[(h^{(2)})^2\right]  & = \mathrm{Var}\left( \langle A^{(2)}, He_2(h^{(1)}) \rangle \right )\\ 
    & =\EE\left[ \langle A^{(2)}, \He_2(\bh^{(1)})\rangle^2 \right] \\
    &= Z_\gamma^2 \sum_{i=1}^{d_1} i_1^{-2\gamma} \underbrace{\EE\left[ \para{\para{h^{(1)}_i}^2 -1}^2\right]}_{=\Theta(1)} \\
    &=  \Theta \left ( Z_\gamma^2 \sum_{i=1}^{d_1} i_1^{-2\gamma} \right ). 
\end{align}
The sum on the RHS has the following asymptotic behavior: 
\begin{equation}
    \sum_{i=1}^{d_1} i_1^{-2\gamma} = \begin{cases}
        d_1 & \text{ if } \gamma = 0 \\ 
        d_1^{1-2\gamma}, &\text{if } 0 \leq \gamma < \frac{1}{2} \\
        \log(d_1), & \text{ if } \gamma = \frac{1}{2} \\
        \Theta(1),  & \text{if } \gamma > \frac{1}{2}. 
    \end{cases}
\end{equation} 
Then, taking $Z_\gamma = \Theta(\sqrt{\sum_{i=1}^{d_1} i^{-2\gamma}})$ concludes the result. 
\end{proof}

\subsection{Auxiliary Concentration Lemmas}
\begin{lemma}[Lemma F.4 in \cite{wen2025does}]
    with probability at least $1 - e^{-cd}$, 
    \begin{equation}
        \| F_\mu \|^{2}_2 \leq C d^{q},
    \end{equation}
    for a universal constant $C$. 
    \label{lemma:bound_norm_hermite}
\end{lemma}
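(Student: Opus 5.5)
The last statement is Lemma (Lemma F.4 in wen2025does): with probability at least $1-e^{-cd}$, $\|F_\mu\|_2^2\le C d^q$, where $F_\mu=\mathcal F(\He_q(x_\mu))$.

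The plan is to express $\|F_\mu\|_2^2$ as an explicit scalar polynomial in $\|x_\mu\|^2$ and then use Gaussian concentration of the norm. Since the flattening preserves the Frobenius product, $\|F_\mu\|_2^2=\langle \He_q(x),\He_q(x)\rangle$. I will use the generating-function identity for normalized tensor Hermite polynomials,
\[
\sum_{k\ge0}\frac{t^k}{\sqrt{k!}}\langle \He_k(x), v^{\otimes k}\rangle=\exp\!\big(t\langle v,x\rangle-\tfrac{t^2}{2}\|v\|^2\big).
\]
Alternatively, I will expand $\He_q(x)$ as a symmetrized sum of $x^{\otimes(q-2j)}\otimes I^{\otimes j}$ with alternating coefficients of size $O_q(1)$. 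Either route gives
\[
\langle \He_q(x),\He_q(x)\rangle=\sum_{j=0}^{q} c_{q,j}\,d^{\,j}\,\|x\|^{2(q-j)}+(\text{lower-order mixed terms}),
\]
where every monomial has total degree $q$ in the pair $(\|x\|^2,d)$ and the coefficients $c_{q,j}$ depend only on $q$. Contractions of $I$ with itself produce factors of $d$, and contractions of $I$ with $x\otimes x$ produce factors of $\|x\|^2$. Hence the whole expression is a polynomial $P_q(\|x\|^2,d)$, homogeneous of degree $q$, with $O_q(1)$ coefficients. This implies the deterministic bound $\|F_\mu\|^2\le C_q(\|x\|^2+d)^q$.

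It then suffices to control $\|x_\mu\|^2$. By Laurent--Massart (or Lipschitz concentration of $x\mapsto\|x\|$), $\mathbb P(\|x\|^2\ge 5d)\le e^{-d}$. On the complementary event, $\|F_\mu\|^2\le C_q(6d)^q=Cd^q$. This gives the claim with $c=1$ and $C=6^qC_q$.

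The main obstacle is the bookkeeping of the Hermite tensor expansion: one must check that no term grows faster than $d^q$. For instance, full traces of $I^{\otimes j}$ contracted with itself could in principle give powers like $d^{j}$ combined with $\|x\|^{2(q-j)}$, and these must not exceed $d^q$ in total degree. Homogeneity takes care of this. Each $I$ factor carries two indices, matching the two indices carried by $x\otimes x$, and each index contraction loop costs at most one factor of $d$ per pair of indices. Every term is therefore bounded by $\|x\|^{2a}d^{b}$ with $a+b\le q$.

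A cleaner route to the same bound is to write $\|F\|^2=\sum_{|\beta|=q}\He_\beta(x)^2$ and use $|\he_k(t)|\le C_k(1+|t|)^k$. This gives $\|F\|^2\le C\sum_{|\beta|=q}\prod_i(1+|x_i|)^{2\beta_i}\le C'(\sum_i(1+|x_i|)^2)^q$ by the multinomial expansion. Combined with $\sum_i(1+|x_i|)^2\le 2d+2\|x\|^2$, this reduces the claim again to norm concentration. I would present this second argument, since it avoids the tensor combinatorics entirely.
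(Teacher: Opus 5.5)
Your second argument is correct, and it takes a different route from the paper only in the sense that the paper gives no proof at all: the lemma is imported from \cite{wen2025does} and used as a black box, e.g.\ for the truncation in \Cref{lemma:matrix_bernstein_c_hat}. Your argument is self-contained and proves the statement as written. It combines four ingredients:
\begin{itemize}
\item the coordinate formula $\cF[\He_q(x)]_\beta=\prod_i\he_{\beta_i}(x_i)$;
\item the pointwise bound $|\he_k(t)|\le C_k(1+|t|)^k$;
\item the inequality $\sum_{|\beta|=q}\prod_i a_i^{\beta_i}\le(\sum_i a_i)^q$ for $a_i\ge0$;
\item the tail bound $\mathbb P(\|x\|^2\ge5d)\le e^{-d}$.
\end{itemize}
Together these give $\|F_\mu\|^2\le C_q(12d)^q$ with probability at least $1-e^{-d}$. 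The constant depends only on $q$, which is the only sensible reading of ``universal'' here. Compared with the citation, your proof gives explicit constants and uses nothing beyond the product structure of $\He_\beta$ and a $\chi^2$ tail bound.

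Two comments on your first route. First, the homogeneity claim is false as stated: for $q=2$ one has $\|F\|^2=\tfrac12(\|x\|^4-2\|x\|^2+d)$. What is true, and stronger, is that $\|F\|^2=\|x\|^{2q}/q!+R$, where every monomial $\|x\|^{2a}d^b$ in $R$ has $a+b\le q-1$. The reason is that a cross term with $j$ and $k$ copies of $I$ on the two sides produces $a=q-j-k$ paths between $x$'s and at most $\min(j,k)$ closed loops.

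Second, this refined version gives something your coordinatewise bound cannot, namely two-sided control. On the event $\{|\|x\|^2-d|\le\epsilon d\}$, which has probability $1-e^{-c_\epsilon d}$, one gets $\|F_\mu\|^2/D\in[(1-\epsilon)^q-O(d^{-1}),(1+\epsilon)^q+O(d^{-1})]$. That lower tail is what the paper actually needs when it later appeals to this lemma to discard the event $\{\|F_\mu\|^2<D/2\}$ in Step~3 of \Cref{appendix:proof_thm_2}. Neither the lemma as stated nor your second argument provides it.
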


\begin{lemma}[Corollary 5.21 in \cite{BLM13}]
Let $f(x) = \sum_{i=0}^{k} a_i x^i$ be a polynomial of degree $k$ of a real variable and let $X$ be a standard normal random variable. Then for any $q > 2$,
$$
\left(E\left[|f(X)|^q\right]\right)^{1/q} \leq (q-1)^{k/2} \left(E\left[|f(X)|^2\right]\right)^{1/2}.
$$
\label{lemma:gaussian_hypercontractivity}
\end{lemma}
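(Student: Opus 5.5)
The statement immediately above is Lemma (Corollary 5.21 of BLM13), Gaussian hypercontractivity for a polynomial of degree $k$ in one standard normal variable. My plan is to derive it from the Ornstein--Uhlenbeck semigroup hypercontractivity (Nelson's theorem). Write $f=\sum_{j=0}^{k} c_j \he_j$ in the normalized Hermite basis, so that $E[f(X)^2]=\sum_j c_j^2$. Let $P_t$ be the Ornstein--Uhlenbeck semigroup, which acts diagonally by $P_t\he_j=e^{-jt}\he_j$. Nelson's inequality states that $\|P_t g\|_{L^q}\le \|g\|_{L^2}$ whenever $e^{2t}\ge q-1$. I will take this as the one external ingredient; it can be proved, for instance, from the Gaussian log-Sobolev inequality via Gross's argument.

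Fix $q>2$ and choose $t$ with $e^{2t}=q-1$. Define the auxiliary polynomial $g=\sum_j c_j e^{jt}\he_j$, so that $P_t g=f$. Nelson's inequality then gives
\[
\|f\|_{L^q}\le \|g\|_{L^2}=\Big(\sum_{j\le k} c_j^2 e^{2jt}\Big)^{1/2}.
\]
Since $j\le k$, we have $e^{2jt}\le e^{2kt}=(q-1)^k$. Hence $\|g\|_{L^2}\le (q-1)^{k/2}\big(\sum_j c_j^2\big)^{1/2}=(q-1)^{k/2}\|f\|_{L^2}$, which is exactly the claimed bound.

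The main obstacle is Nelson's hypercontractive estimate itself; the reduction above is routine. If I had to prove it rather than cite it, I would first try the two-point (Bonami--Beckner) inequality for Rademacher variables. I would tensorize it over $m$ independent signs and pass to the Gaussian limit through the central limit theorem, using $(\epsilon_1+\dots+\epsilon_m)/\sqrt m$. Polynomials of fixed degree $k$ have uniformly bounded moments along this sequence, so the inequality survives the limit. Within the paper, I would simply invoke the cited result.
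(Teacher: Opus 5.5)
Your proof is correct. Expanding $f=\sum_{j\le k}c_j\he_j$, choosing $e^{2t}=q-1$, and applying Nelson's $L^2\to L^q$ bound to $g=\sum_j c_j e^{jt}\he_j$ (so that $P_tg=f$) gives exactly the factor $e^{kt}=(q-1)^{k/2}$; this is the standard derivation of this moment bound from Gaussian hypercontractivity. The paper gives no argument of its own and simply cites BLM, so your reduction, or your Bonami--Beckner plus central-limit-theorem fallback, supplies precisely the content being cited.
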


\begin{lemma}[Gaussian Poincaré Inequality, Theorem 3.20 in \cite{BLM13}]Let $X = (X_1, \ldots, X_d)$ be a vector of i.i.d. standard Gaussian random variables (i.e., $X$ is a Gaussian vector with zero mean vector and identity covariance matrix). Let $f : \mathbb{R}^d \to \mathbb{R}$ be any continuously differentiable function. Then
$$
\mathrm{Var}(f(X)) \leq E\left[\|\nabla f(X)\|^2\right] .
$$
\label{lemma:gaussian_poincare}
\end{lemma}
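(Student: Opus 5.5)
The plan is to prove the inequality by expanding in the orthonormal Hermite basis $\{\He_\beta\}_{\beta\in\mathbb Z_{\ge0}^d}$ of $L^2(\mathcal N(0,I_d))$, using the normalized multivariate polynomials $\He_\beta(x)=\prod_i\he_{\beta_i}(x_i)$ from the notation section. First dispose of the trivial case: if $E\|\nabla f(X)\|^2=\infty$ there is nothing to prove. So assume $\partial_i f\in L^2$ for every $i$. Also assume, at first, that $f$ is smooth with all derivatives of at most polynomial growth; I will remove this assumption at the end by approximation.

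\textbf{Step 1 (coefficients).} Write $f=\sum_\beta c_\beta \He_\beta$ with $c_\beta=E[f(X)\He_\beta(X)]$. Orthonormality and Parseval give $\mathrm{Var}(f(X))=\sum_{\beta\neq0}c_\beta^2$, since $c_0=E f(X)$.

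\textbf{Step 2 (derivatives).} Compute the Hermite coefficients of $\partial_i f$. Gaussian integration by parts, $E[\partial_i g(X)]=E[X_i g(X)]$, applied to $g=f\He_\beta$ gives
\[
E[\partial_i f\,\He_\beta]=E\bigl[f\,(x_i\He_\beta-\partial_i\He_\beta)\bigr].
\]
The scalar recursions $x\he_k=\sqrt{k+1}\,\he_{k+1}+\sqrt k\,\he_{k-1}$ and $\he_k'=\sqrt k\,\he_{k-1}$ then yield
\[
E[\partial_i f\,\He_\beta]=\sqrt{\beta_i+1}\;c_{\beta+e_i}.
\]
Polynomial growth makes every boundary term vanish.

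\textbf{Step 3 (comparison).} Parseval applied to each $\partial_i f$ gives
\[
E\|\nabla f\|^2=\sum_i\sum_\beta(\beta_i+1)c_{\beta+e_i}^2=\sum_{\beta}|\beta|\,c_\beta^2 .
\]
Every $\beta\neq0$ has $|\beta|\ge1$, so this is at least $\sum_{\beta\neq0}c_\beta^2=\mathrm{Var}(f(X))$, which is the claim for regular $f$. The proof also shows that equality holds exactly for affine $f$.

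\textbf{Step 4 (general $f$).} I expect this step to be the main obstacle. A general $C^1$ function need not satisfy the growth conditions of Steps 1--3, and it is not even assumed to lie in $L^2$. I would handle this by approximation.
\begin{enumerate}
\item Truncate $f$ as $f_R=\psi_R\circ f$, where $\psi_R$ is a smooth $1$-Lipschitz clamp to $[-R,R]$. Then $|\nabla f_R|\le|\nabla f|$ pointwise and $f_R$ is bounded.
\item Multiply by a smooth cutoff $\chi(\cdot/L)$ to get compact support. The extra gradient term is bounded by $R\,\|\nabla\chi\|_\infty/L$, which tends to $0$ as $L\to\infty$.
\item Mollify the result. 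This produces smooth compactly supported approximants $g$, for which Step 3 applies, and gives the inequality for $f_R$ after letting $L\to\infty$ and the mollification parameter go to $0$.
\item Let $R\to\infty$ using Fatou's lemma. Concretely, $\mathrm{Var}(f_R)\le E\|\nabla f\|^2$ uniformly in $R$ bounds $E|f_R-m_R|^2$, where $m_R$ is the median of $f_R$. This forces $f\in L^2$, and then monotone or dominated convergence passes to the limit $\mathrm{Var}(f)\le E\|\nabla f\|^2$.
\end{enumerate}

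As an alternative route, one could tensorize via Efron--Stein and prove the one-dimensional inequality through the central limit theorem applied to Rademacher sums, following the proof in \cite{BLM13}. The Hermite route is preferable here because it matches the Wiener-chaos tools used throughout the paper.
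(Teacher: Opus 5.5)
Your proof is correct, but it differs from the source of the lemma. The paper does not prove this result; it imports it from Theorem 3.20 of \cite{BLM13}. There the argument is the one you list only as an alternative: Efron--Stein tensorization to reduce to $d=1$, a density reduction to $C^2$ compactly supported $f$, Efron--Stein on the hypercube for $f\bigl((\varepsilon_1+\dots+\varepsilon_N)/\sqrt N\bigr)$ with a second-order Taylor bound, and a passage to the limit via the central limit theorem.

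Your spectral argument checks out: $x\he_k-\he_k'=\sqrt{k+1}\,\he_{k+1}$ gives $\EE[\partial_i f\,\He_\beta]=\sqrt{\beta_i+1}\,c_{\beta+e_i}$. It also proves more than the inequality. The identity $\EE\|\nabla f(X)\|^2=\sum_\beta|\beta|c_\beta^2$ says the Ornstein--Uhlenbeck generator $L$ of the appendix has spectral gap $1$, and it identifies the equality case. It also immediately gives graded versions $\mathrm{Var}(F)\le k^{-1}\EE\|\nabla F(X)\|^2$ for $F$ with no chaos components of order $1,\dots,k-1$, which fit naturally with the Wiener-chaos tools used elsewhere in the paper. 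The cost is that it relies on completeness of the Hermite basis in $L^2(\mathcal N(0,I_d))$ and is intrinsically Gaussian. The Efron--Stein/CLT proof uses only product-space tools and shows the Gaussian inequality as the limit of the discrete one.

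Your density step also works, but two points deserve a line each:
\begin{itemize}
\item The median $m_R$ of $f_R$ equals a fixed median of $f$ once the clamp is the identity on a large enough interval, because monotone maps preserve medians. This is what lets Fatou yield $f\in L^2$.
\item The whole step can be shortened by writing $\mathrm{Var}(f_R)=\tfrac12\EE|f_R(X)-f_R(X')|^2$ with an independent copy $X'$ and applying Fatou directly.
\end{itemize}
The mollification is unnecessary, since your Step 2 already holds for $C^1$ functions whose value and gradient grow at most polynomially.
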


\section{Detection of outliers}

Recall that the algorithm described in \Cref{alg:agnostic-recovery} works by first computing the matrix: 
\begin{equation}
    \hat{C} = \dfrac{1}{n} \sum_{\mu =1}^{n} y_\mu \He_2(\cF[\He_{q}(x_\mu)]). 
\end{equation}
To simplify the our notation, let $F_\mu = \cF [\He_q(x_\mu)] \in \RR^{D}$, where $D = B(d,k) = \binom{d + k - 1}{k-1} = \Theta_{d}(d^{q})$. With this notation, 
\[
\hat{C} = \dfrac{1}{n} \sum_{\mu =1 }^{n} y_\mu \left ( F_\mu F_\mu^T - I_{D}\right ). 
\]
We want to study the eigenvectors of this random matrix. For this, we first write: 
\begin{equation}
    \hat{C} = \mathbb{E} \left [ \hat{C}\right ] + \left ( \hat{C} - \mathbb{E} \left [ \hat{C} \right ]\right ). 
\end{equation}
The reader should think of the first term as the signal and the second one as the noise. We begin this section by studying the signal part. 

\subsection{Studying \texorpdfstring{$\mathbb{E}\left[\hat{C} \right ]$}{E[C-hat (1)]}} 

As noted in \cite{tabanelli2026deep}, (and previously in \cite{wang2023learning,wen2025does}, among others) when computing the expectation $\mathbb{E}\left[\hat{C} \right ]$, the vectors $F_\mu$ behave as if they were isotropic Gaussians in $\RR^{D}$. For this reason, we have: 

\begin{lemma}
Let $\gamma >0$, and denote $\lambda_j = Z_\gamma j^{-\gamma}$ and $u_j = A^{(1)}_j$. Then: 
    \[
    \EE \left [C^{(1)} \right ] = \frac{\nu_{1}}{\sqrt{2}}A^{(1)} D_{\gamma} (A^{(1)})^T + \Delta, 
    \]
    where $\| \Delta \|_\op = o_{d}(1)$,  $A^{(1)} = [u_1, \dots u_{d_1}] \in \mathbb{R}^{D \times d_1}$, $D_\gamma = \mathrm{diag} ( \lambda_1, \dots, \lambda_{d_1}) \in \RR^{d_1 \times d_1}$ and $\nu_{1}$ denotes the first Hermite coefficient of $g$. 
    \label{lemma:expectation_C_1}
\end{lemma}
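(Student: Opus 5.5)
Since the samples are i.i.d., $\EE[\hat C] = \EE\big[g(h^{(2)})(FF^\top - I_D)\big]$ for a single sample, with $F=\cF[\He_q(x)]$ and $x\sim\mathcal N(0,I_d)$. The plan is to evaluate this matrix exactly in an idealised Gaussian model, and then to bound the error introduced by the true Hermite features in operator norm.

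\textbf{Gaussian model.} Replace $F$ by an isotropic Gaussian $\tilde x\in\RR^D$. Then $h^{(1)}=(A^{(1)})^\top\tilde x$, where we use the paper's column convention $A^{(1)}=[u_1,\dots,u_{d_1}]$. Gaussian integration by parts twice (Stein's lemma) gives
\[
\EE\big[y(\tilde x\tilde x^\top-I_D)\big]=\EE\big[\nabla^2_{\tilde x}\, g(h^{(2)})\big].
\]
Since $h^{(2)}=\tfrac1{\sqrt2}\sum_i\lambda_i\big((u_i^\top\tilde x)^2-1\big)$, its gradient is $\sqrt2\sum_i\lambda_i u_iu_i^\top\tilde x$ and its Hessian is $\sqrt2\,A^{(1)}D_\gamma (A^{(1)})^\top$. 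The chain rule therefore gives
\[
\nabla^2 g(h^{(2)}) = g'(h^{(2)})\,\sqrt2\,A^{(1)}D_\gamma (A^{(1)})^\top + g''(h^{(2)})\,\nabla h^{(2)}(\nabla h^{(2)})^\top .
\]
For the first term, $h^{(2)}$ has unit variance by the choice of $Z_\gamma$ (\Cref{lemma:normalization_C_gamma}). In regime (i) $g'\equiv1$ and $g''\equiv 0$, so the first term contributes exactly the signal. In regime (ii) the fourth cumulant of $h^{(2)}$ is of order $\sum_i\lambda_i^4\le \max_i\lambda_i^2\to0$, since $Z_\gamma\asymp d_1^{\gamma-1/2}$ for $\gamma<1/2$. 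By a CLT with Stein rates, $h^{(2)}\to\mathcal N(0,1)$, hence $\EE[g'(h^{(2)})]=\nu_1+o_d(1)$, which yields the leading term up to the stated normalisation constant. The second term is $2A^{(1)}D_\gamma M D_\gamma (A^{(1)})^\top$, where $M_{ij}=\EE[g''(h^{(2)})h^{(1)}_ih^{(1)}_j]$. Its operator norm is at most $2\|A^{(1)}\|_\op^2\,\|D_\gamma M D_\gamma\|_\op$. Here $\|A^{(1)}\|_\op=O(1)$ because $A^{(1)}$ has $d_1\ll D$ near-orthonormal columns. Off the diagonal, decoupling shows $M_{ij}$ is small; on the diagonal, $\lambda_i^2 M_{ii}\lesssim\lambda_{\max}^2=o_d(1)$. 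Both estimates use the delocalisation in regime (ii).

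\textbf{Non-Gaussian correction.} This is where most of the work lies. I would expand $g(h^{(2)})$ in the Wiener chaos of $x$: $h^{(1)}$ lies in chaos $q$ and $h^{(2)}$ in chaos $\le 2q$. The entries $\EE[y\,(F_aF_b-\delta_{ab})]$ are then computed with the product formula for $\He_q\otimes\He_q$. The term with no contractions reproduces the Gaussian computation above. Each term with $r\ge1$ contractions yields products of contractions $A^{(1)}_i\otimes_r A^{(1)}_j$ of the random Gaussian tensors. The known estimates of \citep{tabanelli2026deep,wen2025does} bound their norms by $d^{-r/2}$ (up to logarithmic factors). Summing over the $d_1^2$ pairs $(i,j)$ with weights $\lambda_i\lambda_j$ keeps the total operator norm $o_d(1)$ as long as $\varepsilon<q/2$. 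For nonlinear $g$ I would first truncate to a polynomial using hypercontractivity (\Cref{lemma:gaussian_hypercontractivity}) together with the $L^2$ tail of $g$.

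\textbf{Main obstacle.} The hardest step is obtaining these contraction bounds in operator norm, uniformly over the $d_1$ directions, rather than entrywise. I would bound the operator norm via the Frobenius norm, and use the randomness of the $A^{(1)}_i$ to get $d^{-1/2}$ decay per contraction, controlled by moment bounds and a union bound.
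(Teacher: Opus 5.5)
Your Gaussian-surrogate computation is correct. It is the Gaussian-equivalent version of the paper's two integrations by parts: the $g'$ term gives the signal with $\EE[g'(h^{(2)})]\to\nu_1$, and the $g''$ term is controlled by $\lambda_{\max}^2$ and near-orthogonality of the $u_i$. But the content of the lemma is the passage to the actual Hermite features, and there your plan breaks at a specific point. The claim that the zero-contraction term $\EE[y\,I_{2q}(B\tilde\otimes B)]$ reproduces the Gaussian computation is false. The tensor $B\tilde\otimes B$ is symmetrized over all $2q$ legs, so pairing it with the chaos-$2q$ kernel of $y$ produces block-mixing rearrangements of the signal tensor. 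Already for $q=2$ and $g(t)=t$, with the paper's convention $\langle A,\He_2(x)\rangle=(x^\top Ax-\mathrm{tr}A)/\sqrt2$ for symmetric $A$, one has
\[
\mathrm{Cov}\big(\langle A,F\rangle^2,\langle B,F\rangle^2\big)=2\langle A,B\rangle^2+8\,\mathrm{tr}(A^2B^2)+4\,\mathrm{tr}(ABAB),
\]
while the surrogate gives only the first term. The $\mathrm{tr}(A^2B^2)$ piece is a genuine $r=1$ contraction and can be handled your way. The $\mathrm{tr}(A_iBA_iB)$ piece, however, comes from the $r=0$ term. As a quadratic form in $B$, $\sum_i\lambda_i\mathrm{tr}(A_iBA_iB)$ has matrix $\sum_i\lambda_i A_i\otimes A_i$ (Kronecker product), which is an index rearrangement of $\sum_i\lambda_i u_iu_i^\top$. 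It therefore has Frobenius norm $\Theta_d(1)$ and involves no contraction $A_i\otimes_r A_j$ at all. Neither per-pair contraction bounds $d^{-r/2}$ nor ``operator norm via Frobenius norm'' can make it small, so the tool you propose for your main obstacle fails exactly on these terms. The same phenomenon recurs in the $g''$ part for nonlinear $g$; this is the paper's Case 3, whose kernels are flattenings of $T=\sum_j\lambda_jA_j\otimes A_j$.

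The missing ingredient is operator-norm control of flattenings. For $1\le t\le q-1$ one has $\|\mathrm{mat}_t(A_i)\|_{\op}\lesssim d^{-\min(t,q-t)/2}$. This makes every block-mixing flattening of $T$ small, by the triangle inequality or by matrix Bernstein over the signs $z_i$. It is combined with $\|T\|_{\op}\lesssim\lambda_{\max}$ for the block-preserving flattening. This is how the paper bounds Case 3, and the same estimate disposes of the $\mathrm{tr}(A_iBA_iB)$-type terms.

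By contrast, the genuinely contracted terms need no union bound. Uniformly over unit $B$, $|\EE[y\,I_{2q-2r}(B\tilde\otimes_r B)]|\lesssim\|f_{2q-2r}\|_F$, where $f_m$ is the chaos-$m$ kernel of $y$.

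Two smaller points. First, your count does not yield $\varepsilon<q/2$. For $\gamma<1/2$ one has $\sum_{i,j}|\lambda_i\lambda_j|=(\sum_i|\lambda_i|)^2\asymp d_1$, so summing per-pair $d^{-1/2}$ bounds gives $d^{\varepsilon-1/2}$. You must aggregate over $i$ before taking norms, or use the random signs. Second, in regime (ii) the correction must also replace the surrogate's $\EE[g'(h^{(2)})]$ by the true one. The paper obtains this from a Malliavin--Stein bound on the actual $h^{(2)}$.
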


We postpone the Proof of \Cref{lemma:expectation_C_1} to \Cref{section:E[C]_proof}.

\subsection{Eigenvector Perturbation Formula}

As noted before, it will be useful to write: 
\begin{equation}
    \hat{C} = \EE [\hat{C}] + \left (\hat{C} - \EE \left [ C^{(1)}\right ] \right ) = A^{(1)} D_{\gamma} (A^{(1)})^T + \left (\hat{C} - \EE \left [ C^{(1)}\right ] \right ) + \Delta. 
\end{equation}
Since $\| \Delta\|_\op = o_{d}(1)$, we only care about the first two terms of the decomposition above. 

Now, for $z \in \mathbb{C}$, let
\begin{equation}
    R_{\bar{C}}(z) = (z I _{D} - \EE[\hat{C}])^{-1}, R_{\hat{C}}(z) = (zI_{D} - \hat{C})^{-1}. 
\end{equation}
Then: 
\begin{equation}
    R_{\hat{C}}(z) - R_{\bar{C}}(z) = R_{\hat{C}}(z)(\hat{C} - \EE[\hat{C}]) R_{\bar{C}}(z). 
\end{equation}
Denote $\Delta= \hat{C} - \EE [\hat{C}]$. Then: 
\begin{equation}
    zI_{D} - \hat{C} = z I_{D} - \EE[C] - \Delta = ( z I_{D} - \EE[C] )( I_{D} -R_{\bar{C}}(z)\Delta ).
\end{equation}
Therefore: 
\begin{equation}
    R_{\hat{C}}(z) = ( I_{D} -R_{\bar{C}}(z)\Delta )^{-1} R_{\bar{C}}(z). 
    \label{eq:resolvent_for_hat_C}
\end{equation}
If $\| R_{\bar{C}}\Delta \|_\op \leq 1$, then we can expand  $(I_{D} -R_{\bar{C}}(z)\Delta )^{-1}$ into its Neumann series: 
\begin{equation}
    (I_{D} -R_{\bar{C}}(z)\Delta )^{-1} = \sum_{\ell \geq 0} (R_{\bar{C}}(z)\Delta)^{\ell}. 
\end{equation}
Then, going back to \Cref{eq:resolvent_for_hat_C}, we can write: 
\begin{equation}
     R_{\hat{C}}(z) = R_{\bar{C}}(z) + R_{\bar{C}}(z) \Delta R_{\bar{C}}(z) + o(\| \Delta \|_\op^{2}). 
     \label{eq:first_order_expansion_resolvent}
\end{equation}

Let $u_k, \hat{u}_{k}$ be isolated eigenvectors of $\EE[\hat{C}], \hat{C}$, respectively. Let $\gamma$ be a contour around $\lambda_{k}$ and $\hat{\lambda}_{k}$. Then, we can write the projectors $\Pi_{k} = u_k u_k^T$ and $\hat{\Pi}_{k} = \hat{u}_k \hat{u}_k$ as: 
\begin{equation}
    \Pi_{k} = \dfrac{1}{2\pi i}\oint_{\gamma} R_{\bar{C}}(z) dz, \quad  \hat{\Pi}_{k} = \dfrac{1}{2\pi i}\oint_{\gamma} R_{\hat{C}}(z) dz. 
\end{equation}
Then, we can integrate \Cref{eq:first_order_expansion_resolvent} to get: 
\begin{align}
    \hat{\Pi}_{k}\Pi_{k} &= \Pi_{k} + \dfrac{1}{2 \pi i} \oint_{\gamma} R_{\bar{C}}\Delta R_{\bar{C}} + o(\| \Delta\|_\op^2). 
    \label{eq:series_projection_k}
\end{align}
We can re-write the resolvent $R_{\bar{C}}$ in the following way: 
\begin{equation}
    R_{\bar{C}}(z) = (z I - \EE[\hat{C}])^{-1} = \sum_{j=1}^{d_1} \dfrac{1}{z - \lambda_{j}} u_{j}u_j^T + \dfrac{1}{z} P_{\mathrm{Ker}}, 
\end{equation}
where $P_{\mathrm{Ker}} = P_{\mathrm{Ker}(\EE[\hat{C}])}$ denotes the projection into the kernel of $\EE[\hat{C}]$. Then: 
\begin{align}
    R_{\bar{C}}(z)\Delta R_{\bar{C}}(z)  &= \left ( \sum_{j=1}^{d_1} \dfrac{1}{z - \lambda_{j}} u_{j}u_j^T + \dfrac{1}{z} P_{\mathrm{Ker}}\right ) \Delta \left (\sum_{j=1}^{d_1} \dfrac{1}{z - \lambda_{j}} u_{j}u_j^T + \dfrac{1}{z} P_{\mathrm{Ker}} \right ) \\ 
    & = \sum_{j_1,j_2} \dfrac{1}{(z- \lambda_{j_1}) (z - \lambda_{j_2})} u_{j_1}u_{j_1}^T \Delta u_{j_2} u_{j_2}^T + \sum_{j=1}^{d_1} \dfrac{1}{z-\lambda_{j}} u_{j}u_{j}^T \Delta P_{\mathrm{Ker}} \\
    & + \sum_{j=1}^{d_1} \dfrac{1}{z-\lambda_{j}} u_{j}u_{j}^T \Delta P_{\mathrm{Ker}} + \dfrac{1}{z^2}  P_{\mathrm{Ker}}\Delta P_{\mathrm{Ker}}. 
\end{align}
Integrating, we get: 
\begin{align}
    \dfrac{1}{2\pi i }\oint_{\gamma} R_{\bar{C}}(z)\Delta R_{\bar{C}}(z) &=  \dfrac{1}{2\pi i }\oint_{\gamma}\sum_{j_1,j_2} \dfrac{1}{(z- \lambda_{j_1}) (z - \lambda_{j_2})} u_{j_1}u_{j_1}^T \Delta u_{j_2} u_{j_2}^T + \dfrac{1}{2\pi i }\oint_{\gamma}\sum_{j=1}^{d_1} \dfrac{1}{z-\lambda_{j}} u_{j}u_{j}^T \Delta P_{\mathrm{Ker}} \\
    & + \dfrac{1}{2\pi i }\oint_{\gamma}\sum_{j=1}^{d_1} \dfrac{1}{z-\lambda_{j}} u_{j}u_{j}^T \Delta P_{\mathrm{Ker}} + \underbrace{\dfrac{1}{2\pi i }\oint_{\gamma}\dfrac{1}{z^2}  P_{\mathrm{Ker}}\Delta P_{\mathrm{Ker}}}_{=0} \\ 
    & = \sum_{j=1, j\not = k}^{d_1} \dfrac{u_{j}^T \Delta u_{j}}{\lambda_{k} - \lambda_{j}}\left ( u_{j} u_{k}^T + u_{k} u_{j}^T \right ) + \dfrac{1}{\lambda_{k}} u_{k} u_{k}^T \Delta P_{\mathrm{Ker}} +  \dfrac{1}{\lambda_{k}} P_{\mathrm{Ker}} \Delta u_{k} u_{k}^T. 
\end{align}
Then, replacing this in \Cref{eq:series_projection_k}: 
\begin{align}
      \hat{\Pi}_{k}  &= \Pi_{k} + \sum_{j=1, j\not = k}^{d_1} \dfrac{u_{j}^T \Delta u_{j}}{\lambda_{k} - \lambda_{j}}\left ( u_{j} u_{k}^T + u_{k} u_{j}^T \right ) + \dfrac{1}{\lambda_{k}} u_{k} u_{k}^T \Delta P_{\mathrm{Ker}} +  \dfrac{1}{\lambda_{k}} P_{\mathrm{Ker}} \Delta u_{k} u_{k}^T + o(\| \Delta\|_\op^2). 
     \label{eq:expansion_projection_k_2}
\end{align}
In order to get eigenvectors, we apply this projection to $u_{k}$ and obtain: 
\begin{align}
    \hat{\Pi}_{k}u_{k} & =  u_{k} + \left ( \sum_{j=1, j\not = k}^{d_1} \dfrac{u_{j}^T \Delta u_{j}}{\lambda_{k} - \lambda_{j}}\left ( u_{j} u_{k}^T + u_{k} u_{j}^T \right ) + \dfrac{1}{\lambda_{k}} u_{k} u_{k}^T \Delta P_{\mathrm{Ker}} +  \dfrac{1}{\lambda_{k}} P_{\mathrm{Ker}} \Delta u_{k} u_{k}^T + o(\| \Delta\|_\op^2) \right ) u_{k} \\
    & =  u_{k}  + \sum_{j=1, j\not = k}^{d_1} \dfrac{u_{j}^T \Delta u_{j}}{\lambda_{k} - \lambda_{j}} u_{j} + \dfrac{1}{\lambda_{k}}P_{\mathrm{Ker}} \Delta u_{k} + o(\| \Delta\|_\op^2). 
    \label{eq:projection_hat_u_k}
\end{align}
Note that the second and third terms are orthogonal to $u_{k}$, so 
\begin{equation}
    \|\hat{\Pi}_{k} u_{k}\|^{2} = 1 + \left \|\sum_{j=1, j\not = k}^{d_1} \dfrac{u_{j}^T \Delta u_{j}}{\lambda_{k} - \lambda_{j}} u_{j} + \dfrac{1}{\lambda_{k}}P_{\mathrm{Ker}} \Delta u_{k} \right \|^{2}  + o(\| \Delta \|^{2}_\op) ,
\end{equation}
and since the term in the middle is bounded by $C\| \Delta\|^{2}$, we conclude that: 
\begin{equation}
    \|\hat{\Pi}_{k} u_{k}\|^{2} = 1 + o (\| \Delta\|^{2}). 
\end{equation}
Then, we can normalize \Cref{eq:projection_hat_u_k} and we will have: 
\begin{equation}
    \hat{u}_{k} = u_{k }+\sum_{j=1, j\not = k}^{d_1} \dfrac{u_{j}^T \Delta u_{j}}{\lambda_{k} - \lambda_{j}} u_{j} + \dfrac{1}{\lambda_{k}}P_{\mathrm{Ker}} \Delta u_{k} + o(\| \Delta\|_\op^2). 
    \label{eq:eigenvector_perturbation_identity}
\end{equation}
At last, we replace the eigenvectors $u_{k}$ by $A^{(1)}_{k}$, by using 
\Cref{lemma:expectation_C_1}, plus the fact that $A^{(1)}$ are almost the eigenvectors of $\EE \left [ \hat{C}\right ]$. To see this, note that by applying a covariance concentration bound \cite{V10} for $\hat{A}^{(1)}, \dots, \hat{A}^{(1)}$, the condition $\varepsilon < (k - 2\gamma) $ gives that
\begin{equation}
    \left \| \sum_{i=1}^{d_1} \dfrac{1}{\lambda_i} u_i u^T - \sum_{i=1}^{d_1} \dfrac{1}{\lambda_i} A^{(1)}_i (A^{(1)})^T \right | \leq d_1^{\gamma} \sqrt{\dfrac{d_1}{d^q}} = d^{\varepsilon (\frac{1}{2} +2\gamma) - q},
\end{equation}
which is small by our assumption that  $\varepsilon < \sfrac{q}{(1- 2\gamma)}$ to get: 
\begin{equation}
    \hat{u}_{k} = A^{(1)}_{k }+\sum_{j=1, j\not = k}^{d_1} \dfrac{(A^{(1)}_{k})^T \Delta (A^{(1)}_{j})}{\lambda_{k} - \lambda_{j}} (A^{(1)}_{j}) + \dfrac{1}{\lambda_{k}}P_{\mathrm{Ker}} \Delta u_{k} + o(\| \Delta\|_\op^2). 
\end{equation}

\subsection{Analysis of Outliers - Sufficient Sample Complexity}
\label{appendix:proof_thm_1}
Having \Cref{eq:eigenvector_perturbation_identity}, we can study the eigenvectors of $\hat{C}$. In order for the expansion to be valid, we need $\|\Delta\|_\op = \| \hat{C} - \EE[\hat{C}]\|_\op$ to be small. In the following Lemma, we show that this is indeed the case when $n \gg d^{q}$. 

\begin{lemma}
Consider the estimator $\hat{C}$ in \Cref{alg:agnostic-recovery} computed for the Hermite tensor of degree $q$. Then with high probability
\begin{equation}
    \left \| \hat{C} - \EE[\hat{C}]\right \|_\op \lesssim \sqrt{\dfrac{d^{q}}{n}}.
\end{equation}
\label{lemma:matrix_bernstein_c_hat}
\end{lemma}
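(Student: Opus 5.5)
We bound the operator norm of the centered sum $\hat C-\EE[\hat C]=\frac1n\sum_{\mu=1}^n (X_\mu-\EE X_\mu)$, with $X_\mu=y_\mu(F_\mu F_\mu^\top-I_D)$, by a truncated matrix Bernstein inequality. The difficulty is that $F_\mu$ is a polynomial in Gaussians, so $X_\mu$ is not bounded and the standard Bernstein inequality does not apply directly. We therefore first truncate on a high-probability event and then control the variance proxy.

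\textbf{Step 1 (truncation).} Let $\mathcal E_\mu=\{\|F_\mu\|^2\le Cd^q\}\cap\{|y_\mu|\le \log^{c} d\}$.
\begin{itemize}
\item By \Cref{lemma:bound_norm_hermite}, the first event fails with probability at most $e^{-cd}$.
\item Under either regime of \Cref{ass:readout-regimes}, $y_\mu=g(h^{(2)}_\mu)$ is, up to a controlled remainder, a fixed-degree polynomial in Gaussian chaos, since $h^{(2)}_\mu$ is a degree-$2q$ polynomial of $x_\mu$. Gaussian hypercontractivity (\Cref{lemma:gaussian_hypercontractivity}, in its multivariate form) then gives $\mathbb P(|y_\mu|>\log^c d)\le d^{-\omega(1)}$.
\end{itemize}
A union bound over $\mu\le n$ (with $n$ polynomial in $d$) shows that all $\mathcal E_\mu$ hold with high probability. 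On this event, $\hat C$ coincides with the sum of the truncated summands $\tilde X_\mu=X_\mu\mathbb 1_{\mathcal E_\mu}$. The bias $\|\EE X_\mu-\EE\tilde X_\mu\|_\op$ is at most $(\EE\|X_\mu\|_\op^2)^{1/2}\,\mathbb P(\mathcal E_\mu^c)^{1/2}$ by Cauchy--Schwarz. Since $\|X_\mu\|_\op$ has polynomial moments, this bias is $d^{-\omega(1)}$ and hence negligible.

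\textbf{Step 2 (Bernstein parameters).} After truncation, each summand satisfies the uniform bound $L:=\|\tilde X_\mu\|_\op\lesssim d^q\,\mathrm{polylog}(d)$. The variance proxy is
\[
\sigma^2=\Big\|\sum_\mu\EE[\tilde X_\mu^2]\Big\|_\op\le n\,\big\|\EE[y_\mu^2\,\|F_\mu\|^2 F_\mu F_\mu^\top]\big\|_\op+O(n).
\]
To show the expectation term is $O(d^q)$, we test against a unit vector $v$ and write
\[
\EE\big[y^2\|F\|^2\langle v,F\rangle^2\big]\le \big(\EE[y^4]\big)^{1/2}\big(\EE[\|F\|^4\langle v,F\rangle^4]\big)^{1/2}.
\]
The quantity $\|F\|^2$ concentrates around $D\asymp d^q$, with fluctuations controlled by the Poincaré inequality (\Cref{lemma:gaussian_poincare}). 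The factor $\langle v,F\rangle$ is a degree-$q$ chaos of unit variance, so hypercontractivity bounds its fourth moment by a constant. Together these give $\sigma^2\lesssim n\,d^q$.

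\textbf{Step 3 (conclusion).} Matrix Bernstein in dimension $D$ yields
\[
\Big\|\frac1n\sum_\mu(\tilde X_\mu-\EE\tilde X_\mu)\Big\|_\op\lesssim \sqrt{\frac{d^q\log D}{n}}+\frac{d^q\,\mathrm{polylog}(d)}{n}.
\]
For $n\gg d^q$ the first term dominates.

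\textbf{Main obstacle.} This argument loses a factor $\sqrt{\log D}$ and polylogarithmic factors, whereas the lemma claims the clean rate $\sqrt{d^q/n}$, presumably read up to such factors.

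To remove the logarithm, I would first try to interpret $\lesssim$ as allowing polylogarithmic factors; these are harmless downstream against the $d^{\delta}$ margins in \Cref{thm:sample_complexity}. Failing that, I would replace Bernstein by a moment/net argument tailored to the chaos structure. Concretely, I would bound $\EE\,\mathrm{Tr}(\hat C-\EE\hat C)^{2p}$ for $p\asymp\log d$, using the product formula for Hermite tensors, which mirrors the sample-covariance estimates of \cite{V10}.

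The genuinely delicate point is the variance proxy. There one must check that the coupling between $y_\mu$ and the direction $v$ (when $v$ lies near the span of $A^{(1)}$) does not inflate $\sigma^2$ beyond $n d^q$. The Cauchy--Schwarz split above handles this, since $\EE[y^4]=O(1)$ by the normalization of \Cref{lemma:normalization_C_gamma}.
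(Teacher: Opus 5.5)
Your proposal is correct and follows the paper's route: truncate on $\|F_\mu\|^2\le Cd^q$ via Lemma F.4 of \cite{wen2025does} and apply matrix Bernstein. You also supply the $y_\mu$-truncation, the bias estimate and the variance-proxy bound $\sigma^2\lesssim n\,d^q$, none of which the paper's one-line proof provides. Your caveat is well taken: the Bernstein argument only gives $\sqrt{d^q\log D/n}$, and only once $n\gg d^q\,\mathrm{polylog}(d)$, since otherwise the $L/n$ term dominates. The paper's argument has the same limitation, and the paper itself later invokes this lemma in the form $\sqrt{d^q\,\mathrm{polylog}(d)/n}$ in the regime $n\gg d^q$.
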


\begin{proof}
    The proof proceed the same way as \cite{tabanelli2026deep} and \cite{wen2025does}. By Lemma F.4 in \cite{wen2025does}, with probability at least $1 - e^{-cd}$, 
    \begin{equation}
        \| F_\mu \|^{2}_2 \leq C d^{q},
    \end{equation}
    for a universal constant $C$. By truncating the matrix $\hat{C}$ with indicators $\mathbf{1}_{\| \| F_\mu \|^{2}_2 \leq C d^{q}}$, and applying Bernstein's inequality, we get the desired results. 
\end{proof}

Then, by \Cref{lemma:matrix_bernstein_c_hat}, we can apply the expansion \Cref{eq:eigenvector_perturbation_identity} and we can conclude 

\begin{lemma}
    Let $\hat{u}_{k}$ denote the $k$-th eigenvector of $\hat{C}$, and $u_{k}$ the $k$-th eigenvector of $\EE[\hat{C}]$. Denote $\Delta: = \hat{C} - \EE[\hat{C}]$. Then, 
    \[
      \hat{u}_{k} = A^{(1)}_{k }+\sum_{j=1, j\not = k}^{d_1} \dfrac{(A^{(1)}_{k})^T \Delta (A^{(1)}_{j})}{\lambda_{k} - \lambda_{j}} (A^{(1)}_{j}) + \dfrac{1}{\lambda_{k}}P_{\mathrm{Ker}} \Delta u_{k} + o_d(\| \Delta\|_\op^2) . 
    \]
    where $\| \Delta\|_\op = O \left ( \max\left ( \sqrt{\dfrac{d^{q}\log(d)}{n}}, \dfrac{d_1^{-\gamma}}{Z_\gamma}\sqrt{\dfrac{d_1}{d}} \right )\right )$. 
    \label{lemma:eigenvector_perturbation_identity}
\end{lemma}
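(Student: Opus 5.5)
The lemma follows by combining three ingredients already in place: the eigenvector perturbation identity \Cref{eq:eigenvector_perturbation_identity}, the approximate-eigenvector replacement $u_k \to A^{(1)}_k$ obtained right after it, and an operator-norm bound on $\Delta$. The only new work is to identify the correct size of $\|\Delta\|_\op$, which here collects two error sources.

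\textbf{Sampling fluctuation.} I would first bound $\|\hat C-\EE[\hat C]\|_\op$ by a truncated matrix Bernstein argument, refining \Cref{lemma:matrix_bernstein_c_hat}.
\begin{itemize}
\item On the event $\bigcap_\mu\{\|F_\mu\|_2^2\le Cd^q\}$, which has probability at least $1-ne^{-cd}$ by \Cref{lemma:bound_norm_hermite}, each summand $y_\mu(F_\mu F_\mu^\top-I_D)/n$ has operator norm at most $|y_\mu|\,Cd^q/n$.
\item The labels $y_\mu$ are polynomials of bounded degree in Gaussians in regime (i), or Lipschitz-type functions of such polynomials in regime (ii). By \Cref{lemma:gaussian_hypercontractivity}, $\max_\mu|y_\mu|\lesssim\mathrm{polylog}(d)$ with high probability.
\item The matrix variance proxy satisfies $\|\sum_\mu\EE[y_\mu^2(F_\mu F_\mu^\top-I)^2]\|_\op/n^2\lesssim d^q/n$, using $\EE[\|F\|^2FF^\top]\lesssim d^qI$.
\item Matrix Bernstein then gives $\sqrt{d^q\log d/n}+d^q\log d/n$. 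The second term is dominated by the first once $n\gg d^q$.
\item The bias introduced by truncation is exponentially small.
\end{itemize}

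\textbf{Eigenbasis mismatch.} The second entry of the maximum comes from the discrepancy between the true eigenbasis $\{u_j\}$ of $\EE[\hat C]$ and the teacher vectors $A^{(1)}_j$, together with the $o_d(1)$ remainder of \Cref{lemma:expectation_C_1}.
\begin{itemize}
\item I would write $A^{(1)}D_\gamma A^{(1)\top}=U\Lambda U^\top$ and use that $A^{(1)\top}A^{(1)}=I_{d_1}+E$ with $\|E\|_\op\lesssim\sqrt{d_1/D}$, by Gaussian covariance concentration \cite{V10}.
\item Polar decomposition gives $A^{(1)}=Q(I+E)^{1/2}$ with $Q$ having orthonormal columns. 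Hence $\|QD_\gamma Q^\top-A^{(1)}D_\gamma A^{(1)\top}\|_\op\lesssim\|D_\gamma\|\sqrt{d_1/D}$.
\item Measured relative to the spike scale $\lambda_{d_1}=Z_\gamma d_1^{-\gamma}$, replacing $u_j$ by $A^{(1)}_j$ therefore costs an effective perturbation of order $\frac{d_1^{-\gamma}}{Z_\gamma}\sqrt{d_1/d}$ in the normalization used in the statement (up to the paper's convention for $q$). This is the second term in the maximum.
\item Under $\varepsilon<q/(1-2\gamma)$ this term is $o_d(1)$, exactly as in the displayed bound preceding the lemma.
\end{itemize}

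\textbf{Assembling the expansion.} With $\|\Delta\|_\op$ bounded by the maximum of these two quantities, I would verify the validity condition of the Neumann expansion, namely $\|R_{\bar C}(z)\Delta\|_\op<1$ on a contour $\Gamma$ around $\lambda_k$ (the contour called $\gamma$ in the paper; renamed here to avoid a clash with the exponent).
\begin{itemize}
\item This requires $\|\Delta\|_\op$ to be small compared with $\mathrm{dist}(\Gamma,\mathrm{spec}(\EE[\hat C])\setminus\{\lambda_k\})$.
\item This is the main obstacle. Consecutive spikes are separated only by $\lambda_k-\lambda_{k+1}\asymp Z_\gamma k^{-\gamma-1}$, so a single small contour may be too tight. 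I would first try a contour enclosing $\lambda_k$ at distance $\tfrac12\min(|\lambda_k-\lambda_{k\pm1}|)$ from the rest of the spectrum. If that fails, I would let the contour enclose a block of nearly degenerate spikes and resolve within the block using the first-order term $\sum_{j\neq k}\frac{u_j^\top\Delta u_k}{\lambda_k-\lambda_j}u_j$. Its norm is controlled by the smallness of the off-diagonal entries $u_j^\top\Delta u_k$: these are $O(\sqrt{\log d/n})$ rather than $\|\Delta\|_\op$, since they are projections onto fixed directions independent of the bulk noise.
\end{itemize}

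With validity established, I would integrate the first-order resolvent term exactly as in the derivation of \Cref{eq:expansion_projection_k_2}, apply the projector to $u_k$, normalize, and finally substitute $A^{(1)}_j$ for $u_j$ using the polar-decomposition estimate. The substitution error is absorbed into the stated $\|\Delta\|_\op$, which yields the claimed identity with remainder $o_d(\|\Delta\|_\op^2)$.
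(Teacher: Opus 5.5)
Your route is the paper's: the resolvent Neumann series integrated over a contour, truncated matrix Bernstein for $\|\hat C-\EE[\hat C]\|_\op$, and covariance concentration to trade $u_j$ for $A^{(1)}_j$. The paper only asserts that the expansion is valid because $\|\Delta\|_\op$ is small in absolute terms. You correctly flag validity as the crux, but your plan does not close it, so there is a genuine gap.

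Convergence of $\sum_\ell (R_{\bar C}(z)\Delta)^\ell$ on a contour $\Gamma$ requires $\|\Delta\|_\op$ to be smaller than the distance from $\Gamma$ to the non-enclosed spectrum of $\EE[\hat C]$. That spectrum contains $0$ with multiplicity $D-d_1$, together with spikes whose same-sign spacing near $\lambda_k$ is $g_k\asymp Z_\gamma k^{-\gamma-1}$. Meanwhile $\sqrt{d^q/n}$ is not only an upper bound on $\|\Delta\|_\op$ but the actual width of the noise bulk. A contour around $\lambda_k$ alone therefore needs $n\gg d^q k^{2\gamma+2}\log d/Z_\gamma^2$. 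This is exactly a Davis--Kahan-type gap condition, a factor $k^2\log d$ above the threshold in \Cref{thm:sample_complexity}.

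Your block-contour fallback does not help. Whichever block you enclose, its boundary still passes between adjacent spikes at spacing $\asymp Z_\gamma k^{-\gamma-1}$, or within $|\lambda_k|$ of the eigenvalue $0$, so $\sup_\Gamma\|R_{\bar C}\|_\op$ is unchanged. The smallness of the entries $u_j^\top\Delta u_k$ is also irrelevant to convergence of the $D$-dimensional series. That convergence is governed by $\|R_{\bar C}\Delta\|_\op$, i.e.\ by the bulk component $P_{\mathrm{Ker}}\Delta$.

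What is missing is a two-scale argument:
\begin{enumerate}
\item Take a Schur complement onto $\mathrm{span}\{u_1,\dots,u_{d_1}\}$ (columns of $U$). Inverting the bulk block then only needs $|\lambda_k|\gg\|P_{\mathrm{Ker}}\Delta P_{\mathrm{Ker}}\|_\op$.
\item Apply $d_1$-dimensional perturbation theory to the reduced matrix $U^\top\EE[\hat C]U+U^\top\Delta U+U^\top\Delta P_{\mathrm{Ker}}(z-P_{\mathrm{Ker}}\hat CP_{\mathrm{Ker}})^{-1}P_{\mathrm{Ker}}\Delta U$. There the spacing $g_k$ competes only with $U^\top\Delta U$, whose entries are $O(n^{-1/2})$ with no $d^q$ factor, and with the off-diagonal part of the self-energy term, which you would still have to bound.
\end{enumerate}

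This also shows the statement needs a hypothesis you never state, and a different remainder.
\begin{itemize}
\item Without a lower bound on $n$ the statement is false. Consider the window $d^q\log d\ll n\ll d^qk^{2\gamma}Z_\gamma^{-2}$, which is nonempty for $\gamma<1/2$ and contains the sub-threshold regime of the necessity part. There $\|\Delta\|_\op\to0$, while $\|\lambda_k^{-1}P_{\mathrm{Ker}}\Delta u_k\|\asymp\sqrt{d^q/(n\lambda_k^2)}\to\infty$. So the right-hand side cannot equal the unit vector $\hat u_k$ up to $o_d(\|\Delta\|_\op^2)$.
\item Where the series does converge, integrating its tail over a circle of radius $\asymp g_k$ gives a remainder $O(\|\Delta\|_\op^2/g_k^2)$, not $o_d(\|\Delta\|_\op^2)$. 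Already the second-order term $\lambda_k^{-2}P_{\mathrm{Ker}}\Delta P_{\mathrm{Ker}}\Delta u_k$ has norm of order $\|\Delta\|_\op^2/\lambda_k^2\gg\|\Delta\|_\op^2$. You assert the stated remainder without deriving it.
\item The same relative-scale requirement applies to everything you fold into $\Delta$. A perturbation $E$ of $\EE[\hat C]$ moves $u_k$ by order $\|E\|_\op/g_k$. So the basis mismatch and the population error of \Cref{lemma:expectation_C_1} must be small compared with $g_k$, not merely $o_d(1)$ or small relative to $\lambda_{d_1}$.
\item Your polar-decomposition estimate gives a relative error $d_1^{\gamma}\sqrt{d_1/D}$, not the stated $\frac{d_1^{-\gamma}}{Z_\gamma}\sqrt{d_1/d}$. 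Saying they agree ``up to the paper's convention for $q$'' is not a derivation. In any case the basis mismatch is not a bound on $\|\hat C-\EE[\hat C]\|_\op$, which is how $\Delta$ is defined in the statement.
\end{itemize}
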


\Cref{lemma:eigenvector_perturbation_identity} tells us that we can write the $k$-th eigenvector of $\hat{C}$ as: 
\begin{equation}
    \hat{u}_{k} = A^{(1)}_{k } + \underbrace{\sum_{j=1, j \not = k}^{d_1} \dfrac{(A^{(1)}_{j})^T (\hat{C} - \EE [\hat{C}])A^{(1)}_{k }}{ \lambda_k - \lambda_j } u_j}_{(I)} +\dfrac{1}{\lambda_{k}} P_{\mathrm{Ker}(\EE[\hat{C}])} (\hat{C} - \mathbb{E}[C^{(1)}] ) u_{k} + \Delta, 
    \label{eq:I_II_perturbation}
\end{equation}
for $\| \Delta\|_2 = o_{d}(1)$. Let's focus on $(I)$. We have: 
\begin{align}
    (I) & = \sum_{j=1,j \not = k}^{d_1} \dfrac{u_j^T (\hat{C} - \EE [\hat{C}])u_k}{ \lambda_k - \lambda_j } u_j \\
    & = \sum_{j=1,j \not = k}^{d_1} \dfrac{u_j^T (\dfrac{1}{n}\sum_{\mu = 1}^{n} y_\mu (F_\mu F_\mu^T - I_{D}) - \EE [\hat{C}])u_k}{ \lambda_k - \lambda_j } u_j \\
    & = \sum_{j=1, j \not = k}^{d_1} \frac{\frac{1}{n}\sum_{\mu = 1}^{n} y_\mu h_{\mu,j} h_{\mu, k}}{ \lambda_k - \lambda_j } u_j. 
\end{align}
From Lemma F.4 in \cite{wen2025does}, there exists a universal constant $C$ such that with high probability $\| F_\mu \|_2^2 \leq C d^q$. Then, assuming the eigenvectors of $\EE[\hat{C}]$ are de-localized (in the sense that $\|u_{k}\|_\infty \leq Cd^{-q} $, we will have 
\begin{equation}
    | h_{\mu,j}| = \langle u_j , F_\mu \rangle \leq C. 
\end{equation}
Then: 
\begin{align}
    \| (I) \|_2 &= \left ( \sum_{j \not = k}  \dfrac{1}{(\lambda_k - \lambda_j)^{2}} \left (\frac{1}{n}\sum_{\mu = 1}^{n} y_\mu h_{\mu,j} h_{\mu, k} \right )^{2} \right )^{\frac{1}{2}} \\ 
\end{align}
By \Cref{lemma:bound_square_y_s}, with high probability: 
\begin{equation}
    \left ( \dfrac{Z_\gamma}{n} y_\mu h_{\mu,k} h_{\mu,j} \right )^{2} \lesssim \dfrac{Z_\gamma^2}{d^{q}} \min(k,j)^{-2\gamma}.
\end{equation}
Then: 
\begin{align}
    \| (I) \|_2 & \leq \left (\dfrac{Z_\gamma^2}{d^{q}} \sum_{j \not = k}  \dfrac{\min(k,j)^{-2\gamma}}{(\lambda_k - \lambda_j)^{2}}  \right )^{\frac{1}{2}}.
    \label{eq:bound_I_part_1}
\end{align}
We now focus on the inner sum. Recall that $\lambda_{j} = z_{j} Z_\gamma j^{-\gamma}$, and that $z_j \sim \mathrm{Rad}(\frac{1}{2})$. Then the inner sum is upper bounded by the case where $z_{j} \not = z_{k}$, that is:  
\begin{align}
    \sum_{j \not = k}  \dfrac{\min(k,j)^{-2\gamma}}{(\lambda_k - \lambda_j)^{2}} \leq \dfrac{1}{Z_\gamma^{2}}\sum_{j \not = k}  \dfrac{\min(k,j)^{-2\gamma}}{(k^{-\gamma} - j^{-\gamma})^{2}}. 
\end{align}
By separating the sum according to $j <k$ and $j >k$: 
\begin{align}
     \sum_{j \not = k}  \dfrac{\min(k,j)^{-2\gamma}}{(\lambda_k - \lambda_j)^{2}} &\leq \dfrac{1}{Z_\gamma^{2}}\left ( \sum_{j < k}  \dfrac{j^{-2\gamma}}{(k^{-\gamma} - j^{-\gamma})^{2}}+ \sum_{j > k}  \dfrac{k^{-2\gamma}}{(k^{-\gamma} - j^{-\gamma})^{2}} \right ) \\ 
     & \leq \dfrac{1}{Z_\gamma^{2}}\left ( \sum_{j < k}  \dfrac{1}{\left (\left ( \dfrac{k}{j} \right )^{-\gamma} - 1 \right )^{2}}+ \sum_{j > k}  \dfrac{1}{\left (1 - \dfrac{j}{k}^{-\gamma} \right )^{2}} \right ) 
\end{align}
Note that for $\gamma \in (0,1)$, the function $u \to u^{\gamma}$ is concave. Then, $u^{\gamma} \leq 1+\gamma (u -1) $, and therefore  $1-u^{\gamma} > \gamma(1- u)$. On the other hand, for $\gamma >1$ if $u \in (0,1)$, then $ u ^{\gamma } <u$ and hence $1-u^{\gamma} > 1-u$. Either way, we get the following upper-bound: 
\begin{equation}
    \sum_{j \not = k}  \dfrac{\min(k,j)^{-2\gamma}}{(\lambda_k - \lambda_j)^{2}} \lesssim \dfrac{1}{Z_\gamma^{2}}\left ( \sum_{j < k}  \dfrac{1}{\left (\left ( \dfrac{j}{k} \right )^{\gamma} - 1 \right )^{2}}+ \sum_{j > k}  \dfrac{1}{\left (1 - \left ( \dfrac{k}{j} \right )^{\gamma} \right )^{2}} \right ) 
\end{equation}
Using this fact: 
\begin{align}
    \sum_{j \not = k}  \dfrac{\min(k,j)^{-2\gamma}}{(\lambda_k - \lambda_j)^{2}} &\lesssim \dfrac{1}{Z_\gamma^{2}}\left ( \sum_{j < k}  \dfrac{1}{\left (1-\dfrac{j}{k}  \right )^{2}}+ \sum_{j > k}  \dfrac{1}{\left (1 -  \dfrac{k}{j} \right )^{2}} \right ) \\
    & \lesssim \dfrac{1}{Z_\gamma^{2}} \left ( \sum_{j < k}  \dfrac{k^2}{\left (k-j  \right )^{2}}+ \sum_{j > k}  \dfrac{j^2}{\left (j -  k\right )^{2}} \right ) \\
    & \lesssim  \dfrac{1}{Z_\gamma^{2}} \left ( k^2+ d_1 \right )\lesssim \dfrac{1}{Z_\gamma^{2}}d_1^{2}. 
\end{align}
Then, going back to \Cref{eq:bound_I_part_1}: 
\begin{align}
    \| (I) \|_2 & \leq \left (\dfrac{Z_\gamma^2}{d^{q}} \dfrac{1}{Z_\gamma^{2}}d_1^{2}  \right )^{\frac{1}{2}} \lesssim \left (\dfrac{d_1^{2} }{d^{q}} \right )^{\frac{1}{2}}.
    \label{eq:I_is_small}
\end{align}
Since $d_1 = d^{\varepsilon}$, and $\varepsilon <\frac{q}{2}$, we conclude that $\| I\|_{2} = o_{d}(1)$. From this, we can go back to \Cref{eq:I_II_perturbation}, and applying inner product with $A^{(1)}_{k}$, we will get: 
\begin{align}
 \langle \hat{u}_{k}, A^{(1)}_{k}\rangle  = \| A^{(1)}_{k }\|^{2} + \langle (I), A^{(1)}_k \rangle +  o_{d}(1),    
\end{align}
and by Cauchy-Schwarz and \Cref{eq:I_is_small}: 
\begin{equation}
    \left | \langle (I), A^{(1)}_k \rangle \right | \leq \| \hat{A}^{(1)}_{k}\|_{2} \| (I)\|_{2} = o_{d}(1). 
\end{equation}
where we used the fact that the norm of $\hat{A}^{(1)}_{k}$ concentrates around $1$ by Hanson-Wright Inequality (\cite{vershynin2018high}, Theorem 6.2.2). Then, we conclude: 
\begin{equation}
    \langle \hat{u}_{k}, A^{(1)}_{k}\rangle = 1 + o_{d}(1).
    \label{eq:inner_product_I}
\end{equation}
Denote
\begin{equation}
    \mathrm{Overlap}(a,b) = \dfrac{\langle a, b \rangle }{\| a\| \|b\|}. 
\end{equation}
After normalizing in \Cref{eq:inner_product_I}, we get: 
\begin{align}
   \mathrm{Overlap}(\hat{u}_{k}, A^{(1)}_{k}) & = \dfrac{1}{\sqrt{1 + \| I\|^{2} + \left \| \dfrac{1}{\lambda_{k}} P_{\mathrm{Ker}(\EE[\hat{C}])} (\hat{C} - \mathbb{E}[C^{(1)}] ) u_{k} \right \|^{2} }}
   \label{eq:overlap_after_perturbation}
\end{align}
 By applying Taylor expansion to the function $u \to \dfrac{1}{\sqrt{1 + x^{2}}}$, we get: 
\begin{align}
    1 - \mathrm{Overlap}(\hat{u}_{k}, A^{(1)}_{k}) = 1 - O \left ( \| I\|^{2} + \| II\|^{2}\right ) + O((\| I\|^{2} + \|II\|^{2})^{2}).
\end{align}
By \Cref{eq:I_is_small}, we know that $\| I\|^{2} =o_{d}(1)$, with a bound independent of $n$. Thus, the only thing left to conclude is to show that $\| (II)\|$ goes to zero with $n$, at a rate $\dfrac{1}{n}$, and we can conclude the first part of \Cref{thm:sample_complexity}. Computing $\|(II)\|^{2}$, we get: 
\begin{equation}
    \| (II)\|_2^{2} \leq \dfrac{1}{\lambda_i^{2}}\|\hat{C} - \EE [\hat{C}] \|_\op^{2}.
\end{equation}
By \Cref{lemma:matrix_bernstein_c_hat}, with high probability: 
\begin{equation}
    \|\hat{C} - \EE [\hat{C}] \|_\op \lesssim  \sqrt{\dfrac{d^{q}\mathrm{poly}\log(d)}{n}}. 
\end{equation}
Then: 
\begin{equation}
    \| (II)\|_2^{2}  \lesssim  \dfrac{i^{2\gamma} d^{q}\mathrm{poly}\log(d)}{Z_{\gamma}^{2}n}. 
    \label{eq:II_is_small}
\end{equation}
Then we conclude that, if $n = \Theta(Z_\gamma^{2} i^{2\gamma}d^{k + \delta + \varepsilon} )$, 
\begin{equation}
    \mathrm{Overlap}(\hat{u}_{k}, \hat{A}^{(1)}_{k} \rangle = 1 - o_{d}(1), 
\end{equation}
thus, we recover the direction $\hat{A}^{(1)}_{k}$ with $n = \Theta(Z_\gamma^{2} i^{2\gamma}d^{k + \delta + \varepsilon} )$. As for the rate,  \Cref{eq:II_is_small} tells us that it is controlled by the second term, and decays as $\frac{-1}{n}$, which concludes the first part of \Cref{thm:sample_complexity}.

\subsection{Analysis of Outliers - Necessary Sample Complexity}

\label{appendix:proof_thm_2}
Recall that from \Cref{lemma:eigenvector_perturbation_identity}: 
\begin{equation}
    \hat{u}_{k} = A^{(1)}_{k } + \sum_{j=1, j \not = k}^{d_1} \dfrac{(A^{(1)}_{j })^T (\hat{C} - \EE [\hat{C}])A^{(1)}_{k }}{ \lambda_k - \lambda_j } A^{(1)}_{j} + \dfrac{1}{\lambda_{k}} P_{\mathrm{Ker}(\EE[\hat{C}])} (\hat{C} - \mathbb{E}[C^{(1)}] )u_{k} + o(\| \hat{C} - \mathbb{E}[C^{(1)}] \|^{2}). 
\end{equation}
We will work in the regime where $n \gg d^{k}$, so the residual term is vanishing by \Cref{lemma:matrix_bernstein_c_hat}.In this section, we will prove that the sample complexity we found in the last section is in fact, necessary. For this, we will focus on the last term: 
\begin{equation}
     w = \dfrac{1}{\lambda_{k}} P_{\mathrm{Ker}} (\hat{C} - \mathbb{E}[C^{(1)}] )u_{k}=  \dfrac{1}{\lambda_{k}} P_{\mathrm{Ker}} \hat{C} u_{k}. 
\end{equation}
We will prove that with high probability, $\| w\| = \Theta_{d}(d^{\frac{\delta}{2}})$. For this, we will use to preliminary results. 

\begin{lemma}[Paley-Zigmund Inequality, (\cite{BLM13}, Exercise 2.4)]
Let $Y$ be real, positive random variable, and $\theta \in (0,1)$. Then: 
\begin{equation}
    \mathbb{P} \left ( Y > \theta \EE [Y]\right ) \geq (1-\theta) ^{2} \dfrac{\EE[Y^2]}{\EE[Y]^{2}} . 
\end{equation}
\label{lemma:paley_zigmund}
\end{lemma}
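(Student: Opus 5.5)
We prove the Paley--Zygmund bound by applying the Cauchy--Schwarz inequality to the truncated variable $Y\mathbf 1_{\{Y>\theta\EE[Y]\}}$, together with a lower bound on the expectation of this truncated variable. We assume $0<\EE[Y]$ and $\EE[Y^2]<\infty$; otherwise the statement is vacuous or trivial. Note that the inequality as displayed has the ratio inverted. Since $\EE[Y^2]\ge \EE[Y]^2$, the displayed right-hand side $(1-\theta)^2\EE[Y^2]/\EE[Y]^2$ can exceed $1$, and then no probability can satisfy it. The correct statement, which we will prove, is
\begin{equation*}
\mathbb P\big(Y>\theta\EE[Y]\big)\ \ge\ (1-\theta)^2\,\frac{\EE[Y]^2}{\EE[Y^2]} .
\end{equation*}
This corrected form is the one the subsequent argument needs: there it is applied with $Y=\|w\|^2$, and one must check $\EE[Y^2]\lesssim \EE[Y]^2$, for example via hypercontractivity (\Cref{lemma:gaussian_hypercontractivity}).

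\textbf{Step 1: lower-bound the expectation on the event.} Write $\EE[Y]$ by splitting on the event $E=\{Y>\theta\EE[Y]\}$:
\begin{equation*}
\EE[Y]=\EE[Y\mathbf 1_{E^c}]+\EE[Y\mathbf 1_{E}] .
\end{equation*}
On $E^c$ we have $0\le Y\le\theta\EE[Y]$, so $\EE[Y\mathbf 1_{E^c}]\le\theta\EE[Y]$. Positivity of $Y$ is used exactly here. Rearranging gives
\begin{equation*}
\EE[Y\mathbf 1_E]\ \ge\ (1-\theta)\EE[Y] .
\end{equation*}

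\textbf{Step 2: apply Cauchy--Schwarz.} Bound the truncated expectation by
\begin{equation*}
\EE[Y\mathbf 1_E]\ \le\ \EE[Y^2]^{1/2}\,\mathbb P(E)^{1/2} .
\end{equation*}
Combining this with Step 1 and squaring, which is legitimate because both sides are nonnegative since $\theta<1$, yields
\begin{equation*}
(1-\theta)^2\EE[Y]^2\ \le\ \EE[Y^2]\,\mathbb P(E) .
\end{equation*}
Dividing by $\EE[Y^2]>0$ gives the claim.

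There is no real obstacle in the argument itself. The only points that need care are the positivity of $Y$ used in Step 1, the finiteness of the second moment, and stating the ratio in the correct orientation.
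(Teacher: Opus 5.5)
Your proof is correct: it is the standard Cauchy--Schwarz argument that the cited exercise intends (the paper gives only the reference, no proof). You are also right that the displayed inequality has the ratio inverted. As written it fails for every non-constant $Y$ once $\theta$ is small enough, since its right-hand side then exceeds $1$, so $(1-\theta)^2\,\EE[Y]^2/\EE[Y^2]$ is the correct bound.

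One refinement to your closing aside. In the paper's application $\theta=d^{-\delta/2}\to 0$ and a probability $1-o_d(1)$ is wanted, so what must be verified is $\EE[Y^2]/\EE[Y]^2\to 1$, which is what the paper's Step 2 moment computation aims at. Merely $\EE[Y^2]\lesssim\EE[Y]^2$ only yields a probability bounded below by a positive constant.
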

The Paley-Zigmund inequality will give us a lower bound as long as we can bound the moments of a particular random variable. In our case, the random variables will be polynomials of Gaussians, so we will use Gaussian hypercontractivity. 
\begin{lemma}[Gaussian Hypercontractivity, \cite{janson1997gaussian} Theorem 5.8] Let $X$ be a $N$ degree polynomial of $m$ Gaussian random variables. Then  
\[
\EE \left [ |X|^{p}\right ]^{\frac{1}{p}} \leq C(p,N) \EE \left [ |X|^{2}\right]^{\frac{1}{2}},
\]
for all $1 < p < \infty$. 
\label{lemma:gaussian_hypercontractivity_alignment}
\end{lemma}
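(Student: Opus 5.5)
We would argue through the Wiener chaos decomposition together with Nelson's hypercontractivity of the Ornstein--Uhlenbeck semigroup, which gives the explicit constant $C(p,N)=(p-1)^{N/2}$ for $p\ge 2$.

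\textbf{Case $1<p\le 2$.} This case is immediate. By Jensen's (or Lyapunov's) inequality, $\EE[|X|^p]^{1/p}\le \EE[|X|^2]^{1/2}$, so $C(p,N)=1$ works.

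\textbf{Case $p>2$, step 1: chaos decomposition.} Let $X=f(G_1,\dots,G_m)$ with $f$ a polynomial of degree at most $N$. Expand $f$ in the multivariate Hermite basis $\He_\beta$, $|\beta|\le N$, and group the terms by total degree:
\[
X=\sum_{k=0}^N X_k .
\]
These components are mutually orthogonal in $L^2$, so $\|X\|_2^2=\sum_k\|X_k\|_2^2$.

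\textbf{Step 2: the Ornstein--Uhlenbeck semigroup.} Introduce $P_t f(x)=\EE[f(e^{-t}x+\sqrt{1-e^{-2t}}\,G')]$, which acts diagonally on chaos, $P_tX_k=e^{-kt}X_k$. For $t\ge 0$ define
\[
Y=\sum_{k\le N} e^{kt}X_k ,
\]
so that $X=P_tY$ and
\[
\|Y\|_2^2=\sum_k e^{2kt}\|X_k\|_2^2\le e^{2Nt}\|X\|_2^2 .
\]

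\textbf{Step 3: apply Nelson's theorem.} Nelson's hypercontractivity states that $\|P_t g\|_p\le\|g\|_2$ whenever $e^{2t}\ge p-1$. Choosing $e^{2t}=p-1$ gives
\[
\|X\|_p=\|P_tY\|_p\le\|Y\|_2\le (p-1)^{N/2}\|X\|_2 .
\]
The bound is independent of the number $m$ of Gaussian variables, which matters here because the polynomials involved depend on $d$ or $D$ variables. In the one-variable case it coincides with \Cref{lemma:gaussian_hypercontractivity}.

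\textbf{Main obstacle: Nelson's inequality itself, in a dimension-free form.} If we do not simply cite it, we would derive it from Gross's logarithmic Sobolev inequality for the standard Gaussian measure. Set $p(t)=1+e^{2t}$ and differentiate $\log\|P_tg\|_{p(t)}$ in $t$. The derivative is a combination of an entropy term and a Dirichlet-form term, and the log-Sobolev inequality with constant $2$ shows it is nonpositive, so $\|P_t g\|_{p(t)}$ is nonincreasing. Starting from $\|g\|_2$ at $t=0$ yields the claim.

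The log-Sobolev inequality is dimension-free because entropy tensorizes over product measures. It therefore reduces to one dimension, which can be obtained either from the two-point (Bonami--Beckner) inequality combined with the central limit theorem, or directly by the Bakry--\'Emery $\Gamma_2$ argument using the curvature of the Gaussian measure.
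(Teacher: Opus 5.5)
Your proof is correct and is the standard derivation of the result the paper only cites from Janson without proof: chaos decomposition plus Nelson's hypercontractivity of the Ornstein--Uhlenbeck semigroup. It also yields the explicit constant $C(p,N)=(p-1)^{N/2}$, independent of $m$, which the paper implicitly needs when it applies the lemma to polynomials in $d\to\infty$ Gaussian variables. The one hypothesis you should state explicitly is that the $m$ variables are jointly Gaussian, so that a degree-preserving linear change of variables reduces them to i.i.d.\ standard Gaussians before your Hermite expansion. For merely marginally Gaussian variables the inequality already fails for $N=1$, $m=2$: take $G_2=\sigma G_1$ with an independent sign $\sigma$ equal to $-1$ with probability $\delta$, and $X=G_1-G_2$, so that $\EE[X^4]/\EE[X^2]^2=3/\delta$.
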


Having \Cref{lemma:paley_zigmund} and \Cref{lemma:gaussian_hypercontractivity_alignment}, we are ready to proceed with the proof of the second part \Cref{thm:sample_complexity}. First, we identify the random variable to which we will apply Paley-Zigmund inequality, which is $\|w \|$. 

\textbf{Step 1: Defining the key-quantities} By using the definition of $\hat{C}$ we have: 
\begin{align}
    w &=  \dfrac{1}{n} \sum_{\mu =1}^{n}\dfrac{y_\mu}{\lambda_{k}} \langle u_{k}, F_\mu \rangle P_{\mathrm{Ker}} F_{\mu} + \Delta\\
    & = \dfrac{1}{n} \sum_{\mu=1}^{n} Z_{\mu} = \underbrace{\dfrac{1}{n} \sum_{\mu = 1}^{n} (Z_\mu - \EE[Z_\mu])}_{S_n} + \EE[Z_\mu], 
\end{align}
for $\|\Delta\|=o_{d}(1)$. On the other hand: 
\begin{equation}
    \EE \left [ Z_{\mu}\right ] = \dfrac{1}{\lambda_k}\mathrm{P}_\mathrm{Ker}\EE \left [ y_\mu \langle u_k, F_\mu \rangle F_\mu \right ] = \dfrac{1}{\lambda_{k}} P_{\mathrm{Ker}}\EE \left [ \hat{C}\right ] u_k. 
\end{equation}
Applying \Cref{lemma:expectation_C_1}, we have $\EE\left [ \hat{C}\right ] = \nu_1 A^{(1)}D(A^{(1)})^T + \Delta$, for $\| \Delta\|_\op \lesssim \frac{1}{Z_{\gamma}^2}$. Then: 
\begin{equation}
     \|\EE \left [ Z_{\mu}\right ]\| = \left \|\dfrac{1}{\lambda_{k}} P_{\mathrm{Ker}}\left ( \nu_1 A^{(1)}D(A^{(1)})^T  + \Delta \right )u_k\right \| = \left \| \dfrac{1}{\lambda_k} P_\mathrm{Ker} \Delta u_k \right \|\leq \dfrac{\|\Delta\|_\op}{|\lambda_k|} = o_{d}(1), 
     \label{eq:expectaiton_norm_Z_small}
\end{equation}
since for $\gamma < \frac{1}{2}$, we get $\dfrac{Z_{\gamma^2}}{Z_{\gamma} k^{-\gamma}} = \dfrac{k^{-\gamma}}{\sqrt{d^{1-2\gamma}}} = o_{d}(1)$. Then: 
\begin{equation}
    \EE[ \| w\|^2 ] = \EE \left [ \| S_{n}\|^2 \right ] + \EE[Z_\mu]^{2} = \EE \left [ \| S_{n}\|^2 \right ] + o_{d}(1),
    \label{eq:norm_w_squared}
\end{equation}
and also: 
\begin{equation}
    \EE[ \| w\|^4 ] = \EE \left [ \| S_{n}\|^4 \right ]  + o_{d}(1). 
\end{equation}
Then, by applying \ref{lemma:paley_zigmund}:
\begin{align}
    &\mathbb{P}\left ( \| w\|^2 > \theta  \EE\left [ \|w\|^2\right ]\right ) \geq (1-\theta)^2 \dfrac{\EE \left [ \|w\|^4 \right ]}{\EE \left [ \| w\|^2\right ]} \\
    \iff  &\mathbb{P}\left ( \| w\|^2 > \theta  \EE\left [ \|w\|^2\right ]\right ) \geq (1-\theta)^2 \dfrac{\EE \left [ \|S_{n}\|^4 \right ]}{\EE \left [ \| S_{n}\|^2\right ]^2} + o_{d}(1).
    \label{eq:PZ_application_1}
\end{align}
\textbf{Step 2: Controlling the quotient $\frac{\EE \left [ \|S_{n}\|^4 \right ]}{\EE \left [ \| S_{n}\|^2\right ]^2}$} Let's focus on the term $\frac{\EE \left [ \|S_{n}\|^4 \right ] }{\EE \left [ \| S_{n}\|^2\right ]^2}$. Denote $\bar{Z}_\mu = Z_\mu - \EE[Z_\mu]$. We have:  
\begin{align}
    \EE \left [ \| S_{n} \|^{4}\right ] & =  \dfrac{1}{n^4} \sum_{\mu_1, \mu_2,\mu_3,\mu_4 = 1}^{n} \EE \left [\langle \bar{Z}_{\mu_1}, \bar{Z}_{\mu_2}\rangle \langle \bar{Z}_{\mu_3}, \bar{Z}_{\mu_4}\rangle \right].
\end{align}
By independence, we get: 
\begin{align}
    \EE \left [ \| S_{n} \|^{4}\right ] & =  \dfrac{1}{n^3} \EE \left [\|\bar{Z}_\mu \|^4\right] + \dfrac{n-1}{n^3}\EE[\|\bar{Z}_\mu\|^2]^2 + \dfrac{2(n-1)}{n^3} \EE\left [ \langle \bar{Z}_{\mu_1}, \bar{Z}_{\mu_2} \rangle^{2}\right ],
\end{align}
where in the last expectation $\mu_1\not=\mu_2$.  On the other hand:
\begin{align}
    \EE \left [\|S_{n}\|^2\right ] =  \dfrac{1}{n}\EE \| \bar{Z}_{\mu}\|^2. 
\end{align}
Then: 
\begin{align}
     \dfrac{\EE \left [ \| S_{n} \|^{4}\right ]}{\EE \left [ \|S_{n}\|^2\right ]^2} & =  \dfrac{1}{n} \dfrac{\EE \left [\|\bar{Z}_\mu \|^4\right]}{\EE \left [\| \bar{Z}_{\mu}\|^2 \right ]^2} + \dfrac{n-1}{n}\dfrac{\EE[\|\bar{Z}_\mu\|^2]^2}{\EE \left [\| \bar{Z}_{\mu}\|^2 \right ]^2} + \dfrac{2(n-1)}{n} \dfrac{\EE\left [ \langle Z_{\mu_1}, Z_{\mu_2} \rangle^{2}\right ]}{\EE \left [\| \bar{Z}_{\mu}\|^2 \right ]^2} \\
     & = 1-\dfrac{1}{n} +\dfrac{1}{n} \dfrac{\EE \left [\|\bar{Z}_\mu \|^4\right]}{\EE \left [\| \bar{Z}_{\mu}\|^2 \right ]^2} + \dfrac{2(n-1)}{n} \dfrac{\EE\left [ \langle \bar{Z}_{\mu_1}, \bar{Z}_{\mu_2} \rangle^{2}\right ]}{\EE \left [\| \bar{Z}_{\mu}\|^2 \right ]^2}
     \label{eq:exact_computation_4_2_moments}
\end{align}
Since $\| \bar{Z}_\mu \|^2$ is the $L^2$ of a real polynomial, we can apply Gaussian hypercontractivity \Cref{lemma:gaussian_hypercontractivity_alignment} to get:: 
\begin{equation}
    \mathbb{E} \left [ \| \bar{Z}_\mu\|^{4}\right ] \leq  C\EE \left [\|\bar{Z}_\mu\|^{2} \right ]^{2}, 
\label{eq:constant_hypercontractivity}
\end{equation}
so we have: 
\begin{align}
     \dfrac{\EE \left [ \| S_{n} \|^{4}\right ]}{\EE \left [ \|S_{n}\|^2\right ]^2} & = 1-\dfrac{1}{n} + \dfrac{2(n-1)}{n} \dfrac{\EE\left [ \langle \bar{Z}_{\mu_1}, \bar{Z}_{\mu_2} \rangle^{2}\right ]}{\EE \left [\| \bar{Z}_{\mu}\|^2 \right ]^2}.
\end{align}
We now focus on the last term. We first note that: 
\begin{align}
    \EE\left [ \langle \bar{Z}_{\mu_1}, \bar{Z}_{\mu_2} \rangle^{2}\right ] &= \EE\left [ \langle Z_{\mu_1} - \EE[Z_{\mu_1}], Z_{\mu_2} - \EE[Z_{\mu_2}]\rangle^{2}\right ]  \\
    & =\EE_{\mu_2} \EE _{\mu_1}\left [ \left ( \langle Z_{\mu_1} , Z_{\mu_2} - \EE[Z_{\mu_2}]\rangle -  \langle \EE[Z_{\mu_1}], Z_{\mu_2} - \EE[Z_{\mu_2}] \rangle \right )^{2}\right ] \\
    & \leq  \EE_{\mu_2} \EE _{\mu_1}\left [ \left ( \langle Z_{\mu_1} , Z_{\mu_2} - \EE[Z_{\mu_2}]\rangle\right )^{2}\right ] \\
    & \leq \EE\left [ \left ( \langle Z_{\mu_1} , Z_{\mu_2} \rangle\right )^{2}\right ],
\end{align}
where in the last two lines we bounded the conditional expectations (which are in fact conditional variances) by the second moment. To bound $\EE\left [ \left ( \langle Z_{\mu_1} , Z_{\mu_2} \rangle\right )^{2}\right ]$, we proceed by first conditioning in $\mu_2$. We have: 
\begin{align}
    \EE\left [ \langle Z, Z \rangle^{2} |Z_{\mu_2}\right ] & =   \| Z_{\mu_2}\|^2 \EE \left [\left \langle \dfrac{y_{\mu_1}}{\lambda_{k}} \langle u_{k}, F_{\mu_1} \rangle P_{\mathrm{Ker}} F_{\mu_1}, \dfrac{Z_{\mu_2}}{ \|Z_{\mu_2}\|} \right \rangle^2 | Z_{\mu_{2}}\right ] \\
    & = \| Z_{\mu_2}\|^2 \EE \left [\left \langle \dfrac{y_{\mu_1}}{\lambda_{k}} \langle u_{k}, F_{\mu_1} \rangle P_{\mathrm{Ker}} F_{\mu_1}, \dfrac{P_{\mathrm{Ker}} F_{\mu_2}}{ \|P_{\mathrm{Ker}} F_{\mu_2}\|} \right \rangle^2 | Z_{\mu_{2}}\right ].
    \label{eq:mid_step_cross_expectaiton_PZ}
\end{align}
Denote $a = \dfrac{P_{\mathrm{Ker}} F_{\mu_2}}{ \| P_{\mathrm{Ker}} F_{\mu_2}\|}$. Note that for this expectation, $a$ is a fixed norm $1$ vector in $\mathrm{Ker}(\hat{C}^{(1)})$. Then,, by applying Holder's inequality: 
\begin{align}
    \EE \left [\left \langle \dfrac{y_{\mu_1}}{\lambda_{k}} \langle u_{k}, F_{\mu_1} \rangle P_{\mathrm{Ker}} F_{\mu_1}, a\right \rangle^2\right ] & =  \EE \left [\dfrac{y_{\mu_1}^2}{\lambda_{k}^2} \langle u_{k}, F_{\mu_1} \rangle^2 \left \langle P_{\mathrm{Ker}} F_{\mu_1}, a\right \rangle^2\right ] \\
    & \leq \dfrac{1}{\lambda_k^2} \EE[y_{\mu_1}^{6}]^{\frac{1}{3}} \EE \left [ \langle u_{k}, F_{\mu_1} \rangle^6\right ]^{\frac{1}{3}}\EE \left [ \left \langle P_{\mathrm{Ker}} F_{\mu_1}, a\right \rangle^6 \right ]^{\frac{1}{3}}. 
\end{align}
By \Cref{lemma:gaussian_hypercontractivity_alignment}: since $y_\mu$ has finite variance, we have $\EE[y_{\mu_1}^{6}]^{\frac{1}{3}} = \Theta_{d}(1)$ and 
\begin{equation}
\EE \left [ \langle u_{k}, F_{\mu_1} \rangle^6\right ]^{\frac{1}{3}} \leq C \EE \left [ \langle u_{k}, F_{\mu_1} \rangle^2\right ]= C\| u_k\|^2 = C. 
\label{eq:expectation_projection}
\end{equation}
Then: 
\begin{equation}
     \EE \left [\left \langle \dfrac{y_{\mu_1}}{\lambda_{k}} \langle u_{k}, F_{\mu_1} \rangle P_{\mathrm{Ker}} F_{\mu_1}, a\right \rangle^2\right ] \lesssim \dfrac{1}{\lambda_k^2} \EE \left [ \left \langle P_{\mathrm{Ker}} F_{\mu_1}, a\right \rangle^6 \right ]^{\frac{1}{3}}.
\end{equation}
Now, since $P_{\mathrm{Ker}} a = a$, we have: 
\begin{equation}
    \EE \left [ \left \langle P_{\mathrm{Ker}} F_{\mu_1}, a\right \rangle^6 \right ] = \EE \left [ \left \langle F_{\mu_1}, a\right \rangle^6 \right ],
\end{equation}
and by \Cref{eq:expectation_projection} we conclude $\EE \left [ \left \langle P_{\mathrm{Ker}} F_{\mu_1}, a\right \rangle^6 \right ]^{\frac{1}{2}} \lesssim 1$. Replacing in \Cref{eq:mid_step_cross_expectaiton_PZ}:
\begin{equation}
    \EE\left [ \langle Z, Z \rangle^{2} |Z_{\mu_2}\right ] \lesssim  \dfrac{1}{\lambda_k^2}\|Z_{\mu_2}\|^2,
\end{equation}
and therefore: 
\begin{equation}
    \EE\left [ \langle Z, Z \rangle^{2}\right ] \lesssim  \dfrac{1}{\lambda_k^2}\EE \left [ \|Z_{\mu_2}\|^2 \right ].
\end{equation}
Going back to \Cref{eq:exact_computation_4_2_moments}, we conclude: 
\begin{align}
    \dfrac{\EE \left [ \| S_{n} \|^{4}\right ]}{\EE \left [ \|S_{n}\|^2\right ]^2} &= 1-\dfrac{C}{n} + \dfrac{2(n-1)}{n} \dfrac{\EE\left [ \langle \bar{Z}_{\mu_1}, \bar{Z}_{\mu_2} \rangle^{2}\right ]}{\EE \left [\| \bar{Z}_{\mu}\|^2 \right ]^2} \\
    & = 1-\dfrac{C}{n} + \dfrac{2(n-1)}{n} \dfrac{1}{\EE \left [\| \bar{Z}_{\mu}\|^2 \right ]}.
    \label{eq:4_2_moments_part_2}
\end{align}
\textbf{Step 3: Control of the norm: }
The only thing left is to compute $\EE \left [ \| \bar{Z}_\mu\|^2\right ]$. First, by \Cref{eq:expectaiton_norm_Z_small}: 
\begin{equation}
    \EE[\|\bar{Z}_\mu\|^{2}] = \EE[\|Z_\mu\|^{2}] + o_{d}(1). 
\end{equation}
Then, we can just compute $\EE[\|Z_\mu\|^{2}]$. We have: 
\begin{align}
    \EE[\|\bar{Z}_\mu\|^{2}] & =  \dfrac{1}{\lambda_k^2}\mathbb{E} \left [ y_\mu^{2} \langle u_{k}, F_\mu\rangle^{2} \| P_{\mathrm{Ker}}F_{\mu }\|^{2} \right ]. 
\end{align}
Denote $G_{\mu} = y_\mu^2 \langle u_{k}, F_\mu\rangle^2$. Then: 
\begin{equation}
    \EE[\|Z_\mu\|^{2}] = \dfrac{1}{\lambda_k^2}\mathbb{E} \left [G_\mu \| P_{\mathrm{Ker}}F_{\mu }\|^{2} \right ].
\end{equation}
Note that $\EE[G_\mu] = \Theta(1)$, and we can write $P_\mathrm{Ker}  = I_{D} - P_U$, for $P_U$ the projection into the space spanned by $u_1, \dots, u_{d_1}$. Then: 
\begin{align}
    \EE[\|Z_\mu\|^{2}] & = \dfrac{1}{\lambda_k^2}\mathbb{E} \left [G_\mu \| F_{\mu }\|^{2} \right ] - \dfrac{1}{\lambda_k^2}\mathbb{E} \left [G_\mu \| P_UF_{\mu }\|^{2} \right ] 
\end{align}

Now, define the event $\mathcal{A}:= \{ \| F_\mu \|^2\geq \dfrac{D}{2}\}$. Then: 
\begin{align}
    \mathbb{E} \left [G_\mu \| F_{\mu }\|^{2} \right ] &\geq \mathbb{E} \left [G_\mu \| F_{\mu }\|^{2} \mathbf{1}_{\mathcal{A}}\right ] \\
    & \geq \dfrac{D}{2}\mathbb{E} \left [G_\mu \mathbf{1}_{\mathcal{A}}\right ] = \dfrac{D}{2}\left ( \mathbb{E} \left [G_\mu\right ] -  \mathbb{E} \left [G_\mu \mathbf{1}_{A^{c}}\right ]\right ) \\
    & \geq \dfrac{D}{2}\mathbb{E} \left [G_\mu\right ] + o_{d}(1),
\end{align}
where the last line follows from \Cref{lemma:bound_norm_hermite}. Doing the same for the term $\mathbb{E} \left [G_\mu \| P_UF_{\mu }\|^{2} \right ] $, we get: 
\begin{align}
    \EE\left [ \| Z_{\mu}\|^2 \right ] \geq \mathbb{E} \left [G_\mu \| F_{\mu }\|^{2} \right ] \geq \dfrac{D}{\lambda_{k}^2} + \dfrac{d_1}{\lambda_{k}^2}
\end{align}
Then, going back to \Cref{eq:4_2_moments_part_2}:
\begin{equation}
    \dfrac{\EE \left [ \| S_{n} \|^{4}\right ]}{\EE \left [ \|S_{n}\|^2\right ]^2}  = 1-o_{d}(1). 
\end{equation}

Replacing this in our original Paley-Zigmund inequality, \Cref{eq:PZ_application_1}: 
\begin{align}
& \mathbb{P} \left ( \| w\|^2 > \theta  \EE\left [ \|w\|^2\right ]\right ) \geq (1-\theta)^2 \dfrac{\EE \left [ \|S_{n}\|^4 \right ]}{\EE \left [ \| S_{n}\|^2\right ]^2} + o_{d}(1) \\
\implies & \mathbb{P} \left ( \| w\|^2 > \theta  \EE\left [ \|w\|^2\right ]\right ) \geq (1-\theta)^2 +  o_{d}(1).
\end{align}
Then, with probability at least $(1 - \theta )^2 + o_{d}(1)$: 
\begin{equation}
    \| w\|^2 > \theta  \EE\left [ \|w\|^2\right ]
\end{equation}
From \Cref{eq:norm_w_squared}, this implies that with probability at least $(1 - \theta )^2 + o_{d}(1)$: 
\begin{align}
\| w\|^2 \geq \theta  \EE\left [ \|S_n\|^2 \right ]+ o_{d}(1)
\end{align}
and since: 
\begin{align}
    \EE \left [ \|S_{n}\|^2 \right ] &=  \dfrac{1}{n^2}\EE \left [ \| Z_\mu \|^2\right ] + o_{d}(1) \\ 
    & \geq  \dfrac{1}{n} \left ( \dfrac{D}{\lambda_{k}^2} + \dfrac{d_1}{\lambda_{k}^2} \right )+ o_{d}(1) \\
    & = \Theta_{d} \left ( \dfrac{d^{q}}{n \lambda_{k}^2} \right ),
\end{align}
we have that with probability at least $(1 - \theta )^2 + o_{d}(1)$: 
\begin{align}
\| w\|^2 \geq \theta  \Theta_{d} \left ( \dfrac{d^{q}}{n \lambda_{k}^2} \right ) + o_{d}(1).
\end{align}
Replacing $n = \Theta_{d} \left ( \frac{d^{q}i^{2\gamma} d^{-\delta}}{Z_\gamma} \right )$, and taking $\theta =d^{-\frac{\delta}{2}}$, we conclude that with probability at least $1 - o_{d}(1)$: 
\begin{equation}
    \| w\| \geq \theta \sqrt{\dfrac{d^{q}}{\lambda_k^{2}n}} = \Theta_{d}(d^{\frac{\delta}{2}}). 
\end{equation}
To conclude that the overlap is negligible, recall we had: 
\begin{equation}
    \hat{u}_{k} = u_k + \sum_{j=1, j \not = k}^{d_1} \dfrac{u_j^T (\hat{C} - \EE [\hat{C}])u_k}{ \lambda_k - \lambda_j } u_j + w + o(\| \Delta\|_\op),
\end{equation}
so by the same reasoning of \Cref{eq:overlap_after_perturbation}:
\begin{equation}
    \mathrm{Overlap}(\hat{u}_{k}, A^{(1)}_{k})  = \dfrac{1}{\sqrt{1 + \left \| w \right \|^{2}+ o_{d}(1) }}.
\end{equation}
Then, with high probability, since $\| w\|^{2}$ grows to infinity with $d$: 
\begin{equation}
    \mathrm{Overlap}(\hat{u}_{k}, A^{(1)}_{k}) = o_{d}(1), 
\end{equation}
which concludes the second part of \Cref{thm:sample_complexity}.

\section{Derivation of the rates}
\label{section:derivation_rates}
In this section, we build on \Cref{thm:sample_complexity} and prove \Cref{cor:number_of_directions} in order to then derive rates for the MSE between the teacher and the predictor in \Cref{thm:rates}. 
From these two results, we now that \Cref{alg:agnostic-recovery}  either learns a direction $A_{i}^{(1)}$ or not at all. 

By assumption of \Cref{thm:rates}, the function $g$ is a polynomial and at this stage the features $\hat{h}^{2}_\mu$ in \Cref{alg:agnostic-recovery} are one-dimensional and correspond to a predictor of the second layer of the teacher model.  \Cref{alg:agnostic-recovery} now proceeds with an independent sample $\mathcal{D'} = \{ x'_\mu, y'_\mu \}$. We first transform: 
\begin{equation}
    x'_\mu \to \hat{h}^{(2)}_\mathcal{D} (x'_\mu),
\end{equation}
and then doing KRR on $\{ \hat{h}^{(2)}_{\mathcal{D}}(x'_\mu), y_\mu \}$. We used the notation $h^{(2)}_\mathcal{D}$ to highlight the fact that this predictor is independent of the new data sample. Denote $\hat{f}$ denote the predictor obtained by doing KRR on $\{ \hat{h}^{(2)}_{\mathcal{D}}(x'_\mu), y_\mu \}$. Denote by $\hat{L}(\hat{f})$ the empirical risk, and by $L(\hat{f})$ the population risk. Then, by Theorem 1 in \cite{caponnetto2007optimal}, using the fact that the features are one-dimensional: 
\begin{equation}
    \mathbb{P}_{\mathcal{D'}} \left( L(\hat{f}) - \min_{f \in \mathcal{H}}L(f) > \dfrac{\tau}{n} \right ),
    \label{conditioning_probability}
\end{equation}
goes to $0$ with $n$ for a fixed first sample $\mathcal{D}$. Then: 
\begin{align}
    \mathbb{P}_{\mathcal{D},\mathcal{D}'} \left ( L(\hat{f}) - \min_{f \in \mathcal{H}}L(f) > \dfrac{\tau}{n}\right ) & = \EE_{\mathcal{D}} \left (  \mathbb{P}_{\mathcal{D'}} \left( L(\hat{f}) - \min_{f \in \mathcal{H}}L(f) > \dfrac{\tau}{n} \right )\right ),
\end{align}
which goes to $0$ by \Cref{conditioning_probability}. Then, we conclude that with high probability 
\begin{equation}
    L(\hat{f}) - \min_{f \in \mathcal{H}}L(f) = O(\dfrac{1}{n}),
\end{equation}
and 
\begin{equation}
    L(\hat{f}) \leq \min_{f \in \mathcal{H}}L(f) = O(\dfrac{1}{n}).
\end{equation}
Assuming the kernel is universal, we have $g \in \mathcal{H}$. By further assuming that the  KRR regularization parameter $\lambda$ is optimal, we get: 
\begin{align}
    \min_{f \in \mathcal{H}} \left \{ L(f) \right \}  \leq  \EE\left [ (g(h^{2}_\mu) - g(\hat{h}^{2}_\mu) )^{2}\right ] 
\end{align}
Then, we conclude: 
\begin{align}
\textrm{MSE}(n) & \leq \EE \left [ \left ( g(h^{2}_\mu) - g(\hat{h}^{2}_\mu)\right )^{2}\right ] + O \left ( \dfrac{1}{n}\right ). 
\label{eq:non_linear_MSE}
\end{align}
Now, we focus on the linear term of the subtraction. We have: 
\begin{align}
    \mathrm{MSE}_\mathrm{linear} = \EE \left [ \left \| \sum_{j\geq i^{\star}} j^{-\gamma} \left ( (h_j^{(1)})^2 -1) \right )\right \|^2\right ],
    \label{eq:MSE_1}
\end{align}
where $i^\star$ is the number of learned directions. By \Cref{thm:sample_complexity}, we learn the $i-th$ direction if 
\begin{equation}
    n \asymp \dfrac{d^{q}i^{2\gamma}}{Z_{\gamma}^{2}} \implies i \asymp \left (\dfrac{Z_{\gamma}^{2} n}{d^{q}}\right )^{\frac{1}{2\gamma}}.
\end{equation}
Since directions are learned sequentially, the number of learn directions ( or the last direction that was recovered) at sample complexity $n$ is: 
\begin{equation}
    i^{\star} =  \left (\dfrac{Z_{\gamma}^{2} n}{d^{q}}\right )^{\frac{1}{2\gamma}}. 
    \label{eq:expression_i_star}
\end{equation}
This proves \Cref{cor:number_of_directions}. We will now prove \Cref{thm:rates}. By \Cref{eq:MSE_1}: 
\begin{align}
    \mathrm{MSE}_\mathrm{linear}(n) & = \Theta \left ( Z_{\gamma}^{2}\sum_{i \geq i^\star} j^{-2\gamma} \right ).
\end{align}
We now study this sum according to the value of $\gamma$. If $\gamma < \frac{1}{2}$, then $Z_{\gamma} = (d_1^{1-2\gamma})^{-\frac{1}{2}}$, and we get: 
\begin{align}
   Z_{\gamma}^{2}\sum_{i \geq i^\star} j^{-2\gamma} & = Z_{\gamma}^{2} ( Z_{\gamma}^{-2} - \sum_{i \leq i^\star} i^{-2\gamma}) \\
   & = 1 - \dfrac{1}{d_1^{1-2\gamma}} (i^\star )^{1-2\gamma} \\ 
   & = 1 - \dfrac{1}{d_1^{1-2\gamma}} \left (\dfrac{Z_\gamma^2 n}{d^{q}}\right )^{-1+\frac{ 1}{2\gamma}} \\
   & = 1 - \left (\dfrac{n}{d_1 d^{q}}\right )^{( -1 + \frac{1}{2\gamma})},
\end{align}
which gives the rate for the case where $\gamma < \frac{1}{2} $ and $d^{q} \ll n \ll d^{q}d_1$.

On the other hand, for $\gamma > \frac{1}{2}$, $Z_{\gamma} = \Theta_{d}(1)$ and
\begin{equation}
    \sum_{i \geq i^\star} i^{-2\gamma} = \Theta_{d}((i^\star )^{1-2\gamma}). 
\end{equation}
Then: 
\begin{align}
    \mathrm{MSE}_\mathrm{linear}(n) &  = \Theta_{d} \left ( \sum_{i \geq i^\star} i^{-2\gamma}\right ) \\
    & = \Theta_{d}((i^\star )^{1-2\gamma}) \\
    & = \left (\dfrac{n}{d^{q}}\right )^{\frac{1-2\gamma}{2\gamma}} \\
    & = \left (\dfrac{ n}{d^{q}}\right )^{-1 + \frac{1}{2\gamma}},
\end{align}
and we conclude \Cref{thm:rates} for the linear case. For the non-linear case, we note that all the non-linear terms of \Cref{eq:non_linear_MSE} will be sub-leading with respect to the linear part when $\gamma < \frac{1}{2}$, so we conclude  \Cref{thm:rates} for the non-linear case as well. 

\section{Explicit computations with Weiner Chaos}
\label{section:explicit_computation_hermite}

\subsection{Wiener Chaos properties}

This section will only overview the necessary concepts we need from Wiener chaos expansions. This results are based on \cite{peccati2011wiener}, \cite{nourdin2012normal} and \cite{wen2025does}.   \\

For $A\in\RR^{B(d,q)}$, we define 
\begin{align}
    I_q(A)=\langle A,\mathcal F(\He_q(x))\rangle\, ,
\end{align}
where for $\beta \in \ZZ^{d}_{\geq0}$ with $|\beta|=q$
\begin{equation}
    (\mathrm{He}_q(x))_{\beta} = \He_{\beta}(x). 
\end{equation}
This way, the first layer coefficients can be written as: 
\begin{equation}
    h^{(1)}_{\mu,i} = I_{q}(A^{(1)}_{i}). 
\end{equation}

We will extensively use the orthogonality of Hermite polynomials. 

\begin{lemma}[Orthogonality of Different Chaos] Let $q, q' \in \NN$, $A \in (\RR^{d})^{\odot q}, B \in (\RR^{d})^{\odot q'}$. Then: 
$$
\EE \left[ I_{q}(A) I_{q'}(B)\right ] = \mathbf{1}_{q=q'} \langle A, B \rangle. 
$$
\label{lemma:orthogonality_wiener_chaos}
\end{lemma}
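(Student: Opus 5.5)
The statement is the orthogonality of distinct Wiener chaoses: $\EE[I_q(A)I_{q'}(B)]=\mathbf 1_{q=q'}\langle A,B\rangle$. I will reduce it to the scalar orthogonality of normalized Hermite polynomials and then use the product structure over coordinates.

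Write both sides in flattened coordinates. By definition,
\[
I_q(A)=\sum_{|\beta|=q} A_\beta \He_\beta(x), \qquad I_{q'}(B)=\sum_{|\beta'|=q'} B_{\beta'}\He_{\beta'}(x),
\]
with $\He_\beta(x)=\prod_{i=1}^d \he_{\beta_i}(x_i)$. The sums are finite, so expectation commutes with them:
\[
\EE[I_q(A)I_{q'}(B)]=\sum_{\beta,\beta'}A_\beta B_{\beta'}\,\EE[\He_\beta(x)\He_{\beta'}(x)].
\]

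Since the coordinates $x_i$ are independent standard Gaussians, the expectation factorizes:
\[
\EE[\He_\beta\He_{\beta'}]=\prod_{i=1}^d \EE[\he_{\beta_i}(x_i)\he_{\beta'_i}(x_i)].
\]
The normalized scalar Hermite polynomials form an orthonormal basis of $L^2(\mathcal N(0,1))$. This follows from the Rodrigues formula in the paper's notation by integrating by parts $\min(j,k)$ times, which gives $\EE[\he_j\he_k]=\mathbf 1_{j=k}$ once the $\sqrt{k!}$ normalization is in place. Hence $\EE[\He_\beta\He_{\beta'}]=\mathbf 1_{\beta=\beta'}$.

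If $q\neq q'$, then $|\beta|\neq|\beta'|$, so $\beta\neq\beta'$ for every pair and every term vanishes. If $q=q'$, only the diagonal terms survive, giving $\sum_\beta A_\beta B_\beta$. By the paper's convention that the flattening $\cF$ preserves the Frobenius product, this sum equals $\langle A,B\rangle$.

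The only delicate point is this last identification. The flattened coordinates must carry the multiplicity weights so that the symmetric-tensor inner product matches the coordinate inner product. This is precisely the normalization built into the definition of $\cF$ and of $\He_q$ in the notation section, so no further work is needed beyond invoking it. The rest is routine.
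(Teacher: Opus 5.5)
Your proof is correct, and it takes a different route from the paper only in the sense that the paper gives no proof: the lemma is imported from the Wiener-chaos literature (Peccati--Taqqu, Nourdin--Peccati). There the standard argument, transcribed to the paper's normalization, is coordinate-free. For unit vectors $h,g$ one has $I_q(h^{\otimes q})=\he_q(\langle h,x\rangle)$. The generating-function identity $\EE[e^{sZ_1-s^2/2}e^{tZ_2-t^2/2}]=e^{st\rho}$, for standard Gaussians with correlation $\rho=\langle h,g\rangle$, yields $\EE[\he_q(Z_1)\he_{q'}(Z_2)]=\rho^q\mathbf{1}_{q=q'}$, which is the lemma for rank-one tensors. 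Bilinearity then extends it to all of $(\RR^d)^{\odot q}$, since the $h^{\otimes q}$ span that space.

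Your route through the product basis $\{\He_\beta\}_{|\beta|=q}$ and independence of coordinates is more elementary, and it matches the paper's coordinate definition $I_q(A)=\langle A,\mathcal{F}(\He_q(x))\rangle$ directly. The price is that it is basis-dependent. Everything therefore hinges on the multiplicity-weighted flattening being compatible with the $1/\sqrt{q!}$ in the tensor Hermite polynomial, and you invoke this compatibility rather than check it.

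The check is one line and worth writing. The entry of $\He_q(x)$ at an index tuple of type $\beta$ is $\binom{q}{\beta}^{-1/2}\He_\beta(x)$, with $\binom{q}{\beta}=q!/\prod_i\beta_i!$. The Frobenius-preserving flattening multiplies that entry by $\binom{q}{\beta}^{1/2}$, so it returns exactly $\He_\beta(x)$, and $\sum_\beta\mathcal{F}(A)_\beta\mathcal{F}(B)_\beta=\sum_{i_1,\dots,i_q}A_{i_1\dots i_q}B_{i_1\dots i_q}$.

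This matters because the literature usually works with unnormalized Hermite polynomials and obtains $q!\langle A,B\rangle$ instead. The paper's product formula, whose constant term is $q!\langle A_j,A_k\rangle$, is written in that convention. Your computation is the one consistent with the lemma as stated.
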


The space spanned by random variables in the $k$-th Wiener chaos is denoted by $\mathcal{H}_{k}$. We will also need the orthogonal projection into the $k$-Wiener chaos, which we denote by $J_{k}: L^{2} \to \mathcal{H}_{k}$. \\ 

We define the Malliavin derivative $D : \mathrm{dom}(D) \to (L^2)^d$ by
$$
DF = (\partial_{x_1} F, \ldots, \partial_{x_d} F), \qquad \mathrm{dom}(D) = \left\{ F \in L^2 : \sum_{k=0}^{\infty} k \|J_k(F)\|_{L^2(\mathcal{G})}^2 < \infty \right\}
$$
where, for smooth functions $F$ with compact support, $\partial_{x_j}$ is the usual partial derivative of $F(\boldsymbol{x}) = F(x_1, \ldots, x_d)$ in the variable $x_j$, and this is extended by completion to $\mathrm{dom}(D)$ . We also define the Ornstein-Uhlenbeck infinitesimal generator $L : \mathrm{dom}(L) \to L^2(\mathcal{G})$ by
$$
LF = \sum_{k=0}^{\infty} -k \, J_k(F), \qquad \mathrm{dom}(L) = \left\{ F \in L^2(\mathcal{G}) : \sum_{k=0}^{\infty} k^2 \|J_k(F)\|_{L^2(\mathcal{G})}^2 < \infty \right\}. 
$$
and we define its inverse $L^{-1}F = \sum_{k=1}^{\infty} -\frac{1}{k} J_k(F)$ whenever $J_0(F) = \mathbb{E}F(x) = 0$.

We will need the following rules to compute the different terms that appears. 
\begin{lemma}[Product formula, \cite{nourdin2012normal}]
 For any $k, \ell \ge 1$, $S \in (\mathbb{R}^d)^{\odot k}$, and $T \in (\mathbb{R}^d)^{\odot \ell}$, 
 $$
 I_k(S) I_\ell(T) = \sum_{r=0}^{\min(k,\ell)} r! \binom{k}{r} \binom{\ell}{r} I_{k+\ell-2r}(S \tilde{\otimes}_r T).
 $$
\label{lemma:product_formula}   
\end{lemma}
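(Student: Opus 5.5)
The plan is to prove the product formula by induction on $\min(k,\ell)$, reducing everything to the one-dimensional Hermite recursion. By multilinearity of both sides in $S$ and $T$, and since symmetric tensors are spanned by symmetric powers $v^{\otimes k}$, it suffices to treat $S=v^{\otimes k}$ and $T=w^{\otimes \ell}$ with $v,w\in\RR^d$ unit vectors. With the paper's normalization $\sqrt{k!}\,\He_k(x)=(-1)^k e^{\|x\|^2/2}\nabla^{\otimes k}e^{-\|x\|^2/2}$, one has $I_k(v^{\otimes k})=\he_k(\langle v,x\rangle)$. Also $v^{\otimes k}\tilde\otimes_r w^{\otimes \ell}=\langle v,w\rangle^r\,\Sym(v^{\otimes(k-r)}\otimes w^{\otimes(\ell-r)})$.

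The cleanest route is generating functions rather than direct induction. For the unnormalized Hermite polynomials $H_k=\sqrt{k!}\,\he_k$ we have $\exp(s\langle v,x\rangle - s^2/2)=\sum_k \frac{s^k}{k!}H_k(\langle v,x\rangle)$. The multivariate analogue is
\begin{equation}
\exp\!\big(\langle a,x\rangle-\tfrac12\|a\|^2\big)=\sum_{m\ge0}\frac{1}{\sqrt{m!}}\,I_m(a^{\otimes m}),
\end{equation}
which follows directly from the Rodrigues-type definition via Taylor expansion of $e^{-\|x-a\|^2/2}$. Multiplying the expansions for $a=sv$ and $b=tw$ gives
\begin{equation}
e^{\langle a+b,x\rangle-\frac12\|a+b\|^2}\,e^{\langle a,b\rangle}.
\end{equation}
Expanding the first factor with $c=a+b$ via the display above, expanding $e^{st\langle v,w\rangle}=\sum_r (st)^r\langle v,w\rangle^r/r!$, and using $(sv+tw)^{\otimes m}=\sum_j\binom{m}{j}s^jt^{m-j}\Sym(v^{\otimes j}\otimes w^{\otimes (m-j)})$, we then identify the coefficient of $s^k t^\ell$ on both sides. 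On the left, this coefficient is $I_k(v^{\otimes k})I_\ell(w^{\otimes\ell})/\sqrt{k!\,\ell!}$. On the right, it is a sum over $r$ with $m=k+\ell-2r$ and $j=k-r$.

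The main obstacle is the bookkeeping of the normalization constants. Because the paper uses normalized Hermite tensors, the constants $r!\binom{k}{r}\binom{\ell}{r}$ in the stated formula must emerge from $\frac{1}{r!}\cdot\frac{1}{\sqrt{m!}}\binom{m}{k-r}\cdot\sqrt{k!\ell!}$. This does not match on the nose, so I expect the stated formula to hold with $I_m$ interpreted in the unnormalized (Nourdin--Peccati) convention $I_m(A)=\langle A,\sqrt{m!}\,\He_m(x)\rangle$, or equivalently up to the factor $\sqrt{k!\ell!/(k+\ell-2r)!}$. I would carry out the coefficient matching carefully and state the formula in the convention where it holds exactly. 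As a sanity check I would use $k=\ell=1$: there $x_ix_j=(x_ix_j-\delta_{ij})+\delta_{ij}$ gives $I_1(v)I_1(w)=I_2(v\tilde\otimes w)+\langle v,w\rangle$ in the unnormalized convention.

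Finally, I would pass from rank-one tensors to general symmetric $S,T$ by polarization. This works because both sides are bilinear in $(S,T)$ and continuous, and the $r$-contraction commutes with symmetrization up to the $\tilde\otimes_r$ convention already built into the statement.
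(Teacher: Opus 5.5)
Your argument is correct, and it takes a different route from the paper, which gives no proof at all and simply cites Nourdin--Peccati. Your suspicion about the constants is also justified.

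In Nourdin--Peccati the multiple integral is unnormalized, $I^{\mathrm{NP}}_q(A)=\langle A,\sqrt{q!}\,\He_q(x)\rangle$. The paper's $I_q(A)=\langle A,\He_q(x)\rangle$ is normalized; its own orthogonality lemma reads $\EE[I_q(A)I_{q'}(B)]=\mathbf 1_{q=q'}\langle A,B\rangle$. In the paper's convention the statement as written is false. For $k=\ell=1$ one has $I_1(v)I_1(w)=\langle v,x\rangle\langle w,x\rangle$, while $I_2(v\tilde\otimes w)+\langle v,w\rangle=\tfrac{1}{\sqrt2}\bigl(\langle v,x\rangle\langle w,x\rangle-\langle v,w\rangle\bigr)+\langle v,w\rangle$.

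Your coefficient matching already settles the correct form. With $m=k+\ell-2r$, your coefficient $\frac{\sqrt{k!\,\ell!}}{r!\,\sqrt{m!}}\binom{m}{k-r}$ simplifies exactly to $r!\binom{k}{r}\binom{\ell}{r}\sqrt{m!/(k!\,\ell!)}$, so
\[
I_k(S)\,I_\ell(T)=\sum_{r=0}^{\min(k,\ell)} r!\binom{k}{r}\binom{\ell}{r}\sqrt{\frac{(k+\ell-2r)!}{k!\,\ell!}}\;I_{k+\ell-2r}(S\tilde\otimes_r T).
\]
Substituting $I_m=I^{\mathrm{NP}}_m/\sqrt{m!}$ then gives the stated constants verbatim. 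State the correction factor in this direction: your phrase ``up to the factor $\sqrt{k!\ell!/(k+\ell-2r)!}$'' does not say which side it multiplies.

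What your route buys over the citation:
\begin{itemize}
\item It is a short, self-contained proof that makes the normalization explicit.
\item No induction on the chaos order is needed, so the induction announced in your first sentence can be dropped.
\item The citation gives the stated constants only in a convention the paper does not actually use.
\end{itemize}
Since the paper uses the formula only for order-of-magnitude estimates at fixed $q$, the discrepancy changes only $q$-dependent constants downstream, not the scalings.

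The remaining steps are fine. The reduction to $v^{\otimes k},w^{\otimes\ell}$ follows from bilinearity and polarization. The unit-norm restriction is unnecessary, because your generating identity holds for every $a$. Extracting the coefficient of $s^k t^\ell$ is justified because both sides are entire in $(s,t)$ for each fixed $x$.
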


We will also need the following rule for computing product of derivatives. 
\begin{lemma} For any $k, \ell \geq 1$, $S \in (\RR^{d})^{\odot k}, T \in (\RR^{d})^{\odot \ell }$, 
    \[
    (DI_k(S))^{\mathrm{T}} (DI_\ell(T)) = k\ell \sum_{r=1}^{\min(k,\ell)} (r-1)! \binom{k-1}{r-1} \binom{\ell-1}{r-1} I_{k+\ell-2r} (S \tilde{\otimes}_r T)
    \]
    \label{lemma:computation_derivatives}
\end{lemma}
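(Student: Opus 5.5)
We prove the identity by induction on the Wiener chaos structure, reducing it to the product formula of \Cref{lemma:product_formula}. First we compute the Malliavin derivative of a single chaos element. Since $I_k(S)=\langle S,\He_k(x)\rangle$ and the normalized Hermite tensors satisfy $\partial_{x_j}\He_k(x)=\sqrt{k}\,\mathrm{Sym}(e_j\otimes \He_{k-1}(x))$, we get for each coordinate $j\in[d]$
\begin{equation}
\partial_{x_j} I_k(S) = k\, I_{k-1}(S\otimes_1 e_j),
\end{equation}
where $S\otimes_1 e_j\in(\RR^d)^{\odot(k-1)}$ is the slice of $S$ with one index fixed to $j$. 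The exact constant ($k$ versus $\sqrt{k}$) depends on the normalization convention for $I_k$. We would fix it by checking the case $k=1$, where $I_1(S)=\langle S,x\rangle$ and $DI_1(S)=S$, together with the case $k=2$. The same convention must then be used in \Cref{lemma:product_formula}; the statement's prefactor $k\ell$ together with the $(r-1)!$ binomials indicates the unnormalized multiple-integral convention of \cite{nourdin2012normal}.

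Next we write
\begin{equation}
(DI_k(S))^\top DI_\ell(T)=k\ell\sum_{j=1}^d I_{k-1}(S\otimes_1 e_j)\,I_{\ell-1}(T\otimes_1 e_j)
\end{equation}
and apply \Cref{lemma:product_formula} to each summand with orders $k-1$ and $\ell-1$. This gives
\begin{equation}
k\ell\sum_{j}\sum_{s=0}^{\min(k,\ell)-1} s!\binom{k-1}{s}\binom{\ell-1}{s} I_{k+\ell-2-2s}\bigl((S\otimes_1 e_j)\tilde\otimes_s (T\otimes_1 e_j)\bigr).
\end{equation}
By linearity of $I_m$ we exchange the sums over $j$ and $s$. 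The key identity is
\begin{equation}
\sum_j (S\otimes_1 e_j)\otimes_s(T\otimes_1 e_j)=S\otimes_{s+1}T,
\end{equation}
which holds because summing over $j$ contracts one additional common index. Symmetrization commutes with this sum, so after it we get $S\tilde\otimes_{s+1}T$. Setting $r=s+1$ gives $(r-1)!\binom{k-1}{r-1}\binom{\ell-1}{r-1}I_{k+\ell-2r}(S\tilde\otimes_r T)$ with $r$ ranging over $1,\dots,\min(k,\ell)$, which is the claim.

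The main obstacle is bookkeeping rather than any analytic difficulty. Two things must be checked. First, the contraction of symmetric tensors, $\sum_j (S\otimes_1 e_j)\otimes_s (T\otimes_1 e_j)$, must equal $S\otimes_{s+1}T$ before symmetrization. This holds since $S$ and $T$ are symmetric, so it does not matter which indices are contracted. Second, the normalization constants in the derivative formula and in \Cref{lemma:product_formula} must be consistent. As a sanity check we would verify the case $k=\ell=1$, where the formula gives $\langle S,T\rangle=(DI_1(S))^\top DI_1(T)$, and one case with $k=2$, $\ell=1$. If the paper's $\sqrt{k!}$-normalized $\He_k$ produces different constants, we would first express everything in the multiple-integral normalization, run the argument there, and convert back at the end.
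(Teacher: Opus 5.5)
Your argument is correct, and it is the standard Nourdin--Peccati derivation that the paper implicitly relies on; the paper states this lemma without proof, right after the product formula from that source. The steps are: $\partial_{x_j}I_k(S)=k\,I_{k-1}(S\otimes_1 e_j)$; then the product formula in orders $k-1,\ell-1$ (trivial when one order is $0$, since $I_0(c)=c$); then $\sum_j (S\otimes_1 e_j)\otimes_s(T\otimes_1 e_j)=S\otimes_{s+1}T$; and finally the shift $r=s+1$.

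One correction to your hedging about normalization. The identity holds only in the unnormalized multiple-integral convention $I_k(S)=\langle S,\sqrt{k!}\,\He_k(x)\rangle$, which is also the convention in which the paper's product formula holds with the stated constants. So fix that convention from the start; do not plan to ``convert back'' at the end. With the $\sqrt{k!}$-normalized $\He_k$, you get $\partial_{x_j}I_k(S)=\sqrt{k}\,I_{k-1}(S\otimes_1 e_j)$, as your own formula for $\partial_{x_j}\He_k$ shows, and the stated constants fail. For example, with $k=\ell=2$ the left side is $2x^\top STx$, while the stated right side is $2\sqrt{2}\,x^\top STx+(4-2\sqrt{2})\,\mathrm{Tr}(ST)$.

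Two smaller points. The $k=1$ sanity check cannot distinguish the two conventions, so only the $k=2$ check is informative. And the argument is a direct computation, not an induction.
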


For some functions, specially polynomials, the following Gaussian Integration by Parts Lemma will be very useful.

\begin{lemma}[Theorem 2.9.1 in \cite{nourdin2012normal}, Gaussian Integration by Parts] Let $F, G \in \mathbb{D}^{1,2}$, and let $g : \mathbb{R} \to \mathbb{R}$ be a $C^1$ function having a bounded derivative. Then
$$
\EE[Fg(G)] = \EE[F]\EE[g(G)] + \EE[g'(G)\langle DG, -DL^{-1}F \rangle_{\mathfrak{H}}]. \quad
$$
\label{lemma:gaussian_ibp}
\end{lemma}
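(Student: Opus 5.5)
The plan is to reduce the identity to two standard facts of Malliavin calculus on $\RR^d$: the relation $L=-\delta D$ between the Ornstein--Uhlenbeck generator, the Malliavin derivative and its adjoint $\delta$, and the duality $\EE[\delta(u)H]=\EE[\langle u,DH\rangle]$. Since everything lives on the finite-dimensional Gaussian space $\mathcal N(0,I_d)$, I take $\delta$ to be the divergence operator
\[
\delta(u)=\sum_{j=1}^d\bigl(x_j u_j-\partial_{x_j}u_j\bigr),
\]
defined first for smooth polynomially bounded vector fields $u$. The duality $\EE[\delta(u)H]=\EE[\langle u,DH\rangle]$ then follows coordinatewise from the one-dimensional Gaussian integration by parts $\EE[x_j\varphi(x)]=\EE[\partial_{x_j}\varphi(x)]$, applied with $\varphi=u_jH$.

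\textbf{Step 1: $L=-\delta D$ on chaoses.} I first check that $\delta D$ acts as multiplication by $k$ on the $k$-th chaos $\mathcal H_k$. For a multivariate Hermite polynomial $\He_\beta$ with $|\beta|=k$, the operator $-\delta D=\Delta-x\cdot\nabla$ is the sum over coordinates of the one-dimensional operators $\partial_j^2-x_j\partial_j$. The scalar Hermite polynomial $\he_m$ is an eigenfunction of $\partial^2-x\partial$ with eigenvalue $-m$, so $-\delta D\He_\beta=-|\beta|\He_\beta$. By linearity, $-\delta D$ coincides with $L$ on every finite chaos expansion.

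\textbf{Step 2: centring identity.} Let $F$ be centred, and suppose for now that $F$ is a finite sum of chaoses. Set $u=-DL^{-1}F$. Step 1 gives
\[
\delta(u)=-\delta D L^{-1}F=LL^{-1}F=F-\EE[F].
\]
For general $F\in\mathbb D^{1,2}$, I replace $F$ by $F-\EE[F]$ and obtain the same identity.

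\textbf{Step 3: chain rule and duality.} Because $g$ is $C^1$ with bounded derivative and $G\in\mathbb D^{1,2}$, the chain rule gives $g(G)\in\mathbb D^{1,2}$ with $Dg(G)=g'(G)\,DG$. For smooth $G$ this is just the ordinary chain rule; in general I obtain it by approximating $G$ by smooth functions in $\mathbb D^{1,2}$ and using that $g'$ is bounded and continuous. Combining this with duality,
\[
\EE[Fg(G)]-\EE[F]\EE[g(G)]=\EE\bigl[\delta(u)\,g(G)\bigr]=\EE\bigl[\langle u,\,g'(G)DG\rangle\bigr],
\]
which is exactly the claim once we substitute $u=-DL^{-1}F$.

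\textbf{Main obstacle: domain issues.} The real difficulty is passing from the smooth/finite-chaos setting to general $F,G\in\mathbb D^{1,2}$. I would proceed in three parts.
\begin{itemize}
\item Truncate $F$ to $F_N=\sum_{k\le N}J_k(F)$. Then $DL^{-1}F_N\to DL^{-1}F$ in $L^2$, because $\|DL^{-1}J_kF\|_{L^2}^2=\|J_kF\|_{L^2}^2/k$. In particular $-DL^{-1}F$ lies in $L^2$ even when $F$ is only in $L^2$.
\item Approximate $G$ by polynomials $G_m\to G$ in $\mathbb D^{1,2}$. Then $g'(G_m)DG_m\to g'(G)DG$ in $L^1$, using boundedness of $g'$, dominated convergence along an a.s.\ convergent subsequence, and Cauchy--Schwarz against $DL^{-1}F_N$.
\item The left-hand side converges because $g$ has linear growth and $F_N\to F$ in $L^2$.
\end{itemize}
Passing to the limit in $N$ and $m$ then yields the lemma. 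For the polynomial random variables actually used later in the paper, this limiting step is unnecessary: $F$ has finitely many chaoses, and Steps 1--3 apply directly.
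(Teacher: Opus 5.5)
Your argument is correct and follows the standard proof of Theorem 2.9.1 in Nourdin--Peccati, which the paper cites without reproducing: write $F-\EE[F]=LL^{-1}F=\delta(-DL^{-1}F)$, then apply the duality between $\delta$ and $D$ together with the chain rule $Dg(G)=g'(G)DG$. Your finite-dimensional version is also sound, namely $\delta(u)=\sum_j(x_ju_j-\partial_ju_j)$, the Hermite-eigenfunction check that $L=-\delta D$, and the truncation/polynomial-approximation limit for general $F,G\in\mathbb D^{1,2}$.
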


\subsection{Computing Expectations with Malliavin Calculus}
To compute expectations, we will need: 

\begin{remark}
    As the discussion in \cite{nourdin2012normal}, Page 31 notes, the conditions under which \Cref{lemma:gaussian_ibp} hold are not optimal. In particular, it remains true if $g$  is a polynomial. 
\end{remark}
Denote the second layer by $h^{(2)} = \sum_{i=1}^{d_1} \lambda_i (I_{q}(A_i)^{2} - 1)$, and let $g:\RR \to \RR$ be a polynomial. 
In this section, we will study how to compute expectations of the form: 
\begin{equation}
    \EE \left [ g(h^{(2)}) I_{q}(A_j) I_{q}(A_k)\right ],
\end{equation}
for $j \not = k$. We are interested in how this quantity scales with $d$, as sharp as possible. In particular, we will avoid using Gaussian Approximations when is not absolutely necessary. 

\subsection{Linear Case}
To get some intuition, we will first study the case $g (u)= u$. We want to compute:
\begin{equation}
    E_\mathrm{lin} := \EE \left [ S I_{q}(A_j) I_{q}(A_k)\right ]  = \sum_{i=1}^{d_1} \lambda_{i} \EE \left [ (I_{q}(A_i)^{2} -1) I_{q}(A_j) I_{q}(A_k)\right ]. 
\end{equation}
By \Cref{lemma:product_formula}: 
\begin{align}
    I_{q}(A_i)^{2} -1 &= \sum_{r=0}^{q} c_{q,r} I_{2q-2r}(A_{i} \tilde{\otimes}_{r}A_{i}) - 1 \\
    & = \sum_{r=0}^{q-1} c_{q,r} I_{2q-2r}(A_{i} \tilde{\otimes}_{r}A_{i})  + (\| A_{i}\|_{2}^{2}- 1 ),
\end{align}
where $c_{q,r} = r! \binom{q}{r}^{2}$
Analogously
\begin{align}
I_{q}(A_j) I_{q}(A_k) = \sum_{r=0}^{q} c_{q,r} I_{2q-2r}(A_{j} \tilde{\otimes}_{r}A_k) = \sum_{r=0}^{q-1}c_{q,r} I_{2q-2r}(A_{j} \tilde{\otimes}_{r}A_k)  + q!\langle A_{j},A_{k}\rangle . 
\end{align}
Then, by the orthogonality of different chaos: 
\begin{align}
     E_\mathrm{lin} & = \sum_{i=1}^{d_1} \lambda_{i} \EE \left [ (I_{q}(A_i)^{2} -1) I_{q}(A_j) I_{q}(A_k)\right ]  \\
     & = \sum_{i=1}^{d_1} \lambda_{i} \EE \left [ \left (\sum_{r=0}^{q-1} c_{q,r} I_{2q-2r}(A_{i} \tilde{\otimes}_{r}A_{i})  + (\| A_{i}\|_{2}^{2}- 1 )\right ) \left (  \sum_{r=0}^{q-1}c_{q,r} I_{2q-2r}(A_{j} \tilde{\otimes}_{r}A_k)  + q!\langle A_{j},A_{k}\rangle\right )\right ] \\
     & = \sum_{i=1}^{d_1}  \lambda_i \sum_{r=0}^{q-1}c_{q,r}^{2} \langle A_{i} \tilde{\otimes}_{r} A_i, A_j \tilde{\otimes_{r}} A_k \rangle + \sum_{i=1}^{d} \lambda c_{q,q}^{2} (\| A_i \|^2 -1) \langle A_j, A_k \rangle \\
     & = \sum_{i=1, i \not \in \{ j,k\}}^{d_1}  \lambda_i c_{q,r}^{2} \sum_{r=0}^{q-1}\langle A_{i} \tilde{\otimes}_{r} A_i, A_j \tilde{\otimes_{r}} A_k \rangle + \sum_{i \in \{j,k \}}^{d_1}  \lambda_i \sum_{r=0}^{q-1} c_{q,r}^{2} \langle A_{i} \tilde{\otimes}_{r} A_i, A_j \tilde{\otimes_{r}} A_k \rangle + \sum_{i=1}^{d} \lambda c_{q,q}^{2} (\| A_i \|^2 -1) \langle A_j, A_k \rangle,
     \label{eq:mid_step_M_linear}
\end{align}
were in the last line we split the sum according to wether $i \in \{j,k\}$ or not. Recall we assume that $A_i \in (\RR^{d})^{\odot q}$ have independent, centered gaussian entries with variance $\frac{1}{d^{q}}$. Let: 
\begin{equation}
    T^{r}_{i,j,k}  =  \langle A_{i} \tilde{\otimes}_{r} A_i, A_j \tilde{\otimes_{r}} A_k \rangle. 
\end{equation}
For $u, v \in [d]^{q-r}$ and $r \in \{0\} \cup[q-1]$we have: 
\begin{equation}
     (A_{i} \tilde{\otimes}_{r} A_i)_{u,v} =  \sum_{\ell \in [d]^{r}} A_{i}[u, \ell] A_i [v, \ell ], \quad \text{ and } (A_{j} \tilde{\otimes}_{r} A_k)_{u,v} =  \sum_{\ell \in [d]^{r}} A_{j}[u, \ell] A_k [v, \ell ]. 
\end{equation}
Then: 
\begin{align}
    T^{r}_{i,j,k} & = \sum_{u,v \in [d]^{q-r}}  (A_{i} \tilde{\otimes}_{r} A_i)_{u,v} (A_{j} \tilde{\otimes}_{r} A_k)_{u,v} \\
    & = \sum_{u,v \in [d]^{q-r}}  \sum_{\ell_1 \in [d]^{r}} A_{i}[u, \ell_1] A_i [v, \ell_1 ] \sum_{\ell_2 \in [d]^{r}} A_{j}[u, \ell_2] A_k [v, \ell_2 ] \\
    & = \sum_{u,v \in [d]^{q-r}} \sum_{\ell_2 \in [d]^{r}} \left ( \sum_{\ell_1 \in [d]^{r}} A_{i}[u, \ell_1] A_i [v, \ell_1 ] \right )A_{j}[u, \ell_2] A_k [v, \ell_2 ]. 
\end{align}
If $i \not = j\not = k$, then we have: 
\begin{equation}
    \EE_{A_j} \left [ T_{r} | A_i, A_k\right ] = 0 . 
\end{equation}
On the other hand, the conditional variance equals: 
\begin{equation}
    \mathrm{Var} \left ( T_{r} | A_i, A_k\right) = \dfrac{1}{d^{q}} \sum_{u,v \in [d]^{q-r}} \left ( \sum_{\ell_2 \in [d]^{r}} \left ( \sum_{\ell_1 \in [d]^{r}} A_{i}[u, \ell_1] A_i [v, \ell_1 ] \right )A_{k}[u, \ell_2]\right )^{2}.
\end{equation}
Taking expectation with respect to $A_{k}$: 
\begin{align}
    \EE_{A_j} \left [ \mathrm{Var} \left ( T_{r} | A_i, A_k\right)\right] = \dfrac{d^{r}}{d^{2q}}\sum_{u,v \in [d]^{q-r}}\left ( \sum_{\ell_1 \in [d]^{r}} A_{i}[u, \ell_1] A_i [v, \ell_1 ] \right )^{2} = \dfrac{d^{r}}{d^{2q}} \| A_i \otimes_{r} A_i \|_{2}^{2}. 
\end{align}
And taking expectation with respect to $A_i$, we have: 
\begin{align}
    \EE_{A_i} \left [ \| A_i \otimes_{r} A_i \|_{2}^{2}\right] &=  \EE_{A_i} \left [ \sum_{u,v \in [d]^{q-r}}\left ( \sum_{\ell_1 \in [d]^{r}} A_{i}[u, \ell_1] A_i [v, \ell_1 ] \right )^{2}\right ] \\
    & = \sum_{u,v \in [d]^{q-r}} \EE_{A_i} \left [ \left (\sum_{\ell_1 \in [d]^{r}} A_{i}[u, \ell_1] A_i [v, \ell_1 ] \right )^{2} \right ] \\
    & =O\left ( \sum_{u \in [d]^{q-r}} \frac{1}{d^{q}}\right )  = O_{d}(d^{-r}).
    \label{eq:frobenius_norm_A_r_A}
\end{align}
Applying the Law of total variance, using the fact that for distinct $i,j,k$, $T_{i,j,k}$ is centered: 
\begin{equation}
    \mathrm{Var}_{A_i, A_j, A_k} (T_{i,j,k}) = O \left (\dfrac{d^{r}}{d^{2q}d^{r}}\right ) = O_{d}\left ( \dfrac{1}{d^{2q}}\right ).  
\end{equation}
Then, by applying Chebyshev Inequality we get that with high probability with respect to $A_i, A_j$ and $A_k$: 
\begin{equation}
    \left |T^{r}_{i,j,k} \right | = O_d \left ( \dfrac{1}{d^{q}}\right ), \text{ when } i \not = j \not = k. 
    \label{eq:order_t_i_j_k}
\end{equation}
We now move to the harder case where $i = j$ ( the case $i = k$ is analogous). First, by definition: 
\begin{equation}
    T^r_{i,i,j} = \sum_{u,v \in [d]^{q-r}} \sum_{\ell_2 \in [d]^{r}} \left ( \sum_{\ell_1 \in [d]^{r}} A_{i}[u, \ell_1] A_i [v, \ell_1 ] \right )A_{i}[u, \ell_2] A_j [v, \ell_2 ].
\end{equation}
Fixing $A_i$: 
\begin{equation}
    \EE_{A_j}[T^{r}_{i,i,j}] = 0. 
\end{equation}
The conditional variance given $A_i$ equals: 
\begin{equation}
    \mathrm{Var} (T^{r}_{i,i,j}|A_i)  = \dfrac{1}{d^{q}}\sum_{u,v \in [d]^{q-r}} \sum_{\ell_2 \in [d]^{r}}\left ( \left ( \sum_{\ell_1 \in [d]^{r}} A_{i}[u, \ell_1] A_i [v, \ell_1 ] \right )A_{i}[u, \ell_2] \right )^{2}. 
\end{equation}
Denote $w(u,v) = \sum_{\ell_1 \in [d]^{r}} A_{i}[u, \ell_1] A_i [v, \ell_1 ]$. Then:
\begin{align}
   \sum_{u,v \in [d]^{q-r}} \sum_{\ell_2 \in [d]^{r}}\left ( \left ( \sum_{\ell_1 \in [d]^{r}} A_{i}[u, \ell_1] A_i [v, \ell_1 ] \right )A_{i}[u, \ell_2] \right )^{2} & = \sum_{v \in [d]^{q-r}} \sum_{\ell_2 \in [d]^{r}}\sum_{u \in [d]^{q-r}}w(u,v)^{2}A_{i}[u, \ell_2]^{2} \\
   & \leq \sum_{v \in [d]^{q-r}} \sum_{\ell_2 \in [d]^{r}} \left ( \sum_{u \in [d]^{q-r}}w(u,v)^{2} \right ) \left (  \sum_{u \in [d]^{q-r}}A_{i}[u, \ell_2]^{2} \right ),
\end{align}
where in the last line we applied Cauchy-Schwarz. Now, taking expectation and then applying Cauchy-Schwarz again: 
\begin{align}
    \EE_{A_i} \left [ \mathrm{Var} (T^{r}_{i,i,j}|A_i)\right ] & \leq  \sum_{v \in [d]^{q-r}} \sum_{\ell_2 \in [d]^{r}} \EE_{A_i} \left [\left ( \sum_{u \in [d]^{q-r}}w(u,v)^{2} \right ) \left (  \sum_{u \in [d]^{q-r}}A_{i}[u, \ell_2]^{2} \right ) \right ] \\
    & \leq \sum_{v \in [d]^{q-r}} \sum_{\ell_2 \in [d]^{r}} \EE_{A_i} \left [\left ( \sum_{u \in [d]^{q-r}}w(u,v)^{2} \right )^{2}\right ]^{\frac{1}{2}} \EE \left [ \left (  \sum_{u \in [d]^{q-r}}A_{i}[u, \ell_2]^{2} \right )^{2} \right ]^{\frac{1}{2}}.
\end{align}
By the equivalence of norms for polynomials (\cite{janson1997gaussian}, Theorem 3.50): 
\begin{equation}
    \EE_{A_i} \left [\left ( \sum_{u \in [d]^{q-r}}w(u,v)^{2} \right )^{2}\right ] \leq C \EE_{A_i} \left [\left ( \sum_{u \in [d]^{q-r}}w(u,v)^{2} \right )\right ]^{2},
\end{equation}
and the same holds for $\EE \left [ \left (  \sum_{u \in [d]^{q-r}}A_{i}[u, \ell_2]^{2} \right )^{2} \right ]$. Hence: 
\begin{align}
    \EE_{A_i} \left [ \mathrm{Var} (T^{r}_{i,i,j}|A_i)\right ] & \leq C \sum_{v \in [d]^{q-r}} \sum_{\ell_2 \in [d]^{r}} \EE_{A_i} \left [\left ( \sum_{u \in [d]^{q-r}}w(u,v)^{2} \right )\right ] \EE \left [ \left (  \sum_{u \in [d]^{q-r}}A_{i}[u, \ell_2]^{2} \right )\right ] \\
    & \leq C  \sum_{\ell_2 \in [d]^{r}} \EE_{A_i} \left [\left ( \sum_{v \in [d]^{q-r}}\sum_{u \in [d]^{q-r}}w(u,v)^{2} \right )\right ] \EE \left [ \left (  \sum_{u \in [d]^{q-r}}A_{i}[u, \ell_2]^{2} \right )\right ]
\end{align}
By \Cref{eq:frobenius_norm_A_r_A}: 
\begin{align}
    \EE \left [\sum_{v \in [d]^{q-r}}\sum_{u \in [d]^{q-r}}w(u,v)^{2} \right ] &=   \sum_{u \in [d]^{q-r}} \EE \left ( \sum_{\ell_1 \in [d]^{r}} A_{i}[u, \ell_1] A_i [v, \ell_1 ]\right )^{2} \\ 
    & = O\left ( \dfrac{1}{d^{r}}\right ), 
\end{align}
and 
\begin{equation}
    \EE \left [ \left (  \sum_{u \in [d]^{q-r}}A_{i}[u, \ell_2]^{2} \right )\right ] = O\left ( \dfrac{d^{q-r}}{d^{q}}\right ) = O\left ( \dfrac{1}{d^{r}}\right ). 
\end{equation}
Then: 
\begin{align}
     \EE_{A_i} \left [ \mathrm{Var} (T^{r}_{i,i,j}|A_i)\right ] 
     \leq \dfrac{C}{d^{q}} \sum_{\ell_2 \in [d]^{r}}  \dfrac{1}{d^{r}}\dfrac{1}{d^{r}} = O\left (\dfrac{1}{d^{q+r}}\right)
\end{align}
Hence, by the Law of total variance: 
\begin{align}
    \mathrm{Var}_{A_i, A_j} \left (T^{r}_{i,i,j}\right ) & = O\left (  \dfrac{1}{d^{q+r}} \right ), 
\end{align}
and applying Chebyshev Inequality we get that with high probability over $A_i, A_j$:
\begin{equation}
    \left | T^r_{i,i,j} \right | = O \left (  \dfrac{1}{d^\frac{{q+r}}{2}}\right ). 
    \label{eq:order_t_i_i_j}
\end{equation}
Replacing \Cref{eq:order_t_i_j_k} and \Cref{eq:order_t_i_i_j} in 
\Cref{eq:mid_step_M_linear}: 
\begin{align}
    \left | E_\mathrm{lin} \right | &= \sum_{i=1, i \not \in\{ j,k\}}^{d_1}  \lambda_i \sum_{r=0}^{q-1}c_{q,r}^{2} T^{r}_{i,j,k}+ \sum_{i \in \{j,k \}}^{d_1}  \lambda_i \sum_{r=0}^{q-1}c_{q,r}^{2} T^{r}_{i,j,k} + \sum_{i=1}^{d} \lambda_i c_{q,q}^{2} (\| A_i \|^2 -1) \langle A_j, A_k \rangle \\
    & = O \left ( \dfrac{1}{d^{q}}\sum_{i=1, i \not \in \{ j,k\}}^{d_1}  \lambda_i \right ) + O \left ( \dfrac{1}{d^{q}}\sum_{i\in \{ j,k\}}^{d_1}  \lambda_i \right ) + \sum_{i=1}^{d} \lambda_i c_{q,q}^{2} (\| A_i \|^2 -1) \langle A_j, A_k \rangle. 
\end{align}
For the last term, applying Bernstein's inequality (\cite{vershynin2018high}, Theorem 2.9.1) for the cross inner product and Hanson-Wright (\cite{vershynin2018high}, Theorem 6.2.2) for the norm, we get that with high probability: 
\begin{equation}
    (\| A_i \|^2 -1) \langle A_j, A_k \rangle = O \left(\dfrac{1}{d^{q}} \right ).
\end{equation}
Finally, recalling that $\lambda_i = Z_\gamma z_i i^{-\gamma}$, with $z_i \sim \mathrm{Rad}(\frac{1}{2})$, we can apply Bernstein's inequality over the Radamacher variables. We get that, with high probability over the $z_i$: 
\begin{align}
    \left | E_\mathrm{lin} \right | =O \left ( \dfrac{Z_\gamma}{d^{q}}\sqrt{\sum_{i=1, i \not \in \{ j,k\}}^{d_1}  i^{-2\gamma}} \right ) + O \left ( \dfrac{Z_\gamma}{d^{\frac{q}{2}}}\sqrt{\sum_{i\in \{ j,k\}}^{d_1}  i^{-2\gamma}} \right ) + \dfrac{Z_\gamma}{d^{q}}\sqrt{\sum_{i=1}^{d} i^{-2\gamma} }. 
\end{align}
By definition $Z_\gamma = (\sum_{i=1}^{d_1} i^{-2\gamma})$, so we finally conclude that with very high probability: 
\begin{equation}
    \left | E_\mathrm{lin} \right | = O \left ( \dfrac{Z_\gamma}{d^{q}}(j^{\gamma} + k^{\gamma}))\right ). 
\end{equation}
Thus, we have proved the following Lemma. 
\begin{lemma}
    Let $S = \sum_{i=1}^{d_1} \lambda_i (I_{q}(A_i)^{2} -1)$. Then, given $i, j \in [d_1]$ with $i \not = j$: 
    \[
     \left | \EE \left [ S I_{q}(A_j) I_{q}(A_k)\right ] \right | = O \left ( \dfrac{Z_\gamma}{d^{\frac{q}{2}}}(j^{\gamma} + k^{\gamma}))\right ). 
    \]
    \label{lemma:linear_case_cross_expectations}
\end{lemma}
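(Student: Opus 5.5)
The plan is to expand everything in Wiener chaos and reduce the expectation to a finite sum of tensor contractions $T^r_{i,j,k}=\langle A_i\tilde\otimes_r A_i, A_j\tilde\otimes_r A_k\rangle$, then control these over the random teacher. First, by \Cref{lemma:product_formula} I will write $I_q(A_i)^2-1=\sum_{r=0}^{q-1}c_{q,r}I_{2q-2r}(A_i\tilde\otimes_rA_i)+(q!\|A_i\|^2-1)$ and $I_q(A_j)I_q(A_k)=\sum_{r=0}^{q-1}c_{q,r}I_{2q-2r}(A_j\tilde\otimes_rA_k)+q!\langle A_j,A_k\rangle$. Then \Cref{lemma:orthogonality_wiener_chaos} kills all cross-chaos terms. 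This leaves $\EE[SI_q(A_j)I_q(A_k)]=\sum_i\lambda_i\sum_{r<q}c_{q,r}^2T^r_{i,j,k}+\sum_i\lambda_i(q!\|A_i\|^2-1)\,q!\langle A_j,A_k\rangle$.

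Second, I will bound each contraction by second-moment computations over the Gaussian teacher weights, conditioning successively as the computation before the statement already begins to do.
\begin{itemize}
\item For $i\notin\{j,k\}$: $T^r_{i,j,k}$ is conditionally centered given $A_i,A_k$. Its variance is $d^{-q}\cdot d^{r}\cdot d^{-2q}\EE\|A_i\otimes_rA_i\|^2=O(d^{-2q})$, so $|T^r_{i,j,k}|=O(d^{-q})$.
\item For $i=j$ (the case $i=k$ is symmetric): $T^r_{i,i,k}$ is centered given $A_i$. A Cauchy--Schwarz step plus hypercontractivity (\Cref{lemma:gaussian_hypercontractivity_alignment}) gives variance $O(d^{-q-r})$, hence size $O(d^{-q/2})$, with $r=0$ being the worst case.
\item The constant-chaos term: Hanson--Wright gives $|\|A_i\|^2-1|\lesssim d^{-q/2}$, and Bernstein gives $|\langle A_j,A_k\rangle|\lesssim d^{-q/2}$. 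Their product is $O(d^{-q})$.
\end{itemize}

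Third, I will sum over $i$ using the random signs $z_i$. Conditional on the $A$'s, $\sum_{i\notin\{j,k\}}\lambda_iT^r_{i,j,k}$ is a Rademacher sum, so Hoeffding/Bernstein gives $O\big(d^{-q}Z_\gamma(\sum_i i^{-2\gamma})^{1/2}\big)=O(d^{-q})$ with high probability, using the definition of $Z_\gamma$. The same argument handles the constant-chaos sum. The two diagonal terms $i\in\{j,k\}$ contribute $|\lambda_j|\,O(d^{-q/2})+|\lambda_k|\,O(d^{-q/2})$. Since $|\lambda_j|=Z_\gamma j^{-\gamma}\le Z_\gamma$, these are $O(Z_\gamma d^{-q/2})$. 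I will then absorb everything into the stated bound $O(Z_\gamma d^{-q/2}(j^\gamma+k^\gamma))$, which is weaker, since $j^\gamma+k^\gamma\ge1$ and $d^{-q}\le Z_\gamma d^{-q/2}$ in the regimes of \Cref{lemma:normalization_C_gamma} once $\varepsilon<q$.

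The main obstacle is making the high-probability statements uniform. Chebyshev only gives a constant-probability bound for each contraction, while the estimates must hold simultaneously for all $i,j,k\in[d_1]$, which is $d^{3\varepsilon}$ triples. I would fix this by replacing Chebyshev with Gaussian hypercontractivity: each $T^r$ is a degree-4 polynomial in the Gaussian entries of $A$, so its tails are $\exp(-c t^{1/2})$ at the scale of its standard deviation. Allowing a $\mathrm{polylog}(d)$ loss, a union bound then goes through, and the loss is harmless for the $O(\cdot)$ statement up to logarithms. The second delicate point is the diagonal $r=0$ term $\langle A_i\otimes A_i,A_i\otimes A_k\rangle=\|A_i\|^2\langle A_i,A_k\rangle$. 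Here I will check directly that it is $O(d^{-q/2})$ and not larger.
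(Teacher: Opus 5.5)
Your proposal is correct and follows essentially the same route as the paper: a chaos expansion via the product formula and orthogonality; the three contraction regimes, namely $O(d^{-q})$ for $i\notin\{j,k\}$, $O(d^{-(q+r)/2})$ for $i\in\{j,k\}$, and $O(d^{-q})$ for the constant-chaos term via Hanson--Wright and Bernstein; then Rademacher concentration over the signs $z_i$ and absorption into the weaker stated bound. Your extra care (hypercontractive tails with a union bound for uniformity, and the explicit check that $d^{-q}\lesssim Z_\gamma d^{-q/2}$ once $\varepsilon<q$) tightens points the paper glosses over; note only that the off-diagonal conditional variance is $d^{-q}\cdot d^{r}\cdot d^{-q}\,\EE\|A_i\otimes_r A_i\|^2$, with one factor $d^{-q}$ per Gaussian tensor and not $d^{-2q}$, and this is what actually yields your stated $O(d^{-2q})$.
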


\subsection{The non-linear case}
\label{lemma:non_linear_part_expectations}

Let $g:\RR \to \RR$ be a polynomial of degree $m$. Following \Cref{thm:sample_complexity}, we now assume $\gamma > \frac{1}{2}$. 

We will now study the order in terms of $d$ of the expectation: 
\begin{equation}
    \EE \left [ g(h^{(2)}) I_{q}(A_j) I_{q}(A_k)\right ],
\end{equation}
where we recall that we denote $S = \sum_{i=1}^{d_1}\lambda_i (I_{q}(A_i)^{2}-1)$. Since $g$ is a polynomial, we can no longer apply the orthogonality of different chaos in the same way we did for the linear case in \Cref{eq:mid_step_M_linear}. To compute this, we will use \Cref{lemma:gaussian_ibp}. 

\subsubsection{First Step: Applying Gaussian Integration by parts} 
\label{section:int_by_parts}
This section follows the construction made in Chapter 8 in \cite{nourdin2012normal}, tailored to our setting. The objective is to derive \Cref{lemma:exact_expectation_polynomial}. The reader may skip this subsection on a first reading. 

Denote $F_1 = S$, $F_{2} = I_{q}(A_j)I_{q}(A_k)$, for $j \not =k$. Then, our expectation has the form: 
\begin{equation}
    \EE\left[ g(F_{1}) F_{2}\right ].
\end{equation}

Applying \Cref{lemma:gaussian_ibp} once, we get: 
\begin{align}
    \EE \left [ g(F_2) F_1\right ] = \underbrace{\EE \left [ g'(F_1) \langle D F_1, -DL^{-1}(F_2 - \EE[F_2] \rangle)\right ]}_{M:=} + \mathbb{E}[g(F_1)]\EE[F_2] . 
\end{align}
We focus on the first term. Denote 
\begin{equation}
    V_{1} = \langle DF_1, -DL^{-1}F_2 \rangle.
\end{equation}
Let $v_1 = \EE [V_{1}]$, and denote $\bar{V}_1 = V_1 - v_1$. Then: 
\begin{align}
    M = \EE \left [ g'(F_1) 
    \bar{V}_{1}\right ] + \EE[g'(F_1)]v_{1}. 
\end{align}
Then, applying \Cref{lemma:gaussian_ibp} again: 
\begin{align}
     M &=  \mathbb{E} \left [g^{(2)}(F_1) \underbrace{\langle DF_1, DL^{-1}\bar{V}_1\rangle}_{V_2} \right ] + \EE[g'(F_1)] v_1. 
\end{align}
We now denote $v_{2}  = \EE \left [ V_2\right ]$, and $\bar{V}_{2} = V_{2} - v_{2}$. Applying \Cref{lemma:gaussian_ibp} again: 
\begin{align}
    M & = \mathbb{E} \left [g^{(2)}(F_1) \bar{V}_{2} \right ] + \EE\left [ g^{(2)}(F_1)\right]v_{2} + \EE[g'(F_1)] v_1 \\ 
    & =  \mathbb{E} \left [g^{(3)}(F_1) \langle DF_1, DL^{-1}\bar{V}_{2} \rangle\right ] + \EE[g^{(2)}(F_1)]v_{2} + \sum_{i=1}^{d} \lambda_{i} v_1.
\end{align}
Iterating $\mathrm{deg}(g)-1$ times, we get: 
\begin{align}
    \EE \left [ g(h^{(2)}) I_{q}(A_j) I_{q}(A_k)\right ] &= \sum_{r = 0}^{\mathrm{deg}(g)-1} \EE[g^{(r)}(F_1)]v_{r}
    \label{eq:full_expression_expectation}
\end{align}
where we inductively defined 
\begin{equation}
    V_{r+1} = \EE \left [\langle DF_1, DL^{-1}(V_{r}-v_{r})) \rangle \right ], 
    \quad 
    v_{r} = \EE \left [ V_{r} \right],
\end{equation}
and $V_0 = I_{q}(A_j)I_{q}(A_j)$. Note that the objects $V_1, \dots V_{\mathrm{deg}(g)}$ are exactly the ones that appear in \Cref{def:gamma_l_1_l_N}. This can be made precise. We actually have:
\begin{equation}
    V_{r} = \Gamma_{F_2, \underbrace{F_1, \dots, F_1}_{r \text{ times}}}. 
    \label{eq:V_and_F}
\end{equation}
By \Cref{lemma:cumulants_and_Gamma}, one can relate the expectation of this variables to cumulants.  To be precise, we have that for $r \in \NN$: 
$$
\kappa_r(F_2, \underbrace{F_{1}, \dots, F_{1}}_{r \text{times}}) = \sum_{\sigma \in \mathfrak{S}_{\{2, \ldots, r\}}} \EE\left[\Gamma_{F_{2}, F_{1}, \ldots, F_{1}}(F)\right],
$$  
and since the expectation is repeated, \Cref{eq:V_and_F} allows us to conclude the relation: 
\begin{equation}
    \kappa_r(F_2, \underbrace{F_{1}, \dots, F_{1}}_{r \text{times}}) = r! \EE\left[\Gamma_{F_{2}, F_{1}, \ldots, F_{1}}(F)\right] = r! \EE\left[ V_{r}\right ] = r! v_{r}. 
\end{equation}
Then, we conclude: 
\begin{lemma} Let $S = \sum_{i=1}^{d_1} \lambda_{i} (I_{q}(h^{(2)})^{2} -1)$, and let $g: \RR \to \RR$ be a polynomial. Then, given $P, Q \in (\RR^{d})^{\odot q}$
\[
\EE \left[ g(h^{(2)}) I_{q}(P) I_{q}(Q)\right ] = \sum_{r=0}^{\mathrm{deg}(g)} \EE \left[ g^{(r)}(h^{(2)})\right ] \dfrac{1}{r!}\kappa_r(I_{q}(P)I_{q}(Q), \underbrace{S, \dots, S}_{r \text{times}}).
 \]
 \label{lemma:exact_expectation_polynomial}
\end{lemma}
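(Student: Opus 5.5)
The identity will be proved by iterating the Gaussian integration by parts formula of \Cref{lemma:gaussian_ibp} and then identifying the iterated Gamma-expectations with joint cumulants. Set $F_1=S=h^{(2)}$ and $F_2=I_q(P)I_q(Q)$; both lie in a finite sum of Wiener chaoses (of order at most $4q$ and $2q$ respectively), so they belong to $\mathbb D^{1,2}$ and all their Malliavin derivatives are polynomials. Because $g$ is a polynomial, the remark following \Cref{lemma:gaussian_ibp} lets us apply the integration by parts formula with $g$ and all its derivatives, and the process stops after finitely many steps since $g^{(\deg g+1)}\equiv 0$.

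\textbf{Step 1 (iteration).} Following Chapter 8 of \cite{nourdin2012normal}, define $\Gamma_0=F_2$ and, recursively, $\Gamma_{r+1}=\langle DF_1,-DL^{-1}(\Gamma_r-\EE[\Gamma_r])\rangle$; $L^{-1}$ is well defined because $\Gamma_r-\EE\Gamma_r$ is centered. \Cref{lemma:gaussian_ibp} applied to $\EE[g^{(r)}(F_1)\Gamma_r]$ gives
\[
\EE[g^{(r)}(F_1)\Gamma_r]=\EE[g^{(r)}(F_1)]\EE[\Gamma_r]+\EE[g^{(r+1)}(F_1)\Gamma_{r+1}].
\]
Starting at $r=0$ and telescoping up to $r=\deg g$, we obtain $\EE[g(F_1)F_2]=\sum_{r=0}^{\deg g}\EE[g^{(r)}(F_1)]\,\EE[\Gamma_r]$. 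This is exactly \Cref{eq:full_expression_expectation}, with the indices written carefully. Each $\Gamma_r$ is again a finite chaos expansion, which we check using \Cref{lemma:product_formula} and \Cref{lemma:computation_derivatives}. It follows that every term is integrable and that iterating the formula is legitimate.

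\textbf{Step 2 (cumulants).} It remains to show $\EE[\Gamma_r]=\kappa_{r+1}(F_2,F_1,\dots,F_1)/r!$, which we read as the paper's normalization $\tfrac1{r!}\kappa_r$ with $r$ copies of $S$. We will invoke the cumulant--Gamma formula of Nourdin--Peccati (the lemma referred to as \Cref{lemma:cumulants_and_Gamma}). For a vector $(G_1,\dots,G_{m+1})$ in a finite sum of chaoses, it states
\[
\kappa(G_1,\dots,G_{m+1})=\sum_{\sigma\in\mathfrak S_m}\EE\bigl[\Gamma_{G_1,G_{\sigma(2)},\dots,G_{\sigma(m+1)}}\bigr],
\]
where $\Gamma_{G_1,\dots}$ is built by exactly the recursion of Step 1. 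Taking $G_1=F_2$ and $G_2=\dots=G_{m+1}=F_1$, all $m!$ permutations give the same term, so $\kappa_{m+1}(F_2,F_1,\dots,F_1)=m!\,\EE[\Gamma_m]$.

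\textbf{Main obstacle.} The hard part is Step 2, in two respects. First, we must verify that the recursion for $\Gamma_r$ in Step 1 coincides with the definition used in the cumulant formula, including signs and the centering at each stage. Second, we must make the index convention consistent, since the statement's $\kappa_r$ with $r$ copies of $S$ corresponds to $\EE[\Gamma_r]$. If citing the formula directly proves delicate, the fallback is to prove it by induction on $m$. The induction would differentiate $t\mapsto\EE[F_2e^{itF_1}]$ in $t$ and use \Cref{lemma:gaussian_ibp} with $g=e^{it\cdot}$. This shows that $\EE[F_2e^{itF_1}]/\EE[e^{itF_1}]$ has Taylor coefficients given by the $\EE[\Gamma_r]$. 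Matching these coefficients with the mixed cumulant generating function $\partial_s\log\EE[e^{sF_2+itF_1}]|_{s=0}$ then gives the identity.
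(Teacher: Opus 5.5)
Your proposal is correct and follows essentially the same route as the paper. Iterating Gaussian integration by parts with $g^{(r)}$ gives $\EE[g(F_1)F_2]=\sum_{r=0}^{\deg g}\EE[g^{(r)}(F_1)]\,\EE[\Gamma_r]$, and the Nourdin--Peccati cumulant--Gamma formula then identifies $\EE[\Gamma_r]$ with $\frac{1}{r!}$ times the mixed cumulant, since all $r!$ permutations give the same term. Your bookkeeping is cleaner than the paper's write-up: you sum up to $\deg g$, keep the sign in $-DL^{-1}$, note that centering is harmless because $L^{-1}$ kills constants, and place $F_2$ first by treating the copies of $F_1$ as separate coordinates.
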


\begin{remark}
    This type of result is a generalization of Eq. 8.5.1 in \cite{nourdin2012normal} to the multi-variate case. Note that it works for any symmetric tensor. 
\end{remark}

\subsubsection{Step 2: Computing the order of the Cumulants}
\label{section:cumulants}
By \Cref{lemma:exact_expectation_polynomial}, our problem is reduced to computing cumulants. In particular, we want to derive bounds for 
\begin{equation}
    v_{r} = \dfrac{1}{r!} \kappa \left (\underbrace{S, S, \dots, S}_{r}, (I_{q}(A_k)I_{q} (A_j) - q! \langle A_j, A_k \rangle ) \right ). 
\end{equation}
A key property of cumulants is that they are multi-linear. Since the variable $S= \sum_{i=1}^{d_1}\lambda_i (I_{q}(A_i)^{2} -1)$, this allows us to exchange this sums with the cumulants. Denote
\[
Y_{i_\ell} = (I_{q}(A_{i_\ell})^{2} -1), G_{i,j} = I_{q}(A_j) I_{q}(A_k) - \langle A_j, A_k\rangle.
\]
By the multi-linearity of the cumulants: 
\begin{equation}
    v_{r} = \dfrac{1}{r!} \sum_{i_{1}, \dots, i_{r}=1}^{d_1} \lambda_{i_1} \cdots \lambda_{i_r} \kappa \left (G_{j,k}, Y_{i_1}, Y_{i_2}, \dots, Y_{i_r}\right ). 
\end{equation}
By expanding each $Y_{\ell}$ into its chaos decomposition: 
\begin{equation}
    Y_{i_\ell} = I_{q}(A_{i_\ell})^{2} -1 = \sum_{L=0}^{q-1} c_{q,L} I_{2q-2L}(A_{i} \otimes_{L} A_i ) + (\| A_i\|^{2} - 1), 
    \label{eq:quadratic_chaos_expansion}
\end{equation}
and for $G_{j,k}$: 
\begin{equation}
    I_{q}(A_j)I_{q}(A_k) - \langle A_j, A_k\rangle= \sum_{L=0}^{q-1} c_{q,L} I_{2q-2L}(A_{j}\tilde{\otimes}_{L} A_k ). 
\end{equation}
With this, we can go further with the multi-linearity of the cumulants to obtain: 
\begin{equation}
     v_{r} = \dfrac{1}{r!} \sum_{i_{1}, \dots, i_{r}=1}^{d_1} \lambda_{i_1} \cdots \lambda_{i_r} \sum_{s_1, \dots, s_r = 0}^{q-1}\sum_{t = 0}^{q-1} c_{q,s,t }\kappa \left (I_{2q-2s_{1}}(A_{i_1} \tilde{\otimes}_{s_1}A_{i_1}) , \dots,I_{2q-2s_{r}}(A_{i_r} \tilde{\otimes}_{s_r}A_{i_r}), I_{2q-2t}(A_{j}\tilde{\otimes}_{t} A_k )\right ), 
\end{equation}
where we ignored the expectation terms in \Cref{eq:quadratic_chaos_expansion} since they become negligible.  Denote $f^{s}_i= A_{i} \tilde{\otimes}_{s}A_{i}, i \leq {r}$,  and $f^{s}_{r+1} = A_{j} \tilde{\otimes}_{s}A_{k}$ so that: 
\begin{equation}
     v_{r} = \dfrac{1}{r!} \sum_{i_{1}, \dots, i_{r}=1}^{d_1} \lambda_{i_1} \cdots \lambda_{i_r} \sum_{s_1, \dots, s_r = 0}^{q-1}\sum_{t = 0}^{q-1} c_{q,s,t }\kappa \left (I_{2q-2s_{1}}(f^{(s_1)}_{i_1}) , \dots,I_{2q-2s_{r}}(f^{(s_r)}_{i_r}), I_{2q-2t}(f^{t}_{r+1})\right ).
     \label{eq:v_r_in_terms_of_cumulants}
\end{equation}
Now, by \Cref{lemma:cummulants}, we can write: 
\begin{align}
    \kappa \left (I_{2q-2s_{1}}(f^{(s_1)}_{i_1}) , \dots,I_{2q-2s_{r}}(f^{(s_r)}_{i_r}), I_{2q-2t}(f^{t}_{r+1})\right ) & = \sum_{\sigma \in \mathfrak{S}_{\{2, \ldots, |m|\}}} (q_{\lambda_\sigma(|m|)})! \sum_* c_{q,l,\sigma}(a_2, \ldots, a_{|m|-1})  \\ 
    &  \langle (\ldots ((f_{i_{\lambda(1)}} \widetilde{\otimes}_{r_2} f_{i_{\lambda_\sigma(2)}}) \widetilde{\otimes}_{r_3} f_{\lambda_\sigma(3)}) \ldots) \widetilde{\otimes}_{r_{|m|-1}} f_{\lambda_\sigma(|m|-1)}; f_{\lambda_\sigma(|m|)} \rangle,
    \label{eq:cummulants_expression}
\end{align}
where the second sum runs over combinations of indices having technical conditions (all specified in \Cref{lemma:cummulants}. We ignored the upper-indices to avoid overloading the notation. The important message of \Cref{eq:cummulants_expression} is there is a finite set of possible contractions (in particular, a set of size $O_{d}(1)$), and we are summing over all of them and all possible permutations of indices.  Denote 
\begin{equation}
    T_{\lambda} = \langle (\ldots ((f_{i_{\lambda(1)}} \widetilde{\otimes}_{r_2} f_{i_{\lambda_\sigma(2)}}) \widetilde{\otimes}_{r_3} f_{\lambda_\sigma(3)}) \ldots) \widetilde{\otimes}_{r_{|m|-1}} f_{\lambda_\sigma(|m|-1)}; f_{\lambda_\sigma(|m|)} \rangle.
\end{equation}

Thus, we have concluded: 
\begin{lemma}
    Let $h^{(2)} = \sum_{i=1}^{d_1} \lambda_{i} (I_{q}(h^{(2)})^{2} -1)$, and let $g: \RR \to \RR$ be a polynomial. Then: 
\[
\EE \left[ g(h^{(2)}) I_{q}(A_{j}) I_{q}(A_{k})\right ] = \sum_{r=0}^{\mathrm{deg}(g)} \EE \left[ g^{(r)}(h^{(2)})\right ] \dfrac{1}{r!} \sum_{i_{1}, \dots, i_{r}=1}^{d_1} \lambda_{i_1} \cdots \lambda_{i_r} \sum_{s_1, \dots, s_r = 0}^{q-1}\sum_{t = 0}^{q-1} c_{q,s,t }\sum_{\lambda} T_{\lambda}.
 \]
 \label{lemma:expectation_and_cumulants}
\end{lemma}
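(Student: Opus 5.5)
The plan is to obtain \Cref{lemma:expectation_and_cumulants} by substituting the cumulant expansion of \Cref{lemma:cummulants} into the identity of \Cref{lemma:exact_expectation_polynomial}, using multilinearity of joint cumulants at each stage.

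\textbf{Step 1 (reduction to cumulants).} Start from \Cref{lemma:exact_expectation_polynomial} with $P=A_j$, $Q=A_k$:
\[
\EE\left[g(h^{(2)})I_q(A_j)I_q(A_k)\right]=\sum_{r=0}^{\deg(g)}\EE\left[g^{(r)}(h^{(2)})\right]\frac{1}{r!}\,\kappa_r\Big(I_q(A_j)I_q(A_k),S,\dots,S\Big).
\]
For $r\ge1$, cumulants are unchanged when a constant is added to one argument. We may therefore replace $I_q(A_j)I_q(A_k)$ by the centered variable $G_{j,k}$ without changing the cumulant. The $r=0$ term is simply $\EE[g(h^{(2)})]\,q!\langle A_j,A_k\rangle$. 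We would absorb it into the $r=0$ convention of the stated formula, namely an empty product of $\lambda$'s with $T_\lambda=\langle f^t_{r+1}\rangle$ read as the degree-zero contraction.

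\textbf{Step 2 (multilinear expansion).} Write $S=\sum_i\lambda_iY_{i}$. Multilinearity of $\kappa$ in each slot gives
\[
\kappa_r(G_{j,k},S,\dots,S)=\sum_{i_1,\dots,i_r}\lambda_{i_1}\cdots\lambda_{i_r}\,\kappa(G_{j,k},Y_{i_1},\dots,Y_{i_r}).
\]
Next insert the chaos decompositions \Cref{eq:quadratic_chaos_expansion} for each $Y_{i_\ell}$ and for $G_{j,k}$, which come from \Cref{lemma:product_formula}. The deterministic pieces $(\|A_i\|^2-1)$ are constants, so they drop out of every cumulant of order $\ge2$. This is exact rather than an approximation, and it removes the ``negligible'' caveat used earlier.

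\textbf{Step 3 (applying \Cref{lemma:cummulants}).} Multilinearity once more yields a finite sum, indexed by $(s_1,\dots,s_r,t)\in\{0,\dots,q-1\}^{r+1}$ with coefficients $c_{q,s,t}=c_{q,t}\prod_\ell c_{q,s_\ell}$, of joint cumulants of single-chaos elements $I_{2q-2s_\ell}(f^{s_\ell}_{i_\ell})$ and $I_{2q-2t}(f^t_{r+1})$. \Cref{lemma:cummulants} evaluates each such cumulant as a finite sum, over permutations and admissible contraction indices, of the iterated contractions $T_\lambda$ with combinatorial weights. We absorb those weights into $\sum_\lambda$, which matches the stated formula.

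\textbf{Main obstacle.} The analytic content is already in the earlier lemmas, so the difficulty is bookkeeping. We must check that the hypotheses of \Cref{lemma:cummulants} hold: the kernels $f$ have to be symmetric, so the symmetrized contractions $\tilde\otimes$ should be used consistently, including for $A_i\otimes_L A_i$. We must also check that the index $r$ and the permutation set $\mathfrak S_{\{2,\dots,r+1\}}$ line up with the $(r+1)$-argument cumulant, and that the upper limit $\deg(g)$ versus $\deg(g)-1$ is harmless, since $v_{\deg g}$ pairs with the constant $g^{(\deg g)}$.
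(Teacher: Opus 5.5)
Your proposal is correct and follows the paper's own route. You start from \Cref{lemma:exact_expectation_polynomial}, expand by multilinearity of cumulants over $S=\sum_i\lambda_iY_i$ and over the chaos decompositions from \Cref{lemma:product_formula}, and evaluate each joint cumulant of single-chaos elements with \Cref{lemma:cummulants}.

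Two of your refinements are right and sharpen the paper's argument: the constants $\|A_i\|^2-1$ drop out of the cumulants exactly rather than being ``negligible'', and the $r=0$ term $\EE[g(h^{(2)})]\,q!\langle A_j,A_k\rangle$ has to be kept by convention. One further bookkeeping point is that \Cref{lemma:cummulants} is stated only for $|m|\ge 3$. The $r=1$ covariance term $\kappa(G_{j,k},Y_{i_1})$ should therefore be read off directly from orthogonality of chaoses, which gives the same single-contraction form.
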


We now make two observations: The first is that $v_{1}$ is exactly what we computed for \Cref{lemma:linear_case_cross_expectations}. The second one is that, so far, everything has been exact (i.e we have no error terms in our formula). Since we aim only to compute the order of the expectation, we will now proceed to bound each term, beginning with $T_{\lambda}$

\subsubsection{\texorpdfstring{Bounding $T_{\lambda}$}{Bounding T-lambda}}

Recall that we defined: 
\begin{equation}
    T_{\lambda} = \langle (\ldots ((f_{i_{\lambda(1)}} \widetilde{\otimes}_{r_2} f_{i_{\lambda_\sigma(2)}}) \widetilde{\otimes}_{r_3} f_{\lambda_\sigma(3)}) \ldots) \widetilde{\otimes}_{r_{|m|-1}} f_{\lambda_\sigma(|m|-1)}; f_{\lambda_\sigma(|m|)} \rangle.
    \label{def_T_lambda}
\end{equation}
In particular, the term $T_{\lambda}$ is computed from a particular set of appearances of $A_{1}, \dots A_{d_1}$. Let $A_{i_1},\dots, A_{i_r}$ denote the tensors involved in the computation of a particular $T_\lambda$. 
We claim the following: 
\begin{claim}
    If $\{ i_{1}, \dots i_{r}\}\cap \{ j,k\} = \emptyset$, then with high probability with respect to $A_{1}, \dots A_{d_{1}}$
    $$
    \left |T_{\lambda} \right | \lesssim \dfrac{1}{d^{q}}.
    $$
    \label{claim:disjoint_case}
\end{claim}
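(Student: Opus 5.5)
The plan is to reproduce the conditioning argument from the linear case (the bound \eqref{eq:order_t_i_j_k}), now for a general contraction chain. The key structural fact is that in $T_\lambda$ the tensor $f_{r+1}=A_j\tilde\otimes_t A_k$ appears exactly once. Since $j\neq k$ and neither index lies in $\{i_1,\dots,i_r\}$, $T_\lambda$ is a polynomial that is \emph{linear} in $A_j$ and linear in $A_k$. The coefficients of these linear forms are polynomials in the remaining, independent tensors $A_{i_1},\dots,A_{i_r}$.

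\textbf{Step 1: write $T_\lambda$ as a bilinear form.} By multilinearity of contractions and full contraction, we have $T_\lambda=\langle A_j\otimes A_k, \mathcal{M}\rangle$ up to symmetrization. Here $\mathcal M=\mathcal M(A_{i_1},\dots,A_{i_r})$ is a $2q$-tensor, obtained by contracting the chain $(\ldots(f_{i_1}\tilde\otimes f_{i_2})\ldots)$ against the remaining $2q$ free slots. Conditionally on $\{A_{i_\ell}\}$, the entries of $A_j$ and $A_k$ are independent, centered, and have variance $d^{-q}$. Hence $\EE[T_\lambda\mid A_{i_1},\dots,A_{i_r}]=0$ and $\mathrm{Var}(T_\lambda\mid A_{i_\cdot})\le d^{-2q}\|\mathcal M\|_{\mathrm F}^2$, with a constant absorbing the symmetrization.

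\textbf{Step 2: bound $\EE\|\mathcal M\|_{\mathrm F}^2$ by $O_d(1)$.} Each $f^{s}_{i}=A_i\tilde\otimes_s A_i$ satisfies $\|f^s_i\|_{\mathrm F}\lesssim 1$ with high probability. For $s<q$, \eqref{eq:frobenius_norm_A_r_A} even gives $\EE\|A_i\otimes_s A_i\|^2=O(d^{-s})$, while for $s=0$ the norm is $\|A_i\|^4\approx 1$. Iterated contractions are sub-multiplicative in Frobenius norm: $\|S\otimes_r T\|_{\mathrm F}\le\|S\|_{\mathrm F}\|T\|_{\mathrm F}$ by Cauchy--Schwarz. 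Therefore $\|\mathcal M\|_{\mathrm F}\le\prod_\ell\|f_{i_\ell}\|_{\mathrm F}$. To take expectations, apply H\"older together with Gaussian hypercontractivity (\Cref{lemma:gaussian_hypercontractivity_alignment}) to each $\|f_{i_\ell}\|^2_{\mathrm F}$, which is a fixed-degree polynomial in Gaussians. Repeated indices among the $i_\ell$ are harmless, since the number of factors is $r+1=O_d(1)$. This gives $\EE\|\mathcal M\|_{\mathrm F}^2=O_d(1)$.

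\textbf{Step 3: conclude.} By the law of total variance, $\mathrm{Var}(T_\lambda)=\EE[\mathrm{Var}(T_\lambda\mid A_{i_\cdot})]=O(d^{-2q})$. Chebyshev's inequality, or hypercontractivity for a polynomial to upgrade to high probability at the cost of $d^{o(1)}$, then yields $|T_\lambda|\lesssim d^{-q}$ with high probability.

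The main obstacle is Step 2. One must make sure the symmetrized, nested contraction really yields an honest bilinear form in $(A_j,A_k)$ and that the Frobenius sub-multiplicativity survives the symmetrizations $\tilde\otimes$. Symmetrization is an average of permuted contractions, each of which is norm-bounded, so I would argue permutation by permutation. The relevant bookkeeping is the finite set of contraction patterns from \Cref{lemma:cummulants}. If some pattern contracts $A_j$ with $A_k$ directly, the same argument still applies: the bilinear form then has kernel containing a partial identity, whose contribution is $\langle A_j,A_k\rangle$-type and is again $O(d^{-q})$ in variance, so the bound is unaffected.
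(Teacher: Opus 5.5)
Your route matches the paper's: condition on $A_{i_1},\dots,A_{i_r}$, write $T_\lambda$ as a bilinear form in $(A_j,A_k)$ with conditional variance $d^{-2q}\|\mathcal M\|_{\mathrm F}^2$, then apply total variance and Chebyshev. The paper simply asserts $\EE\|w\|_{\mathrm F}^2=O_d(1)$. Your Step~2 is an attempt to prove that assertion, and it fails whenever $t\ge 1$.

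In $f_{r+1}=A_j\widetilde\otimes_t A_k$, the tensors $A_j$ and $A_k$ are contracted with each other on $t$ legs. So the kernel of your bilinear form is, up to leg permutations and symmetrization, $\mathcal M=\mathcal M'\otimes\delta^{\otimes t}$. Here $\delta^{\otimes t}$ is the identity on $(\mathbb{R}^d)^{\otimes t}$, and $\mathcal M'$ is the network of the $f_{i_\ell}$ alone, with $2q-2t$ open legs. Hence $\|\mathcal M\|_{\mathrm F}^2=d^{t}\|\mathcal M'\|_{\mathrm F}^2$. Sub-multiplicativity controls only $\|\mathcal M'\|_{\mathrm F}\le\prod_\ell\|f_{i_\ell}\|_{\mathrm F}$, so your argument actually gives $\mathrm{Var}(T_\lambda)\lesssim d^{t-2q}$, i.e.\ $|T_\lambda|\lesssim d^{t/2-q}$.

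Your closing remark does not repair this. The partial identity is not an exceptional pattern; it is present in every term with $t\ge1$ and inflates the Frobenius norm by $d^{t/2}$. Moreover, an ``$O(d^{-q})$ variance'' would only give $|T_\lambda|\lesssim d^{-q/2}$.

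To close the gap you would need $\EE\|\mathcal M'\|_{\mathrm F}^2\lesssim d^{-t}$: the contractions among the $f_{i_\ell}$ must cancel the factor $d^{t}$. The paper's linear-case computation does this via $\EE\|A_i\otimes_tA_i\|_{\mathrm F}^2=O(d^{-t})$. That cancellation is not automatic, and it breaks down for $t>q/2$. Viewed as a $d^{q-s}\times d^{q-s}$ matrix, $A\otimes_sA$ is positive semidefinite with trace $\|A\|_{\mathrm F}^2$. Therefore $\|A\otimes_sA\|_{\mathrm F}^2\ge\|A\|_{\mathrm F}^4\,d^{s-q}$, and the bound $O(d^{-s})$ you quote is false for $s>q/2$.

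Concretely, take $q=3$, $r=2$, $i_1=i_2=i$, $s_1=s_2=0$, $t=2$. This is a legitimate cumulant term, with $r_2=5$.
\begin{itemize}
\item $\mathcal M'=f_1\otimes_5 f_1$ is a positive semidefinite $d\times d$ matrix with trace $\|f_1\|_{\mathrm F}^2\asymp1$.
\item Its identity component is therefore $\|f_1\|_{\mathrm F}^2\,d^{-1}I_d$.
\item Hence $T_\lambda\approx\|f_1\|_{\mathrm F}^2\,d^{-1}\langle A_j,A_k\rangle$, which is of order $d^{-5/2}\gg d^{-3}$ with probability bounded below.
\end{itemize}
So no refinement of this argument can deliver $d^{-q}$ uniformly over contraction patterns for general $q$. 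The estimate needs a restriction on $(q,t)$ or a weaker right-hand side.
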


\begin{proof}
    Note that symmetrization only change the expectation up to constants, so we can compute the terms without symmetrization. If $\{ i_{1}, \dots i_{r}\}\cap \{ j,k\} = \emptyset$, then the contraction in \Cref{def_T_lambda} is linear in $A_j$ and $A_k$, and therefore we can write (without symmetrization): 
    \begin{equation}
        T_{\lambda} = \sum_{a,b \in [d]^{q}} w(a, b)A_{j,a} A_{k,b},
    \end{equation}
    where $w(a, b)$ sums over all the other contractions. 
    Then, fixing $A_{i_1}, \dots, A_{i_r}$, we note that the expectation of $T_{\lambda((i_1,s_1), \dots, (i_{r+1}, t))}$ w.r.t all $A$ is zero. We can also compute the conditional variance to get: 
    \begin{equation}
        \mathrm{Var}(T_{\lambda} |A_{i_1}, \dots, A_{i_r}) = \dfrac{1}{d^{2q}} \sum_{a,b \in [d]^{q}} w(a, b)^{2} = \dfrac{1}{d^{2q}} \| w(a, b)\|^{2}.
    \end{equation}
    Since $\EE_{A}\left [\| w(a, b)\|^{2} \right ]= O_{d}(1)$, we can proceed as with did in the linear case and conclude   the lemma by the law of total variance.  Having this, we get concentration by applying Chebyshev Inequality and Gaussian Hypercontractivity \cref{lemma:gaussian_hypercontractivity}). 
\end{proof}

Having dealt with the disjoint case in \Cref{claim:disjoint_case}, we now proceed with the harder case where indices $i_{1}, \dots i_{r}$ may have a non-empty intersection with $\{j,k\}$. Note that we don't want an sharp bound, but rather a bound that shows that this terms are negligible with respect to the linear part. 

Let $(i_{1},\dots, i_{r}$ be such that $\{ i_{1},\dots, i_{r}\} \cap \{j,k\} \not = \empty$. Assume that the tensor $A_{j}$ appears $m_{j}$ times in the sequence $f_{i_1}, \dots, f_{i_{r+1}}$. Then, necessarily, $m_{j}$ has to be odd: It appears once in $f_{i_{r+1}}$, and all other appearances will be tensor product of $A_j$ with itself. By the same argument, $m_{k}$ is also odd. Let $\mathcal{G}_{a_1, \dots, a_\ell}$  denote the $\sigma$-algebra generated by  $A_{a_1}, \dots, A_{a_l\ell}$. Then from this observation we conclude: 
\begin{equation}
    \EE_{A}[T_{\lambda }] = \EE [ \EE [T_\lambda | \mathcal{G}_{[r]\setminus \{j,k\}}]]= 0. 
\end{equation}
By computing the conditional variance of $T_\lambda$, we will have: 
\begin{equation}
    \mathrm{Var}(T_\lambda | \mathcal{G}_{[r]\setminus \{j,k\}}) = \EE \left [ T_\lambda^{2} | \mathcal{G}_{[r]\setminus \{j,k\}} \right ]. 
\end{equation}
Let $F(A_{j}) = T_\lambda$. Then, by \Cref{lemma:gaussian_poincare}:
\begin{equation}
    \mathrm{Var}(T_\lambda^{2} |\mathcal{G}_{[r]\setminus\{j\}}) \leq \dfrac{C}{d^{q}} \EE_{A_j} \left[ \| \nabla_{A_{j}} T_{\lambda}\|^{2}|\mathcal{G}_{[r]\setminus\{j\}}\right ].
\end{equation}
Now, note that $T_{\lambda}$ has the following form:
\begin{equation}
    T_{\lambda} = \mathrm{Contraction}(A_{i_1}, \dots, A_{i_r}, A_j, A_k).
\end{equation}
In particular, it is multi-linear in all of this arguments. Let $D_{H}$ be a directional derivative in direction $H$. If we denote by $\mathrm{Pos}(j)$  the set of positions such that $A_{i_1} = A_{j}$, then: 
\begin{equation}
    D_{H} T_{\lambda} = \sum_{p \in \mathrm{Pos}(j)} \mathrm{Contraction}(A_{i_1}, \dots, A_{i_{p-1}}, H, A_{i_{p+1}}, \dots).
\end{equation}
Then, by applying Cauchy-Schwarz: 
\begin{equation}
    \left | D_{H} T_{\lambda } \right | \leq C \sum_{p \in \mathrm{Pos}(j)} \|H\|_2 \prod_{\ell \not = p} \| K_{\ell}\|_2,
\end{equation}
where $K_{\ell}$ are all other tensors. Since all these tensors are already contractions, we can apply Cauchy-Schwarz again to get: 
\begin{equation}
    \left | D_{H} T_{\lambda } \right | \leq C \sum_{p \in \mathrm{Pos}(j)} \|H\|_2 \| A_{j}\|^{m_{j}-1}\prod_{\ell \not = j} \| A_{\ell}\|_2 \leq Cm_{j} \|H\|_2 \| A_{j}\|^{m_{j}-1}\prod_{\ell \not = j} \| A_{\ell}\|_2 .
\end{equation}
Taking supremum over the sphere, we conclude: 
\begin{equation}
    \| \nabla_{A_{j}} T_{\lambda}\| \leq C \| A_{j}\|^{m_{j}-1}\prod_{\ell \not = j} \| A_{\ell}\|_2^{m_\ell},
\end{equation}
and taking the square: 
\begin{equation}
     \| \nabla_{A_{j}} T_{\lambda}\| \leq C \| A_{j}\|^{2(m_{j}-1)}\prod_{\ell \not = j} \| A_{\ell}\|_2^{2m_{\ell}}.
\end{equation}
Finally, by Hanson-Wright we know that $\| A_{j}\|_2$ is $\Theta_d(1)$ with high probability. Therefore:
\begin{equation}
    \| \nabla_{A_{j}} T_{\lambda}\| \leq C \prod_{\ell \not = j} \| A_{\ell}\|_2^{2m_{\ell}},
\end{equation}
and in particular: 
\begin{equation}
    \mathrm{Var}(T_\lambda^{2} |\mathcal{G}_{[r]\setminus\{j\}}) \lesssim \dfrac{1}{d^{q}}\prod_{\ell \not = j} \| A_{\ell}\|_2^{2m_{\ell}}.
\end{equation}
By the Law of total variance, we can take expectation again to conclude: 
\begin{equation}
    \mathrm{Var}(T_{\lambda)}) \leq \dfrac{1}{d^{q}} . 
\end{equation}
Then, by applying Chebyshev Inequality and hypercontractivity, we conclude that with high probability over $A_{1}, \dots, A_{d_1}$
\begin{equation}
    |T_{\lambda}| \lesssim \dfrac{1}{d^{\frac{q}{2}}}. 
\end{equation}
Let's write all of this in a Lemma. 

\begin{lemma}
    Let $T_{\lambda}$ be defined as in \Cref{def_T_lambda}, for a
    set of tensors $A_{i_{1}}, \dots A_{i_{r}}$ . Then: 
    \begin{enumerate}
        \item If $\{ i_{1}, \dots, i_{r}\} \cap \{j,k\} = \emptyset$, then with high probability $|T_{\lambda}| \lesssim \dfrac{1}{d^{q}}$. 
        \item If $\{ i_{1}, \dots, i_{r}\} \cap \{j,k\} \not = \emptyset$, then with high probability $|T_{\lambda}| \lesssim \dfrac{1}{d^{\frac{q}{2}}}$.
    \end{enumerate}
    \label{lemma:control_T_lambda}
\end{lemma}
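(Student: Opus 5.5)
The plan is to treat $T_\lambda$ as a multilinear polynomial in the independent Gaussian tensors $A_{i_1},\dots,A_{i_r},A_j,A_k$, each with i.i.d.\ entries of variance $d^{-q}$. We then run a conditional mean/variance argument: show $\EE[T_\lambda]=0$, bound $\mathrm{Var}(T_\lambda)$ by the law of total variance, and convert the second-moment bound into a high-probability bound. For the last step we use Chebyshev together with Gaussian hypercontractivity (\Cref{lemma:gaussian_hypercontractivity_alignment}), which applies because $T_\lambda$ is a polynomial of bounded degree in Gaussians. Symmetrizations in the contractions $\widetilde\otimes$ only affect constants, since they are averages over $O_d(1)$ index permutations. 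We may therefore work with unsymmetrized contractions throughout.

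\textbf{Disjoint case.} Here $A_j$ and $A_k$ each appear exactly once, only through $f_{r+1}=A_j\widetilde\otimes_t A_k$. We write $T_\lambda=\sum_{a,b}w(a,b)A_{j,a}A_{k,b}$, where $w$ is a full contraction of the $f_{i_\ell}$ built from $A_{i_1},\dots,A_{i_r}$ only. Conditionally on $\mathcal G_{\{i_1,\dots,i_r\}}$, $T_\lambda$ is a centered bilinear form with variance $d^{-2q}\|w\|_F^2$.

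The key step is $\EE\|w\|_F^2=O_d(1)$. To get it, we apply Cauchy--Schwarz iteratively to the nested contraction, using $\|S\otimes_r T\|_F\le\|S\|_F\|T\|_F$. Each factor $\|A_{i_\ell}\widetilde\otimes_s A_{i_\ell}\|_F\le\|A_{i_\ell}\|^2$, and $\|A_{i_\ell}\|^2$ concentrates at $1$ by Hanson--Wright, with all moments bounded by hypercontractivity. The law of total variance then gives $\mathrm{Var}(T_\lambda)\lesssim d^{-2q}$. Chebyshev yields $|T_\lambda|\lesssim d^{-q}$ up to an arbitrarily slowly diverging factor; hypercontractivity sharpens this to polylogarithmic factors, which we absorb into $\lesssim$ as in \eqref{eq:order_t_i_j_k}.

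\textbf{Overlapping case.} Say $j\in\{i_1,\dots,i_r\}$, so that $A_j$ appears $m_j$ times. This count is odd: once in $f_{r+1}$ and an even number of times inside the $f_{i_\ell}=A_j\widetilde\otimes A_j$. The same holds for $m_k$. By the symmetry $A_j\mapsto -A_j$, which preserves the law, $\EE[T_\lambda\mid\mathcal G_{\text{others}}]=0$.

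Conditionally on all tensors other than $A_j$, we apply the Gaussian Poincaré inequality (\Cref{lemma:gaussian_poincare}) in the rescaled variable $\sqrt{d^q}A_j$. This gives $\mathrm{Var}(T_\lambda\mid\cdot)\le d^{-q}\,\EE\|\nabla_{A_j}T_\lambda\|^2$. By multilinearity, $\nabla_{A_j}T_\lambda$ is a sum of $m_j$ contractions with one slot of $A_j$ removed. Cauchy--Schwarz on each gives $\|\nabla_{A_j}T_\lambda\|\le m_j\|A_j\|^{m_j-1}\prod_{\ell\ne j}\|A_\ell\|^{m_\ell}$, whose moments are $O_d(1)$ by hypercontractivity and Hanson--Wright. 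Taking expectations, the law of total variance gives $\mathrm{Var}(T_\lambda)\lesssim d^{-q}$, and Chebyshev plus hypercontractivity give $|T_\lambda|\lesssim d^{-q/2}$ with high probability.

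\textbf{Main obstacle.} We expect the hardest part to be the uniform bound $\EE\|w\|_F^2=O_d(1)$ in the disjoint case. The naive Cauchy--Schwarz bound is fine in Frobenius norm, but one must check that no contraction pattern allowed by \Cref{lemma:cummulants} produces an index sum that is not absorbed into a norm.

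A second issue is the high-probability statement, since $T_\lambda$ is used for all $O(d_1^{r})$ index tuples at once. Pure Chebyshev is too weak for that, so we would use hypercontractive tail bounds of the form $\mathbb P(|X|>t\|X\|_2)\le e^{-ct^{2/N}}$, with $N$ the polynomial degree. A union bound over the polynomially many tuples then costs only a $\mathrm{polylog}(d)$ factor, which is harmless for the subsequent comparison with the linear term.
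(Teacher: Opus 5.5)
Your architecture matches the paper's: a centered bilinear form with conditional variance $d^{-2q}\|w\|_F^2$ in the disjoint case, then parity plus Gaussian Poincar\'e in $A_j$ in the overlapping case. The overlapping case is fine as you wrote it. Removing a single copy of $A_j$ from the network leaves no dangling index, so iterated Cauchy--Schwarz on $\nabla_{A_j}T_\lambda$ is legitimate. Your union-bound remark is a real improvement, since the paper ignores simultaneity over tuples.

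\textbf{The gap.} It lies in your justification of $\EE\|w\|_F^2=O_d(1)$ in the disjoint case. The kernel $w$ is \emph{not} a contraction of the $f_{i_\ell}$ alone. When $f_{r+1}=A_j\widetilde\otimes_tA_k$ with $t\ge1$, the tensors $A_j$ and $A_k$ share $t$ legs, and stripping both out leaves a Kronecker delta. Writing $a=(u,\ell)$ and $b=(v,\ell')$, one has, up to symmetrization, $w(a,b)=G_{u,v}\,\delta_{\ell\ell'}$. Here $G$ is the network of the $f_{i_\ell}$, with $2q-2t$ open legs where $f_{r+1}$ was attached. Hence $\|w\|_F^2\asymp d^{t}\|G\|_F^2$.

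Your bound $\|A\widetilde\otimes_sA\|_F\le\|A\|^2$ only gives $\|G\|_F\lesssim1$. That yields $\mathrm{Var}(T_\lambda)\lesssim d^{t-2q}$, i.e.\ $|T_\lambda|\lesssim d^{t/2-q}$, not $d^{-q}$. This already fails for the simplest term $\langle A_i\widetilde\otimes_tA_i,A_j\widetilde\otimes_tA_k\rangle$. The delta is exactly the ``index sum not absorbed into a norm'' you worried about, and it comes from $f_{r+1}$ itself for every pattern with $t\ge1$. The paper's phrase ``as in the linear case'' points to the compensating mechanism. There $\EE\,\mathrm{Var}(T^r\mid A_i,A_k)=d^{r-2q}\|A_i\otimes_rA_i\|_F^2$, and the factor $d^{r}$ is meant to be cancelled by decay of the partial self-contraction, $\EE\|A_i\otimes_rA_i\|_F^2=O(d^{-r})$, not by norm bounds.

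\textbf{Caveat: item 1 fails for $q\ge3$.} The compensation only works for $t\le q/2$, so item 1 cannot be closed as stated, by your route or the paper's, once $q\ge3$.
\begin{itemize}
\item The estimate \eqref{eq:frobenius_norm_A_r_A} misses the diagonal terms $u=v$. In fact $\EE\|A\otimes_tA\|_F^2\asymp d^{-t}+d^{t-q}$. For instance, $\mathrm{Tr}(A\otimes_{q-1}A)=\|A\|_F^2$ forces $\|A\otimes_{q-1}A\|_F^2\ge\|A\|_F^4/d$.
\item More precisely, $A_i\otimes_{q-1}A_i=d^{-1}\|A_i\|_F^2I_d+E$, where $\langle E,A_j\otimes_{q-1}A_k\rangle$ is of order $d^{-q}$. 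Therefore $\langle A_i\otimes_{q-1}A_i,A_j\otimes_{q-1}A_k\rangle\approx d^{-1}\|A_i\|_F^2\langle A_j,A_k\rangle\asymp d^{-1-q/2}$, which is $\gg d^{-q}$ for $q\ge3$.
\item Multiplied by $\|A_i\|_F^2$, this is also the leading part of the third-cumulant term $\langle(A_i\widetilde\otimes_0A_i)\widetilde\otimes_{2q-1}(A_i\widetilde\otimes_0A_i),A_j\widetilde\otimes_{q-1}A_k\rangle$ with $i\notin\{j,k\}$.
\end{itemize}
What your bilinear argument does prove is $|T_\lambda|\lesssim d^{t/2-q}\le d^{-(q+1)/2}$, and $d^{-q}$ when $t=0$. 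This is still below the overlapping-case rate $d^{-q/2}$. For $q=2$, where $t\le q/2$ always, the $d^{-q}$ bound is plausible, but it needs the decay of $\|A\otimes_1A\|_F$ rather than Cauchy--Schwarz on norms.
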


\subsubsection{Conclusion}

From \Cref{lemma:expectation_and_cumulants}, we had: 
\begin{equation}
    \EE \left[ g(h^{(2)}) I_{q}(A_{j}) I_{q}(A_{k})\right ] = \sum_{r=0}^{\mathrm{deg}(g)} \EE \left[ g^{(r)}(h^{(2)})\right ] \dfrac{1}{r!} \sum_{i_{1}, \dots, i_{r}=1}^{d_1} \lambda_{i_1} \cdots \lambda_{i_r} \sum_{s_1, \dots, s_r = 0}^{q-1}\sum_{t = 0}^{q-1} c_{q,s,t }\sum_{\lambda}T_{\lambda}.
\end{equation}
We will separate the linear part from the rest. We write: 
\begin{equation}
    \EE \left[ g(h^{(2)}) I_{q}(A_{j}) I_{q}(A_{k})\right ] = E_\mathrm{linear} + \underbrace{\sum_{r\geq 2}^{\mathrm{deg}(g)} \EE \left[ g^{(r)}(h^{(2)})\right ] \dfrac{1}{r!} \sum_{i_{1}, \dots, i_{r}=1}^{d_1} \lambda_{i_1} \cdots \lambda_{i_r} \sum_{s_1, \dots, s_r = 0}^{q-1}\sum_{t = 0}^{q-1} c_{q,s,t }\sum_{\lambda}T_{\lambda}}_{E_\mathrm{NL}}. 
\end{equation}
Let us focus on $E_\mathrm{NL}$. Replacing \Cref{lemma:control_T_lambda}, since the sum on the RHS concerns $O_{d}(1)$ terms: 
\begin{equation}
     E_\mathrm{NL}= O \left ( \sum_{r=2}^{\mathrm{deg}(g)} \EE \left[ g^{(r)}(h^{(2)})\right ] \dfrac{1}{d^{\frac{q}{2}}} \sum_{\substack{i_{1}, \dots, i_{r}=1\\ \{i_{1}, \dots, i_{r}\}\cap\{j,k\}\not = \emptyset  }}^{d_1} \lambda_{i_1} \cdots \lambda_{i_r}  + \sum_{r=2}^{\mathrm{deg}(g)} \EE \left[ g^{(r)}(h^{(2)})\right ] \dfrac{1}{d^{q}} \sum_{\substack{i_{1}, \dots, i_{r}=1\\ \{i_{1}, \dots, i_{r}\}\cap\{j,k\} = \emptyset  }}^{d_1} \lambda_{i_1} \cdots \lambda_{i_r}.
     \right )
\end{equation}
Since the sum of $\lambda_i$ is bounded  (by the same argument as the linear case), we get: 
\begin{equation}
    E_\mathrm{NL} = O \left ( \sum_{r=2}^{\mathrm{deg}(g)} \EE \left[ g^{(r)}(h^{(2)})\right ] \dfrac{1}{d^{\frac{q}{2}}} \sum_{\substack{i_{1}, \dots, i_{r}=1\\ \{i_{1}, \dots, i_{r}\}\cap\{j,k\}\not = \emptyset  }}^{d_1} \lambda_{i_1} \cdots \lambda_{i_r}\right ). 
    \label{eq:non_linear_almost}
\end{equation}
Now, from \Cref{lemma:linear_case_cross_expectations}, we already computed $v_{1}$, so: 
\begin{equation}
    E_\mathrm{linear} = \EE\left[ g(h^{(2)}) \right ] \langle A_j, A_k \rangle + \EE\left[ g'(h^{(2)}) \right ]  O\left ( \dfrac{\max(k,j)^{-\gamma}}{d^{\frac{q}{2}}}\right ).
\end{equation}
Analogously to the linear case, we can conclude: 
\begin{equation}
    E_\mathrm{linear} = \EE[g(h^{(2)})]\langle A_j, A_k \rangle + \EE\left[ g'(h^{(2)}) \right ]  O\left ( \dfrac{\max(k,j)^{-\gamma}}{d^{\frac{q}{2}}}\right ).
\end{equation}
From \Cref{eq:non_linear_almost} and he fact that all eigenvalues are in $[0,1]$, and $ r \geq 2$, we get that $E_\mathrm{NL}$ is sub-leading with respect to the linear term. To finish, we need the following Lemma, whose proof we postpone to \Cref{seciton:deferred_proofs}. 
\begin{lemma}
    Let $g:\RR \to \RR$ be a polynomial with information exponent $1$. Then:
    \begin{equation}
        \EE[g(h^{(2)})]= \dfrac{1}{\sqrt{d}} \quad \text{ and } \EE \left[ g'(h^{(2)}) \right ] = \nu_{1} + \dfrac{C}{\sqrt{d}},
    \end{equation}
    where $\nu_1$ is the first Hermite coefficient of $g$. 
    \label{lemma:information_exponent_g}
\end{lemma}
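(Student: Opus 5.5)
The plan is to expand $\EE[g(h^{(2)})]$ and $\EE[g'(h^{(2)})]$ exactly in cumulants of $h^{(2)}$. This avoids a Gaussian approximation, as in \Cref{section:int_by_parts}, and we then show that every non-Gaussian correction is of order $d^{-1/2}$. Since $g$ is a polynomial of degree $m$, $g(h^{(2)})$ is a polynomial in $h^{(2)}$, and its expectation is a fixed polynomial in the moments $\EE[(h^{(2)})^p]$, $p\le m$. Equivalently, via the moment–cumulant formula, it is a polynomial in $\kappa_1(h^{(2)}),\dots,\kappa_m(h^{(2)})$. Writing $\kappa_p:=\kappa_p(h^{(2)})$, the Gaussian point is $\kappa_1=0$, $\kappa_2=1$ and $\kappa_p=0$ for $p\ge3$. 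At that point the two expectations equal $\EE[g(Z)]=0$ and $\EE[g'(Z)]=\nu_1$, using that $g$ is centered with $\mathrm{IE}(g)=1$. By a first-order Taylor expansion of these fixed polynomials, it therefore suffices to prove
\begin{equation*}
\kappa_1=0,\qquad |\kappa_2-1|\lesssim d^{-1/2},\qquad |\kappa_p|\lesssim d^{-1/2}\quad (3\le p\le m+1),
\end{equation*}
with high probability over $A^{(1)}$ and the signs $z_i$. The constant $C$ in the statement then collects the derivatives of these polynomials at the Gaussian point, which depend only on $g$.

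\textbf{Step 1 (first two cumulants).} By construction $\kappa_1=\EE[h^{(2)}]=0$. For $\kappa_2$ we reuse the chaos computation of the linear case. By \Cref{lemma:product_formula} and \Cref{lemma:orthogonality_wiener_chaos},
\[
\EE[h^{(2)2}]=\tfrac12\sum_{i,i'}\lambda_i\lambda_{i'}\,\EE\big[(I_q(A_i)^2-1)(I_q(A_{i'})^2-1)\big].
\]
The diagonal terms give $\sum_i\lambda_i^2(1+O(\|A_i\|^2-1))=1+O(d^{-q/2})$, using the normalization of $Z_\gamma$ and Hanson–Wright. The off-diagonal terms are $\sum_r c_{q,r}^2\langle A_i\tilde\otimes_rA_i,A_{i'}\tilde\otimes_rA_{i'}\rangle+O(\langle A_i,A_{i'}\rangle^2)$. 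These are controlled exactly as $T^r_{i,j,k}$ was in \eqref{eq:order_t_i_j_k}, each term being $O(d^{-q})$. Summing them against the random signs $z_iz_{i'}$ with a Hanson–Wright bound over the Rademacher chaos gives $O(Z_\gamma^2 d^{-q}(\sum_i i^{-4\gamma})^{1/2}d_1^{0})=O(d^{-q/2})$ under $\varepsilon<q/2$. Hence $|\kappa_2-1|\lesssim d^{-q/2}\le d^{-1/2}$.

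\textbf{Step 2 (higher cumulants).} By multilinearity,
\[
\kappa_p=2^{-p/2}\sum_{i_1,\dots,i_p}\lambda_{i_1}\cdots\lambda_{i_p}\,\kappa(Y_{i_1},\dots,Y_{i_p}),
\]
and each joint cumulant is expanded through \Cref{lemma:cummulants} into the contraction terms $T_\lambda$. We split according to the pattern of coincidences among the indices. Tuples with some index appearing exactly once, and not tied to any other index, are centred in that $A_i$, by the parity argument used before \Cref{lemma:control_T_lambda}. Their contributions therefore carry the $d^{-q/2}$ or $d^{-q}$ factors of \Cref{lemma:control_T_lambda} and \Cref{claim:disjoint_case}. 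After summing against the signs $z_i$, these terms are again $O(d^{-q/2})$. The fully diagonal tuples $i_1=\dots=i_p$ give the chi-square cumulants $\kappa_p((\he_1)^2)\sum_i\lambda_i^p$, up to $O(d^{-q/2})$. The remaining partially diagonal tuples interpolate between these two cases.

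\textbf{Main obstacle.} The hardest step is the diagonal sum $\sum_i\lambda_i^p=Z_\gamma^p\sum_i z_i^p i^{-p\gamma}$, for which we must reach the scale $d^{-1/2}$ stated in the lemma. For odd $p$ I would first exploit the Rademacher signs: $\sum_i z_i i^{-p\gamma}$ concentrates at $(\sum_i i^{-2p\gamma})^{1/2}$. This gives $|\kappa_p|\lesssim Z_\gamma^p(\sum_i i^{-2p\gamma})^{1/2}$, which is at most $\max_i|\lambda_i|^{p-1}\lesssim Z_\gamma^{2}$. For even $p\ge4$ there is no sign cancellation, and the bound is $\max_i\lambda_i^2\sum_i\lambda_i^2\lesssim Z_\gamma^2$. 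In both cases the plan is to verify that $Z_\gamma^2\lesssim d^{-1/2}$ in the regime of \Cref{ass:readout-regimes}. In the delocalized case $Z_\gamma^2\asymp d_1^{2\gamma-1}=d^{-\varepsilon(1-2\gamma)}$, which is where the growth condition linking $\varepsilon$ and $\gamma$ must enter. In the identity case (i) the claim is trivial, since then $g'\equiv1=\nu_1$ and $\EE[g(h^{(2)})]=0$.

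If this comparison with $d^{-1/2}$ fails for part of the parameter range, the argument still produces the structural statement $\EE[g(h^{(2)})],\ \EE[g'(h^{(2)})]-\nu_1=O(Z_\gamma^2+d^{-q/2})$. I would then record explicitly which condition on $(\varepsilon,\gamma)$ makes this $O(d^{-1/2})$.
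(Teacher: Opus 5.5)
Your cumulant route differs from the paper's. The paper, in the deferred corollary and under $\gamma<1/2$, bounds $\mathcal W_1(h^{(2)},N)$ with the one-dimensional Malliavin--Stein inequality of \Cref{lemma:one_dimensional_CLT}, then (only sketched) approximates $g$ by Lipschitz functions of bounded support. For polynomial $g$ your expansion is cleaner, since it is exact and needs no truncation.

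\textbf{The gap.} Your argument does not close, and the fix you propose cannot work. Reaching $d^{-1/2}$ needs $Z_\gamma^2\asymp d^{-\varepsilon(1-2\gamma)}\lesssim d^{-1/2}$, which is a \emph{lower} bound $\varepsilon(1-2\gamma)\ge 1/2$. The paper only imposes upper bounds, $\varepsilon<q/2$ and $\varepsilon<q/(1-2\gamma)$.

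More importantly, the obstruction you found is real and is not an artifact of your bookkeeping. Already for exactly i.i.d.\ Gaussian $h^{(1)}_i$ one has $\kappa_4(h^{(2)})=12\sum_i\lambda_i^4$, and the Wiener-chaos corrections are of lower order. Moreover $\sum_i\lambda_i^4=Z_\gamma^4\sum_{i\le d_1}i^{-4\gamma}\asymp\max(d_1^{-1},Z_\gamma^4)$, up to a logarithm at $\gamma=1/4$, and this is deterministic and positive.

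Now take $g=\mathrm{id}+\he_4$, which is centered with $\EE[g'(Z)]=1$. Then $\EE[g(h^{(2)})]\approx\kappa_4/\sqrt{24}\asymp\max(d^{-\varepsilon},d^{-2\varepsilon(1-2\gamma)})$. This is much larger than $d^{-1/2}$ whenever $\varepsilon<1/2$ or $\varepsilon(1-2\gamma)<1/4$. For example, at the experimental values $\varepsilon=1/2$, $\gamma=0.4$ it is $\asymp d^{-0.2}$.

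So the stated $d^{-1/2}$ rate is false on part of the admissible range. Your argument should conclude with what you call the structural statement: both $\EE[g(h^{(2)})]$ and $\EE[g'(h^{(2)})]-\nu_1$ are $O(Z_\gamma^2)$ up to lower-order contraction terms. Hence they are $o_d(1)$ for $\gamma<1/2$. You are right to exclude $\gamma>1/2$ for nonlinear $g$: there $\lambda_1=\Theta(1)$ and $h^{(2)}$ is not asymptotically Gaussian.

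\textbf{Where the paper's rate comes from.} The paper obtains $d^{-1/2}$ only through \Cref{lemma:double_contraction_expectation}. Its degree-of-freedom count uses $s_{a,b}\le q-\min(r_1,r_2)\le q-1$, which fails for $r_1=r_2=0$, $r_3=q$. In that case $(A_i\otimes A_i)\otimes_q(A_i\otimes A_i)=\|A_i\|^2A_i\otimes A_i$ has norm $\Theta(1)$, so $\|K_{2q}\|_F^2\gtrsim\sum_i\lambda_i^4$. A corrected Malliavin--Stein argument therefore gives only $\mathcal W_1\lesssim(\sum_i\lambda_i^4)^{1/2}+\dots$, which is no better than your cumulant bound.

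\textbf{Smaller points in Steps 1--2.} None of these affects the $o_d(1)$ conclusion.
\begin{itemize}
\item $\kappa_1=\frac{1}{\sqrt2}\sum_i\lambda_i(\|A_i\|^2-1)$ is not exactly zero; it is $O(d^{-q/2})$ thanks to the random signs.
\item The parity argument does not apply to an index occurring once among the $Y_i$'s. Each $Y_i$ contains $A_i$ twice, and for instance $\EE[A_i\otimes_{q-1}A_i]$ is of order $d^{-1}I_d$, not zero. The cancellation must come from the Rademacher signs, and the per-tuple size is $O(d^{-1})$ rather than $O(d^{-q/2})$.
\item The same applies to the off-diagonal terms of $\kappa_2$: they are quadratic in each $A_i$, not bilinear like $T^r_{i,j,k}$.
\item Tuples such as $(i,i,j,j)$ carry no sign cancellation. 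You need a uniform bound on $\kappa(Y_i,Y_i,Y_j,Y_j)$ there; saying these tuples ``interpolate'' is not enough.
\end{itemize}
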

Combining this with \Cref{lemma:linear_case_cross_expectations}, we get that the first constant term is sub-leading and we conclude:  
\begin{lemma}
    Let $h^{(2)} = \sum_{i=1}^{d_1} \lambda_i (I_{q}(A_i)^{2} -1)$, and let $g$ be a polynomial with information exponent $1$. Then, given $i, j \in [d_1]$ with $i \not = j$: 
    \[
     \left | \EE \left [ g(h^{(2)}) I_{q}(A_j) I_{q}(A_k)\right ] \right | = O \left ( \dfrac{Z_\gamma}{d^{q}}(j^{-\gamma} + k^{-\gamma}))\right ). 
    \]\label{lemma:non_linear_case_cross_expectations}
\end{lemma}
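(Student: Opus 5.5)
The proof should assemble three ingredients already established: the exact expansion of \Cref{lemma:expectation_and_cumulants}, the bounds on the contraction terms in \Cref{lemma:control_T_lambda}, and the moment statements of \Cref{lemma:information_exponent_g}. I would organise it by splitting the expectation as
\[
\EE\left[g(h^{(2)})I_q(A_j)I_q(A_k)\right]=\EE[g(h^{(2)})]\,q!\langle A_j,A_k\rangle+\EE[g'(h^{(2)})]\,v_1+E_{\mathrm{NL}},
\]
and then bounding each of the three pieces separately.

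\emph{Constant term.} Here I would use that $j\neq k$ and that $A_j,A_k$ are independent Gaussian vectors with entries of variance $d^{-q}$. Bernstein's inequality then gives $|\langle A_j,A_k\rangle|=O(d^{-q/2})$ with high probability. \Cref{lemma:information_exponent_g} gives $|\EE[g(h^{(2)})]|=O(d^{-1/2})$. The product is therefore $O(d^{-q/2-1/2})$, and I would check that this is dominated by the target bound.

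\emph{Linear term.} Here $v_1=\kappa_1(I_q(A_j)I_q(A_k)-q!\langle A_j,A_k\rangle,S)$ is exactly the quantity bounded in \Cref{lemma:linear_case_cross_expectations}. \Cref{lemma:information_exponent_g} also gives $\EE[g'(h^{(2)})]=\nu_1+O(d^{-1/2})$, which is $\Theta(1)$. I would then redo the bookkeeping of the proof of \Cref{lemma:linear_case_cross_expectations}, keeping the weights $\lambda_j,\lambda_k$ explicit rather than summing their squares. The terms with $i\notin\{j,k\}$ are a Rademacher sum of $O(d^{-q})$ contractions. After Bernstein over the $z_i$ they contribute $O(Z_\gamma d^{-q}(\sum_i i^{-2\gamma})^{1/2})=O(d^{-q})$. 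The diagonal terms $i\in\{j,k\}$ carry the explicit factors $\lambda_j=Z_\gamma z_j j^{-\gamma}$ and $\lambda_k=Z_\gamma z_k k^{-\gamma}$, which produces the $Z_\gamma(j^{-\gamma}+k^{-\gamma})$ prefactor in the statement.

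The delicate point is the size of $T^r_{i,i,j}$. The bound \Cref{eq:order_t_i_i_j} gives only $d^{-(q+r)/2}$. It is $d^{-q}$ only for $r\ge q$, and the $r=0$ contribution is of order $d^{-q/2}$. I would therefore either recompute the $r=0$ term sharply, using that $\langle A_i\otimes A_i,A_i\otimes A_k\rangle=\|A_i\|^2\langle A_i,A_k\rangle$ is a single Gaussian inner product of size $d^{-q/2}$, or accept the rate $d^{-q/2}$. If the $d^{-q}$ rate really requires more than the $r=0$ computation, I would record the honest bound with $d^{-q/2}$ instead.

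\emph{Nonlinear remainder.} For $E_{\mathrm{NL}}$ the cumulant orders are $r\ge2$. Every $T_\lambda$ is bounded via \Cref{lemma:control_T_lambda}, and there are $O_d(1)$ contraction patterns per multi-index. The products $\lambda_{i_1}\cdots\lambda_{i_r}$ with at least one index in $\{j,k\}$ are at most $Z_\gamma(j^{-\gamma}+k^{-\gamma})$ times an $(r-1)$-fold sum of weights. By Rademacher concentration this sum is $O((\sum_i\lambda_i^2)^{(r-1)/2})=O(1)$. The disjoint multi-indices are handled similarly using the $d^{-q}$ bound.

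\textbf{Main obstacle.} The hardest step is making sure the $r\ge2$ sums over free indices do not pick up factors of $d_1$. This is where I would rely on the delocalization assumption $\gamma<1/2$ together with $\varepsilon<q/(1-2\gamma)$. I would control such a sum as a Rademacher chaos of order $r-1$ and apply hypercontractivity in the $z_i$ (\Cref{lemma:gaussian_hypercontractivity_alignment}) to obtain $O(1)$ with high probability. A union bound over the $O(d_1^2)$ pairs $(j,k)$ then costs only polylogarithmic factors.
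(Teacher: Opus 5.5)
Your three-way split is the paper's own. The paper also starts from \Cref{lemma:expectation_and_cumulants}, isolates $\EE[g(h^{(2)})]\langle A_j,A_k\rangle$, $\EE[g'(h^{(2)})]\,v_1$ and the $r\ge 2$ remainder $E_{\mathrm{NL}}$, and bounds them with \Cref{lemma:linear_case_cross_expectations}, \Cref{lemma:control_T_lambda} and \Cref{lemma:information_exponent_g}. Your Rademacher-chaos treatment of $E_{\mathrm{NL}}$ is a more careful version of its ``eigenvalues in $[0,1]$, $r\ge 2$'' step.

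The gap is that this route cannot reach the stated $d^{-q}$ rate, and no route can, because that rate is false. Your worry about $r=0$ is the right one, but the computation you offer as a possible rescue settles the question the other way. The $r=0$ part of $T^0_{j,j,k}$ contains a fixed positive multiple of $\|A_j\|^2\langle A_j,A_k\rangle$, plus smaller contractions. This is the chaos version of $\EE[(X^2-1)XY]=2\,\EE[XY]$ for a correlated pair of standard Gaussians, and $\langle A_j,A_k\rangle$ is of exact order $d^{-q/2}$. The term enters $v_1$ with weight $\lambda_j$, and symmetrically with weight $\lambda_k$ of independent sign. On the event $z_j=z_k$ the two contributions add. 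Hence, already for $g(t)=t$, $\gamma>1/2$ and fixed $j,k$, the expectation is of order $d^{-q/2}\gg d^{-q}$ with probability bounded away from zero.

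The paper's proof has the same inconsistency. It records $E_{\mathrm{linear}}$ with an $O(d^{-q/2})$ error and bounds $E_{\mathrm{NL}}$ at scale $d^{-q/2}$, yet concludes $d^{-q}$. Its only use of the lemma, in \Cref{lemma:bound_square_y_s}, invokes $O(Z_\gamma d^{-q/2}\min(j,k)^{-\gamma})$. So resolve your hedge: the provable statement is $O(Z_\gamma d^{-q/2}(j^{-\gamma}+k^{-\gamma}))$, and that is what your diagonal and nonlinear estimates give.

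Even for that corrected target, you must compare explicitly, not just ``check'', the pieces that carry neither $\lambda_j$ nor $\lambda_k$. These are the constant term, $O(d^{-(q+1)/2})$, and the off-diagonal part of $v_1$ together with the disjoint part of $E_{\mathrm{NL}}$, which your Rademacher bounds give as $O(d^{-q})$. Against the statement as written none of them is absorbed. For $q\ge 2$, $d^{-(q+1)/2}\gg d^{-q}$, so the check you announce for the constant term would fail. Also $Z_\gamma(j^{-\gamma}+k^{-\gamma})=o_d(1)$ when $\gamma<1/2$, and for large $j,k$ when $\gamma>1/2$.

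Against $Z_\gamma d^{-q/2}(j^{-\gamma}+k^{-\gamma})$, note that the prefactor can be as small as order $d_1^{-1/2}$ when $\gamma<1/2$ and $d_1^{-\gamma}$ when $\gamma>1/2$. The $d^{-q}$ pieces are then absorbed automatically for $\gamma<1/2$, since this needs only $\varepsilon\le q$. For $\gamma>1/2$ they are absorbed only if $\varepsilon\gamma\le q/2$. The $d^{-(q+1)/2}$ constant term of the nonlinear regime (ii) is absorbed only if $\varepsilon\le 1$. Outside these ranges the lemma has to carry an additive $O(d^{-q}+d^{-(q+1)/2})$ term, and so does \Cref{lemma:bound_square_y_s}.
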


\subsection{\texorpdfstring{Studying $\EE[\hat{C}]$}{Studying E[C]}}
\label{section:E[C]_proof}
The objective of this section is to prove the following Lemma:  
\begin{lemma}
Under the assumptions of \Cref{thm:sample_complexity}: 
    \[
    \EE \left [C^{(1)} \right ] = \dfrac{\EE\left[ g'(h^{(2)})\right ]}{\sqrt{2}}A^{(1)} D_{\gamma} (A^{(1)})^T + \Delta, 
    \]
    where $\| \Delta \|_\op = o_{d}(1)$,  $A^{(1)} = [u_1, \dots u_{d_1}] \in \mathbb{R}^{D \times d_1}$, and $D_\gamma = \mathrm{diag} ( \lambda_1, \dots, \lambda_{d_1}) \in \RR^{d_1 \times d_1}$. 
    \label{lemma:expectation_C_1_2}
\end{lemma}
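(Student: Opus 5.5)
The plan is to compute $\EE[\hat C]$ entrywise in a basis adapted to the teacher, and to reduce every entry to the cross-expectations already controlled in \Cref{lemma:linear_case_cross_expectations} and \Cref{lemma:non_linear_case_cross_expectations}. Since $\hat C = \frac1n\sum_\mu y_\mu(F_\mu F_\mu^\top - I_D)/\sqrt2$, we have, for any fixed unit vectors $a,b\in\RR^D$ (identified with symmetric tensors),
\[
a^\top \EE[\hat C]\, b = \tfrac{1}{\sqrt2}\Big(\EE\big[g(h^{(2)}) I_q(a) I_q(b)\big] - \langle a,b\rangle\, \EE[g(h^{(2)})]\Big).
\]
The second term is $O(d^{-1/2})$ by \Cref{lemma:information_exponent_g}, so it is negligible in operator norm, since it is a multiple of the identity. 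I therefore decompose $\RR^D = U\oplus U^\perp$, with $U=\mathrm{span}(A^{(1)}_1,\dots,A^{(1)}_{d_1})$, and treat the three blocks $UU$, $UU^\perp$ and $U^\perp U^\perp$ separately.

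\textbf{Step 1: the $UU$ block.} For the diagonal entries $a=b=A_k$, I apply \Cref{lemma:exact_expectation_polynomial} with $P=Q=A_k$. The $r=1$ term is $\EE[g'(h^{(2)})]\,\kappa_2(I_q(A_k)^2,S)$. By the product formula and chaos orthogonality (the computation in \Cref{eq:mid_step_M_linear}), its leading contribution is $\lambda_k\,\mathrm{Var}(I_q(A_k)^2)\approx 2\lambda_k$, and the remaining terms are $O(Z_\gamma d^{-q/2})$. This produces the diagonal $\frac{\EE[g']}{\sqrt2}\lambda_k$, with the $\sqrt2$ normalization coming from $\He_2$. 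Higher $r$ terms are bounded as in \Cref{lemma:control_T_lambda}. For the off-diagonal entries $j\neq k$ I use \Cref{lemma:non_linear_case_cross_expectations} (or \Cref{lemma:linear_case_cross_expectations} under regime (i)), which gives entries $O(Z_\gamma d^{-q/2}(j^\gamma+k^\gamma))$. The Frobenius norm of this $d_1\times d_1$ error is then at most $Z_\gamma d_1^{1+\gamma} d^{-q/2}$, and this is $o_d(1)$ under the small-$\varepsilon$ conditions ($\varepsilon<q/2$ and $\varepsilon<q/(1-2\gamma)$). Finally, the $A_i$ are not exactly orthonormal, so I replace $\sum\lambda_i u_iu_i^\top$ by $A^{(1)}D_\gamma A^{(1)\top}$ using $\|A^{(1)\top}A^{(1)}-I\|_\op\lesssim\sqrt{d_1/d^q}$, the same covariance bound used in the perturbation section.

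\textbf{Step 2: the blocks involving $U^\perp$.} This is the main obstacle, because $U^\perp$ has dimension about $d^q$ and entrywise bounds do not sum to an operator-norm bound. For $b\in U^\perp$, I write $I_q(b)$ in terms of $x$, and use that under Gaussian $A$ the label depends on $F$ only through its $U$-projection up to non-Gaussian corrections. I then bound $\sup_{\|a\|=\|b\|=1}$ of the cumulant expansion of \Cref{lemma:exact_expectation_polynomial} with $P=a$, $Q=b$. Each cumulant is a contraction $\langle \cdots (A_{i}\tilde\otimes_s A_i)\cdots, a\tilde\otimes_t b\rangle$. For $t=q$ the contraction is $\langle a,b\rangle$ times a scalar that is absorbed into the identity term. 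For $t<q$, Cauchy–Schwarz gives a bound $\|a\tilde\otimes_t b\|\,\|\text{rest}\|$, and the rest has norm $O(d^{-t/2})$ by \Cref{eq:frobenius_norm_A_r_A}. I would try to show that this gives $\|P_{U^\perp}\EE[\hat C]\|_\op\lesssim \sum_i|\lambda_i|\,d^{-1/2}\lesssim Z_\gamma d_1 d^{-1/2}$, uniformly in $(a,b)$. The variational bound makes the estimate uniform without needing a net over the sphere, since the contraction is bilinear. I then check that this is $o_d(1)$ in the stated regime, using $Z_\gamma d_1\asymp d_1^{1/2+\gamma}$ for $\gamma<1/2$.

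\textbf{Step 3: conclusion.} I sum the block errors, using the triangle inequality in operator norm, to obtain $\Delta$ with $\|\Delta\|_\op=o_d(1)$. For the nonlinear regime, the $r\ge2$ cumulant terms are subleading by the same count as in the conclusion of the nonlinear subsection. That count uses $0<\gamma<1/2$, under which the $\lambda_i$ are delocalized, $|\lambda_i|\le Z_\gamma\to0$.
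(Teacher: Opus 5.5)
Your route differs from the paper's. The paper never splits $\RR^D=U\oplus U^\perp$. It writes $B^\top\EE[\hat C]B=\tfrac{1}{\sqrt2}\EE[g(h^{(2)})(I_q(B)^2-1)]$ for an arbitrary unit $B$ and integrates by parts twice. It then reads off the signal $\sum_i\lambda_i\langle A_i,B\rangle^2$ from the top chaos of the first-order term. Every remainder is controlled uniformly in $B$ through $\|B\tilde\otimes_r B\|_F\le 1$ times norms of tensors built only from the $A_i$. Your $UU$ block is workable modulo constants. Note, though, that your Frobenius bound $Z_\gamma d_1^{1+\gamma}d^{-q/2}$ needs $\varepsilon<q/(1+4\gamma)$ in regime (ii), which $\varepsilon<q/2$ does not give once $\gamma\ge 1/4$. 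The real problem is Step 2, which you yourself flag as the obstacle: it has a genuine gap.

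\textbf{The decay claim fails at $t=0$.} You assert that for $t<q$ the ``rest'' has Frobenius norm $O(d^{-t/2})$. At $t=0$ this is false. In the first-order term the rest is $\sum_i\lambda_i A_i\tilde\otimes A_i$, whose Frobenius norm is $\Theta(1)$, since it is the signal itself. In the $r\ge 2$ terms the rest contains pieces such as $\sum_{i,j}\lambda_i\lambda_j\langle A_i,A_j\rangle\,A_i\otimes A_j$, together with partial contractions of $A_i\otimes A_i$ against $A_j\otimes A_j$. Cauchy--Schwarz against $\|a\tilde\otimes b\|_F\le 1$ therefore yields nothing.

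\textbf{What is actually needed.} You must track how the indices of $a\otimes b$ are distributed among the copies of the $A_i$.
\begin{itemize}
\item Patterns in which a unit vector is fully contracted against a single $A_i$ either vanish (for $b\in U^\perp$) or are controlled by the balanced flattening. That flattening satisfies $\|\sum_i\lambda_iA_iA_i^\top\|_{\mathrm{op}}\lesssim\max_i|\lambda_i|=Z_\gamma$, by near-orthonormality of the $A_i$.
\item Patterns that split the indices across different tensors are controlled by operator norms of unbalanced flattenings of the $A_i$, which are $\lesssim d^{-1/2}$.
\end{itemize}
This is the paper's Case~3. Its key feature is that the $\lambda$-weighted sums stay \emph{inside} the norm, grouped into $T=\sum_i\lambda_iA_i\otimes A_i$, which gives $\|\Delta\|_{\mathrm{op}}\lesssim Z_\gamma^2$.

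\textbf{Why your bound does not close.} Your per-$i$ triangle inequality costs a factor $(\sum_i|\lambda_i|)^r\asymp d_1^{r/2}$ in the $r$-th cumulant. Even at first order it discards the Rademacher signs, which are what tame the nonzero mean $\EE[A_i\otimes_t A_i]\propto d^{t-q}I$. Consequently your target $Z_\gamma d_1 d^{-1/2}\asymp d^{\varepsilon(1/2+\gamma)-1/2}$ does not vanish under the stated hypotheses, for example $q=2$ with $\varepsilon$ close to $1$.

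\textbf{Two further points.} First, saying the label depends on $F$ only through its $U$-projection ``up to non-Gaussian corrections'' is the Gaussian-equivalence heuristic this lemma is meant to justify, so it cannot be used as a step. Second, the count you borrow in Step 3 comes from the $j\neq k$ cross-expectation lemma, which is not uniform over unit $(a,b)$. It therefore does not transfer to the $U^\perp$ blocks.
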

The proof is similar to the one in \Cref{lemma:non_linear_part_expectations}. If $g$ is linear, the result is trivial, so we will focus on the non-linear setting, where we assume $0 \leq \gamma < \frac{1}{2}$. The linear setting will follow as a corollary. 

Recall that
\begin{equation}
    \hat{C} = \dfrac{1}{n} \sum_{\mu = 1}^{n} y_\mu He_2(F_\mu), 
\end{equation}
with $F_\mu = \mathcal{F}(\He_{q}(x_\mu))$, and $\He_{q}(x_\mu)$ is the degree $q$ Hermite tensor. Then the expectation is given by: 
\begin{equation}
    \EE \left [ \hat{C}\right ] = \EE \left [ y_\mu He_2(F_\mu)\right].
\end{equation} 
We want to prove that this expectation is close in operator norm to: 
\begin{equation}
    \EE[g'(h^2)] \sum_{i=1}^{d_1} \lambda_i A^{(1)}_i (A_i^{(1)})^T. 
\end{equation}
Let $B(d,q$ be the dimension of the vectors $A^{(1)}_i$. Then: 
\begin{align}
    \| \EE\left [\hat{C} \right ] -  \dfrac{\EE[g'(h^2)]}{\sqrt{2}} \sum_{i=1}^{d_1} \lambda_i A^{(1)}_i (A_i^{(1)})^T\|_\op  = \max_{B \in \RR^{B(d,q)}, \| B\|_2 = 1} \left | B^T \EE\left[ \hat{C} \right ] B - \dfrac{\EE[g'(h^2)]}{\sqrt{2}} \sum_{i=1}^{d} \lambda_i \langle A_i, B \rangle^{2}\right |. 
    \label{eq:operator_norm_subtraction_1}
\end{align}
Denote $Y_{B} : = I_{q}(B)^{2}-1$. Then: 
\begin{equation}
    B^T \He_{2}(F_{\mu}) B = \dfrac{1}{\sqrt{2}} (\langle B, F_\mu\rangle^{2}-\underbrace{1}_{=\| B\|^2}) = \dfrac{1}{\sqrt{2}}(I_{q}(B)^{2}-1) = \dfrac{1}{\sqrt{2}} F_{B}.
\end{equation}
Therefore:
\begin{equation}
    B^T \EE\left[ \hat{C} \right ] B = \dfrac{1}{\sqrt{2}}\EE \left [ g(h^{(2)})Y_{B}\right ].
\end{equation}
Then, going back to \Cref{eq:operator_norm_subtraction_1}:
\begin{equation}
    \| \EE\left [\hat{C} \right ] -  \EE[g'(h^2)] \sum_{i=1}^{d_1} \lambda_i A^{(1)}_i (A_i^{(1)})^T\|_\op  = \dfrac{1}{\sqrt{2}}\max_{B \in \RR^{B(d,q)}, \| B\|_2 = 1} \left | \EE \left [ g(h^{(2)})Y_{B}\right ]- \sum_{i=1}^{d} \lambda_i \langle A_i, B \rangle^{2}\right |.
\end{equation}
Let
\begin{equation}
    \Delta:=\max_{B \in \RR^{B(d,q)}, \| B\|_2 = 1} \left | \EE \left [ g(h^{(2)})Y_{B}\right ]- \sum_{i=1}^{d} \lambda_i \langle A_i, B \rangle^{2}\right |.
    \label{eq:operator_norm_subtraction}
\end{equation}
If we conclude that $\Delta$ is $o_{d}(1)$, we complete the proof. Our objective is hence to prove this. We will do this in a three steps.

In the following, we will extensively use the Wiener decomposition of $Y_{B}$ and similar random variables, which is computed by \Cref{lemma:product_formula}. For any symmetric tensor $C \in (\RR^{d})^{\odot q}$: 
\begin{equation}
    (I_{q}(C)^{2} -1)  = \sum_{r=0}^{q-1}c_{q,r} I_{2q-2r}(C\tilde{\otimes}_{r} C). 
    \label{eq:wiener_decomp_Y_B}
\end{equation}

\subsubsection{Step 1: Integration by Parts}

We will first try to write the term $\EE\left[ g(h^{(2)}) Y_{B}\right ]$ in \Cref{eq:operator_norm_subtraction} in the form:
\begin{equation}
    \EE\left[ g(h^{(2)}) Y_{B}\right ] = \sum_{i=1}^{d} \lambda_i \langle A_i, B \rangle^{2} + \text{other term}. 
\end{equation} 
The tool for this is \Cref{lemma:gaussian_ibp}, Integration by Parts. We will apply it twice. On a first iteration, since $Y_{B}$ is centered, we have: 
\begin{align}
    \EE\left[ g(h^{(2)}) Y_{B}\right ] =  \EE \left [ g'(h^{(2)}) \langle Dh^{(2)}, DL^{-1}Y_{B \rangle}\right ].
\end{align}
Now we apply integration by parts again and obtain:
\begin{align}
   \EE\left[ g(h^{(2)}) Y_{B}\right ]  &= \EE \left [ g'(h^{(2)}) \right ]\EE \left [ \langle Dh^{(2)}, DL^{-1}Y_{B \rangle}\right ] \\
    & +  \EE \left [ g^{(2)}(h^{(2)}) \langle Dh^{(2)}, D \left ( \langle Dh^{(2)}, DL^{-1}Y_{B}\rangle \right ) \rangle\right ].
    \label{eq:int_by_parts_twice}
\end{align}

Now, by the definition of $h^{(2)}$, and the linearity of the derivative: 
\begin{align}
    \EE \left [ \langle Dh^{(2)}, DL^{-1}Y_{B } \rangle \right ]  & = \sum_{i=1}^{d} \lambda_{i} \EE \left [  \langle D(I_{q}(A_{i})^{2}-1), DL^{-1}Y_{B } \rangle  \right ].
\end{align}
Define $Y_{i} = (I_{q}(A_{i})^{2}-1)$. Then, by doing inverse Gaussian integration by parts: 
\begin{equation}
    \EE \left [ \langle Dh^{(2)}, DL^{-1}Y_{B } \rangle \right ]   = \sum_{i=1}^{d} \lambda_{i} \EE \left [ Y_{i}Y_{B}\right ],
\end{equation}
and by applying \Cref{eq:wiener_decomp_Y_B}, and the orthogonality of Wiener Chaos \Cref{lemma:orthogonality_wiener_chaos}: 
\begin{align}
    \EE \left [ \langle Dh^{(2)}, DL^{-1}Y_{B } \rangle \right ] &= \sum_{i=1}^{d} \lambda_{i} \sum_{r=0}^{q-1} c_{q,r} \langle A_{i} \tilde{\otimes}_{r} A_{i}, B\tilde{\otimes}_{r} B \rangle \\
    & = \sum_{i=1}^{d} \lambda_{i} c_{q,1}\langle A_{i} \tilde{\otimes} A_{i}, B\tilde{\otimes} B \rangle+ \sum_{i=1}^{d} \lambda_{i} \sum_{r=1}^{q-1} c_{q,r} \langle A_{i} \tilde{\otimes}_{r} A_{i}, B\tilde{\otimes}_{r} B \rangle.
    \label{eq:intermediate_step_2}
\end{align}
By \Cref{lemma:simple_self_and_cross_contractions}, we have that for all $r \in [q-1]$:
\begin{equation}
    \EE_{A_i} \left [\|A_i^{(1)} \otimes_{r}A_i^{(1)}\|^{2}_{F} \right ] = O\left(\dfrac{1}{d^{-r}}\right ).
    \label{eq:intermediate_step_1}
\end{equation}
Then with high probability: 
\begin{equation}
    \|A_i^{(1)} \otimes_{r}A_i^{(1)}\|^{2}_{F} \lesssim  \dfrac{1}{d^{-r}}
    \label{eq:high_prob_bound_a_i_a_i}
\end{equation}
On the other hand, using the fact that $\lambda_{i}=z_{i}Z_{\gamma} i^{-\gamma}$, with $z_{i}\sim \mathrm{Rad}(\frac{1}{2})$, for fixed $A's$, we can apply Bernstein's Inequality \cite{vershynin2018high} to get: 
\begin{align}
    \left |\sum_{i=1}^{d} \lambda_{i} \sum_{r=1}^{q-1} c_{q,r} \langle A_{i} \tilde{\otimes}_{r} A_{i}, B\tilde{\otimes}_{r} B \rangle \right | & \lesssim \sqrt{\sum_{i=1}^{d} Z_{\gamma}^{2}i^{-2\gamma} \left ( \sum_{r=1}^{q-1} c_{q,r} \langle A_{i} \tilde{\otimes}_{r} A_{i}, B\tilde{\otimes}_{r} B \rangle  \right )^{2}} \\
    & \leq \sqrt{\sum_{i=1}^{d} Z_{\gamma}^{2}i^{-2\gamma}\sum_{r=1}^{q-1} c_{q,r} \langle A_{i} \tilde{\otimes}_{r} A_{i}, B\tilde{\otimes}_{r} B \rangle^{2}  } \\
    & \leq \sqrt{\sum_{i=1}^{d} Z_{\gamma}^{2}i^{-2\gamma}\sum_{r=1}^{q-1} c_{q,r} \| A_{i} \tilde{\otimes}_{r} A_{i}\|^{2}_{F}\| B\tilde{\otimes}_{r} B \|_{F}^{2}  },
\end{align}
where in the last line we applied Cauchy Schwartz. Since $\| B\|^{2}_2$ by definition, we have $\| B\tilde{\otimes}_{r} B \|_{F}^{2} \lesssim 1$. Then: 
\begin{equation}
     \left |\sum_{i=1}^{d} \lambda_{i} \sum_{r=1}^{q-1} c_{q,r} \langle A_{i} \tilde{\otimes}_{r} A_{i}, B\tilde{\otimes}_{r} B \rangle \right |  \lesssim \sqrt{\sum_{i=1}^{d} Z_{\gamma}^{2}i^{-2\gamma}\sum_{r=1}^{q-1} c_{q,r} \| A_{i} \tilde{\otimes}_{r} A_{i}\|^{2}_{F}},
\end{equation}
and replacing \Cref{eq:high_prob_bound_a_i_a_i}, we conclude: 

\begin{equation}
    \left |\sum_{i=1}^{d} \lambda_{i} \sum_{r=1}^{q-1} c_{q,r} \langle A_{i} \tilde{\otimes}_{r} A_{i}, B\tilde{\otimes}_{r} B \rangle \right | \lesssim \dfrac{1}{\sqrt{d}}. 
\end{equation}
Replacing in \Cref{eq:intermediate_step_2}, we conclude: 
\begin{equation}
    \EE \left [ \langle Dh^{(2)}, DL^{-1}Y_{B } \rangle \right ]   =\sum_{i=1}^{d} \lambda_{i} c_{q,1}\langle A_{i} \tilde{\otimes} A_{i}, B\tilde{\otimes} B \rangle+ \Delta_{1},
\end{equation}
with $| \Delta_{1}|\lesssim \dfrac{1}{d}$. Replacing in \Cref{eq:int_by_parts_twice}:
\begin{align}
     \EE\left[ g(h^{(2)}) Y_{B}\right ]   & =\sum_{i=1}^{d} \lambda_{i} c_{q,1}\langle A_{i} \tilde{\otimes} A_{i}, B\tilde{\otimes} B \rangle+ \Delta_{1} \\
    & +  \EE \left [ g^{(2)}(h^{(2)}) \langle Dh^{(2)}, D \left ( \langle Dh^{(2)}, DL^{-1}Y_{B}\rangle \right ) \rangle\right ] + \Delta_{1},
\end{align}
with $| \Delta_{1}|\lesssim \dfrac{1}{d}$. Then, by \Cref{eq:operator_norm_subtraction}, we conclude:
\begin{equation}
     \Delta:=\max_{B \in \RR^{B(d,q)}, \| B\|_2 = 1} \left | \EE \left [ g^{(2)}(h^{(2)}) \langle Dh^{(2)}, D \left ( \langle Dh^{(2)}, DL^{-1}Y_{B}\rangle \right ) \rangle\right ]\right | +o_{d}(1).
     \label{eq:operator_norm_subtraction_2}
\end{equation}
We can now proceed to step 2.

\subsubsection{Step 2: Computation of the Kernels}
So far, we have reduced our problem to bounding 
\begin{equation}
    \left | \EE \left [ g^{(2)}(h^{(2)}) \langle Dh^{(2)}, D \left ( \langle Dh^{(2)}, DL^{-1}Y_{B}\rangle \right ) \rangle\right ]\right | 
\end{equation}
uniformly for $\|B||=1$. Since $g$ is a polynomial, $g^{(2)}$ is also a polynomial. At the same time, $h^{(2)}$ has finite variance. Then we have: 
\begin{equation}
    \EE \left [ g^{(2)}(h^{(2)})^{2}\right ]^{\frac{1}{2}}\lesssim 1. 
\end{equation}
Then, by Cauchy Schwartz: 
\begin{align}
    \Delta &\lesssim \max_{B \in \RR^{B(d,q)}, \| B\|_2 = 1} \EE \left [ \left \langle Dh^{(2)}, D \left ( \langle Dh^{(2)}, DL^{-1}Y_{B} \rangle \right ) \right \rangle^{2}\right ]^{\frac{1}{2}}+o_{d}(1) \\
    & \lesssim \max_{B \in \RR^{B(d,q)}, \| B\|_2 = 1} \EE \left [ \left (\sum_{i,j=1}^{d_1} \lambda_i \lambda_j \left \langle DY_{i}, D \left ( \langle DY_{j}, DL^{-1}Y_{B} \rangle \right ) \right \rangle \right )^{2}\right ]^{\frac{1}{2}}+o_{d}(1)
    \label{eq:operator_norm_subtraction_3}
\end{align} 
Let 
\begin{equation}
    V^{i_1,i_2} =\langle DY_{i_2}, D \left ( \langle DY_{i_1}, DL^{-1}Y_{B}\rangle \right \rangle
    \label{def:V_nested}
\end{equation}
We will now compute the Chaos expansion of $V$. We begin with the nested derivative. Define $T^{r}_{i} = A_{i} \tilde{\otimes}_{r}A_{i}$, and $T^{r}_{B} = B \tilde{\otimes}_{r}B$. From \Cref{eq:quadratic_chaos_expansion} and the derivative computation rule \Cref{lemma:computation_derivatives}: 
\begin{align}
    \langle DY_{i_1}, DL^{-1}Y_{B}\rangle & = \sum_{r_1,r_2 = 0}^{q-1} c_{q,r_{1},r_{2}} \langle DI_{2q-2r_{1}}(T^{r_2}_{i_1}), DI_{2q-2r_{2}}(T^{r_2}_{B})\rangle \\
    & = \sum_{r_1,r_2 = 0}^{q-1} c_{q,r_{1},r_{2}} \sum_{r_3=1}^{2q-2\max(r_1, r_2)}I_{4q-2(r_1 + r_2 + r_3)}(T^{r_2}_{i_1} \tilde{\otimes}_{r_3}T^{r_1}_{B}).  
\end{align}
Replacing in \Cref{def:V_nested}:
\begin{align}
    V^{i_1,i_2} & =\langle DY_{i_2}, DL^{-1} \left ( \langle DY_{i_1}, DL^{-1}Y_{B}\rangle \right \rangle \\
    & = \sum_{r_{4} = 0}^{q-1} c_{q,r_{4}}\langle D I_{2q-2r_{4}}(T_{i_2}^{r_4}), D \left ( \sum_{r_1,r_2 = 0}^{q-1} c_{q,r_{1},r_{2}} \sum_{r_3=1}^{2q-2\max(r_1, r_2)}I_{4q-2(r_1 + r_2 + r_3)}(T^{r_2}_{i_1} \tilde{\otimes}_{r_3}T^{r_1}_{B})\right ) \\
    & = \sum_{r_1, r_2, r_4 = 0}^{q-1} \sum_{r_3=1}^{2q-2\max(r_1, r_2)}c_{q,r} \langle D I_{2q-2r_{4}}(T_{i_2}^{^{r_4}}), D I_{4q-2(r_1 + r_2 + r_3)}(T^{r_2}_{i_1} \tilde{\otimes}_{r_3}T^{r_1}_{B})\rangle \\
    & =  \sum_{r_1, r_2, r_4 = 0}^{q-1} \sum_{r_3=1}^{2q-2\max(r_1, r_2)}\sum_{r_5=1}^{6q-2\max(r_4,(r_1 +r_2 + r_4)}c_{q,r}I_{6q-2(\sum_{\ell =1}^{5} r_i)} \left ( T_{i_2}^{r_4} \tilde{\otimes}_{r_5} \left ( T^{r_2}_{i_1} \tilde{\otimes}_{r_3}T^{r_1}_{B}\right )\right ).
\end{align}
We can now go to Step 3.
\subsubsection{Step 3: Contraction Bounds}
Ignoring the sum for the moment. The tensors involved in this computation are of the form:
\begin{equation}
    T_{i_2}^{r_4} \tilde{\otimes}_{r_5} \left ( T^{r_2}_{i_1} \tilde{\otimes}_{r_3}T^{r_1}_{B} \right )
\end{equation}
Now, there are three possible cases:
\begin{enumerate}
    \item \textbf{Case 1:} The contraction $r_5$ contains a contraction of size greater than $1$ between $T_{i_2}^{r_4}$ and $T^{r_1}_{i_1}$. 
    \item \textbf{Case 2}: The contraction $r_5$ only contracts $T_{i_2}^{r_4}$ and $T^{r_1}_{B}$, but $\max(r_4, r_1) \geq 1$. 
    \item \textbf{Case 2}: The contraction $r_5$ only contracts $T_{i_2}^{r_4}$ and $T^{r_1}_{B}$, but $r_4= r_1 = 0$.
\end{enumerate}
We write: 
\begin{equation}
    V^{i_1,i_1} = V_1^{i_1,i_1}+ V_2^{i_1,i_1} + V_3^{i_1,i_1},
\end{equation}
 where $V_{1}^{i_1,i_1}$ counts only the indices in \textbf{Case 1}, $V_2^{i_1,i_1}$ counts only the indices in \textbf{Case 2}, and $V_{3}^{i_1,i_1}$ counts only the indices in \textbf{Case 3}. We now study each term separately. \\

\textbf{Case 1: }Assume there is a contraction between $T_{i_2}^{r_4}$ and $T^{r_1}_{i_1}$ of $t_{i_{1},i_{2}}$ indices. Then, by Cauchy-Schwarz: 
\begin{align}
    \| T_{i_2}^{r_4} \tilde{\otimes}_{r_5} \left ( T^{r_2}_{i_1} \tilde{\otimes}_{r_3}T^{r_1}_{B} \right )\|_{F} \leq \| T_{i_2}^{r_4} \tilde{\otimes}_{t_{i_{1},i_{2}}} T^{r_2}_{i_1} \| \|T^{r_1}_{B}\|_{F}.
    \label{eq:CS_contractions}
\end{align}
Since $\| B\|^2 = 1$, we can bound $\|T^{r_1}_{B}\|_{F}$ by a constant and obtain: 
\begin{equation}
    \| T_{i_2}^{r_4} \tilde{\otimes}_{r_5} \left ( T^{r_2}_{i_1} \tilde{\otimes}_{r_3}T^{r_1}_{B} \right )\|_{F}\leq C\| T_{i_2}^{r_4} \tilde{\otimes}_{t_{i_{1},i_{2}}} T^{r_2}_{i_1} \|_F.
\end{equation}
By \Cref{lemma:double_contraction_expectation_cross} and \Cref{lemma:double_contraction_expectation}, we have: 
\begin{equation}
    \EE\left [ \| T_{i_2}^{r_4} \tilde{\otimes}_{t_{i_{1},i_{2}}} T^{r_2}_{i_1} \|^{2} \right ] = O(\dfrac{1}{d}).
\end{equation} 
Then, since $\| T_{i_2}^{r_4} \tilde{\otimes}_{t_{i_{1},i_{2}}} T^{r_2}_{i_1} \|^{2} $ is a polynomial, we can apply Chebyshev Inequality to obtain that with high probability: 
\begin{equation}
    \| T_{i_2}^{r_4} \tilde{\otimes}_{t_{i_{1},i_{2}}} T^{r_2}_{i_1} \| \lesssim \dfrac{1}{\sqrt{d}}.
\end{equation}
Then: 
\begin{equation}
    \EE[(V^{i_1,i_1}_1)^2]^{\frac{1}{2}} \lesssim \dfrac{1}{\sqrt{d}}. 
    \label{eq:bound_V_1_squared}
\end{equation}

\textbf{Case 2: }If the contraction $r_5$ only contracts $T_{i_2}^{r_4}$ and $T^{r_1}_{B}$, but $\max(r_4, r_1) \geq 1$. Then,  applying \Cref{eq:CS_contractions} we get:
\begin{equation}
    \| T_{i_2}^{r_4} \tilde{\otimes}_{r_5} \left ( T^{r_2}_{i_1} \tilde{\otimes}_{r_3}T^{r_1}_{B} \right )\|_{F} \leq \| T_{i_2}^{r_4}\|_F \| T^{r_2}_{i_1} \|_F \|T^{r_1}_{B}\|_{F},
\end{equation}
By Lemma A.3 from \cite{tabanelli2026deep}, we have:
\begin{equation}
    \EE \left [\| T_i^{r}\|^{2} \right ] = O\left( \dfrac{1}{d^{-r}}\right ),
\end{equation}
for $r \in [q-1]$. Then, by applying hypercontractivity and Chebyshev inequality, with high probability:  
\begin{equation}
    \| T_{i}^{r}\|_F \lesssim d^{-\frac{r}{2}},
\end{equation}
with high probability, so from the fact that $\max(r_2, r_4) \geq 1$, we conclude that with high probability: 
\begin{equation}
    \| T_{i_2}^{r_4} \tilde{\otimes}_{r_5} \left ( T^{r_2}_{i_1} \tilde{\otimes}_{r_3}T^{r_1}_{B} \right )\|_{F} \lesssim \dfrac{1}{\sqrt{d}}. 
\end{equation}
Then: 
\begin{equation}
    \EE[(V^{i_1,i_1}_2)^2]^{\frac{1}{2}} \lesssim \dfrac{1}{\sqrt{d}}. 
    \label{eq:bound_V_2_squared}
\end{equation}
Note that we can write \Cref{eq:operator_norm_subtraction_3} as:
\begin{align}
    \Delta &  \lesssim \max_{B \in \RR^{B(d,q)}, \| B\|_2 = 1} \EE \left [ \left (\sum_{i,j=1}^{d_1} \lambda_i \lambda_j V^{i_1,i_2} \right )^{2}\right ]^{\frac{1}{2}}+o_{d}(1) \\
    &\lesssim \max_{B \in \RR^{B(d,q)}, \| B\|_2 = 1} \EE \left [ \left (\sum_{i,j=1}^{d_1} \lambda_i \lambda_j (V^{i,j}_{1} + V^{i,j}_2 + V^{i,j}_{3} )\right )^{2}\right ]^{\frac{1}{2}}+o_{d}(1).
    \label{eq:operator_norm_subtraction_4}
\end{align} 
Replacing \Cref{eq:bound_V_2_squared} and \Cref{eq:bound_V_1_squared} in \Cref{eq:operator_norm_subtraction_4}, and using the fact that for $\gamma <\frac{1}{2}$,
\begin{equation}
    \left (\sum_{i,j=1}^{d_1} \lambda_{i}^2\lambda_{j}^2\right ) = \Theta(1),
\end{equation}
we get: 
\begin{align}
    \Delta &\lesssim \max_{B \in \RR^{B(d,q)}, \| B\|_2 = 1} \EE \left [ \left (\sum_{i,j=1}^{d_1} \lambda_i \lambda_j V^{i,j}_{3} )\right )^{2}\right ]^{\frac{1}{2}}+o_{d}(1).
    \label{eq:operator_norm_subtraction_5}
\end{align} 

We can now proceed with the last step.
\subsubsection{Step 4: Studying the last term}

In \textbf{Case 3},  the contraction $r_5$ only contracts $T_{i_2}^{r_4}$ and $T^{r_1}_{B}$, and moreover $r_4= r_3 =0$, so we cannot apply the results for controlling the norms of tensors. Note that if $r=0$, we have:
\begin{equation}
    T_{i}^{r} = A_i^{(1)}\tilde{\otimes} A_i^{(1)}.
\end{equation}
Then; 
\begin{equation}
    V^{i_1,i_2}_{3} =\sum_{r_1=1}^{q-1} \sum_{r_3=1}^{2q-2r_1}\sum_{r_5=1}^{6q-2\max(r_4,r_1)}c_{q,r} I_{6q-2(\sum_{\ell =1}^{5} r_i)} \left (T_{i_2}^{0} \tilde{\otimes}_{r_5}^{B} \left ( T^{0}_{i_1} \tilde{\otimes}_{r_3}T^{r_1}_{B}\right )\right ),
\end{equation}
where we used the upper index $\otimes^{B}$ to denotes that the $r_{5}$ contractions are only  between elements of $T_{i_2}^{0}$ and $T^{r_1}_{B}$. 
Then:
\begin{align}
    \sum_{i,j=1}^{d_1} \lambda_i \lambda_j V^{i,j}_{3}  & =  \sum_{i,j=1}^{d_1} \lambda_i \lambda_j  \sum_{r_1=1}^{q-1} \sum_{r_3=1}^{2q-2r_1}\sum_{r_5=1}^{6q-2\max(r_4,r_1)}c_{q,r} I_{6q-2(\sum_{\ell =1}^{5} r_i)} \left (T_{i}^{0} \tilde{\otimes}_{r_5}^{B} \left ( T^{0}_{j} \tilde{\otimes}_{r_3}T^{r_1}_{B}\right )\right ) \\
    & =   \sum_{r_1=1}^{q-1} \sum_{r_3=1}^{2q-2r_1}\sum_{r_5=1}^{6q-2\max(r_4,r_1)}c_{q,r} I_{6q-2(\sum_{\ell =1}^{5} r_i)} \left (\sum_{i,j=1}^{d_1}\lambda_i \lambda_jT_{i}^{0} \tilde{\otimes}_{r_5}^{B} \left (  T^{0}_{j} \tilde{\otimes}_{r_3}T^{r_1}_{B}\right )\right ).
\end{align}
Then, going to \Cref{eq:operator_norm_subtraction_5}, we can apply Cauchy Schwartz to get: 
\begin{align}
    \Delta &\lesssim \max_{B \in \RR^{B(d,q)}, \| B\|_2 = 1} \EE \left [ \left (\sum_{r_1=1}^{q-1} \sum_{r_3=1}^{2q-2r_1}\sum_{r_5=1}^{6q-2\max(r_4,r_1)}c_{q,r} I_{6q-2(\sum_{\ell =1}^{5} r_i)} \left (\sum_{i,j=1}^{d_1}\lambda_i \lambda_jT_{i}^{0} \tilde{\otimes}_{r_5}^{B} \left (  T^{0}_{j} \tilde{\otimes}_{r_3}T^{r_1}_{B}\right )\right ) \right )^{2}\right ]^{\frac{1}{2}}+o_{d}(1) \\
    & \lesssim \max_{B \in \RR^{B(d,q)}, \| B\|_2 = 1}  \sum_{r_1=1}^{q-1} \sum_{r_3=1}^{2q-2r_1}\sum_{r_5=1}^{6q-2\max(r_4,r_1)}c_{q,r}\EE \left [I_{6q-2(\sum_{\ell =1}^{5} r_i)} \left (\sum_{i,j=1}^{d_1}\lambda_i \lambda_jT_{i}^{0} \tilde{\otimes}_{r_5}^{B} \left (  T^{0}_{j} \tilde{\otimes}_{r_3}T^{r_1}_{B}\right )\right )^{2}\right ]^{\frac{1}{2}}+o_{d}(1) \\
    & \lesssim \max_{B \in \RR^{B(d,q)}, \| B\|_2 = 1}  \sum_{r_1=1}^{q-1} \sum_{r_3=1}^{2q-2r_1}\sum_{r_5=1}^{6q-2\max(r_4,r_1)}c_{q,r}\EE \left [I_{6q-2(\sum_{\ell =1}^{5} r_i)} \left (\sum_{i,j=1}^{d_1}\lambda_i \lambda_jT_{i}^{0} \tilde{\otimes}_{r_5}^{B} \left (  T^{0}_{j} \tilde{\otimes}_{r_3}T^{r_1}_{B}\right )\right )^{2}\right ]^{\frac{1}{2}}+o_{d}(1)
    \label{eq:operator_norm_subtraction_6}
\end{align}
For a fixed \textbf{Case 3} pattern of contractions \(\alpha\), define
\begin{equation}
K_{B,\alpha} =\sum_{i,j=1}^{d_1}
\lambda_i\lambda_j\,T_i^0\tilde\otimes_{r_5}^{B,\alpha}\left(T_j^0\tilde\otimes_{r_3}^{\alpha}
T_B^{r_1}
\right).    
\end{equation}
Where we recall that the upper index \(B,\alpha\) means that the \(r_5\) contractions are only between \(T_i^0\) and the \(T_B^{r_1}\). Then, computing the expectation
\begin{equation}
    \EE \left [I_{\alpha} \left (\sum_{i,j=1}^{d_1}\lambda_i \lambda_jT_{i}^{0} \tilde{\otimes}_{r_5}^{B} \left (  T^{0}_{j} \tilde{\otimes}_{r_3}T^{r_1}_{B}\right )\right )^{2}\right ]^{\frac{1}{2}} \leq C \|K_{B,\alpha}\|_F
\end{equation}

Recall that $T_i^0 = A_{i}^{(1)} \otimes A_{i}^{(1)}$. Then, each kernel $K_{B,\alpha}$ has the form: 
\begin{equation}
    K_{B,\alpha} =\sum_{i,j=1}^{d_1}
\lambda_i\lambda_j\,(A_{i}^{(1)} \otimes A_{i}^{(1)})\tilde\otimes_{r_5}^{B,\alpha}\left((A_{i}^{(1)} \otimes A_{i}^{(1)})^0\tilde\otimes_{r_3}^{\alpha}
T_B^{r_1}
\right) =  (\sum_{i=1}^{d_1}
\lambda_iA_{i}^{(1)} \otimes A_{i}^{(1)})\tilde\otimes_{r_5}^{B,\alpha}\left(\sum_{j=1}^{d_1}
\lambda_j(A_{j}^{(1)} \otimes A_{j}^{(1)})^0\tilde\otimes_{r_3}^{\alpha}
T_B^{r_1}
\right),
\end{equation}
where in the last equality we used bi-linearity of the contractions. Now, let $\alpha$ be an admissible contraction pattern from \textbf{Case 3}. Note that $\alpha$ has to specify which $r_3$ indices of $T_{B}$ are contracted with $T_{j}$, and which $r_5$ indices are contracted with the $T_i$ (out of the remaining $2q-r_3$ indices of $T_{B}$). Then we can write: 
\begin{equation}
     K_{B,\alpha} = \mathcal{F}_{\alpha, r_5}\left [\sum_{i=1}^{d_1}
\lambda_i A_{i}^{(1)} \otimes A_{i}^{(1)} \right ] \circ  \mathcal{F}_{\alpha, r_3}\left [\sum_{j=1}^{d_1}
\lambda_j A_{j}^{(1)} \otimes A_{j}^{(1)} \right ] \mathcal{F}_{\alpha, r_1} \left [ 
T_B^{r_1} \right ],
\end{equation}
where $\mathcal{F}_{\alpha, r_5}$, $\mathcal{F}_{\alpha, r_3}$ and $\mathcal{F}_{\alpha, r_5}$ are deterministic flattenings specified by the particular choice of $\alpha$. Then, applying Cauchy-Schwarz:
Then:
\begin{equation}
    \|K_{B,\alpha}\|_F \lesssim \left \| \mathcal{F}_{\alpha, r_5}\left [\sum_{i=1}^{d_1}
\lambda_i A_{i}^{(1)} \otimes A_{i}^{(1)} \right ]\right \|_\op \left \|  \mathcal{F}_{\alpha, r_3}\left [\sum_{j=1}^{d_1}
\lambda_j A_{j}^{(1)} \otimes A_{j}^{(1)} \right ]\right \|_\op \| \mathcal{F}_{\alpha, r_1} \left [ 
T_B^{r_1} \right ] \|_{F},
\end{equation}
and since $\|B\|=1$, we conclude: 
\begin{equation}
    \|K_{B,\alpha}\|_F \lesssim \left \| \mathcal{F}_{\alpha, r_5}\left [\sum_{i=1}^{d_1}
\lambda_i A_{i}^{(1)} \otimes A_{i}^{(1)} \right ]\right \|_\op \left \|  \mathcal{F}_{\alpha, r_3}\left [\sum_{j=1}^{d_1}
\lambda_j A_{j}^{(1)} \otimes A_{j}^{(1)} \right ]\right \|_\op.
\end{equation}
Let $T = \sum_{j=1}^{d_1}
\lambda_j A_{j}^{(1)} \otimes A_{j}^{(1)} \in (\RR^{d})^{\odot 2q}$. The map $\mathcal{F}_{\alpha, r}$ transforms $T \to \mathcal{F}_{\alpha, r}[T] \in (\RR^d){R} \times (\RR^{d})^{C}$, for some sets $R, C$ (denoted like that for rows and columns, respectively). Denote $A, B$ for the row and column indices of $T$. There are to possible cases: $\mathcal{F}_{\alpha,r}$ can have $A \cap R = A, B \cap C= C$ when the map preserves columns and rows, or $A \cap R \subset A$ strictly, when it doesn't. In the first case, we have:
\begin{equation}
    \|\mathcal{F}_{\alpha,r}[T] \|_\op \leq \|T\|_\op, 
\end{equation}
as $\mathcal{F}_{\alpha,r}$ is just a permutation matrix. By re-writing $T = (A^{(1)})^T D A^{(1)}$, and using the fact that the vectors in $A^{(1)}$ are almost orthonormal, we conclude that with high probability
\begin{equation}
    \| T\|_\op = \left \| \sum_{i=1}^{d_1}
\lambda_i A_{i}^{(1)} \otimes A_{i}^{(1)} \right \|_\op \lesssim \max |\lambda_{i}| ,
\label{eq:flattening_type_1}
\end{equation}
and since $\gamma < \frac{1}{2}$, $\max |\lambda_{i}| = |\lambda_1 |= Z_\gamma = \Theta_{d}(d_1^{\frac{1-2\alpha}{2}})$. \\

Now, for the second type of flattening, the map $\mathcal{F}$ gives: 
\begin{equation}
    \mathcal{F}_{\alpha, t}[T] = \sum_{i=1}^{d_1} \lambda_i \mathcal{F}_{\alpha, t}[ A_{i}^{(1)} \otimes A_i^{(1)}] = \sum_{i=1}^{d_1} \lambda_i \mathrm{Perm} \left ( (A_{i}^{(1)})^{s} \otimes (A_{i}^{(1)})^{t}  \right ),
\end{equation}
where $\mathrm{Perm}$ denotes a permutation of indices, and $A_{i}^{(1)})^{s}$ denotes a flattening of $A_i^{(1)}$ as a matrix of size $d^{s} \times d^{q-s}$, for $s \geq 1$. Then we will get: 
\begin{equation}
    \| (A_{i}^{(1)})^{s} \otimes (A_{i}^{(1)})^{t} \|_\op \leq \|(A_{i}^{(1)})^{s}\|_\op \| (A_{i}^{(1)})^{t} \|_\op. 
\end{equation}
Since $(A_{i}^{(1)})$ has Gaussian entries, we can apply Theorem 5.39 in \cite{V10}, to get, with high probability, 
\begin{equation}
    \| (A_{i}^{(1)})^{t} \|_\op \lesssim d^{-\frac{\min(q-t, t)}{2}} \leq d^{-\frac{1}{2}}, 
    \label{eq:flattened_A_bound}
\end{equation}
if $t \in [q-1]$. Then $\| (A_{i}^{(1)})^{s} \otimes (A_{i}^{(1)})^{t} \|_\op \lesssim \frac{1}{\sqrt{d}}$ for our ranges of $r$. We now apply this on $ \|\mathcal{F}_{\alpha, t}[T]\|_\op$. Since $\lambda_i$ have random sings, we can first apply Matrix Bernstein with the tensors $A^{(1)}_i$ fixed to get, and then triangular inequality: 
\begin{align}
    \| \mathcal{F}_{\alpha,r}[T]\|_\op &\lesssim \|Z_{\gamma}^2\sum_{i=1}^{d} i^{-2\gamma}  \mathrm{Perm} \left ( (A_{i}^{(1)})^{s} \otimes (A_{i}^{(1)})^{t} \right ) \| \\
    & \lesssim \sum_{i=1}^{d}i^{-2\gamma} \| (A_{i}^{(1)})^{s} \otimes (A_{i}^{(1)})^{t} \|_\op \lesssim \dfrac{1}{\sqrt{d}},
    \label{eq:flattening_type_2}
\end{align}
where we used that $Z_{\gamma}^2\sum_{i=1}^{d} i^{-2\gamma}  = 1$, and \Cref{eq:flattened_A_bound}. Putting both cases (\Cref{eq:flattening_type_1} and \Cref{eq:flattening_type_2}) together: 
\begin{equation}
    \|K_{B,\alpha}\|_F \lesssim Z_{\gamma}^2 = \dfrac{1}{d^{1-2\gamma}}.  
\end{equation}
for all patterns $\alpha$ in \textbf{Case 3}. Then, replacing in \Cref{eq:operator_norm_subtraction_6}, we get: 
\begin{equation}
    \|\Delta\|_\op \lesssim \dfrac{1}{d_1^{1-2\gamma}},
\end{equation}
with high probability. Therefore, we conclude that: 
\begin{equation}
    \| \EE\left [\hat{C} \right ] -  \EE[g'(h^2)] \sum_{i=1}^{d_1} \lambda_i A^{(1)}_i (A 
_i^{(1)})^T\|_\op \lesssim  \Delta \lesssim \dfrac{1}{d_1^{1-2\gamma}},
\end{equation}
which concludes the proof. 

\subsubsection{\texorpdfstring{Proof of \Cref{lemma:expectation_C_1}}{Proof of Lemma}}

Putting \Cref{lemma:expectation_C_1_2} and \Cref{lemma:information_exponent_g} together, we can conclude:
\begin{corollary}
Under the assumptions of \Cref{thm:sample_complexity}: 
\[
\EE \left [C^{(1)} \right ] = \dfrac{\nu_{1}}{\sqrt{2}}A^{(1)} D_{\gamma} (A^{(1)})^T + \Delta, 
\]
where $\| \Delta \|_\op = o_{d}(1)$,  $A^{(1)} = [u_1, \dots u_{d_1}] \in \mathbb{R}^{D \times d_1}$,  $D_\gamma = \mathrm{diag} ( \lambda_1, \dots, \lambda_{d_1}) \in \RR^{d_1 \times d_1}$, and $\nu_{1}$ is the first Hermite coefficient of $g$.
\end{corollary}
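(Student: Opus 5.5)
The corollary follows by combining \Cref{lemma:expectation_C_1_2} with \Cref{lemma:information_exponent_g}. The only work is replacing the random prefactor $\EE[g'(h^{(2)})]$ by the deterministic Hermite coefficient $\nu_1$ and checking that this substitution costs $o_d(1)$ in operator norm. Under the assumptions of \Cref{thm:sample_complexity}, \Cref{lemma:expectation_C_1_2} gives
\[
\EE[C^{(1)}] = \frac{\EE[g'(h^{(2)})]}{\sqrt2}\,A^{(1)}D_\gamma (A^{(1)})^T + \Delta_1,\qquad \|\Delta_1\|_\op = o_d(1).
\]
We then write $\EE[g'(h^{(2)})] = \nu_1 + \eta_d$. \Cref{lemma:information_exponent_g} gives $|\eta_d|\lesssim d^{-1/2}$; in the identity regime (i) we have $g'\equiv 1=\nu_1$, so $\eta_d=0$. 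Setting
\[
\Delta := \Delta_1 + \frac{\eta_d}{\sqrt2}\,A^{(1)}D_\gamma(A^{(1)})^T
\]
reduces the claim to bounding the operator norm of the extra term.

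For that term we use $\|A^{(1)}D_\gamma(A^{(1)})^T\|_\op \le \|A^{(1)}\|_\op^2\max_i|\lambda_i|$. The columns of $A^{(1)}$ are $d_1$ independent Gaussian vectors in $\RR^D$ with entries of variance $d^{-q}$, and $d_1 = d^\varepsilon \ll D\asymp d^q$ since $\varepsilon<q/2$. The Gaussian-matrix singular value bound of \cite{V10} (Theorem 5.39) then gives $\|A^{(1)}\|_\op \le 1+O(\sqrt{d_1/d^q})$ with high probability. Moreover $\max_i|\lambda_i| = Z_\gamma$, which by \Cref{lemma:normalization_C_gamma} is $O(1)$ in both regimes: it equals $d_1^{\gamma-1/2}\to0$ when $\gamma<1/2$ and $\Theta(1)$ when $\gamma>1/2$. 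Hence the extra term is $O(d^{-1/2})=o_d(1)$ with high probability, and $\|\Delta\|_\op=o_d(1)$.

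The only delicate point is that \Cref{lemma:information_exponent_g} controls $\EE[g'(h^{(2)})]-\nu_1$ rather than $\EE[g'(h^{(2)})]$ itself. This relies on $h^{(2)}$ being approximately standard Gaussian, a fourth-moment/CLT statement valid for the delocalized spectrum $\gamma<1/2$. That is exactly why the nonlinear regime (ii) is restricted to $\gamma<1/2$, while regime (i) needs no such step. Since this is already supplied by the cited lemma, we expect no real obstacle. It remains only to note that the high-probability events over $A^{(1)}$ and the signs $z_i$ used in the two lemmas can be intersected, which costs an $o_d(1)$ loss in probability.
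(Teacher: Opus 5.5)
Your proposal is correct and follows the paper's own route: the paper derives this corollary in one line, by combining \Cref{lemma:expectation_C_1_2} with \Cref{lemma:information_exponent_g}. You also make explicit a step the paper leaves implicit, namely that replacing $\EE[g'(h^{(2)})]$ by $\nu_1$ costs at most $|\eta_d|\,\|A^{(1)}\|_\op^2\, Z_\gamma = O(d^{-1/2})$ in operator norm, and nothing at all in the identity regime.
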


\section{Deferred Proofs}
\label{seciton:deferred_proofs}
\begin{lemma}
    Let $k,j \in [d_1]$, with $k \not =j$. Assume $n = \omega_{d}(\frac{\min(k,j)^{2\gamma}}{Z_\gamma ^{2}}d^{q} )$. Then with high probability 
    $$
        \left ( \dfrac{Z_\gamma }{n} y_\mu h_{\mu,k} h_{\mu,j} \right )^{2} \lesssim \dfrac{Z_\gamma ^2}{d^{q}} \min(k,j)^{-2\gamma}. 
    $$
    \label{lemma:bound_square_y_s}
\end{lemma}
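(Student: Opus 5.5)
The inequality is a statement about a single summand, so I will prove it directly, uniformly over $\mu\in[n]$, and not pass through the empirical average. Write $m:=\min(k,j)$. The left-hand side equals
\[
\frac{Z_\gamma^2}{n^2}\,y_\mu^2\,h_{\mu,k}^2\,h_{\mu,j}^2 .
\]
The plan is to show that each of the three random factors is at most polylogarithmic in $d$ with high probability. The two powers of $1/n$, combined with the assumed lower bound on $n$, then absorb both the polylogarithmic factors and the target factor $m^{-2\gamma}/d^q$.

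\textbf{Step 1 (polylogarithmic control of the factors).}
\begin{itemize}
\item Each $h_{\mu,i}=I_q(A_i^{(1)})$ is an element of the $q$-th Wiener chaos. Conditionally on $A^{(1)}$ its variance is $q!\,\|A_i^{(1)}\|^2$, which is $\Theta_d(1)$ with high probability by Hanson--Wright.
\item When $g$ is a polynomial, $y_\mu=g(h^{(2)}_\mu)$ is a polynomial of bounded degree in $x_\mu$. Its variance is $\Theta_d(1)$ by the choice of $Z_\gamma$ in \Cref{lemma:normalization_C_gamma}.
\item Gaussian hypercontractivity (\Cref{lemma:gaussian_hypercontractivity_alignment}) then gives tails of the form $\mathbb P(|X|>t)\le C\exp(-c\,t^{2/N})$ for a polynomial $X$ of degree $N$. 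I take $t=(\log d)^{C'}$ with $C'$ large.
\item A union bound over $\mu\in[n]$ and over the two indices $k,j$ is affordable because $n$ is at most polynomial in $d$ in the regimes considered. It yields, with high probability and simultaneously for all $\mu$,
\[
|y_\mu|,\ |h_{\mu,k}|,\ |h_{\mu,j}|\le (\log d)^{C'} .
\]
\end{itemize}

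\textbf{Step 2 (absorbing the prefactors).} On this event the left-hand side is at most $Z_\gamma^2(\log d)^{6C'}/n^2$. I split $1/n^2=(1/n)\cdot(1/n)$ and treat the two factors separately.
\begin{itemize}
\item The hypothesis $n=\omega_d(m^{2\gamma}d^q/Z_\gamma^2)$ gives
\[
\frac1n\le \frac{Z_\gamma^2}{m^{2\gamma}d^q}\cdot o_d(1).
\]
\item For the other factor, $Z_\gamma\le 1$ and $m\ge1$, so the same hypothesis gives $n=\omega_d(d^q)$. Hence $(\log d)^{6C'}/n=o_d(1)$.
\end{itemize}
Combining, the left-hand side is at most $o_d(1)\cdot Z_\gamma^2\cdot Z_\gamma^2/(m^{2\gamma}d^q)$. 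Since $Z_\gamma^2\le1$, this is $\lesssim Z_\gamma^2\,m^{-2\gamma}/d^q$, which is the claim. In fact the bound has room to spare: it holds with an extra factor $Z_\gamma^2\,o_d(1)$.

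\textbf{Main obstacle.} The only delicate point is Step 1 in regime (ii) of \Cref{ass:readout-regimes} when $g$ is not a polynomial, since hypercontractivity then does not apply to $y_\mu$ directly. I would first reduce to the polynomial case, as the paper itself does in \Cref{lemma:information_exponent_g} and \Cref{section:derivation_rates}. If that reduction is unavailable, I would assume instead that $g$ has polynomial growth, $|g(t)|\lesssim 1+|t|^{p}$. Then $|y_\mu|\lesssim 1+|h^{(2)}_\mu|^p$, and $h^{(2)}_\mu$ is a polynomial of degree $2q$ with variance $\Theta_d(1)$, so the same polylogarithmic bound follows. Any factor polynomial in $\log d$ is harmless, because $1/n$ decays polynomially in $d$.
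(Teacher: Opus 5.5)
Your argument is correct for the literal reading, where $\mu$ is one fixed sample. But that reading is a typo, and it makes the lemma essentially vacuous: as you observe, the left-hand side is then smaller than the target by a factor of order $(\log d)^{C}/n$ or better. The paper's proof immediately rewrites the quantity as
\[
\Bigl(\tfrac{1}{n}\textstyle\sum_{\mu=1}^{n}\bigl(y_\mu h_{\mu,k}h_{\mu,j}-\EE[y\,h_kh_j]\bigr)+\EE[y\,h_kh_j]\Bigr)^2 ,
\]
dropping the prefactor $Z_\gamma$. The lemma is then invoked to control the coefficients $\frac1n\sum_\mu y_\mu h_{\mu,j}h_{\mu,k}$ of the term $(I)$ in \Cref{eq:I_II_perturbation}. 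So the intended claim is that the squared empirical cross-moment satisfies $\bigl(\frac1n\sum_{\mu} y_\mu h_{\mu,k}h_{\mu,j}\bigr)^2\lesssim Z_\gamma^2d^{-q}\min(k,j)^{-2\gamma}$. The hypothesis $n=\omega_d(\min(k,j)^{2\gamma}d^q/Z_\gamma^2)$ is calibrated exactly so that the $O(1/n)$ fluctuation of this average falls below the target, which is further evidence that the average is meant.

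For that statement your approach has a genuine gap. Polylogarithmic bounds on each factor only give $\bigl|\frac1n\sum_\mu y_\mu h_{\mu,k}h_{\mu,j}\bigr|\le(\log d)^{3C'}$. Averaging cannot improve this without centering, because the average concentrates around $\EE[y\,h_kh_j]$, not around zero. The missing argument has two parts.

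\textbf{(a) Fluctuation.} The centered average has second moment $\lesssim 1/n$ by independence and finite moments. It is therefore $O(1/n)$ with high probability, and the hypothesis on $n$ absorbs it.

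\textbf{(b) Population cross-moment.} One needs $|\EE[y\,h_kh_j]|=O\bigl(Z_\gamma d^{-q/2}\min(k,j)^{-\gamma}\bigr)$, which the paper takes from \Cref{lemma:non_linear_case_cross_expectations} (in the linear case, from \Cref{lemma:linear_case_cross_expectations}). This is where the real content lies. Expand $y$ and $h_kh_j$ in Wiener chaos: directly in the linear case, and via Gaussian integration by parts and cumulants otherwise. The cross-moment then reduces to random contractions such as $\lambda_i\langle A_i\tilde\otimes_rA_i,\,A_j\tilde\otimes_rA_k\rangle$ and $\langle A_j,A_k\rangle$. These are $O(d^{-q})$ for $i\notin\{j,k\}$ and $O(d^{-q/2})$ for $i\in\{j,k\}$, because the random teacher tensors are nearly orthogonal. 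The surviving terms carry the weight $|\lambda_j|+|\lambda_k|\asymp Z_\gamma\min(k,j)^{-\gamma}$.

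Without this near-orthogonality input, no bound of order $d^{-q/2}$ on the empirical cross-moment is reachable. Your per-sample truncation from Step 1 can serve as a technical device inside part (a), but it cannot replace part (b).
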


\begin{proof}
    We want to concentrate
    \begin{align}
         E^{2} =\left ( \dfrac{Z_\gamma }{n} y_\mu h_{\mu,k} h_{\mu,j} \right )^{2}. 
    \end{align}
     For this, we begin by decomposing: 
    \begin{align}
        E^{2} & = \left (  \dfrac{1}{n} \sum_{\mu=1}^{n} \left ( y_\mu h_{\mu,k} h_{\mu,j} -\EE\left [y_\mu h_{\mu,k} h_{\mu,j}\right ]\right ) + \EE \left [ y_\mu h_{\mu,k}h_{\mu,k} \right ] \right )^{2} \\ 
        & \lesssim \underbrace{\left ( \dfrac{1}{n} \sum_{\mu=1}^{n} \left ( y_\mu h_{\mu,k} h_{\mu,j} -\EE\left [y_\mu h_{\mu,k} h_{\mu,j}\right ]\right )\right )^{2}}_{(I)} + \underbrace{\EE \left [ y_\mu h_{\mu,k}h_{\mu,k} \right ] ^{2}}_{(II)}. 
    \end{align}
    We begin by concentrating $(I)$. For this, we note that since all terms are centered, independent random variables: 
    \begin{align}
        \EE \left [ (I) \right ] & = \dfrac{1}{n^{2}} \sum_{\mu =1 }^{n}  \EE \left [ \left ( y_\mu h_{\mu,k} h_{\mu,j} -\EE\left [y_\mu h_{\mu,k} h_{\mu,j}\right ]\right )^{2}\right ].
    \end{align}
    Note that, since $\mathrm{Var}(y_\mu) \lesssim 1$, we have: 
    \begin{align}
        \EE \left [ \left ( y_\mu h_{\mu,k} h_{\mu,j} -\EE\left [y_\mu h_{\mu,k} h_{\mu,j}\right ]\right )^{2}\right ] \lesssim \EE \left [ \left ( y_\mu h_{\mu,k} h_{\mu,j} \right )^{2} \right ] \lesssim 1. 
    \end{align}
    Then
    \begin{equation}
        \EE \left [ (I) \right ] \lesssim \dfrac{1}{n},
    \end{equation}
    and since $(I)$ is positive, by Markov inequality we conclude that with high probability: 
    \begin{equation}
        \left ( \dfrac{Z_\gamma }{n} y_\mu h_{\mu,k} h_{\mu,j} \right )^{2} \lesssim \dfrac{1}{n}  + \EE \left [ y_\mu h_{\mu,k}h_{\mu,k} \right ] ^{2}. 
        \label{eq:bound_I_sum_y}
    \end{equation}
    We now turn to $(II)= \EE \left [ y_\mu h_{\mu,k}h_{\mu,k} \right ] ^{2}$. Recall that, by definition: 
    \begin{equation}
        y_\mu = g\left ( \sum_{p=1}^{d_1}  \lambda_p ((h^{(1)}_{\mu, p})^{2} - 1) \right ),
    \end{equation}
    for some polynomial $g:\RR \to \RR$, with $\EE[g'(h^{(2)})] \not = 0$. Then, by \Cref{lemma:non_linear_case_cross_expectations}
    \begin{align}
         \left | \EE \left [ y_\mu h_{\mu,k}h_{\mu,k} \right ] \right | &= O \left ( \dfrac{Z_\gamma }{d^{\frac{q}{2}}} \min(k,j)^{-\gamma} \right ). 
    \end{align}
    Taking the square:
    \begin{equation}
        \EE \left [ y_\mu h_{\mu,k}h_{\mu,k} \right ]^{2} \lesssim \dfrac{Z_\gamma ^2}{d^{q}} \min(k,j)^{-2\gamma}. 
        \label{eq:bound_II_sum_y}
    \end{equation}
    By putting together \Cref{eq:bound_I_sum_y} and \Cref{eq:bound_II_sum_y}: 
    \begin{equation}
         \left ( \dfrac{Z_\gamma }{n} y_\mu h_{\mu,k} h_{\mu,j} \right )^{2} \lesssim \dfrac{1}{n} + \dfrac{Z_\gamma ^2}{d^{q}} \min(k,j)^{-2\gamma}.
    \end{equation}
    and since $n = \omega_{d}(\frac{\min(k,j)^{2\gamma}}{Z_\gamma ^{2}}d^{q} )$,  we conclude that with high probability: 
    \begin{equation}
        \left ( \dfrac{Z_\gamma }{n} y_\mu h_{\mu,k} h_{\mu,j} \right )^{2} \lesssim \dfrac{Z_\gamma ^2}{d^{q}} \min(k,j)^{-2\gamma}. 
    \end{equation}
\end{proof}

\subsection{Gaussian Universality}

In the following, let $\mathcal{W}_{1}$ denote the $1$-Wasserstein distance on $\mathcal{P}_{1}(\RR^{r})$. That is,  for $\mu, \nu \in \mathcal{P}_{1}(\RR^{r})$, $\mathcal{W}_{1}$ defines the metric: 
\begin{equation}
    \mathcal{W}_{1}(\mu, \nu ) = \sup_{h: \RR^{r} \to \RR, \mathrm{Lip}(h) \leq 1} \left | \EE_{G \sim \mu} [h(G)] - \EE_{Z \sim \nu} [h(Z)]\right |. 
\end{equation}

We will need the following result bounding the Wasserstein distance to Gaussians. 
\begin{lemma}[Theorem 5.1.3 in \cite{nourdin2012normal}] Let $F \in \mathbb{D}^{1,2}$ with $E[F] = 0$ and $E[F^2] = \sigma^2 > 0$ , and let $N \sim \mathcal{N}(0, \sigma^2)$ . Then
$$
d_{\mathrm{W}}(F, N) \leq \frac{\sqrt{2}}{\sigma \sqrt{\pi}} \, E\left[\left|\sigma^2 - \langle DF, -DL^{-1}F \rangle_{\mathfrak{F}}\right|\right]. 
$$
\label{lemma:one_dimensional_CLT}
\end{lemma}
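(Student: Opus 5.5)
The plan is the Stein/Malliavin route: write the Stein discrepancy as a Gaussian expectation via Stein's lemma, then rewrite the pairing $\EE[Ff(F)]$ through Gaussian integration by parts, \Cref{lemma:gaussian_ibp}, with $G=F$ there. We deliberately avoid any coupling or explicit distributional computation.

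\textbf{Step 1 (Stein's characterization).} For $h$ with $\mathrm{Lip}(h)\le 1$, solve the Stein equation
\[
f'(x)-\sigma^{-2}xf(x)=h(x)-\EE[h(N)].
\]
Writing $f=\sigma^2 f_h$ with $f_h$ the standard solution, we get a $C^1$ function. Its derivative satisfies the classical bound $\|f_h'\|_\infty\le \sqrt{2/\pi}\,\sigma^{-1}$. We would obtain this bound from the explicit integral formula
\[
f_h(x)=\sigma^{-2}e^{x^2/2\sigma^2}\int_{-\infty}^x \bigl(h(t)-\EE h(N)\bigr)e^{-t^2/2\sigma^2}\,dt,
\]
or from its Ornstein--Uhlenbeck semigroup representation $f_h'(x)=-\int_0^\infty e^{-t}\,\EE[h'(e^{-t}x+\sqrt{1-e^{-2t}}N)]\,dt$ scaled by $\sigma^{-2}$ appropriately, using $|h'|\le 1$ and $\EE|N'|$-type Gaussian integrals to produce the constant $\sqrt{2/\pi}$. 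Substituting $x=F$ and taking expectations then gives
\[
\EE h(F)-\EE h(N)=\EE\bigl[f'(F)-\sigma^{-2}Ff(F)\bigr].
\]

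\textbf{Step 2 (integration by parts).} Since $\EE F=0$ and $F\in\mathbb D^{1,2}$, \Cref{lemma:gaussian_ibp} applies with $F$ in both slots:
\[
\EE[Ff(F)]=\EE\bigl[f'(F)\langle DF,-DL^{-1}F\rangle\bigr].
\]
This is permitted because $f$ has a bounded derivative. Hence
\[
\EE h(F)-\EE h(N)=\sigma^{-2}\,\EE\bigl[f'(F)\bigl(\sigma^2-\langle DF,-DL^{-1}F\rangle\bigr)\bigr].
\]
(Here $f$ is absorbing the normalization of Step 1.) Bounding $|f'|$ by its supremum and taking the supremum over $1$-Lipschitz $h$ gives the stated inequality, with constant $\sqrt{2}/(\sigma\sqrt{\pi})$.

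\textbf{Main obstacle.} The delicate part is the sharp derivative bound in Step 1. Stein solutions for Lipschitz $h$ are only $C^1$ with Lipschitz $f'$ in general, so we would first prove everything for smooth $h$, where \Cref{lemma:gaussian_ibp} applies directly. We then extend to all $1$-Lipschitz $h$ by mollification, since $\mathcal W_1$ is unchanged by taking the supremum over smooth $1$-Lipschitz test functions. A second point to check is that the normalization is consistent, i.e. the factor $\sigma^{-2}$ combined with $\|f'\|_\infty$ yields exactly $\sqrt 2/(\sigma\sqrt\pi)$. We would verify this by reducing to $\sigma=1$ via the scaling $F\mapsto F/\sigma$, under which the right-hand side transforms homogeneously.
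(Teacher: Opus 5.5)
Your argument is correct in structure and is the same as the proof of Theorem 5.1.3 in \cite{nourdin2012normal}, which the paper quotes rather than reproves. That proof uses Stein's equation with the classical bound on the derivative of its solution, followed by the Malliavin integration by parts $\EE[Ff(F)]=\EE[f'(F)\langle DF,-DL^{-1}F\rangle]$ from \Cref{lemma:gaussian_ibp}. Your $\sigma$-bookkeeping, $\|f'\|_\infty\le\sigma\sqrt{2/\pi}$ times the prefactor $\sigma^{-2}$, gives exactly $\sqrt2/(\sigma\sqrt\pi)$.

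One sub-step is misstated. For $\sigma=1$ and $Z\sim\mathcal N(0,1)$, the semigroup expression $-\int_0^\infty e^{-t}\,\EE[h'(e^{-t}x+\sqrt{1-e^{-2t}}Z)]\,dt$ is the Stein solution $f_h(x)$ itself, not $f_h'(x)$. Inserting $|h'|\le 1$ there therefore only gives $\|f_h\|_\infty\le 1$. To bound $f_h'$ you must differentiate once more and transfer the derivative onto the Gaussian:
\[
\frac{d}{dx}\EE\bigl[h'(e^{-t}x+\sqrt{1-e^{-2t}}Z)\bigr]=\frac{e^{-t}}{\sqrt{1-e^{-2t}}}\,\EE\bigl[Z\,h'(e^{-t}x+\sqrt{1-e^{-2t}}Z)\bigr].
\]
This yields $|f_h'(x)|\le \EE|Z|\int_0^\infty \frac{e^{-2t}}{\sqrt{1-e^{-2t}}}\,dt=\sqrt{2/\pi}$, which is where the constant actually comes from.

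The mollification step is harmless but not needed. For $1$-Lipschitz $h$, the identity $f'=h-\EE h(N)+\sigma^{-2}xf$ shows $f'$ is continuous, and it is bounded by the derivative bound above. So $f$ already satisfies the hypotheses of \Cref{lemma:gaussian_ibp} directly.
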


\begin{lemma}
Let $r \in \NN$, and let $A^{(1)}_{1}, \dots, A^{(1)}_{r} \in (\RR^{d})^{\otimes q}$ be symmetric tensors of order $q$ such as the ones specified in \Cref{section:setting}. Let $x \sim \mathcal{N}(0,I_{d})$, and let $H_{k}(x)$ denote the degree $k$ Hermite tensor of $x$. Define $h^{(2)}$ as in \Cref{section:setting}. Then, then there exists a constant $C_{k} < \infty$, depending only on $k$, such that: 
\begin{equation}
    \mathcal{W}_1 \left ( h^{(2)} , N \right ) \leq C_{k}\dfrac{1}{\sqrt{d}},
\end{equation}
where $N \sim \mathcal{N}(\EE[h^{(2)}], \mathrm{Var(h^{(2)})})$.
\label{lemma:multivariate_gaussian_approx_h_2}
\end{lemma}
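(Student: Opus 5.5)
The plan is to apply \Cref{lemma:one_dimensional_CLT} to the centered and normalized variable $F := (h^{(2)} - \EE[h^{(2)}])/\sigma$, where $\sigma^2 = \mathrm{Var}(h^{(2)}) = \Theta_d(1)$ by the choice of $Z_\gamma$ in \Cref{lemma:normalization_C_gamma}. Since $\mathcal W_1$ scales linearly under dilation, it suffices to bound $\EE|1 - \langle DF, -DL^{-1}F\rangle|$. By Cauchy--Schwarz this is at most $\sqrt{\mathrm{Var}(\langle DF,-DL^{-1}F\rangle)}$, using the identity $\EE\langle DF,-DL^{-1}F\rangle = \EE[F^2] = 1$.

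First, write the chaos decomposition of $h^{(2)}$ using \Cref{eq:quadratic_chaos_expansion}:
\[
h^{(2)} - \EE h^{(2)} = \frac{1}{\sqrt2}\sum_{i=1}^{d_1}\lambda_i\sum_{r=0}^{q-1} c_{q,r}\, I_{2q-2r}(T_i^r), \qquad T_i^r = A_i\tilde\otimes_r A_i .
\]
The resulting $F = \sum_{p} I_{2q-2r_p}(f_p)$ lives in finitely many chaoses, of orders $2,4,\dots,2q$. For such $F$, $-DL^{-1}F = \sum_p \frac{1}{2q-2r}DI_{2q-2r}(f_r)$. Then \Cref{lemma:computation_derivatives} expands $\langle DF,-DL^{-1}F\rangle$ as a finite sum of multiple integrals of the contractions $f_r\tilde\otimes_s f_{r'}$ with $1\le s\le \min$. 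The standard fourth-moment-type estimate (Nourdin--Peccati, Ch.~5--6) then gives
\[
\mathrm{Var}\,\langle DF,-DL^{-1}F\rangle \lesssim \sum_{r,r'}\sum_{s} \|f_r\tilde\otimes_s f_{r'}\|_F^2 ,
\]
where the sum runs over all contractions except the full contraction between equal orders, which produces the constant.

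The main step is to bound these contraction norms by $O(1/d)$ with high probability over the $A_i$. Off the top chaos ($r\ge1$), \Cref{eq:high_prob_bound_a_i_a_i} combined with the random signs of the $\lambda_i$ (Bernstein, as in \Cref{eq:intermediate_step_2}) already gives $\|f_r\|_F^2 \lesssim d^{-r}\le d^{-1}$. Any contraction involving such an $f_r$ is therefore $O(d^{-1/2})$ by Cauchy--Schwarz, which contributes $O(1/d)$ to the variance.

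The delicate term is the top chaos, $f_0 = \sum_i\lambda_i A_i\otimes A_i$, contracted with itself over $s\in[1,2q-1]$ indices. I will treat it as in Step~4 of the proof of \Cref{lemma:expectation_C_1_2}, by flattening. When $s=q$ and the contraction respects the natural $(q,q)$ splitting, the result is $\|\sum_i\lambda_i^2 A_iA_i^\top\|_F^2 \approx \sum_i\lambda_i^4$, using the near-orthonormality of the $A_i$. For $0<\gamma<1/2$ this is $\lesssim Z_\gamma^4 d_1^{1-4\gamma}\vee Z_\gamma^4 \lesssim d_1^{-1}$, which is small. I would aim for $O(d^{-\varepsilon})$ here, and note that the stated $1/\sqrt d$ may need $d$ replaced by $d_1$ in this regime; this is where I expect the true rate to be decided. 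The remaining contractions mix indices across the $(q,q)$ split. These will be controlled through the operator norm of non-square flattenings of $A_i$, each $\lesssim d^{-1/2}$ by \Cref{eq:flattened_A_bound}, together with the matrix Bernstein argument of \Cref{eq:flattening_type_2}. That yields $O(d^{-1})$.

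Finally, I will collect the finitely many terms, with constants depending only on $q$ (the $C_k$), take the square root, and undo the normalization by $\sigma$. This gives $\mathcal W_1(h^{(2)},N)\lesssim d^{-1/2}$ (respectively $d_1^{-1/2}$ for the diagonal term), with high probability over the teacher. The obstacle is the self-contraction of $f_0$, where one must exploit both the sign randomness of the $\lambda_i$ and the near-orthogonality of $A^{(1)}$ rather than crude Cauchy--Schwarz.
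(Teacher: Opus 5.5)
\textbf{Verdict.} Your proposal has a genuine gap: it does not prove the lemma as stated. Your diagnosis of where it fails is correct, however, and the paper's own proof breaks at the same spot.

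\textbf{What matches the paper.} Your framework is the paper's. You center $h^{(2)}$, expand it in chaoses $2,\dots,2q$ with kernels $f_r=c_{q,r}\sum_i\lambda_i A_i\tilde\otimes_r A_i$, apply \Cref{lemma:one_dimensional_CLT}, and reduce $\mathrm{Var}\langle DF,-DL^{-1}F\rangle$ to contraction norms via \Cref{lemma:computation_derivatives}. Two parts of your plan are sound and more careful than the paper's uniform pairwise bound: the lower chaoses, where $\|f_r\|_F^2\lesssim d^{-1}$ for $r\ge1$ using the random signs, and the index-mixing contractions of $f_0$, handled through non-square flattenings of operator norm $\lesssim d^{-1/2}$.

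\textbf{The gap.} The block-respecting part of $f_0\otimes_q f_0$ is $\sum_{i,j}\lambda_i\lambda_j\langle A_i,A_j\rangle A_i\otimes A_j$. Up to a constant, it contributes $\mathrm{tr}\big((\Lambda G)^4\big)\approx\sum_i\lambda_i^4$ to the variance, where $G$ is the Gram matrix of the $A_i$. So your argument yields $\mathcal W_1(h^{(2)},N)\lesssim d^{-1/2}+\big(\sum_i\lambda_i^4\big)^{1/2}$, not $C_k/\sqrt d$. Your evaluation of this term is also wrong in part of the range. We have $\sum_i\lambda_i^4\asymp Z_\gamma^4\sum_{i\le d_1}i^{-4\gamma}$, which is $\asymp d_1^{-1}$ only for $\gamma<1/4$. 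For $1/4<\gamma<1/2$ it is $\asymp Z_\gamma^4\asymp d_1^{-(2-4\gamma)}\gg d_1^{-1}$, so your inequality $Z_\gamma^4\lesssim d_1^{-1}$ fails there. The honest rate is therefore $d_1^{-\min(1/2,\,1-2\gamma)}$, with a logarithm at $\gamma=1/4$.

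\textbf{Why following the paper does not help.} The paper meets the same term and disposes of it through \Cref{lemma:double_contraction_expectation}, which is false in exactly this configuration. For $r_1=r_2=0$ and $r_3=q$, $(A\otimes A)\otimes_q(A\otimes A)=\|A\|^2\,A\otimes A$ has squared norm $\approx1$, not $O(1/d)$. The step $s_{ab}\le q-\min(r_1,r_2)\le q-1$ in that proof needs $\min(r_1,r_2)\ge1$. Even granting the paper's bounds, $\EE\big[(\sigma^2-\langle Dh^{(2)},-DL^{-1}h^{(2)}\rangle)^2\big]=O(d^{-1/2})$ would only give $\mathcal W_1=O(d^{-1/4})$.

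\textbf{The stated rate appears too strong.} To leading order, $h^{(2)}$ behaves like $\frac{1}{\sqrt2}\sum_i\lambda_i(g_i^2-1)$ with independent $g_i\sim\mathcal N(0,1)$. Its third cumulant is $2\sqrt2\sum_i\lambda_i^3$. With probability bounded away from zero over the signs $z_i$, this is of order at least $d_1^{-1}$. Testing against the $1$-Lipschitz function $\sin(\cdot-\EE h^{(2)})$ then gives a Wasserstein distance of at least that order. That exceeds $d^{-1/2}$ whenever $\varepsilon<1/2$.

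\textbf{What to do instead.} The correct output of your argument is a $d_1$- and $\gamma$-dependent bound, which still vanishes for $\gamma<1/2$. Present it as that corrected statement, not as a proof of the lemma as written.
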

\begin{proof}The proof is similar to Lemma A.1 in \cite{tabanelli2026deep}, but instead of using the multi-variate Gaussian approximation Lemma, we use the one-dimensional version \Cref{lemma:one_dimensional_CLT}. We begin by computing the Wiener expansion of $h^{(2)}$. We have: 
\begin{align}
    h^{(2)} = \sum_{i =1}^{d_1} \lambda_{i} \left ( I_{q}(A^{(1)}_i)^{2} - 1\right ) = \sum_{i=1}^{d_1} \lambda_{i} (I_{q}(A_{i})^{2} - \| A_i\|^{2}) + \sum_{i=1}^{d} \lambda_{i} (\| A_i\|^{2} - 1).
\end{align}
By the product formula \Cref{lemma:product_formula}, and the fact that the first term is centered:  
\begin{align}
    h^{(2)} &= \sum_{i=1}^{d_1} \lambda_{i} \sum_{r= 0}^{q-1}c_{q,r}I_{2q-2r}(A_i \tilde{\otimes}_{r} A_i) + \sum_{i=1}^{d} \lambda_{i} (\| A_i\|^{2} - 1) \\
    & =  \sum_{r= 0}^{q-1}I_{2q-2r}(c_{q,r}\sum_{i=1}^{d_1} \lambda_{i}A_i \tilde{\otimes}_{r} A_i) + \sum_{i=1}^{d} \lambda_{i} (\| A_i\|^{2} - 1).
\end{align}
From here, we have: 
\begin{equation}
    \EE \left[ h^{(2)} \right ] = \sum_{i=1}^{d} \lambda_{i} (\| A_i\|^{2} - 1), \text{ and} \EE\left[ (h^{(2)})^{2} \right ] = \sum_{r= 0}^{q-1}c_{q,r}^{2}\| \sum_{i=1}^{d_1} \lambda_{i}A_i \tilde{\otimes}_{r} A_i\|^{2}_{F}. 
\end{equation}
Now, we want to apply \Cref{lemma:one_dimensional_CLT}. By centering, we have to control  $\langle D\bar{h^{(2)}}, DL^{-1}\bar{h^{(2)}}\rangle$, for $\bar{h^{(2)}} = h^{(2)} - \EE[h^{(2)}]$. Denote $B_{r} = \sum_{i=1}^{d_1} c_{q,r}\lambda_i A_i \tilde{\otimes}_{r}A_i$. By the linearity of the derivative:
\begin{align}
    \langle D\bar{h^{(2)}}, DL^{-1}\bar{h^{(2)}}\rangle & = \sum_{r,r'= 0}^{q-1}\langle D I_{2q-2r}(B_{r}), DI_{2q-2r'}(B_{r'}) \rangle, 
\end{align}
and by applying \Cref{lemma:computation_derivatives}: 
\begin{align}
     \langle D\bar{h^{(2)}}, DL^{-1}\bar{h^{(2)}}\rangle & = \sum_{r,r'= 0}^{q-1} \sum_{p=1}^{2q-2\max(r_1, r_2)}c_{q,r,r',p}I_{4q-2r-2r'-2p}(B_{r} \tilde{\otimes}_{p} B_{r'}). 
\end{align}
Now, let 
\begin{equation}
    K_{s} = \sum_{r,r'= 0}^{q-1} \sum_{p=1}^{2q-2\max(r_1, r_2)}c_{q,r,r',p} \mathbf{1}_{4q-2r-2r'-2p = s} B_{r} \tilde{\otimes}_{p} B_{r'}. 
\end{equation}
Then: 
\begin{equation}
    \langle D\bar{h^{(2)}}, DL^{-1}\bar{h^{(2)}}\rangle = \sum_{s \geq 0} I_{s} (K_{s}).
\end{equation}
Then, we have: 
\begin{align}
    \EE \left[ \left (\mathrm{Var}(h^{(2)}) - \langle Dh^{(2)}, DL^{-1}h^{(2)}\rangle \right )^{2}\right ] & = \sum_{s \geq 1} \|K_{s}\|^{2}_{F}. 
\end{align}
Then to conclude, we need to show that this norms are negligible for large $d$. Let $s \geq 0$. We have: 
\begin{align}
    \| K_{s}\|_F^{2} &= \|  \sum_{r,r'= 0}^{q-1} \sum_{p=1}^{2q-2\max(r_1, r_2)}c_{q,r,r',p} \mathbf{1}_{4q-2r-2r'-2p = s} \|B_{r} \tilde{\otimes}_{p} B_{r'} \|_F^{2} \\
    & \leq C \sum_{r,r'= 0}^{q-1} \sum_{p=1}^{2q-2\max(r_1, r_2)}\mathbf{1}_{4q-2r-2r'-2p = s} \left \|B_{r} \tilde{\otimes}_{p} B_{r'} \right \|_F^{2}.
\end{align}
Recall $B_{r} = \sum_{i=1}^{d_1} c_{q,r}\lambda_i A_i \tilde{\otimes}_{r}A_i$. Denote $T^r_{i} = A_i \tilde{\otimes}{r}A_i$. Then: 
\begin{align}
    B_{r} \tilde{\otimes}_{p} B_{r'} = \sum_{i,j} \lambda_i \lambda_j T^r_i \tilde{\otimes}_{p} T^{r}_j,
\end{align}
and: 
\begin{align}
    \| K_{s}\|_{F}^{2} & \leq C \sum_{r,r'= 0}^{q-1} \sum_{p=1}^{2q-2\max(r_1, r_2)}\mathbf{1}_{4q-2r-2r'-2p = s} \sum_{i,j} |\lambda_i|^2 |\lambda_j|^2 \|T^r_i \tilde{\otimes}_{p} T^{r}_j\|_{F}^2,
    \label{eq:bound_frobenius_K_s}
\end{align}
so everything reduces to estimating the norms $\|T^r_i \tilde{\otimes}_{p} T^{r}_j\|_{F}$. 

Ignoring symmetrization, which only changes things up to constants, given $a, b \in [d]^{q-r}$, we can apply \Cref{lemma:double_contraction_expectation} for $i=j$ and \Cref{lemma:double_contraction_expectation_cross} for $i \not = j$. From which we will obtain: 
\begin{equation}
    \EE \left [\|T^r_i \tilde{\otimes}_{p} T^{r}_j\|_{F}^{2} \right ] = O\left (\dfrac{1}{d} \right ) \forall i,j \in [d_1]
\end{equation}
Since $\|T^r_i \tilde{\otimes}_{p} T^{r}_j\|_{F}^{2}$ is a polynomial of Gaussians, we can use \Cref{lemma:gaussian_hypercontractivity} to control its moments, and applying Chebyshev we will get: 
\begin{equation}
    \|T^r_i \tilde{\otimes}_{p} T^{r}_j\|_{F}^2 \lesssim \dfrac{1}{\sqrt{d}},
\end{equation}
with high probability over $A_1^{(1)}, \dots, A^{(1)}_{d_1}$. Replacing in \Cref{eq:bound_frobenius_K_s}:
\begin{align}
    \| K_{s}\|_{F}^{2} & \lesssim \dfrac{1}{\sqrt{d}}C \sum_{r,r'= 0}^{q-1} \sum_{p=1}^{2q-2\max(r_1, r_2)}\mathbf{1}_{4q-2r-2r'-2p = s} \sum_{i,j} |\lambda_i|^2 |\lambda_j|^2.
\end{align}
Note that, since $\gamma < \frac{1}{2}$
\begin{equation}
    \sum_{i,j} |\lambda_i|^2 |\lambda_j|^2 = Z_{\gamma}^{4} \left (\sum_{i =1}^{d_1} i^{-2\alpha} \right )^2 = \Theta_{d}\left ( \dfrac{d^{2(1-2\alpha)}}{d^{4\frac{(1-2\alpha)}{2}}}\right ) = \Theta_{d}(1). 
\end{equation}
From this, we can conclude: 
\begin{equation}
    \| K_{s}\|_{F}^{2} \lesssim \dfrac{1}{\sqrt{d}} \forall s,
\end{equation}
with high probability. Finally, this allows us to conclude:
\begin{equation}
    \EE \left[ \left (\mathrm{Var}(h^{(2)}) - \langle Dh^{(2)}, DL^{-1}h^{(2)}\rangle \right )^{2}\right ] = O \left ( \dfrac{1}{\sqrt{d}}\right ),
\end{equation}
with high probability over $A_1^{(1)}, \dots, A^{(1)}_{d_1}$. 
\end{proof}

With this, we conclude the following Corollary, stated as \Cref{lemma:information_exponent_g} in \Cref{section:explicit_computation_hermite}. 
\begin{corollary}
    Let $g:\RR \to \RR$ be a polynomial with information exponent $1$. Assume $\gamma < \frac{1}{2}$. Then:
    \begin{equation}
        \EE[g(h^{(2)})]= \dfrac{1}{\sqrt{d}} \quad \text{ and } \EE \left[ g'(h^{(2)}) \right ] = \nu_{1} + \dfrac{C}{\sqrt{d}},
    \end{equation}
    where $\nu_1$ is the first Hermite coefficient of $g$. 
    \label{Cor:information_exponent_g}
\end{corollary}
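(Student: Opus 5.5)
The plan is to reduce both claims to the one-dimensional Gaussian approximation of \Cref{lemma:multivariate_gaussian_approx_h_2}, combined with the polynomial growth of $g$ and hypercontractivity. Write $m=\EE[h^{(2)}]=\sum_i\lambda_i(\|A_i\|^2-1)$ and $\sigma^2=\mathrm{Var}(h^{(2)})$. First I will show that $m$ is small and $\sigma^2$ is close to one, with high probability over the teacher weights.

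For the mean, I condition on the $A_i$ and use the Rademacher signs $z_i$. Hoeffding gives $|m|\lesssim Z_\gamma\sqrt{\sum_i i^{-2\gamma}(\|A_i\|^2-1)^2}$. Hanson--Wright gives $|\|A_i\|^2-1|\lesssim d^{-q/2}$ uniformly over $i\in[d_1]$ after a union bound. Together these give $|m|\lesssim d^{-q/2}$.

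For the variance, the Wiener expansion in the proof of \Cref{lemma:multivariate_gaussian_approx_h_2} gives
\begin{equation*}
\sigma^2=\sum_{r}c_{q,r}^2\Big\|\sum_i\lambda_iA_i\tilde\otimes_rA_i\Big\|_F^2 .
\end{equation*}
The $r=0$ term equals $2\sum_i\lambda_i^2\|A_i\|^4$ plus cross terms $\lambda_i\lambda_j\langle A_i,A_j\rangle^2$. The cross terms are $O(d_1^2 Z_\gamma^2 d^{-q})=o_d(1)$ by $\varepsilon<q/2$. The $r\ge1$ terms are $O(d^{-1/2})$ by \eqref{eq:high_prob_bound_a_i_a_i}. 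Since $Z_\gamma^2\sum_i i^{-2\gamma}=1$, this leaves $\sigma^2=\sigma_\star^2+O(d^{-1/2})$ with $\sigma_\star$ an absolute constant, which is $1$ under the normalization of the section. I take that normalization so that the Hermite coefficient $\nu_1$ is computed against $\mathcal N(0,1)$.

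Next I transfer expectations. Since $g$ and $g'$ are polynomials, they are not Lipschitz, so $\mathcal W_1$ cannot be applied directly. I will use a truncation and coupling argument. Take an optimal $\mathcal W_1$ coupling $(h^{(2)},N)$ and write
\begin{equation*}
|\EE g(h^{(2)})-\EE g(N)|\le \EE\big[|h^{(2)}-N|\,\sup_{t\in[h^{(2)},N]}|g'(t)|\big].
\end{equation*}
For $\mathcal W_1$ I would first try the crude route: Cauchy--Schwarz with the fourth moment of $g'$, which is bounded by hypercontractivity (\Cref{lemma:gaussian_hypercontractivity_alignment}) since $h^{(2)}$ lies in a finite sum of chaoses. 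This needs $\EE|h^{(2)}-N|^2\lesssim d^{-1/2}$, which is a $\mathcal W_2$ statement. If that is unavailable, I fall back on the standard interpolation $\mathcal W_2\le C\,\mathcal W_1^{\theta}$ for laws with uniformly bounded higher moments, which loses only a power. I will state the rate as $d^{-c}$ if necessary; it is still $o_d(1)$, and that is all the downstream proofs (\Cref{lemma:non_linear_case_cross_expectations} and \Cref{lemma:expectation_C_1_2}) use. Finally, $\EE g(N)$ with $N\sim\mathcal N(m,\sigma^2)$ differs from $\EE g(Z)=0$ by $O(|m|+|\sigma^2-1|)$, by smoothness of the Gaussian expectation in its parameters. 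Likewise $\EE g'(N)=\EE g'(Z)+O(d^{-1/2})=\nu_1+O(d^{-1/2})$, using Stein's identity $\EE g'(Z)=\EE[Zg(Z)]=\nu_1$.

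The main obstacle is the coupling step: upgrading the $\mathcal W_1$ bound to control of polynomial test functions at the rate $d^{-1/2}$. I would first try to avoid it altogether. Since $g$ is a polynomial, one can compare moments directly: the cumulants $\kappa_p(h^{(2)})$ for $p\ge3$ are expressed through the contractions already bounded in \Cref{lemma:multivariate_gaussian_approx_h_2}, namely $\|K_s\|_F\lesssim d^{-1/4}$ or better. The moments of $h^{(2)}$ then match those of $\mathcal N(m,\sigma^2)$ up to such errors, which gives both claims for the finitely many monomials of $g$.
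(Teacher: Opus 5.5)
Your route is essentially the paper's. The paper's proof is a one-line sketch: approximate $g$ by a Lipschitz function of bounded support and invoke \Cref{lemma:multivariate_gaussian_approx_h_2}. Your explicit check that $m=O(d^{-q/2})$ and $\sigma^2=1+o_d(1)$ supplies a step the sketch silently needs, since that lemma compares $h^{(2)}$ with $\mathcal N(m,\sigma^2)$, not $\mathcal N(0,1)$.

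Your transfer step is weaker than the paper's truncation, though. Cauchy--Schwarz plus $\mathcal W_1\to\mathcal W_2$ interpolation loses a power of the rate. (For the rate $d^{-1/2}$ you would in fact need $\EE|h^{(2)}-N|^2\lesssim d^{-1}$, not $d^{-1/2}$.) A better option is to split your coupling on $\{|h^{(2)}|\vee|N|\le R\}$, where $|g'|\lesssim R^{\deg g-1}$. On the complement, use the chaos tail $\mathbb P(|h^{(2)}|>R)\le e^{-cR^{1/q}}$ for a degree-$2q$ polynomial. Taking $R=(C\log d)^q$ then gives $\mathcal W_1\cdot\mathrm{polylog}(d)$.

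The genuine gap is the quantitative input, which you inherit from the paper. It makes the stated rate unprovable, and in fact false.

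\textbf{Sanity check.} Nothing in your argument uses $\gamma<1/2$. Yet for $\gamma>1/2$ the conclusion fails: then $\lambda_1=\Theta_d(1)$ and $h^{(2)}$ keeps a non-vanishing $\chi^2$ component.

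\textbf{What the paper's bounds miss.} The non-Gaussianity of $h^{(2)}$ is governed by the weights, not by $d$. Take $B_0\propto\sum_i\lambda_iA_i\tilde\otimes A_i$. The contraction $B_0\tilde\otimes_q B_0$ contains the full-pairing piece $\propto\sum_i\lambda_i^2\|A_i\|^2A_i\tilde\otimes A_i$, whose Frobenius norm is $\asymp(\sum_i\lambda_i^4)^{1/2}$. The proof of \Cref{lemma:double_contraction_expectation} overlooks this: when $r_1=r_2=0$, one copy of $A$ can be contracted fully against another, producing $\|A\|^2A\otimes A$ of norm $\Theta(1)$. So the claim that every pair shares at most $q-1$ coordinates fails. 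Consequently neither $\mathcal W_1\lesssim d^{-1/2}$ nor $\|K_s\|_F\lesssim d^{-1/4}$ holds in general.

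\textbf{A concrete counterexample.} Use the normalization $h^{(2)}=\frac1{\sqrt2}\sum_i\lambda_i((h^{(1)}_i)^2-1)$. The leading part of $\kappa_4(h^{(2)})$ is $12\sum_i\lambda_i^4$, its value when $h^{(1)}\sim\mathcal N(0,I_{d_1})$. Take the admissible $g(x)=x^4-6x^2+3+x$, which is centered with $\nu_1=1$. Then $\EE g(h^{(2)})=\kappa_4(h^{(2)})+3(\sigma^2-1)^2+O(|m|)$. Now $\sum_i\lambda_i^4\asymp d_1^{-1}$ for $\gamma<1/4$ and $\asymp d_1^{-2(1-2\gamma)}$ for $1/4<\gamma<1/2$. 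This exceeds $d^{-1/2}$ whenever $\varepsilon<1/2$, and also at the paper's experimental values $\varepsilon=1/2$, $\gamma=0.4$. The second claim fails the same way: for this $g$, $\EE g'(h^{(2)})-\nu_1=4\kappa_3(h^{(2)})+O(|m|)$. Here $\kappa_3(h^{(2)})\approx2\sqrt2\sum_i\lambda_i^3$, which for random signs is typically of order at least $\sum_i\lambda_i^4$.

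\textbf{What is actually true.} The correct statement is $\EE g(h^{(2)})=o_d(1)$ and $\EE g'(h^{(2)})=\nu_1+o_d(1)$. The error is of order $(\sum_i\lambda_i^4)^{1/2}$ plus terms polynomially small in $d$. It vanishes because $\gamma<1/2$ forces $\max_i|\lambda_i|=Z_\gamma\to0$, which is where the hypothesis must enter. Your cumulant-matching fallback is the cleanest way to prove this, once the contractions are bounded correctly.

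\textbf{Downstream use.} $o_d(1)$ is not all that is used downstream. In \Cref{lemma:non_linear_case_cross_expectations}, the term $\EE[g(h^{(2)})]\langle A_j,A_k\rangle$, with $|\langle A_j,A_k\rangle|\asymp d^{-q/2}$, must be negligible against the $Z_\gamma d^{-q/2}\min(j,k)^{-\gamma}$ bound used in \Cref{lemma:bound_square_y_s}. That requires an explicit rate.
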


\begin{proof}[Sketch of the Proof]
The idea of the proof is to approximate $g$ by a Lipschitz function which has bounded support and apply \Cref{lemma:multivariate_gaussian_approx_h_2}.
\end{proof}

\begin{lemma}[Lemma A.3 in \cite{tabanelli2026deep}]
Let $A, B$ be independent tensors in $(\mathbb{R}^d)^{\otimes k}$ with i.i.d. entries $\mathcal{N}(0, d^{-k})$ . Then for each $s \in \{1, \dots, k\}$,

$$
\mathbb{E}\|A \otimes_s B\|_F^2 = \Theta(d^{-s}).
$$

While for the self-contractions, $E\|A \otimes_s A\|_F^2 = \Theta(d^{-s})$ for $s \in \{1, \dots, k-1\}$ and $E\|A \otimes_k A\|_F^2 = 1$.
\label{lemma:simple_self_and_cross_contractions}
\end{lemma}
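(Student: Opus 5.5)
The statement to prove is the last one shown, \Cref{lemma:simple_self_and_cross_contractions}. It is a second-moment computation for Gaussian tensors, so I would work directly with coordinates and Wick's theorem, ignoring symmetrization. Symmetrization only changes constants, since averaging over permutations is a contraction in Frobenius norm and is comparable to the unsymmetrized norm up to factors depending on $k$.

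\textbf{Cross contraction.} Write $(A\otimes_s B)_{u,v}=\sum_{\ell\in[d]^s}A[u,\ell]B[v,\ell]$ with $u,v\in[d]^{k-s}$. Then
\[
\mathbb{E}\|A\otimes_s B\|_F^2=\sum_{u,v}\sum_{\ell,\ell'}\mathbb{E}[A[u,\ell]A[u,\ell']]\,\mathbb{E}[B[v,\ell]B[v,\ell']].
\]
By independence of $A$ and $B$ and independence of the entries, only $\ell=\ell'$ survives. This gives $d^{k-s}\cdot d^{k-s}\cdot d^{s}\cdot d^{-2k}=d^{-s}$, exactly. For $s=k$ this reads $\mathbb{E}\langle A,B\rangle^2=d^{-k}$, which is consistent with the claim.

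\textbf{Self contraction.} For the self-contraction I would compute the fourth moment
\[
\mathbb{E}\|A\otimes_s A\|_F^2=\sum_{u,v,\ell,\ell'}\mathbb{E}\bigl[A[u,\ell]A[v,\ell]A[u,\ell']A[v,\ell']\bigr]
\]
via Wick's formula, which has three pairings.

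The pairing $(u\ell,u\ell')(v\ell,v\ell')$ forces $\ell=\ell'$. It contributes $d^{2(k-s)}d^{s}d^{-2k}=d^{-s}$.

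The pairing $(u\ell,v\ell)(u\ell',v\ell')$ forces $u=v$. It contributes $d^{k-s}d^{2s}d^{-2k}=d^{s-k}$. Since $s\le k-1$ and $s\ge1$, we have $s-k\le -1$, and this term is at most of the same order as $d^{-s}$ only when $s\ge k/2$. This needs care: for $s<k/2$ one gets $d^{s-k}\le d^{-s}$ automatically, and for $s>k/2$ the term $d^{s-k}$ dominates. So the honest statement would be $\Theta(d^{-\min(s,k-s)})$. I would check this against the lemma as stated, and for the uses in the paper only upper bounds of order $d^{-1}$ are needed, which hold for all $s\in[k-1]$.

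The pairing $(u\ell,v\ell')(v\ell,u\ell')$ forces $u=v$ and $\ell=\ell'$. It contributes $d^{k-s}d^{s}d^{-2k}=d^{-k}$, which is negligible.

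\textbf{Full contraction and main obstacle.} For $s=k$, $A\otimes_k A=\|A\|^2$. Its second moment is $\mathbb{E}\|A\|^4=1+2d^{-k}\to1$, and this is what "$=1$" means asymptotically.

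The main obstacle is the bookkeeping for symmetric tensors, because the entries of a symmetric tensor are not independent. I would handle it by writing $A=\Sym(G)$ with $G$ i.i.d. and expanding. Each term is then a contraction of copies of $G$ with permuted index slots, and the same Wick counting applies. Permutations that mix contracted and free slots only produce more index coincidences, hence lower powers of $d$. The exponent bookkeeping $\min(s,k-s)$ noted above is the one point where I expect the stated $\Theta(d^{-s})$ may need refinement.
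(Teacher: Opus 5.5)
Your computation is correct, and the discrepancy you flag is an error in the statement, not in your bookkeeping. The paper gives no argument of its own for this lemma; it is imported from \cite{tabanelli2026deep}. The only in-paper version of the count, \eqref{eq:frobenius_norm_A_r_A}, is the same direct second-moment calculation you propose. However, its final $O(d^{-r})$ is exactly your $\ell=\ell'$ pairing, and the $u=v$ pairing is missing.

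With the same-slot convention used there and by you, the three Wick pairings give the exact identity
\[
\EE\|A\otimes_s A\|_F^2 = d^{-s}+d^{-(k-s)}+d^{-k},
\]
so for $k\ge 3$ and $k/2<s\le k-1$ the claimed $\Theta(d^{-s})$ is false. An independent check for $k=3$, $s=2$: the diagonal entries $(A\otimes_2A)_{uu}=\sum_{\ell\in[d]^2}A[u,\ell]^2$ have mean $d^{-1}$, so by Jensen $\EE\|A\otimes_2A\|_F^2\ge d\cdot d^{-2}=d^{-1}\gg d^{-2}$.

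For the symmetric tensors to which the paper actually applies the lemma, the obstruction is even deterministic. Flattening $A$ as $M\in\RR^{d^{k-s}\times d^{s}}$ gives $A\otimes_sA=MM^\top$ and $A\otimes_{k-s}A=M^\top M$, and $\|MM^\top\|_F=\|M^\top M\|_F$. So the rates claimed for $s$ and $k-s$ contradict each other. Your cross-contraction count ($d^{-s}$ exactly), the value $\EE\|A\|_F^4=1+2d^{-k}$, and the corrected rate $\Theta(d^{-\min(s,k-s)})$ are all right.

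A few smaller remarks.
\begin{itemize}
\item The sentence ``at most of the same order as $d^{-s}$ only when $s\ge k/2$'' has the inequality reversed; it should read $s\le k/2$. Your next sentence gets it right.
\item Symmetrization is comparable to the unsymmetrized norm ``up to factors depending on $k$'' only in one direction. Averaging over permutations is a Frobenius contraction, but it can annihilate components, so it gives no lower bound in general. You do not need it, since the lemma as stated concerns the unsymmetrized $\otimes_s$ of i.i.d.\ tensors. For the symmetric case, the flattening identity together with the diagonal bound already gives the matching lower bound $\gtrsim d^{-\min(s,k-s)}$ without expanding $\Sym(G)$.
\item Your remark that only $O(d^{-1})$ is needed is accurate where the lemma is cited, namely \eqref{eq:intermediate_step_1} and Case~2 of Step~3, since $d^{-\min(r,q-r)}\le d^{-1}$ for $r\in[q-1]$. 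However, the same overcount is used inline in the linear case. There the exact exponent enters $\mathrm{Var}(T^r_{i,j,k})=d^{r-2q}\,\EE\|A_i\otimes_rA_i\|_F^2$, so \eqref{eq:order_t_i_j_k} only survives as $O(d^{r-3q/2})$ when $r>q/2$.
\end{itemize}
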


\section{Technical Lemmas}

\subsection{Cumulants}

\begin{definition}[Definition 8.2.1 in \cite{nourdin2012normal}]
    Let $F = (F_1, \ldots, F_N)$ be an $\mathbb{R}^N$-valued random vector with $F_i \in \mathbb{D}^{1,2}$ for each $i$. 
    Let $l_1, l_2, \ldots$  be a sequence taking values in the multi-index set  $\{e_1, \ldots, e_N\}$. We set $\Gamma_{l_1}(F) = F^{l_1} = F_j$, where $j$  is such that $l_1 = e_j$. If the random variable $\Gamma_{l_1, \ldots, l_k}(F)$ is a well-defined element of $L^2(\Omega)$ for some $k \geq 1$,  we set
    $$
    \Gamma_{l_1, \ldots, l_{k+1}}(F) = \langle DF^{l_{k+1}}, -DL^{-1}\Gamma_{l_1, \ldots, l_k}(F) \rangle_{\mathfrak{H}}.
    $$
    \label{def:gamma_l_1_l_N}
\end{definition}

\begin{lemma}[Theorem 8.2.5 in \cite{nourdin2012normal}]
Let $m = (m_1, \ldots, m_d) \in \mathbb{N}^d \setminus \{0\}$ be a multi-index. Write $m = l_1 + \ldots + l_{|m|}$ where the multi-indices $l_i \in \{e_1, \ldots, e_d\}$ , $i = 1, \ldots, |m|$ , are unique in the sense of Lemma 8.1.1. Suppose that the random vector $F = (F_1, \ldots, F_d)$ is such that $F_i \in \mathbb{D}^{|m|, 2^{|m|}}$ for each $i$ . Then

$$
\kappa_m(F) = \sum_{\sigma \in \mathfrak{S}_{\{2, \ldots, |m|\}}} \EE\left[\Gamma_{l_1, l_{\sigma(2)}, \ldots, l_{\sigma(|m|)}}(F)\right].
$$    
\label{lemma:cumulants_and_Gamma}
\end{lemma}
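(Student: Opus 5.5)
The plan is to go through the characteristic function of $F$ and use Gaussian integration by parts (\Cref{lemma:gaussian_ibp}, in its complex-exponential form) as the source of a recursion for the iterated operators $\Gamma$ of \Cref{def:gamma_l_1_l_N}.

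\textbf{Step 1: the recursion.} Write $\varphi(t)=\EE[e^{i\langle t,F\rangle}]$ for $t\in\RR^d$. For any $\Gamma\in\mathbb D^{1,2}$, the identity $\EE[\Gamma G]=\EE[\Gamma]\EE[G]+\EE[\langle DG,-DL^{-1}\Gamma\rangle]$ applied to $G=e^{i\langle t,F\rangle}$ uses the chain rule $DG=i\sum_j t_j e^{i\langle t,F\rangle}DF_j$. It gives
\begin{equation*}
\EE\bigl[\Gamma_{l_1,\dots,l_k}(F)\,e^{i\langle t,F\rangle}\bigr]=\EE[\Gamma_{l_1,\dots,l_k}(F)]\,\varphi(t)+i\sum_{j=1}^d t_j\,\EE\bigl[\Gamma_{l_1,\dots,l_k,e_j}(F)\,e^{i\langle t,F\rangle}\bigr].
\end{equation*}
This holds because, by \Cref{def:gamma_l_1_l_N}, $\langle DF_j,-DL^{-1}\Gamma_{l_1,\dots,l_k}\rangle=\Gamma_{l_1,\dots,l_k,e_j}$.

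\textbf{Step 2: the expansion.} Divide by $\varphi$, which is nonzero near $0$, and set $r_{l_1\dots l_k}(t)=\EE[\Gamma_{l_1\dots l_k}e^{i\langle t,F\rangle}]/\varphi(t)$. The identity then reads
\[
r_{l_1\dots l_k}=\EE[\Gamma_{l_1\dots l_k}]+i\sum_j t_j\,r_{l_1\dots l_k e_j}.
\]
Start from $k=1$, where $r_{l_1}=-i\,\partial_{t_{l_1}}\log\varphi$, and iterate $|m|-1$ times. This yields
\begin{equation*}
-i\,\partial_{t_{l_1}}\log\varphi(t)=\sum_{p=0}^{|m|-2}\ \sum_{j_1,\dots,j_p}\EE\bigl[\Gamma_{l_1,e_{j_1},\dots,e_{j_p}}(F)\bigr]\,i^p t_{j_1}\cdots t_{j_p}+O(|t|^{|m|-1}).
\end{equation*}

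\textbf{Step 3: coefficient extraction.} Apply $\partial_t^{\,m-l_1}$ at $t=0$ and use $\kappa_m(F)=(-i)^{|m|}\partial_t^m\log\varphi(0)$. Only the monomials with $(j_1,\dots,j_p)$ an ordering of the multiset $m-l_1$ (so $p=|m|-1$) survive, and the powers of $i$ cancel. Each such ordering contributes $(m-l_1)!$ from differentiation, and the number of orderings equals $(|m|-1)!/(m-l_1)!$. Hence $\kappa_m(F)$ equals the sum, over all bijections $\sigma$ of $\{2,\dots,|m|\}$, of $\EE[\Gamma_{l_1,l_{\sigma(2)},\dots,l_{\sigma(|m|)}}(F)]$, with repeated indices counted with multiplicity. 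This is the claimed formula.

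\textbf{Main obstacle: justifying the analysis.} The steps needing care are that every $\Gamma_{l_1,\dots,l_k}$ with $k\le|m|$ is a well-defined element of $\mathbb D^{1,2}$ with enough integrability for the Taylor expansion, and that the remainder is genuinely $O(|t|^{|m|-1})$. I would prove the first by induction, using Meyer/H\"older inequalities: if $F_i\in\mathbb D^{|m|,2^{|m|}}$, then each application of $\langle DF_j,-DL^{-1}\cdot\rangle$ costs one Malliavin derivative and halves the integrability exponent, since $DL^{-1}$ is bounded on $\mathbb D^{k,p}$. After $|m|$ steps one still has $L^2$. The same moment bounds give the regularity of $t\mapsto\EE[\Gamma e^{i\langle t,F\rangle}]$ needed to control the remainder. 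In our polynomial/Wiener chaos setting all variables lie in finitely many chaoses, so these integrability conditions are automatic by \Cref{lemma:gaussian_hypercontractivity_alignment}.
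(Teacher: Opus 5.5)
The paper does not prove this lemma; it cites Theorem 8.2.5 of Nourdin--Peccati. The proof there proceeds through moments. Integration by parts gives $\EE[\Gamma_{l_1,\dots,l_k}(F)F^{n}]=\EE[\Gamma_{l_1,\dots,l_k}(F)]\,\EE[F^{n}]+\sum_j n_j\,\EE[\Gamma_{l_1,\dots,l_k,e_j}(F)F^{n-e_j}]$. Iterating this expresses $\EE[F^m]$ through the $\EE[\Gamma]$'s and lower moments. One then compares, by induction on $|m|$, with the moment--cumulant recursion $\EE[F^m]=\sum_{k\le m-l_1}\binom{m-l_1}{k}\kappa_{l_1+k}(F)\EE[F^{m-l_1-k}]$.

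Your Step 1 is the same integration by parts with $e^{i\langle t,F\rangle}$ in place of $F^n$. Dividing by $\varphi$ (passing to $\partial_{t_{l_1}}\log\varphi$) does the moment-to-cumulant inversion in one stroke, so you need neither the recursion nor the induction. The price is some analysis (nonvanishing of $\varphi$, Taylor remainders) that the algebraic moment argument avoids. Your recursion for the $r$'s is correct. So is the counting in Step 3: each ordering of the multiset $m-l_1$ arises from $(m-l_1)!$ permutations, and the powers of $i$ cancel.

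There is, however, an off-by-one error between Steps 2 and 3 that breaks the argument as written. Iterating $|m|-1$ times gives exactly
\[
-i\,\partial_{t_{l_1}}\log\varphi(t)=\sum_{p=0}^{|m|-2}\sum_{j_1,\dots,j_p} i^p t_{j_1}\cdots t_{j_p}\,\EE[\Gamma_{l_1,e_{j_1},\dots,e_{j_p}}(F)] + i^{|m|-1}\sum_{j_1,\dots,j_{|m|-1}} t_{j_1}\cdots t_{j_{|m|-1}}\, r_{l_1,e_{j_1},\dots,e_{j_{|m|-1}}}(t).
\]
The $p=|m|-1$ coefficients that Step 3 extracts live entirely in the last term. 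Since $\partial_t^{m-l_1}$ has order exactly $|m|-1$, it annihilates your explicit polynomial (degree at most $|m|-2$). Its value on a term known only to be $O(|t|^{|m|-1})$ is undetermined. So Step 3 does not follow from your display, and the estimate you single out as the main obstacle is not the one you need.

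The repair is short. By dominated convergence, $r_{l_1,e_{j_1},\dots}(t)\to\EE[\Gamma_{l_1,e_{j_1},\dots}(F)]$ as $t\to 0$. This only requires the $|m|$-index $\Gamma$'s to be integrable, which $F_i\in\mathbb{D}^{|m|,2^{|m|}}$ guarantees. Hence $-i\partial_{t_{l_1}}\log\varphi=Q+o(|t|^{|m|-1})$, where $Q$ is your polynomial extended to $p=|m|-1$. Since $\partial_{t_{l_1}}\log\varphi$ is $C^{|m|-1}$ near $0$ ($F$ has moments of order $|m|$), uniqueness of the Taylor polynomial with Peano remainder gives $\partial_t^{m-l_1}(-i\partial_{t_{l_1}}\log\varphi)(0)=\partial_t^{m-l_1}Q(0)$. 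Your counting then concludes.

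Alternatively, iterate once more. The $(|m|+1)$-index $\Gamma$'s are still in $L^1$ under the hypothesis, which makes the remainder $O(|t|^{|m|})$. Note also that by your own halving count the exponent $2^{|m|}$ reaches $2$ after $|m|-1$ steps and $1$ after $|m|$ steps, not $2$ as you state. $L^1$ at the last level is all either fix needs.
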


\begin{lemma}[Theorem 8.3.1 in \cite{nourdin2012normal}]
Let $m \in \mathbb{N}^d \setminus \{0\}$ be a multi-index such that $|m| \geq 3$ . Write $m = l_1 + \ldots + l_{|m|}$ with $l_i \in \{e_1, \ldots, e_d\}$ for each $i$ (see Lemma 8.1.1). Consider an $\mathbb{R}^d$ -valued random vector of the form

$$
F = (F_1, \ldots, F_d) = (I_{q_1}(f_1), \ldots, I_{q_d}(f_d)),
$$

where each $f_i$ belongs to $\mathfrak{H}^{\odot q_i}$ . When $l_k = e_j$ , we set $\lambda_k = j$ , so that $F^{l_k} = F_{\lambda_k}$ for all $k = 1, \ldots, |m|$ . Then

$$
\kappa_m(F) = \sum_{\sigma \in \mathfrak{S}_{\{2, \ldots, |m|\}}} (q_{\lambda_\sigma(|m|)})! \sum_* c_{q,l,\sigma}(r_2, \ldots, r_{|m|-1})
$$

$$
\times \langle (\ldots ((f_{\lambda_1} \widetilde{\otimes}_{r_2} f_{\lambda_\sigma(2)}) \widetilde{\otimes}_{r_3} f_{\lambda_\sigma(3)}) \ldots) \widetilde{\otimes}_{r_{|m|-1}} f_{\lambda_\sigma(|m|-1)}; f_{\lambda_\sigma(|m|)} \rangle_{\mathfrak{H}^{\otimes q_{\lambda_\sigma(|m|)}}},
$$

where the second sum $\sum_*$ runs over all collections of integers $r_2, \ldots, r_{|m|-1}$ such that:

(i) $1 \leq r_i \leq q_{\lambda_\sigma(i)}$ for all $i = 2, \ldots, |m|-1$ ;

(ii) $r_2 + \ldots + r_{|m|-1} = \frac{q_{\lambda_1} + q_{\lambda_\sigma(2)} + \ldots + q_{\lambda_\sigma(|m|-1)} - q_{\lambda_\sigma(|m|)}}{2}$ ;

(iii) $r_2 < \frac{q_{\lambda_1} + q_{\lambda_\sigma(2)}}{2}, \ldots, r_2 + \ldots + r_{|m|-2} < \frac{q_{\lambda_1} + q_{\lambda_\sigma(2)} + \ldots + q_{\lambda_\sigma(|m|-2)}}{2}$ ;

(iv) $r_2 \leq q_{\lambda_1}, r_3 \leq q_{\lambda_1} + q_{\lambda_\sigma(2)} - 2r_2, \ldots, r_{|m|-1} \leq q_{\lambda_1} + q_{\lambda_\sigma(2)} + \ldots + q_{\lambda_\sigma(|m|-2)} - 2r_2 - \ldots - 2r_{|m|-2}$ ;
\label{lemma:cummulants}
\end{lemma}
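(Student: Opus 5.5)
The plan is to derive the formula from \Cref{lemma:cumulants_and_Gamma}, which already expresses $\kappa_m(F)$ as a sum over $\sigma\in\mathfrak S_{\{2,\ldots,|m|\}}$ of $\EE[\Gamma_{l_1,l_{\sigma(2)},\ldots,l_{\sigma(|m|)}}(F)]$. It then suffices to fix one ordering $\sigma$ and compute the expectation of the iterated Gamma explicitly when every $F^{l_k}=I_{q_{\lambda_k}}(f_{\lambda_k})$ lies in a single chaos. To lighten notation, relabel so that the sequence is $f_{\lambda_1},f_{\lambda_\sigma(2)},\ldots,f_{\lambda_\sigma(|m|)}$, written $g_1,\ldots,g_M$ with orders $p_1,\ldots,p_M$, where $M=|m|$.

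The core step is an induction on $k$ giving the chaos expansion of $\Gamma_{l_1,\ldots,l_k}(F)$. The claim to prove is that it is a finite sum over integers $r_2,\ldots,r_k$ of constants times $I_{p_1+\cdots+p_k-2(r_2+\cdots+r_k)}$ applied to the iterated symmetrized contraction $(\ldots(g_1\widetilde\otimes_{r_2}g_2)\ldots)\widetilde\otimes_{r_k}g_k$. Here each $r_i$ ranges over $1\le r_i\le p_i$, and $r_i$ is at most the order of the previously accumulated kernel. For $k=1$ this is trivial. For the inductive step, suppose $\Gamma_{l_1,\ldots,l_k}=\sum c\,I_{s}(h)$. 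Since $-DL^{-1}I_s(h)=\frac1s DI_s(h)$, one gets
\[
\Gamma_{l_1,\ldots,l_{k+1}}=\sum c\,\tfrac1s\langle DI_{p_{k+1}}(g_{k+1}),DI_s(h)\rangle .
\]
Applying \Cref{lemma:computation_derivatives} to each term produces $I_{s+p_{k+1}-2r}(h\widetilde\otimes_r g_{k+1})$ with $1\le r\le\min(s,p_{k+1})$. It also yields the explicit constants $p_{k+1}(r-1)!\binom{s-1}{r-1}\binom{p_{k+1}-1}{r-1}$, which, multiplied along the recursion, define $c_{q,l,\sigma}(r_2,\ldots)$.

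Having this expansion, take expectations. By \Cref{lemma:orthogonality_wiener_chaos}, only the zeroth-chaos component survives, and an $I_0$ term of a scalar is that scalar. The cleaner route, which I would follow, handles the last step separately. After $M-1$ steps one has $\Gamma_{l_1,\ldots,l_{M-1}}=\sum c\,I_s(h)$, and then
\[
\EE\langle DI_{p_M}(g_M),-DL^{-1}I_s(h)\rangle=\EE[I_{p_M}(g_M)I_s(h)]
\]
by the integration-by-parts identity $\EE[G\,X]=\EE\langle DG,-DL^{-1}X\rangle$ for centered $X$. This equals $\mathbf 1_{s=p_M}\,p_M!\,\langle h,g_M\rangle$. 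This step produces the prefactor $(q_{\lambda_\sigma(|m|)})!$ and the final inner product. The constraint $s=p_M$ is exactly condition (ii), and the range of the contraction indices along the recursion gives (i) and (iv).

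The main obstacle I expect is condition (iii), the strict inequalities on partial sums. These must encode that no intermediate Gamma can have a nonzero zeroth-chaos component feeding forward. Two facts have to be checked. First, $L^{-1}$ is only defined on centered variables: $-DL^{-1}$ annihilates constants, so terms with $s=0$ at an intermediate stage vanish. Second, $D$ of a constant is zero. So at step $j<M-1$ one needs $p_1+\cdots+p_j-2(r_2+\cdots+r_j)>0$, which is precisely (iii). I would verify carefully that the bookkeeping of the constants $c_{q,l,\sigma}$ matches this convention. That is routine, but it is where off-by-one errors in the recursion are most likely.
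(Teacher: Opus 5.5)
Your proposal is correct and follows the standard proof in the cited source; the paper gives no proof of its own and simply imports Nourdin--Peccati's Theorem 8.3.1, whose proof combines \Cref{lemma:cumulants_and_Gamma} with an inductive chaos expansion of the iterated $\Gamma$'s via \Cref{lemma:computation_derivatives}, and then takes expectations to keep only the chaos of order $q_{\lambda_\sigma(|m|)}$.

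Two small remarks. First, the step you flag as the main obstacle, condition (iii), is automatic: for every intermediate order $s_j=q_{\lambda_1}+\dots+q_{\lambda_\sigma(j)}-2(r_2+\dots+r_j)$ with $j\le|m|-2$, condition (iv) gives $r_{j+1}\le s_j$ and condition (i) gives $r_{j+1}\ge 1$, so $s_j\ge 1$ and (iii) is redundant. Second, your prefactor $p_M!$ comes from the Nourdin--Peccati isometry $\EE[I_p(f)I_p(g)]=p!\langle f,g\rangle$. That is the normalization used by the statement and by \Cref{lemma:computation_derivatives}, not the factorial-free form written in \Cref{lemma:orthogonality_wiener_chaos}.
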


\subsection{Gaussian Tensors}

\begin{lemma}[Theorem 3.1.1 in \cite{de2012decoupling}] For natural numbers $n \ge m$, let $\{X_i\}_{i=1}^n$ be $n$ independent random variables with values in a measurable space $(S, \delta)$, and let $\{X_i^k\}_{i=1}^n$, $k = 1, \ldots, m$, be $m$ independent copies of this sequence. Let $B$ be a separable Banach space and, for each $(i_1, \ldots, i_m) \in I_n^m$, let $h_{i_1 \ldots i_m} : S^m \mapsto B$ be measurable functions such that $\mathbb{E}\big(\|h_{i_1 \ldots i_m}(X_{i_1}, \ldots, X_{i_m})\|\big) < \infty$. Let $\Phi : [0, \infty) \to [0, \infty)$ be a convex non-decreasing function such that $\mathbb{E}\Phi\big(\|h_{i_1 \ldots i_m}(X_{i_1}, \ldots, X_{i_m})\|\big) < \infty$ for all $(i_1, \ldots, i_m) \in I_n^m$. Then,

$$
\mathbb{E}\Phi\Big(\Big\|\sum_{I_n^m} h_{i_1 \ldots i_m}(X_{i_1}, \ldots, X_{i_m})\Big\|\Big) \le \mathbb{E}\Phi\Big(C_m \Big\|\sum_{I_n^m} h_{i_1 \ldots i_m}(X_{i_1}^1, \ldots, X_{i_m}^m)\Big\|\Big).
$$
 \label{lemma:decoupling}
\end{lemma}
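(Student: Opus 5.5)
This is a cited result, Theorem 3.1.1 of \cite{de2012decoupling}. If I had to reprove it rather than cite it, I would use the classical random-partition argument of de la Pe\~na and Montgomery-Smith. The idea is to randomize which indices may occupy which slot, reduce to sums over disjoint index blocks where decoupling is exact, and push everything through $\Phi$ with Jensen's inequality.

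\textbf{Step 1: random colouring.} Let $\sigma_1,\dots,\sigma_n$ be i.i.d.\ uniform on $[m]$, independent of everything else. For any $(i_1,\dots,i_m)\in I_n^m$ with distinct entries, $\mathbb P(\sigma_{i_k}=k\ \forall k)=m^{-m}$. Hence
\[
S:=\sum_{I_n^m} h_{i_1\dots i_m}(X_{i_1},\dots,X_{i_m}) = m^m\,\mathbb E_\sigma\Big[\sum_{I_n^m}\mathbf 1\{\sigma_{i_k}=k\ \forall k\}\,h_{i_1\dots i_m}(X_{i_1},\dots,X_{i_m})\Big] =: m^m\,\mathbb E_\sigma S_\sigma .
\]
Since $\Phi$ is convex and non-decreasing and $\|\cdot\|$ is convex, Jensen gives $\Phi(\|S\|)\le \mathbb E_\sigma\Phi(m^m\|S_\sigma\|)$.

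\textbf{Step 2: exact decoupling inside blocks.} Fix $\sigma$. In $S_\sigma$, slot $k$ only uses indices from the block $B_k=\{i:\sigma_i=k\}$, and the blocks are disjoint. So each $X_i$ appears in at most one slot, and $S_\sigma(X)$ has the same law as
\[
\tilde S_\sigma := \sum_{I_n^m}\mathbf 1\{i_k\in B_k\ \forall k\}\,h_{i_1\dots i_m}(X^1_{i_1},\dots,X^m_{i_m}),
\]
the corresponding piece of the fully decoupled sum $S^{\rm dec}:=\sum_{I_n^m}h_{i_1\dots i_m}(X^1_{i_1},\dots,X^m_{i_m})$.

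\textbf{Step 3: compare $\tilde S_\sigma$ with $S^{\rm dec}$ (main obstacle).} I expect this step to be the hardest. I would try to realize $\tilde S_\sigma$ as a conditional expectation of $S^{\rm dec}$ and then apply conditional Jensen.

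If the kernels were degenerate, i.e.\ centered in each argument, this would be immediate. Condition on $\mathcal G_\sigma=\sigma(X^k_i: i\in B_k,\ k\in[m])$. Every term of $S^{\rm dec}$ in which some slot $k$ carries an index outside $B_k$ then has zero conditional mean, so $\tilde S_\sigma=\mathbb E[S^{\rm dec}\mid\mathcal G_\sigma]$. Hence $\mathbb E\Phi(m^m\|\tilde S_\sigma\|)\le\mathbb E\Phi(m^m\|S^{\rm dec}\|)$.

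For general $h$, those off-block terms instead contribute lower-order decoupled sums, in which some arguments are integrated against fresh copies. I would handle them by Hoeffding decomposition of each $h$ into degenerate components, $h=\sum_{A\subseteq[m]}\pi_A h$. Each component $\pi_A h$ is a degenerate kernel in $|A|$ variables, so it is controlled by the degenerate argument above, run for order $|A|$. The projections $\pi_A$ are signed combinations of conditional expectations. Hence, by convexity,
\[
\mathbb E\Phi\big(\|\textstyle\sum \pi_A h(\cdot)\|\big)\le \mathbb E\Phi\big(c_m\|S^{\rm dec}\|\big),
\]
and the same bound holds for the undecoupled sums. Combining the $2^m$ components with the triangle inequality and convexity of $\Phi$, and averaging over $\sigma$, gives the claim with a constant $C_m$ of order $2^m m^m$.

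The delicate point is exactly this bookkeeping. One must check that each projected, reduced-order sum is again dominated, in the $\Phi$-sense, by the full decoupled sum with a constant depending only on $m$. This should go by induction on $m$, using that conditioning on a subset of the copies is a contraction for $\mathbb E\Phi(\|\cdot\|)$.
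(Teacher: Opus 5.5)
The paper gives no proof here: it simply imports Theorem~3.1.1 of de la Pe\~na--Gin\'e. Your citation therefore matches what the paper does, and your sketch should be compared with the textbook argument rather than with anything in the paper. Your route is correct but genuinely different. As I recall it, the textbook proof handles the general kernel directly and proceeds by induction on $m$, which its constant $2^m(m^m-1)((m-1)^{m-1}-1)\cdots 3$ reflects. You instead Hoeffding-decompose first, then decouple each degenerate piece exactly by random colouring and conditional Jensen.

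The step you flag as delicate actually closes without any induction. Let $\mathcal F_B=\sigma(X^k_i:\,k\in B,\ i\in[n])$ and let $S^{\mathrm{dec}}$ be the full decoupled sum. Then
\[
\sum_{I_n^m}\pi_A h_{i_1\dots i_m}\big((X^k_{i_k})_{k\in A}\big)=\sum_{B\subseteq A}(-1)^{|A\setminus B|}\,\mathbb E\big[S^{\mathrm{dec}}\,\big|\,\mathcal F_B\big].
\]
So the decoupled $A$-component costs a factor $2^{|A|}$, the degenerate decoupling costs $|A|^{|A|}$, and splitting $S$ into its $2^m$ components costs $2^m$. This gives $C_m=4^m m^m$, not $2^m m^m$, which is immaterial for the statement. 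Your final ``averaging over $\sigma$'' is redundant, since it already happens inside the degenerate step.

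Two details must be made explicit when you write this out. First, the $X_i$ need not be identically distributed, so each $\pi_A h_{i_1\dots i_m}$ must be the Hoeffding projection with respect to $\bigotimes_{k}\mathcal L(X_{i_k})$. Second, run the degenerate step on the order-$|A|$ statistic with kernel $\sum_{i_{A^c}}\pi_A h_{i_1\dots i_m}$, summed over admissible completions; it is still degenerate because each summand is. Do not run it on the $m$-slot sum with dummy slots: conditioning on $\mathcal G_\sigma$ does not kill terms whose off-block indices sit only in slots the kernel ignores.

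Your route gives a short, induction-free proof with a constant growing like $4^m m^m$ rather than an iterated product over induction levels. The textbook route never needs the Hoeffding machinery (Bochner--Fubini for Banach-valued kernels with index-dependent laws) and works with $h$ as given.
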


\begin{lemma}
    Let $A \in (\RR^{d})^{\odot q}$ be a symmetric tensor with independent gaussian centered entries with variance $d^{-q}$. Let $r,r' \in [q-1]\cup\{0\}$, and let $p \in [2q - 2\max(r_1,r_2)]$. 
    Then:
    \[
    \EE \left [ \| (A \otimes_{r_1} A) \otimes_{r_3} (A \otimes_{r_2} A)\|^{2}\right ] = O \left( \dfrac{1}{d}\right ).
    \]
\label{lemma:double_contraction_expectation}
\end{lemma}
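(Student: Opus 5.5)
The plan is to compute the squared Frobenius norm of the iterated contraction explicitly as a sum over index patterns. Then I take the Gaussian expectation with Wick's formula and count free indices against the variance factors. As elsewhere in this appendix, I ignore symmetrization, since it changes norms only by constants depending on $q$. Write $T^{r_1}=A\otimes_{r_1}A\in(\RR^d)^{\otimes(2q-2r_1)}$ and $T^{r_2}=A\otimes_{r_2}A$. The quantity $\|T^{r_1}\otimes_{p}T^{r_2}\|_F^2$, with $p=r_3$, is then a degree-$8$ polynomial in the entries of $A$. Each entry carries variance $d^{-q}$, so every Wick pairing contributes $d^{-4q}$ times $d^{\#\text{free index sums}}$. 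The claim is that for every admissible $(r_1,r_2,p)$ with $p\ge 1$ and every pairing, the number of free summation indices is at most $4q-1$.

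First I would expand
\[
\|T^{r_1}\otimes_p T^{r_2}\|_F^2=\sum_{u,v}\Big(\sum_{\ell\in[d]^p}T^{r_1}[u,\ell]\,T^{r_2}[\ell,v]\Big)^2 .
\]
I represent each term as a tensor network, that is, a graph with $8$ vertices (the copies of $A$), each of degree $q$, whose edges are the contracted index slots. The network is connected because $p\ge1$ links the two halves, and squaring glues two copies along the open indices $u,v$. Applying Wick's theorem pairs the $8$ vertices into $4$ pairs, and each pair forces its two $q$-tuples of indices to coincide up to permutation, with an extra symmetrization constant. After identification, the number of free indices equals the number of connected components of the resulting graph on index slots. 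With the $4q$ "edges" obtained from the pairing, this count is at most $4q$.

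The key step is to show that at most $4q-1$ free indices survive. A pairing achieving $4q$ would have to make every identified index coincide with an independent loop. That happens only when each pair consists of two vertices already fully contracted with each other, i.e.\ when the network decomposes into disconnected $A\otimes_q A$ blocks. This cannot happen here. Since $r_1,r_2\le q-1$, each self-contraction $A\otimes_{r}A$ leaves at least one open index on each copy, and $p\ge 1$ forces a cross-link between $T^{r_1}$ and $T^{r_2}$. So at least one index loop is shared across two pairs, which loses one free sum. This gives the bound $d^{4q-1}\cdot d^{-4q}=d^{-1}$ for each of the $O_q(1)$ pairings. I would make the counting rigorous by a genus- or cycle-counting argument as in Lemma A.3 of \cite{tabanelli2026deep} (\Cref{lemma:simple_self_and_cross_contractions}), which handles exactly the two-vertex version.

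An alternative route, which I would try first because it is shorter, uses Cauchy--Schwarz in the form $\|S\otimes_p T\|_F\le \|S\|_F\|T\|_F$ together with \Cref{lemma:simple_self_and_cross_contractions}. Whenever $\max(r_1,r_2)\ge1$, that lemma gives $\EE\|A\otimes_r A\|_F^4\lesssim d^{-2r}$ via hypercontractivity (\Cref{lemma:gaussian_hypercontractivity_alignment}). Hence $\EE\|T^{r_1}\otimes_pT^{r_2}\|^2\le(\EE\|T^{r_1}\|^4\,\EE\|T^{r_2}\|^4)^{1/2}\lesssim d^{-(r_1+r_2)}\le d^{-1}$. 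This fails only in the case $r_1=r_2=0$, where $T^0=A\otimes A$ has norm of order $1$, and that case is the main obstacle. There I would regroup: $(A\otimes A)\otimes_p(A\otimes A)$ contains a genuine contraction $A\otimes_s A$ with $1\le s\le \min(p,q)$, and possibly a second one. If $s\le q-1$, the Frobenius norm factors as $\|A\|^2\|A\otimes_s A\|_F$ or similar, which gives $d^{-s}\le d^{-1}$. If $p$ forces a full contraction $A\otimes_q A=\|A\|^2$, the remaining contraction must have $1\le s'\le q-1$ when $p<2q$, giving $d^{-1}$ again. The boundary case $p=2q$ is excluded by the admissible range only if one reads it as requiring an open index; otherwise I would fall back to the Wick counting above to handle it.
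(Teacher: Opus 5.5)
Your Cauchy--Schwarz route is correct whenever $\max(r_1,r_2)\ge 1$. It gives $\EE\|T^{r_1}\otimes_p T^{r_2}\|_F^2\lesssim d^{-(r_1+r_2)}\le d^{-1}$, and it is cleaner than the paper's argument, which splits into distinct and coinciding index tuples, uses decoupling, and counts degrees of freedom. The case $r_1=r_2=0$, however, has a genuine gap that cannot be closed, because the claimed bound is false there. Take $p=q$ (allowed, since $p$ ranges over $[2q]$) and contract the last $q$ slots of each factor. Then $(A\otimes A)\otimes_q(A\otimes A)=\|A\|_F^2\,A\otimes A$, whose squared Frobenius norm is $\|A\|_F^8$, and $\EE\|A\|_F^8=\Theta_d(1)$. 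The same happens at $p=2q$, where the contraction is $\|A\|_F^4$. Under the symmetrized contraction $\widetilde\otimes_q$ used where the lemma is applied, this block term still appears with a nonzero $q$-dependent weight, and all other terms are $O(d^{-1/2})$. So the convention does not rescue these values of $p$.

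Both of your arguments for $r_1=r_2=0$ break exactly in these configurations.
\begin{itemize}
\item In the Wick count, the claim that $p\ge 1$ makes the network connected is false. The cross-link can be a full $q$-block, and then the squared norm splits into four disjoint $A\otimes_qA$ blocks: the cross block, its mirror, and the two outer copies glued to their mirrors through the open indices. The pairing along these blocks leaves $4q$ free sums, a contribution of order $d^0$, not $4q-1$.
\item In the regrouping, you claim that after a full contraction $A\otimes_qA=\|A\|_F^2$ a residual contraction with $1\le s'\le q-1$ remains whenever $p<2q$. This fails at $p=q$, where nothing remains.
\item Your fallback to Wick counting at $p=2q$ fails for the same reason as the first point.
\end{itemize}
What your regrouping does prove is the case $r_1=r_2=0$ under the extra hypothesis that no copy of $A$ in one factor is fully contracted against a copy in the other, i.e.\ all shared counts satisfy $s_{ab}\le q-1$. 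Then some $s_{ab}$ lies in $[1,q-1]$, and Cauchy--Schwarz gives bounds of the type $\|A\|_F^2\|A\otimes_{s}A\|_F=O(d^{-1/2})$. The paper's own proof has the same hole. Its step $s_{a,b}\le q-\min(r_1,r_2)\le q-1$ fails when $\min(r_1,r_2)=0$. In the block configuration the two blocked copies carry identical index tuples for every $(x,z)$, so $F^{\mathrm{distinct}}=0$ and the corresponding $J_{ab}$ has size $d^{M+N}$ rather than $O(d^{M+N-1})$. The lemma should therefore be restated with $\max(r_1,r_2)\ge 1$, or with the no-full-block condition, before either proof can be used.
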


\begin{proof}
Let $M = (r_1 + r_2 + r_3)$, and $N = 4q - 2(r_1 + r_2 + r_3 ) = 4q-2M$. Let $F : (\RR^{d})^{\otimes q} \to (\RR^{d})^{ \otimes N}$ be defines by: 
\begin{equation}
    F(A)_{x} = \sum_{z \in \RR^{N}}  \prod_{\ell = 1}^{4} A_{I_{\ell}(x,z)},
\end{equation}
where each $I_{\ell}(x,z) \in [d]^{q}$ is a $q$-tuple. Given a pair $(x,z)$, let $\pi(x,z)$ define a partition of set $[4]$ by
\begin{equation}
    a \sim_{\pi(x,z)} b \iff I_{a}(x,z) =I_{b}(x,z). 
\end{equation}
Denote by $\hat{\pi}$ the minimal partition, that is $\hat{\pi} = \{ \{1 \}, \{ 2\}, \{ 3\},\{ 4\}\}$. In this partition, we have $I_{1}(x,z) = I_{2}(x,z) = I_{3}(x,z)=I_{4}(x,z)$. Let 
\begin{equation}
    F^{\mathrm{distinct}}(A) = \sum_{z \in \RR^{N}} \mathbf{1}_{\pi(x,z) = \hat{\pi}} \prod_{\ell = 1}^{4} A_{I_{\ell}(x,z)},
\end{equation}
and let 
\begin{equation}
    F^{\mathrm{equal}}(A) = F(A)  - F^{\mathrm{distinct}}(A) = \sum_{z \in [d]^{N}} \mathbf{1}_{\pi(x,z) \not = \hat{\pi}} \prod_{\ell = 1}^{4} A_{I_{\ell}(x,z)}.
\end{equation}
Let $\Phi (x) = \| x\|^{2}$. Then we have: 
\begin{equation}
    \phi(F(A)) \leq \Phi(F^{\mathrm{equal}}(A)) + \phi(F^{\mathrm{distinct}}(A))
\end{equation}

Since $\Phi$ is convex, we can apply \Cref{lemma:decoupling} to obtain:
\begin{equation}
    \EE \left [ \phi(F^{\mathrm{distinct}}(A))\right] \leq C \EE \left [\phi(F^{\mathrm{distinct}}(A^{1}, A^2, A^3,A^4))\right ],
\end{equation}
where $A^\ell, \ell \in [4]$ are independent copies of $A$. Then: 
\begin{align}
    \EE \left [\phi(F^{\mathrm{distinct}}(A, B, C,D))\right ] 
    & = \sum_{x \in [d]^M} \sum_{z, z' \in [d]^N} \prod_{a=1}^4 \mathbb{E} \left[ A_{I_a(x,z)}^{(a)} A_{I_a(x,z')}^{(a)} \right] \\
    & =\lesssim \dfrac{1}{d^{4q}} d^{M + N} = \dfrac{1}{d^{r_1 + r_2 + r_3}}. 
\end{align}
Since $r_3 >0$, this term is at most $O(\frac{1}{d})$.
We now move to $F^{\mathrm{equal}}$. Given $\ell, \ell' \in [4]$, define the sets: 
\begin{equation}
    J_{ab} = \left \{ (x,z): I_{\ell}(x,z) = I_{\ell'}(x,z) \right \},
\end{equation}
and 
\begin{equation}
    J_\mathrm{equal} = \left \{ (x,z): \pi(x,z) \not = \hat{\pi}\right \}. 
\end{equation}
Then, by definition of $\hat{\pi}$: 
\begin{equation}
    J_\mathrm{equal} \subseteq \bigcup_{a < b} J_{ab}.
\end{equation}
We claim that $|J_{a,b}|\leq Cd^{M+N-1}$. To see this, note that a pair $(x,z)$ has $M+N$ degrees of freedom ($M$ from $x \in [d]^{M}$, and $N$ from $z \in [d]^{N}$). 
Moreover, denote by $m_{a,b}$ the number of shared coordinates between $I_a(x,z)$ and $I_b(x,z)$. Then, we have $M +N -m_{a,b}$ degrees of freedom. Let's bound this quantity. 

If we look at the function $F$ as $F(A^1,A^2, A^3, A^4)$, then we have that $m_{1,2} = r_1$ and $m_{3,4} = r_2$. The final $r_3$ contraction further identifies coordinates as a partition: 
\begin{equation}
    r_3 = s_{13} + s_{14} +s_{23} + s_{24},
\end{equation}
where $s_{ab}$ denotes the number of coordinates that are shared after the contraction $r_3$. Since $r_3 >1$, we have that one of this terms has to be at least $1$. 

Note that, since initially the different blocks of the $r_3$ contraction were not sharing coordinates, we have that $m_{13}=s_{1,3}$, $m_{1,4}=s_{1,4}$, $m_{2,3}=s_{2,3}$ and $m_{2,4} = s_{2,4}$. Since $r_1, r_2 \in [q-1]\cup \{ 0\}$, we have that $\max(m_{12}, m_{23})\leq q-1$. Moreover, a cross contraction $s_{a,b}$ cannot be more than the indices that were already contracted in $a$ or $b$, so
\begin{equation}
    s_{a,b} \leq q- \min(r_1,r_2) \leq q-1.
\end{equation}
Then we have that $m_{a,b} \leq q-1$ for all $a,b \in [4]$, and consequently, $q-m_{a,b} \geq 1$, and therefore, we cannot have more that $N+ M - 1$ degrees of freedom, which by definition is at most $4q-1$ degrees of freedom. As a consequence, $|J_{a,b}|\leq Cd^{M+N-1}$. 

With this, we can finally bound $\| F^\mathrm{equal} \|^2$. We have:
\begin{align}
    \EE \left [  \| F^\mathrm{equal}\|^{2}\right ] & = \sum_{x\in[d]^{M}}\sum_{z,z \in [d]^{N}} \mathbf{1}_{\pi(x,z) \not = \hat{\pi}} \EE\left [\prod_{\ell = 1}^{4} A_{I_{\ell}(x,z)} \prod_{\ell = 1}^{4} A_{I_{\ell}(x,z')} \right ] \\
    & \leq \dfrac{1}{d^{4q}} \left | J_\mathrm{equal}\right | \\
    & \leq \dfrac{1}{d^{4q}} \sum_{a<b, a,b \in[4]} |J_{a,b} |,
\end{align}
but as we just saw, $|J_{a,b} | \leq d^{M+N-1}$, so we conclude:
\begin{equation}
    EE \left [  \| F^\mathrm{equal}\|^{2}\right ] \leq \dfrac{C}{d},
\end{equation}
and we conclude the proof. 
\end{proof}

\begin{lemma}
    Let $A,B \in (\RR^{d})^{\odot q}$ be two distinct symmetric tensors with independent gaussian centered entries with variance $d^{-q}$. Let $r,r' \in [q-1]\cup\{0\}$, and let $p \in [2q - 2\max(r_1,r_2)]$. 
    Then:
    \[
    \EE \left [ \| (A \otimes_{r_1} A) \otimes_{r_3} (A \otimes_{r_2} A)\|^{2}\right ] = O \left( \dfrac{1}{d}\right ).
    \]
\label{lemma:double_contraction_expectation_cross}
\end{lemma}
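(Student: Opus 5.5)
We read the statement as the cross analogue of \Cref{lemma:double_contraction_expectation}. As printed, the tensor $B$ never appears, which we take to be a typo. We therefore aim to prove
\[
\EE\bigl[\|(A\otimes_{r_1}A)\otimes_{r_3}(B\otimes_{r_2}B)\|_F^2\bigr]=O(d^{-1}),
\]
together with the mixed orderings in which the four slots are filled by $A,B,A,B$. These are all the patterns that arise in Case 1 of Step 3 and in the proof of \Cref{lemma:multivariate_gaussian_approx_h_2} when $i\neq j$. As in the self case, we ignore symmetrization: it replaces each entry by an average over $O_q(1)$ index permutations, so it costs only a constant. We also take $r_3\ge1$, as in the range $p\ge1$ in \Cref{lemma:computation_derivatives}. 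We write the tensor as
\[
F(A,A,B,B)_x=\sum_{z}\prod_{\ell=1}^4 T^{(\ell)}_{I_\ell(x,z)},
\]
with the notation $M,N,I_\ell$ of the previous proof.

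The first step is to expand the squared norm. We have $\|F\|_F^2=\sum_x\sum_{z,z'}\EE\prod_{\ell}T^{(\ell)}_{I_\ell(x,z)}T^{(\ell)}_{I_\ell(x,z')}$. Since $A$ and $B$ are independent and centered, this expectation factorizes into an $A$-part involving four entries and a $B$-part involving four entries. Each part is evaluated by Isserlis/Wick pairings, and only pairings within the same tensor survive. Independence is what makes this easier than \Cref{lemma:double_contraction_expectation}: no decoupling (\Cref{lemma:decoupling}) is needed between the $A$-block and the $B$-block, and we only need the within-tensor splitting into ``distinct'' and ``equal'' index configurations, applied separately to each pair.

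The second step handles the trivial pairing, in which each entry of copy $z$ is paired with its own slot in copy $z'$. This forces $z=z'$ coordinatewise, so the contribution is
\[
d^{-4q}\,d^{M+N}=d^{-(r_1+r_2+r_3)}\le d^{-1},
\]
since $r_3\ge1$. The remaining pairings couple $I_1$ with $I_2$ (both $A$), or $I_3$ with $I_4$ (both $B$), across the two copies. For each such pairing we count free indices, exactly as in the bound $|J_{ab}|\le Cd^{M+N-1}$ from the previous proof. An $A$–$A$ pairing identifies $q$ index positions, but only $m_{12}=r_1\le q-1$ of them were already shared, so at least one degree of freedom is lost. The $B$–$B$ pairings are treated identically, with $r_2\le q-1$.

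Alternatively, when $r_1=0$ or $r_2=0$ (so that a block is a full outer product), we would bound directly by Cauchy--Schwarz. For instance, if $r_3$ contracts only $A$-legs against $B$-legs with $s\ge1$ shared indices, the norm is bounded by $\|A\|_F\,\|A\otimes_s B\|_F\,\|B\|_F$ up to regrouping. \Cref{lemma:simple_self_and_cross_contractions} then gives $\EE\|A\otimes_sB\|_F^2=\Theta(d^{-s})$, and $\|A\|_F,\|B\|_F=\Theta(1)$ holds with good moments, so hypercontractivity (\Cref{lemma:gaussian_hypercontractivity_alignment}) lets us take expectations of the product.

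The main obstacle is this index-counting step in the degenerate configurations. In particular, when $r_3$ contracts legs inside a single block $A\otimes A$ whose $r_1=0$, the ``contraction'' becomes a self-contraction $A\otimes_{s}A$ with $s$ possibly equal to $q$. That would produce an $\|A\|^2=\Theta(1)$ factor and no decay. We would first try to show that such patterns are excluded by the admissible ranges, namely that $r_3$ pairs legs across the two blocks, as in \Cref{lemma:computation_derivatives}. If they are not excluded, we would check that in every such pattern at least one leg is contracted between $A$ and $B$, and use the $d^{-s}$ decay of $\|A\otimes_sB\|_F^2$ to conclude.
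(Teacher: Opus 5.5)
Your reading, the $AABB$ pattern $\EE\|(A\otimes_{r_1}A)\otimes_{r_3}(B\otimes_{r_2}B)\|_F^2=O(d^{-1})$ with $r_3\ge1$, is the one the paper actually uses: only $T_i^{r}\tilde\otimes_p T_j^{r'}$ with $i\neq j$ is ever invoked. Your argument proves it, by a somewhat different route. The paper's proof is ``analogous'' to \Cref{lemma:double_contraction_expectation}: it splits index configurations into distinct and coinciding ones, decouples the distinct part via \Cref{lemma:decoupling}, and counts the coinciding part. You instead use independence of $A$ and $B$ and compute $\EE\|F\|_F^2$ exactly by Isserlis, which removes the need for decoupling, and for the distinct/equal split, altogether.

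The one thing to state correctly is the baseline of your degree-of-freedom count. The summation variables are $(x,z,z')$, i.e.\ $N+2M=4q$ coordinates, and each Wick term carries weight $O(d^{-4q})$. The trivial pairing forces $z=z'$ and gives $d^{-M}$. Every other pairing contains an identification between two distinct variables, because $q-r_1\ge1$ and $q-r_2\ge1$, so it contributes $O(d^{4q-1}\cdot d^{-4q})$. The single-copy bound $|J_{ab}|\le Cd^{M+N-1}$ you cite is not the right count. For instance, the within-copy pairing contributes $\|\EE F\|_F^2$; for $F=\mathrm{tr}(AA^\top BB^\top)$ ($q=2$, $r_1=r_2=1$, $r_3=2$) this is $d^{-2}=d^{-8}\cdot d^{6}$, while $M+N=4$.

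Your ``alternative'' is actually the cleanest complete proof, and it makes the ``main obstacle'' disappear. By definition of $\otimes_{r_3}$, and still after expanding the symmetrizations, each of the $r_3$ contracted legs joins an $A$-leg to a $B$-leg. Hence $r_3=s_{13}+s_{14}+s_{23}+s_{24}$ with some $s=s_{ab}\ge1$. Contract that $A$--$B$ pair first and then apply $\|S\otimes_r T\|_F\le\|S\|_F\|T\|_F$ twice. This gives $\|F\|_F\le\|A\otimes_s B\|_F\|A\|_F\|B\|_F$ for all $r_1,r_2\in\{0,\dots,q-1\}$, not only when one of them vanishes. Then \Cref{lemma:simple_self_and_cross_contractions}, together with Cauchy--Schwarz in expectation and \Cref{lemma:gaussian_hypercontractivity_alignment}, gives $O(d^{-s})$ with no counting at all.

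The degenerate full self-contraction you worried about requires all $q$ legs of one copy of $A$ to be contracted with another copy of $A$. This is impossible in $AABB$, where the two copies share only $r_1\le q-1$ legs. It does happen in the $ABAB$ orderings you also announced. For $r_1=r_2=0$ and $r_3=q$, the pattern contracting the two copies of $A$ fully (present after symmetrization) yields $\|A\|_F^2\,B\otimes B$, of norm $\Theta(1)$. So drop those orderings: they fail there and are never used.
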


\begin{proof}
    The proof is analogous to one of \Cref{lemma:double_contraction_expectation}.
\end{proof}

\end{document}